\newcommand{\iForest}{\mbox{iForest}}
\newcommand{\CFOF}{\mbox{CFOF}}
\newcommand{\KNN}{\mbox{KNN}}
\newcommand{\aKNN}{\mbox{aKNN}}
\newcommand{\LOF}{\mbox{LOF}}
\newcommand{\RNNc}{\mbox{RNNc}} 
\newcommand{\ODIN}{\mbox{ODIN}}
\newcommand{\antiHub}{\mbox{AntiHub$^2$}}
\newcommand{\aHub}{\mbox{aHub2}}
\newcommand{\FastABOD}{\mbox{FastABOD}}
\newcommand{\Vect}[1]{{\boldsymbol{#1}^{(d)}}}
\newcommand{\vect}[1]{{#1}^{(d)}}
\newcommand{\norm}[1]{\|#1\|}
\newcommand{\HardCFOF}{\mbox{\textit{hard}-$\CFOF$}}
\newcommand{\SoftCFOF}{\mbox{\textit{soft}-$\CFOF$}}
\newcommand{\FastCFOF}{\mbox{\textit{fast}-$\CFOF$}}
\newcommand{\CFOFprocpart}{\mbox{\textit{fast}-$\CFOF$}\textit{\_part}}
\newcommand{\DS}{{\bf DS}}
\newcommand{\dist}{{\rm dist}}
\newcommand{\NN}{{\rm NN}}
\newcommand{\nn}{{\it nn}}
\newcommand{\E}{\mathbf{E}}
\newcommand{\N}{{\rm N}}
\newcommand{\pwina}[1]{p_{win,AUC}(#1)}
\newcommand{\pwinp}[1]{p_{win,Prec}(#1)}
\begin{document}
\title[CFOF: A Concentration Free Measure for Anomaly Detection]{CFOF: A Concentration Free Measure for 
Anomaly Detection}
\titlenote{A preliminary version of this article appears in 
F. Angiulli, ``Concentration Free Outlier Detection'', Proc. of the 
European Conf. on Machine Learning and Knowledge Discovery in Databases
(ECMLPKDD), pp. 3-19, 2017
\cite{Angiulli17}.}

\author{Fabrizio Angiulli}
\orcid{0000-0002-9860-7569}
\affiliation{%
  \institution{University of Calabria}
  \department{DIMES--Dept. of Computer, Modeling, Electronics, and Systems Engineering}
  \streetaddress{Via P. Bucci, 41C}
  \city{Rende}
  \state{CS}
  \postcode{87036}
  \country{Italy}}
\email{fabrizio.angiulli@unical.it}

\begin{abstract}
We present a novel notion of outlier, called the Concentration Free Outlier 
Factor, or CFOF. As a main contribution, we formalize the notion of 
concentration of outlier scores and theoretically prove that CFOF does not 
concentrate in the Euclidean space for any arbitrary large dimensionality. To 
the best of our knowledge, there are no other proposals of data analysis 
measures related to the Euclidean distance for which it has been provided 
theoretical evidence that they are immune to the concentration effect. We 
determine the closed form of the distribution of CFOF scores in arbitrarily 
large dimensionalities and show that the CFOF score of a point depends on its 
squared norm standard score and on the kurtosis of the data distribution, thus 
providing a clear and statistically founded characterization of this notion. 
Moreover, we leverage this closed form to provide evidence that the definition 
does not suffer of the hubness problem affecting other measures in high 
dimensions. We prove that the number of CFOF outliers coming from each cluster 
is proportional to cluster size and kurtosis, a property that we call 
semi-locality. We leverage theoretical findings to shed lights on properties of 
well-known outlier scores. Indeed, we determine that semi-locality characterizes 
existing reverse nearest neighbor-based outlier definitions, thus clarifying the 
exact nature of their observed local behavior. We also formally prove that 
classical distance-based and density-based outliers concentrate both for bounded 
and unbounded sample sizes and for fixed and variable values of the neighborhood 
parameter. We introduce the fast-CFOF algorithm for detecting outliers in large 
high-dimensional dataset. The algorithm has linear cost, supports 
multi-resolution analysis, and is embarrassingly parallel. Experiments highlight 
that the technique is able to efficiently process huge datasets and to deal even 
with large values of the neighborhood parameter, to avoid concentration, and to 
obtain excellent accuracy.
\end{abstract}

%
%
\begin{CCSXML}
<ccs2012>
<concept>
<concept_id>10010147.10010257.10010258.10010260</concept_id>
<concept_desc>Computing methodologies~Unsupervised learning</concept_desc>
<concept_significance>300</concept_significance>
</concept>
<concept_id>10010147.10010257.10010258.10010260.10010229</concept_id>
<concept_desc>Computing methodologies~Anomaly detection</concept_desc>
<concept_significance>500</concept_significance>
</concept>
<concept>
</ccs2012>
\end{CCSXML}

\ccsdesc[300]{Computing methodologies~Unsupervised learning}
\ccsdesc[500]{Computing methodologies~Anomaly detection}%
%

\keywords{Outlier detection, high-dimensional data,
curse of dimensionality, concentration of distances, hubness,
kurtosis, huge datasets}

\maketitle

\renewcommand{\shortauthors}{F. Angiulli}

\section{Introduction}

Outlier detection is one of the main {data mining} and machine learning tasks,
whose goal is to single out anomalous observations, also called outliers 
\cite{Aggarwal2013}.
While the other data analysis approaches, such as classification, 
clustering or dependency detection,
consider outliers as noise that must be eliminated, as pointed out in 
\cite{HK01},
``one person's noise could
be another person's signal", thus outliers themselves are of
great interest in different settings,
e.g. fraud detection, ecosystem disturbances, 
intrusion detection, cybersecurity,
medical analysis, to cite a few.

Outlier analysis has its roots in statistics
\cite{DaviesGather93,BL94}.
Data mining outlier approaches to outlier detection
can be classified in supervised, 
semi-supervised, and unsupervised \cite{Hodge2004,ChandolaBK09}.
Supervised methods take in input data labeled 
as normal and abnormal and build a classifier.
The challenge there is posed by the 
fact that abnormal data form a rare class.
Semi-supervised methods,
also called one-class 
classifiers or domain description techniques,
take in input only normal examples and use them
to identify anomalies.
Unsupervised methods detect outliers
in an input dataset by assigning 
a score or anomaly degree to each object.

A commonly accepted definition fitting the
unsupervised setting is the following: 
``Given a set of data points or
objects, find those objects that are considerably dissimilar, exceptional or
inconsistent with respect to the remaining data'' \cite{HK01}. 
{
Unsupervised outlier detection methods can be categorized in several
approaches, each of which assumes a specific concept of outlier.
Among the most popular families there are
{statistical-based} \cite{DaviesGather93,BL94}, 
{deviation-based} \cite{AAR96}, 
{distance-based} \cite{KN98,RRS00,AP05,AngiulliF09}, 
{density-based} \cite{BKNS00,JTH01,PKGF03,JinTHW06}, 
{reverse nearest neighbor-based} \cite{HautamakiKF04,RadovanovicNI15}
{angle-based} \cite{KriegelSZ08},
{isolation-based} \cite{LiuTZ12},
subspace-based \cite{AY01,AngiulliFPTODS09,KellerMB12},
{ensemble-based} \cite{LazarevicK05,AggarwalS17}
and others \cite{ChandolaBK12,ZimekSK12,Aggarwal2013,AkogluTK15}.
}

\smallskip
This work focuses on unsupervised outlier detection
in the full feature space.
In particular, we present a novel notion of outlier,
the \textit{Concentration Free Outlier Factor} ($\CFOF$),
having the peculiarity to resist to the distance
concentration phenomenon
which is part of the so called curse of dimensionality problem 
{\cite{Bellman1961,demartines1994,BeyerGRS99,ChavezNBM01,Francois2007,Angiulli2018}}. 
Specifically,
the term distance concentration refers to the tendency of
distances to become almost indiscernible
as dimensionality increases.
This phenomenon may greatly affect the quality and performances
of data mining, machine learning, and information retrieval techniques,
since all these techniques rely on the concept
of distance, or dissimilarity, among data items in order to retrieve or analyze
information.
Whereas low-dimensional spaces show good agreement between geometric
proximity and the notion of similarity, as dimensionality increases, 
counterintuitive phenomena like distance concentration and hubness
may be harmful to traditional techniques.
In fact, the concentration problem also affects
outlier scores of different families
due to the specific role played by distances 
in their formulation.

This characteristics of high dimensional data has generated 
in data analysis and data management applications
the need for dimensionality resistant notions of similarity,
that are similarities not affected by the poor separation between
the furthest and the nearest neighbor in high dimensional space.
Among the desiderata that a good distance resistant to dimensionality should possess, 
there are the to be \textit{contrasting}
and \textit{statistically sensitive}, 
that is meaningfully refractory to concentration, 
and to be \textit{compact}, 
or efficiently computable in terms of time and space
\cite{Aggarwal2001}.

In the context of unsupervised outlier detection, \cite{ZimekSK12} 
identified different issues related to the treatment of high-dimensional data,
among which the \textit{concentration of scores}, 
in that derived outlier score become numerically similar,
\textit{interpretability} of scores, 
that fact that the scores often no longer convey a semantic meaning,
and \textit{hubness}, the fact that certain points occur 
more frequently in neighbor lists than others \cite{AucouturierP08,RadovanovicNI09,Angiulli2018}.

Specifically, consider the number $\N_k(x)$
of observed points that have
$x$ among their $k$ nearest neighbors,
also called $k$-occurrences or reverse $k$-nearest neighbor count,
or RNNc, for short, in the following.
It is known that in low dimensional spaces, the distribution of $\N_k(x)$ over all $x$
complies with the binomial distribution and, in particular, for uniformly i.i.d.
data in low dimensions, that it can be modeled as node in-degrees in the $k$-nearest
neighbor graph, which follows the Erd\H{o}s-R\'enyi graph model \cite{Erdos59}.
However, it has been observed that as the dimensionality increases, the 
distribution of $\N_k$ becomes skewed to the right, resulting in the emergence
of \textit{hubs}, which are points whose reverse nearest neighbors counts
tend to be meaningfully larger than that associated with any other point.

Thus, the circumstance that the outlier scores
tend to be similar poses some challenges in terms of
their intelligibility,
absence of a clear separation between outliers and inliers, 
and loss of efficiency of pruning rules aiming at
reducing the computational effort.

\smallskip
The 
$\CFOF$ score is a \textit{reverse nearest neighbor-based score}.
Loosely speaking, it corresponds to measure
how many nearest neighbors have to be taken into account
in order for a point to be close to a sufficient fraction $\varrho$
of the data population.
We notice that this kind of notion
of perceiving the abnormality of an observation
is completely different from any other
notion so far introduced.
In the literature,
there are other outlier detection approaches
resorting to reverse nearest neighbor
counts \cite{HautamakiKF04,JinTHW06,LinED08,RadovanovicNI15}.
{Methods such INFLO \cite{JinTHW06} are density-based techniques considering 
both direct and reverse nearest neighbors
when estimating the outlierness of a point.}
Early \textit{reverse nearest neighbor-based}
approaches, that are $\ODIN$ \cite{HautamakiKF04}, which uses $\N_k(x)$ as outlier score
of $x$, and the one proposed in \cite{LinED08}, which returns as outliers 
those points $x$ having $\N_k(x)=0$,
are prone to the hubness phenomenon,
that is the concentration of the scores towards the values associated with outliers, 
due to direct used of the function $\N_k$.
Hence, to mitigate the hubness effect,
\cite{RadovanovicNI15} proposed
a simple heuristic method, namely $\antiHub$, which refines the scores
produced by the $\ODIN$ method
by returning the weighted mean of the sum of the $\N_k$ scores of 
the neighbors of the point and of the $\N_k$ score of the point itself.

\smallskip
In this work
we both empirically and theoretically show that 
the here introduced $\CFOF$ outlier score
complies with all of the above mentioned desiderata.
As a main contribution,
we formalize the notion of \textit{concentration of outlier scores},
and theoretically prove that $\CFOF$ does not concentrate 
in the Euclidean space $\mathbb{R}^d$
for any arbitrarily large dimensionality $d\rightarrow\infty$.
To the best of our knowledge, 
there are no other proposals of outlier detection measures,
and probably also of other {data analysis}
measures related to the Euclidean distance,
for which it has been provided the theoretical evidence
that they are immune to the concentration effect.

We recognize that the \textit{kurtosis} $\kappa$ of the data population,
a well-known measure of tailedness of a probability distribution
originating with Karl Pearson \cite{Pearson1905,FioriZ09,Westfall14},
is a key parameter for characterizing from the outlier detection perspective
the unknown distribution underlying the data,
a fact that has been neglected at least within the data mining literature.
The kurtosis may range from $\kappa=1$, 
for platykurtic distributions such as the Bernoulli 
distribution with success probability $0.5$,
to $\kappa\rightarrow\infty$, for extreme leptokurtic or heavy-tailed distributions.
Each outlier score must concentrate for $\kappa=1$ due to
the absolute absence of outliers.
We prove that
$\CFOF$ does not concentrate for any $\kappa>1$.

We determine the closed form of the distribution of the $\CFOF$
scores for arbitrarily large dimensionalities and show that
the $\CFOF$ score of a point $x$ depends,
other than on the parameter $\varrho$ employed,
on its squared norm standard score $z_x$ 
and on the kurtosis $\kappa$ of the data distribution.
The squared norm standard score of a data point
is the standardized squared norm of the point 
under the assumption that 
the origin of the feature space coincides with 
the mean of the distribution generating points.
We point out that the knowledge of the theoretical distribution
of an outlier score is a rare, if not unique, peculiarity.
We prove that the probability to observe larger scores
increases with the kurtosis.

As for the \textit{hubness} phenomenon, 
by exploiting the closed form of the $\CFOF$ scores distribution,
we provide evidence that $\CFOF$ does not suffer
of the hubness problem, since points associated with the largest scores 
always correspond to a small fraction
of the data.
Moreover, while previously known RNNc scores present large false positive rates for values of
the parameter $k$ which are not comparable with $n$,
$\CFOF$ is able to establish a clear separation between
outliers and inliers for any value of the parameter $\varrho$.

We theoretically prove that the $\CFOF$ score 
is both translation and scale-invariant.
This allows to establish that CFOF has connections with 
\textit{local} scores.
Indeed, if we consider a dataset consisting of multiple translated and scaled copies 
of the same seed cluster, 
the set of the $\CFOF$ outliers 
consists of the same points from each cluster.
More in the general,
in the presence of clusters having different generating 
distributions,
the number of outliers coming from each cluster
is directly proportional to its size and to its kurtosis,
a property that we called \textit{semi--locality}.

\smallskip
As an equally important contribution, 
the design of the novel outlier score 
and the study of its theoretical properties
allowed us to shed lights also on different properties
of well-known outlier detection scores.

First, we determine that the semi--locality is a
peculiarity of reverse nearest neighbor counts.
This discovery clarifies the exact nature of 
the reverse nearest neighbor family of outlier scores:
while in the literature this family of scores has been observed to
be adaptive to different density levels, the exact behavior of 
this adaptivity was unclear till now.

Second,
we identify the property each outlier score 
which is monotone increasing with respect to the squared norm standard score 
must possess in order to avoid concentration.
We leverage this property to formally show that 
classic distance-based and density-based outlier scores 
are subject to concentration
both for bounded and unbounded dataset sizes $n$,
and both for fixed and variable values of the parameter $k$.
Moreover, the convergence rate towards concentration 
of these scores is inversely 
proportional to the kurtosis of the data.

Third, 
as a theoretical confirmation of the proneness of $\N_k$ to false positives,
we show that the ratio between the amount of variability
of the $\CFOF$ outlier scores and that of the RNNc outlier scores
corresponds to of several orders of magnitude and, moreover, that 
the above ratio is even increasing with the kurtosis.

\medskip
Local outlier detection methods,
showing adaptivity to different density levels,
are usually identified in the literature with those methods that
compute the output scores by
comparing the neighborhood of each point 
with the neighborhood of its neighbors. 
We point out that,
as far as $\CFOF$ is concerned, its local behavior is obtained
without the need to explicitly perform such a kind of comparison.
Rather, since from a conceptual point of view
computing $\CFOF$ scores can be assimilated to estimate a probability,
we show that $\CFOF$ scores can be reliably computed
by exploiting sampling techniques.
The reliability of this approach descends from the
fact that $\CFOF$ outliers are the points less prone to bad estimations.

Specifically,
to deal with very 
large and high-dimensional datasets,
we introduce the $\FastCFOF$ technique
which exploits sampling to avoid
the computation of exact nearest neighbors
and, hence, from the computational point of view
does not suffer of the dimensionality curse
affecting (reverse) nearest neighbor search techniques.
The cost of $\FastCFOF$ is linear both in the dataset
size and dimensionality.
The $\FastCFOF$ algorithm
is efficiently parallelizable,
and we provide a multi-core (MIMD) vectorized (SIMD)
implementation.

The algorithm has an unique parameter $\varrho\in(0,1)$,
representing a fraction of the data population.
The $\FastCFOF$ algorithm supports 
multi-resolution analysis
regarding the dataset at different scales, 
since different $\varrho$ values can be managed simultaneously
by the algorithm,
with no additional computational effort.

\smallskip
Experimental results highlight that $\FastCFOF$
is able to achieve very good accuracy with reduced sample
sizes $s$ and, hence, to efficiently process huge datasets.
Moreover, since its asymptotic cost does not depend on the actual 
value of the parameter $\varrho$, 
$\CFOF$ can efficiently manage even large values of this parameter, 
a property 
which is considered a challenge for different existing outlier methods.
Moreover,
experiments involving the $\CFOF$ score
witness for the absence of concentration
on real data,
show that $\CFOF$ shows excellent accuracy
performances on distribution data, 
and that 
$\CFOF$ is likely to admit configurations ranking 
the outliers better than other approaches
on labelled data.

\smallskip
The study of theoretical properties is conducted
by considering the Euclidean distance as dissimilarity measure,
but from \cite{Angiulli2018} it is expected they are
also valid for any Minkowski's metrics.
Moreover, it is worth to notice that
the applicability of the technique is not confined to the Euclidean space
or to vector spaces.
It can be applied both in metric
and non-metric spaces equipped with a distance function.
Moreover, while
effectiveness and efficiency of the method do not deteriorate
with the dimensionality, 
its application
is {perfectly reasonable}
even in low dimensions.

\smallskip
We believe the $\CFOF$ technique and 
the properties presented in this work
provide insights within the scenario
of outlier detection and, more in the general,
of high-dimensional data analysis.

\smallskip
The rest of the work is organized as follows.
Section \ref{sect:definition} introduces the $\CFOF$ score and 
provides empirical evidence of its behavior.
Section \ref{sect:concfree}
studies theoretical properties
of the $\CFOF$ outlier score.
Section \ref{sect:algorithm} presents the $\FastCFOF$ algorithm
for detecting outliers in large high-dimensional datasets.
Section \ref{sect:experiments} describes experiments involving
the propose approach.
Finally, Section \ref{sect:conclusions} draws conclusions and depicts future 
work.

\section{The Concentration Free Outlier Factor}
\label{sect:definition}

In this section, we introduce the Concentration Free Outlier
Factor ($\CFOF$), a novel outlier detection measure.

After presenting the definition of $\CFOF$ score
(see Section \ref{sect:cfofdef}),
we provide empirical evidence of its behavior
by discussing relationship with the distance concentration
phenomenon (see Section \ref{sect:cfof_distconc})
and with the hubness phenomenon (see Section \ref{sect:cfof_hubness}).
Theoretical properties of $\CFOF$
will be taken into account
in subsequent Section \ref{sect:concfree}.

\subsection{Definition}\label{sect:cfofdef}

Let $\DS = \{x_1,x_2,\ldots,x_n\}$ denote a dataset of $n$ points,
also said objects,
belonging to an object space
$\mathbb{U}$ equipped with a distance function $\dist$.
In the following, we assume 
that $\mathbb{U}$ is a vector space
of the form
$\mathbb{U} = \mathbb{D}^d$, 
where $d\in \mathbb{N}^+$, the \textit{dimensionality} of the space,
is a positive natural number and
$\mathbb{D}$ is usually the set $\mathbb{R}$ of real numbers.
However, we point out that the method can be applied in any object space
equipped with a distance function (not necessarily a metric).

Given a dataset object $x$ and a positive integer $k$, 
the \emph{$k$-th nearest neighbor} of $x$
is the dataset object $\nn_k(x)$ such that there exists exactly
$k-1$ dataset objects lying at distance 
smaller than $\dist(x,\nn_k(x))$ from $x$.
It always holds that $x=\nn_1(x)$.
We assume that ties are non-deterministically ordered.

The \emph{$k$ nearest neighbors set} $\NN_k(x)$ of $x$,
where $k$ is also said the \textit{neighborhood width},
is the set of objects $\{ \nn_i(x) \mid 1\le i\le k \}$.

By $\N_k(x)$ we denote the number of objects
having $x$ among their $k$ nearest neighbors:
\[ \N_k(x) = |\{ y : x\in\NN_k(y) \}|, \]
also referred to as 
\textit{$k$-occurrences function} or
\textit{reverse neighborhood size} or 
\textit{reverse $k$ nearest neighbor count} or 
RNNc, for short.

\begin{definition}[CFOF outlier score]
Given a parameter $\varrho\in(0,1)$,
the \textit{Concentration Free Outlier Score},
also referred to as $\CFOF$ (or $\varrho$--$\CFOF$ if the 
value of the parameter $\varrho$ is not clear from the context),
is defined as:
\begin{equation}
\label{eq:cfof_hard}
\CFOF(x) = \min_{1\le k'\le n} \left\{ \frac{k'}{n} : \N_{k'}(x) \ge n\varrho \right\}.
\end{equation}
Thus, the CFOF score of $x$ represents the smallest
neighborhood width, normalized with respect to $n$,
for which $x$ exhibits a reverse neighborhood of size
at least $n\varrho$.

The CFOF score belongs to the interval $[0,1]$.
In some cases, we will use absolute $\CFOF$
score values, ranging from $1$ to $n$.
\end{definition}

{
For complying with 
existing outlier detection measures that employ 
the neighborhood width $k$
as an input parameter, 
when we refer to the input parameter $k$,
we assume that, as far as $\CFOF$ is concerned, $k$ represents a shorthand 
for the parameter $\varrho = k/n$.
}

\begin{figure}[t]
\centering
\begin{tabular}{lcccr}
\subfloat[\label{fig:example_scoreA}
Example dataset of $n=14$ points with associated 
absolute $\CFOF$ scores for $k=3$.
The top outlier is the point located in the left upper corner
which has score $k'=10$. Recall that every point is the 
first neighbor of itself.
]
{{\includegraphics[width=0.275\textwidth]{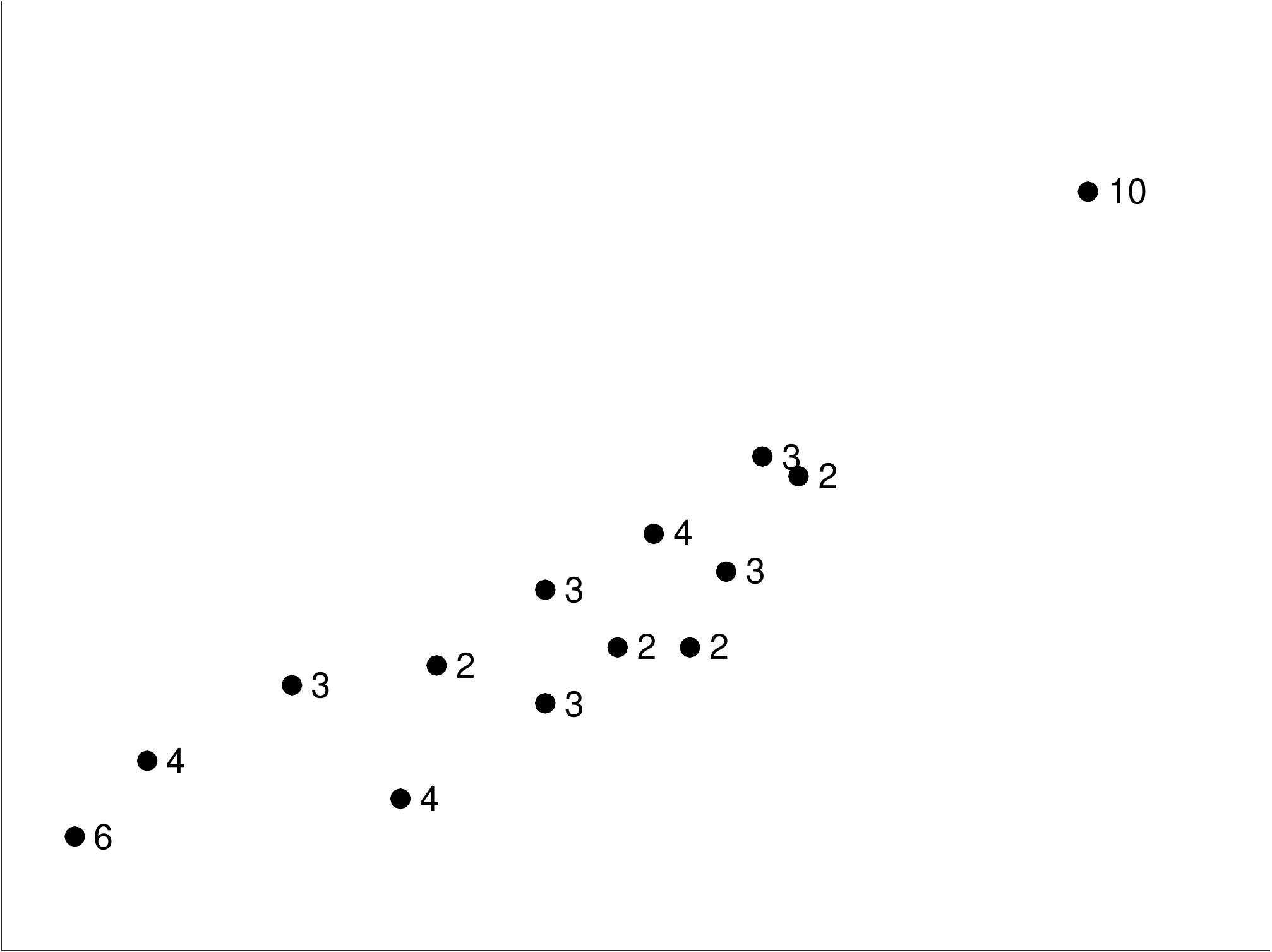}}}
&
&
\subfloat[\label{fig:example_scoreD}
Consider the value $k'=9$. Only $2$ points, 
instead of $k=3$,
namely points A and C, have the outlier point A point among their $k'$ nearest neighbors.
Thus, the score of A must be strictly greater than $2$.]
{{\includegraphics[width=0.275\textwidth]{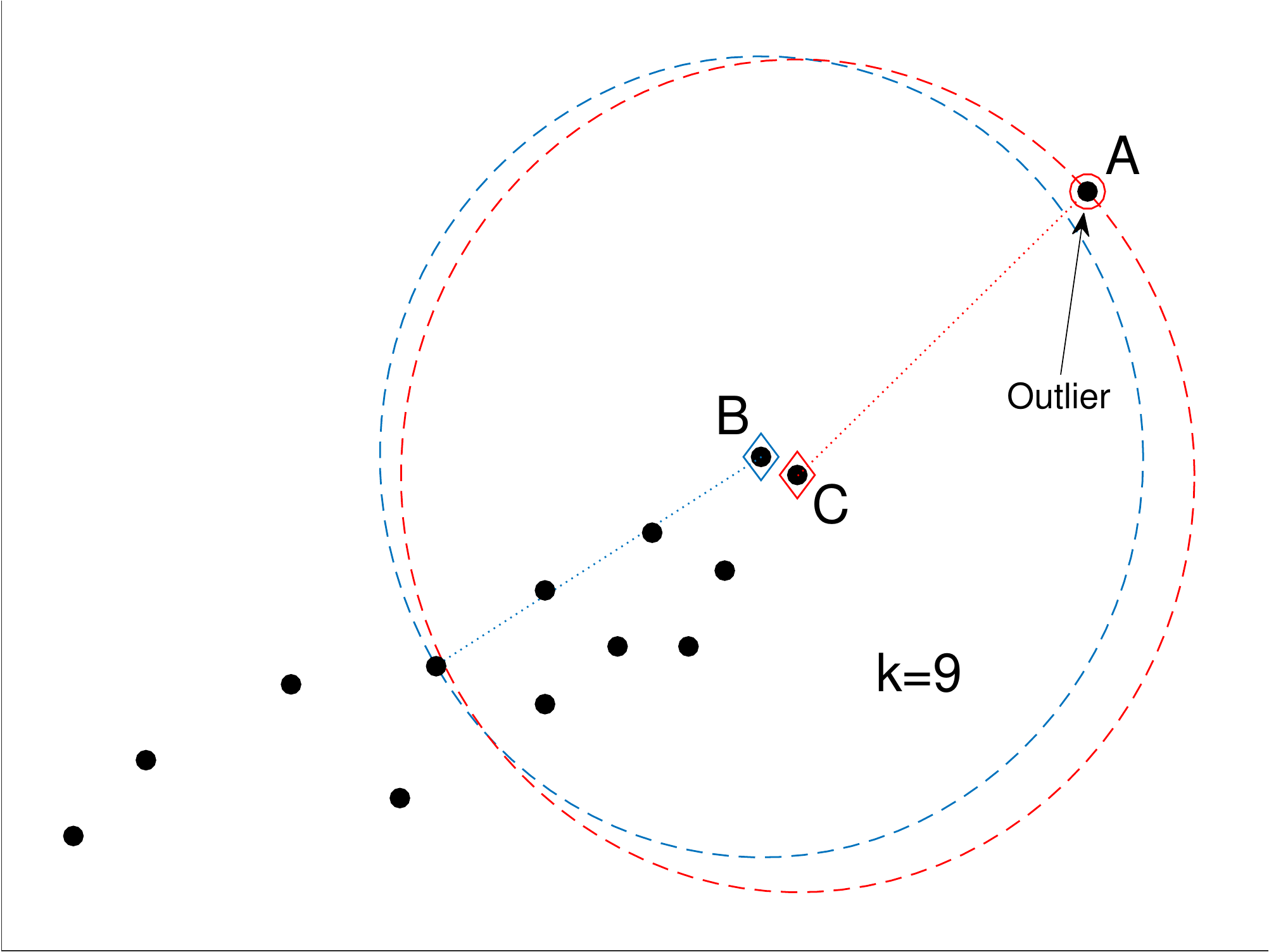}}}
&
&
\subfloat[\label{fig:example_scoreC}
The $\CFOF$ score of A is $k'=10$, since $10$ 
is the smallest natural number for which
there exists at least
$k=3$ points, namely points A, B, and C,
having the point A among their $k'$ nearest neighbors.]
{{\includegraphics[width=0.275\textwidth]{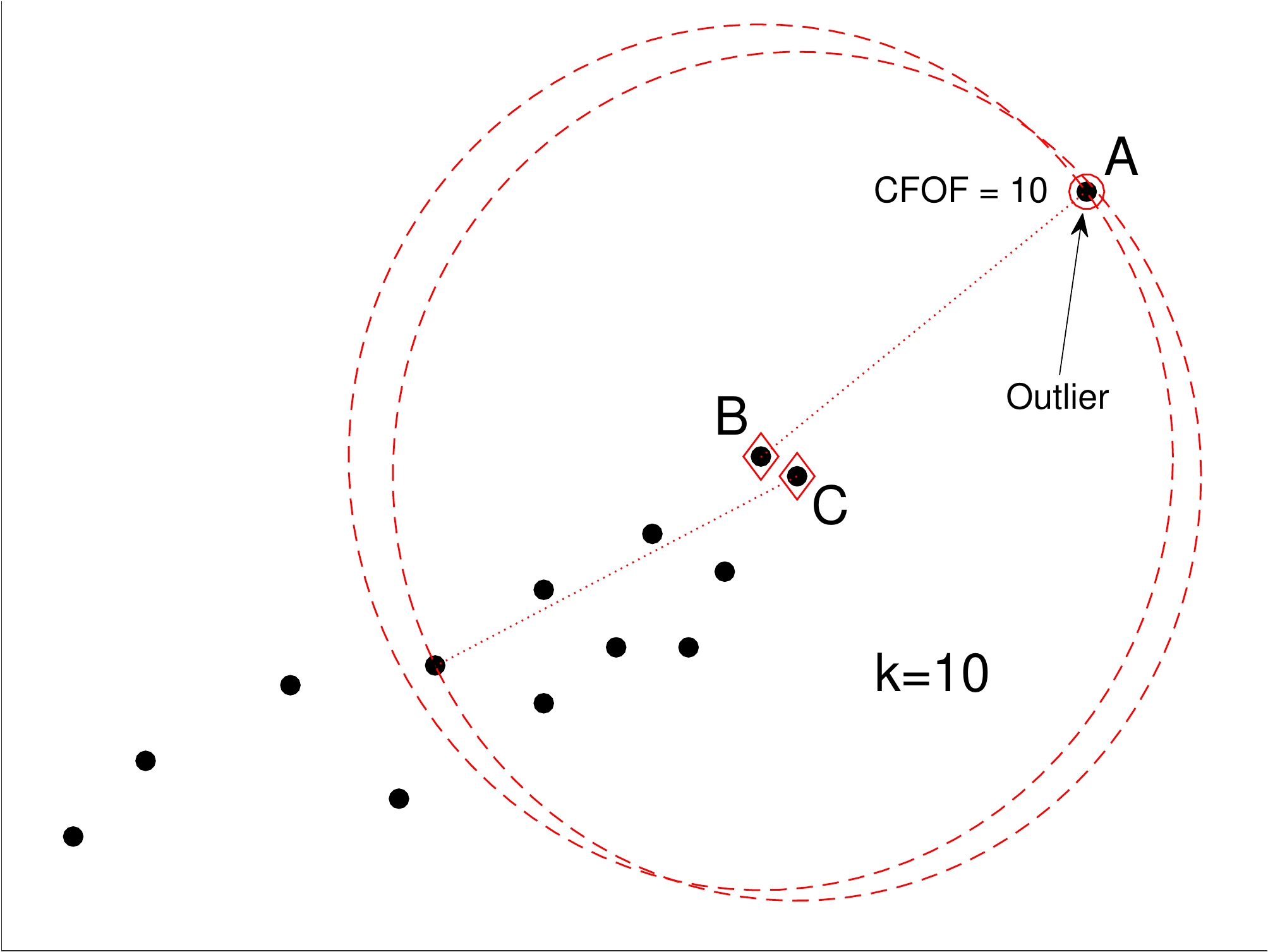}}}
\end{tabular}
\caption{Example illustrating the computation of the $\CFOF$ score.}
\label{fig:example_score}
\end{figure}

\medskip
Figure \ref{fig:example_score} 
illustrates the computation of the $\CFOF$ score on
a two-dimensional example dataset.

Intuitively, the $\CFOF$ score
measures how many neighbors
have to be taken into account
in order for the object to be considered
close by an
appreciable fraction of the
dataset objects.
We point out that this kind of notion
of perceiving the abnormality of an observation
is completely different from any other
notion so far introduced in the literature.

In particular,
the point of view here is in some sense reversed
with respect to distance-based outliers,
since we are interested in
determining the smallest neighborhood width $k'$
for which the object
is a neighbor of at least $n \varrho$ other objects,
while distance-based outliers
(and, specifically, the definition considering the
distance from the $k$th nearest neighbor)
determine the smallest radius
of a region centered in the object
which contains
at least $k$ other objects.

\subsection{Relationship with the distance concentration phenomenon}
\label{sect:cfof_distconc}

One of the main peculiarities of the $\CFOF$ definition is 
its resistance to
the distance concentration phenomenon,
which is part of the so called curse of dimensionality
problem \cite{demartines1994,BeyerGRS99,Francois2007,Angiulli2018}.
As already recalled,
the term \textit{curse of dimensionality} 
is used to refer to difficulties arising when high-dimensional
data must be taken into account,
and one of the main aspects of this curse is 
\textit{distance concentration},
that is the tendency of distances
to become almost indiscernible
as dimensionality increases.

In this scenario \cite{demartines1994}
has shown that the expectation of the Euclidean distance
of i.i.d. random vectors increases as the square root of the dimension,
whereas its variance tends toward a constant. This implies
that high-dimensional points appear to be distributed
around the surface of a sphere 
and distances between pairs of points tend to be similar:
{according to \cite{Angiulli2018}, the expected squared Euclidean distance
of the points from their mean is $d\sigma^2$ and the 
expected squared Euclidean inter-point distance is $2d\sigma^2$,
where $\sigma$ is the standard deviation of the random variable
used to generate points.}

The distance concentration phenomenon is usually characterized in
the literature by means of a ratio between some measure related to the spread
and some measure related to the magnitude of the distances. In particular,
the conclusion is that there is concentration
when the above ratio converges to $0$ as the dimensionality
tends to infinity.
The \textit{relative variance}
$RV = \sigma_{dist} / \mu_{dist}$ \cite{Francois2007}  is a measure
of concentration for distributions, corresponding to the ratio
between the standard deviation $\sigma_{dist}$
and the expected value $\mu_{dist}$ of the distance
between pairs of points.
In the case of the Euclidean distance,
or of any other Minkowski's metric,
the relative variance of data points
generated by i.i.d. random vectors 
tends to zero as the dimensionality tends to infinity,
independently from the sample size.
As a consequence, the separation between the nearest
neighbor and the farthest neighbor of a given point tend
to become increasingly indistinct as the dimensionality
increases.

\begin{figure}[t]
\centering
\subfloat[\label{fig:curseC}]
{\includegraphics[width=0.48\columnwidth]{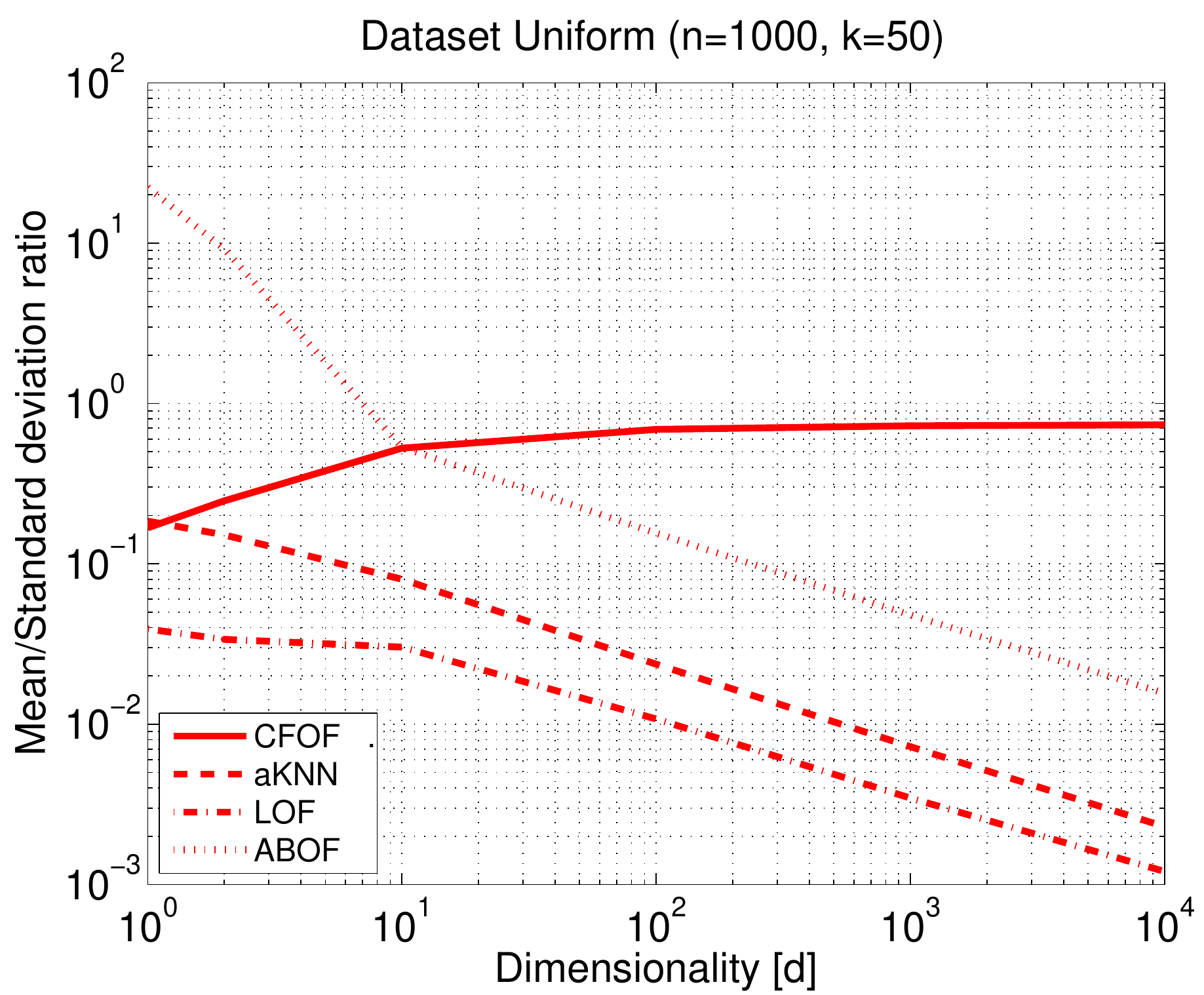}}
~
\subfloat[\label{fig:curseD}]
{\includegraphics[width=0.48\columnwidth]{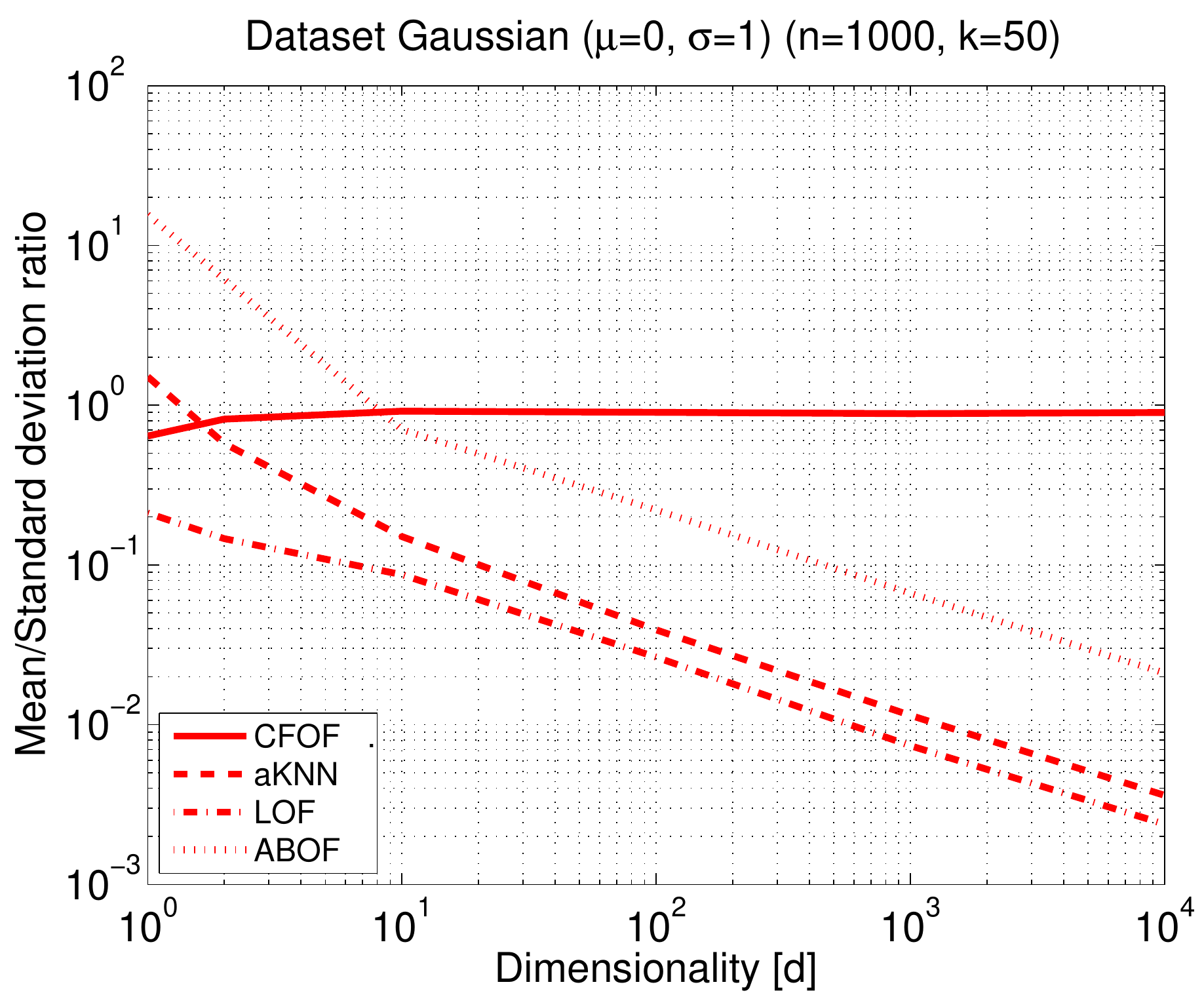}} 
\caption{Ratio $\sigma_{sc}/\mu_{sc}$ between the 
standard deviation $\sigma_{sc}$ and the mean $\mu_{sc}$
of different outlier scores $sc$
($\aKNN$, $\LOF$, ABOF, and $\CFOF$ methods)
versus the data dimensionality $d$
on uniform and normal data.}
\label{fig:curse2}
\end{figure}

\smallskip
Due to the specific role played by 
distances in their formulation,
the concentration problem also affects 
outlier scores.

To illustrate,
Figure \ref{fig:curse2}
shows the ratio 
${\sigma_{sc}}/{\mu_{sc}}$
between the standard deviation $\sigma_{sc}$
and the mean $\mu_{sc}$ of different 
families of outlier scores $sc$,
that are the distance-based method $\aKNN$ \cite{AP05}, 
the density-based method $\LOF$ \cite{BKNS00}, 
the angle-based method ABOF \cite{KriegelSZ08}, 
and $\CFOF$,
associated with a family of 
uniformly distributed (Figure \ref{fig:curseC})
and a normally distributed (Figure \ref{fig:curseD}) 
datasets having fixed size $n=1,\!000$ 
and increasing dimensionality $d\in[10^0,10^4]$,
for $k=50$.
We also computed the ratio ${\sigma_{sc}}/{\mu_{sc}}$ 
for the 
$\iForest$ \cite{LiuTZ12} outlier score $sc$,
obtaining the decreasing sequence
$0.103$, $0.078$, $0.032$, 
$0.023$, and $0.018$ for $d\in[10^0,10^4]$
on normally distributed data and similar values on uniform data.\footnote{The 
values for the parameters of the $\iForest$ algorithm
employed throughout the paper
correspond to the default values suggested in \cite{LiuTZ12}, that is to say the
sub-sampling size $\psi=256$ and the number of isolation
trees $t=100$.}

Results for each dimensionality value $d$
are obtained by ($i$) considering ten 
randomly generated different datasets, 
($ii$) computing outlier scores associated with each dataset,
($iii$) sorting scores of each dataset, 
and ($iv$) taking the average value for each rank position.
The figure and the values above reported highlight that, except for $\CFOF$,
the other scores, belonging to three different families
of techniques, exhibit a concentration effect.

Figure \ref{fig:score_distr}  
reports the sorted scores of the uniform
datasets above discussed.
For $\aKNN$ (Figure \ref{fig:score_distr_aknn})
the mean score value raises while the spread
stay limited.
For $\LOF$ (Figure \ref{fig:score_distr_lof})
all the values tend to $1$ as the dimensionality increases.
For ABOF (Figure \ref{fig:score_distr_abof})
both the mean and the standard deviation decrease 
of various orders of magnitude with the latter term
varying at a faster rate than the former one.
As for $\iForest$ (Figure \ref{fig:score_distr_iforest})
the mean stay fixed while the spread decreases.
As for $\CFOF$ (Figure \ref{fig:score_distr_cfof})
the score distributions for $d>100$ are very close
and exhibit only small differences.

\begin{figure}[t]
\centering
\subfloat[\label{fig:score_distr_aknn}]
{\includegraphics[width=0.28\textwidth]{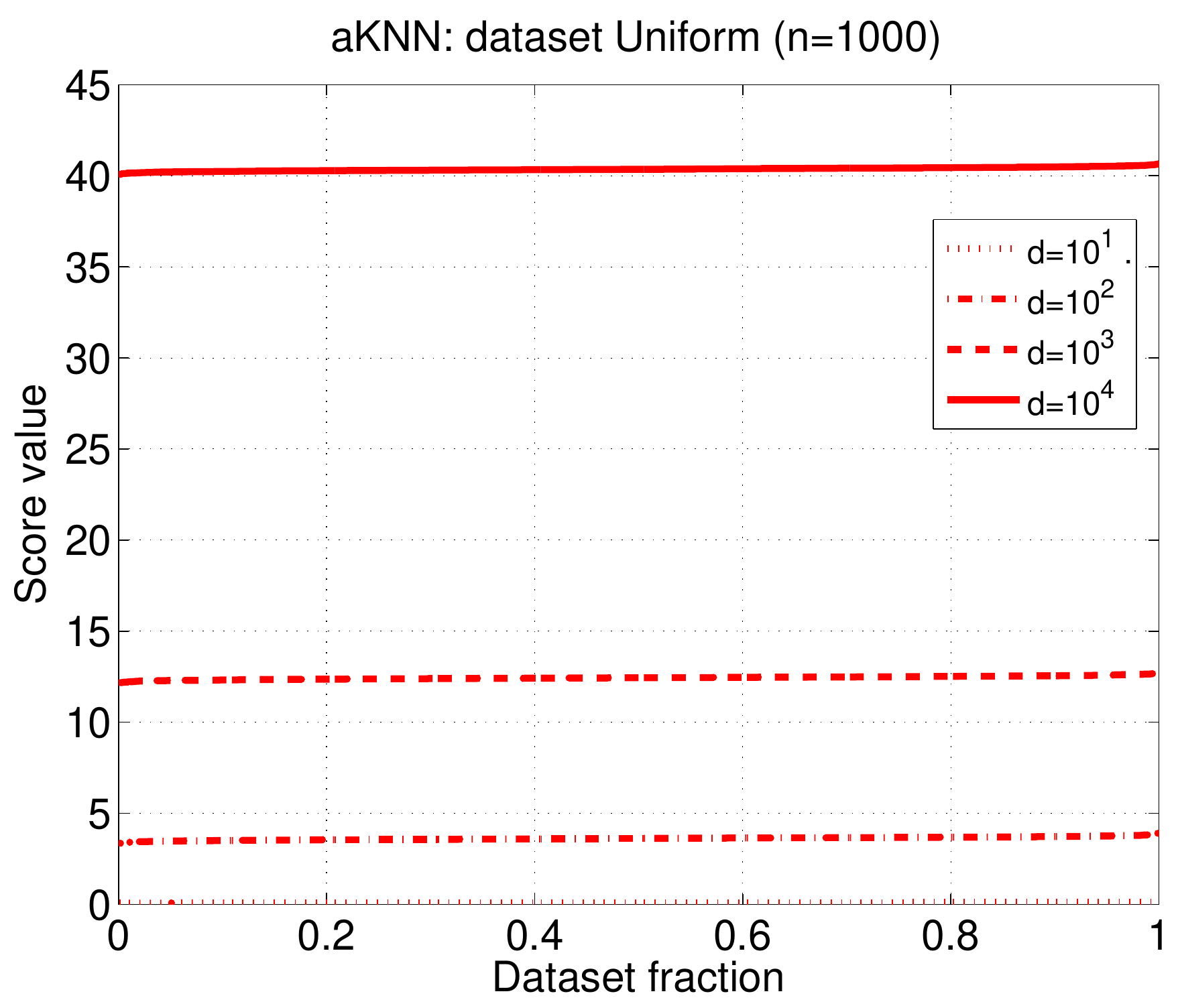}}
\subfloat[\label{fig:score_distr_lof}]
{\includegraphics[width=0.28\textwidth]{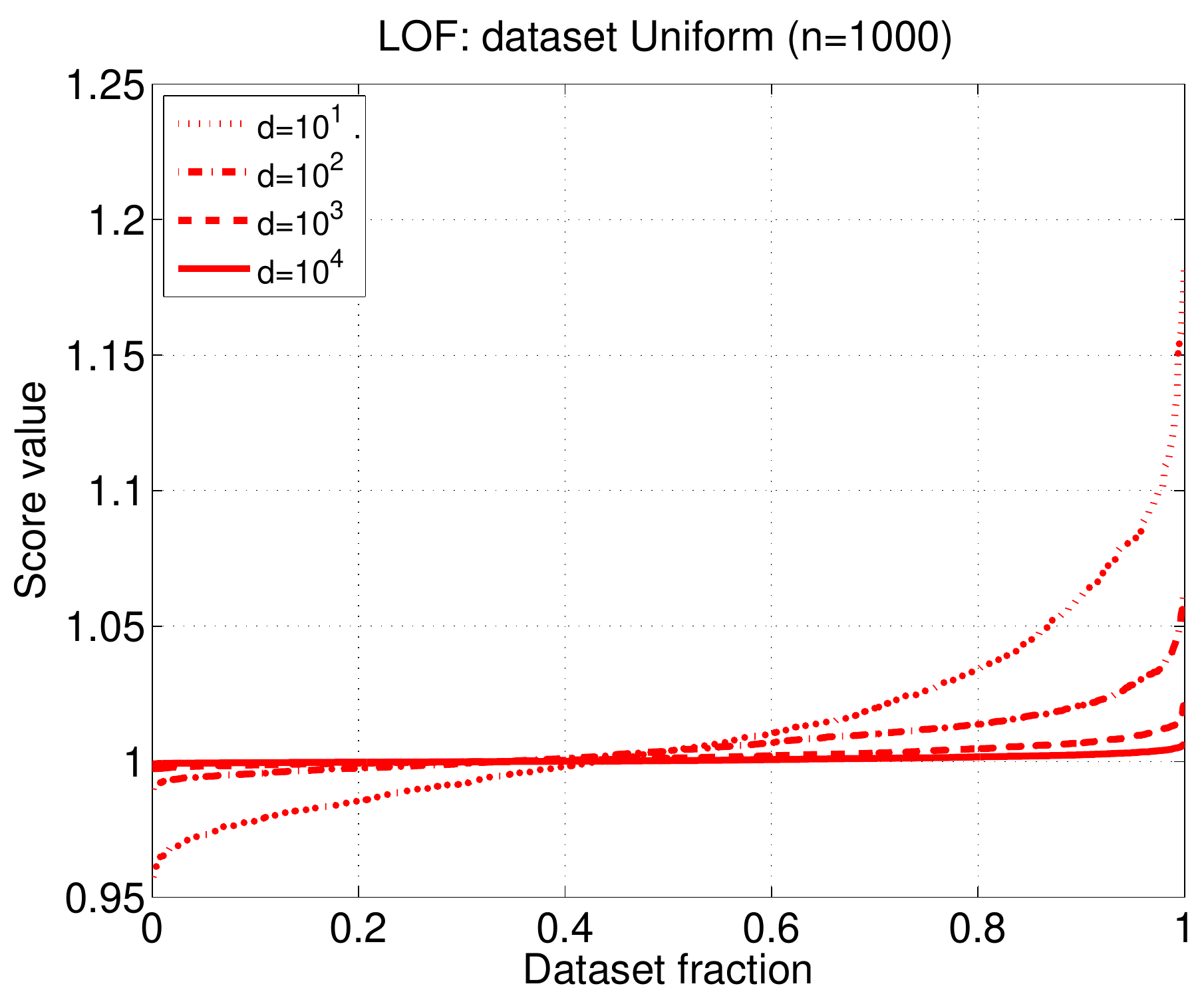}}
\subfloat[\label{fig:score_distr_abof}]
{\includegraphics[width=0.28\textwidth]{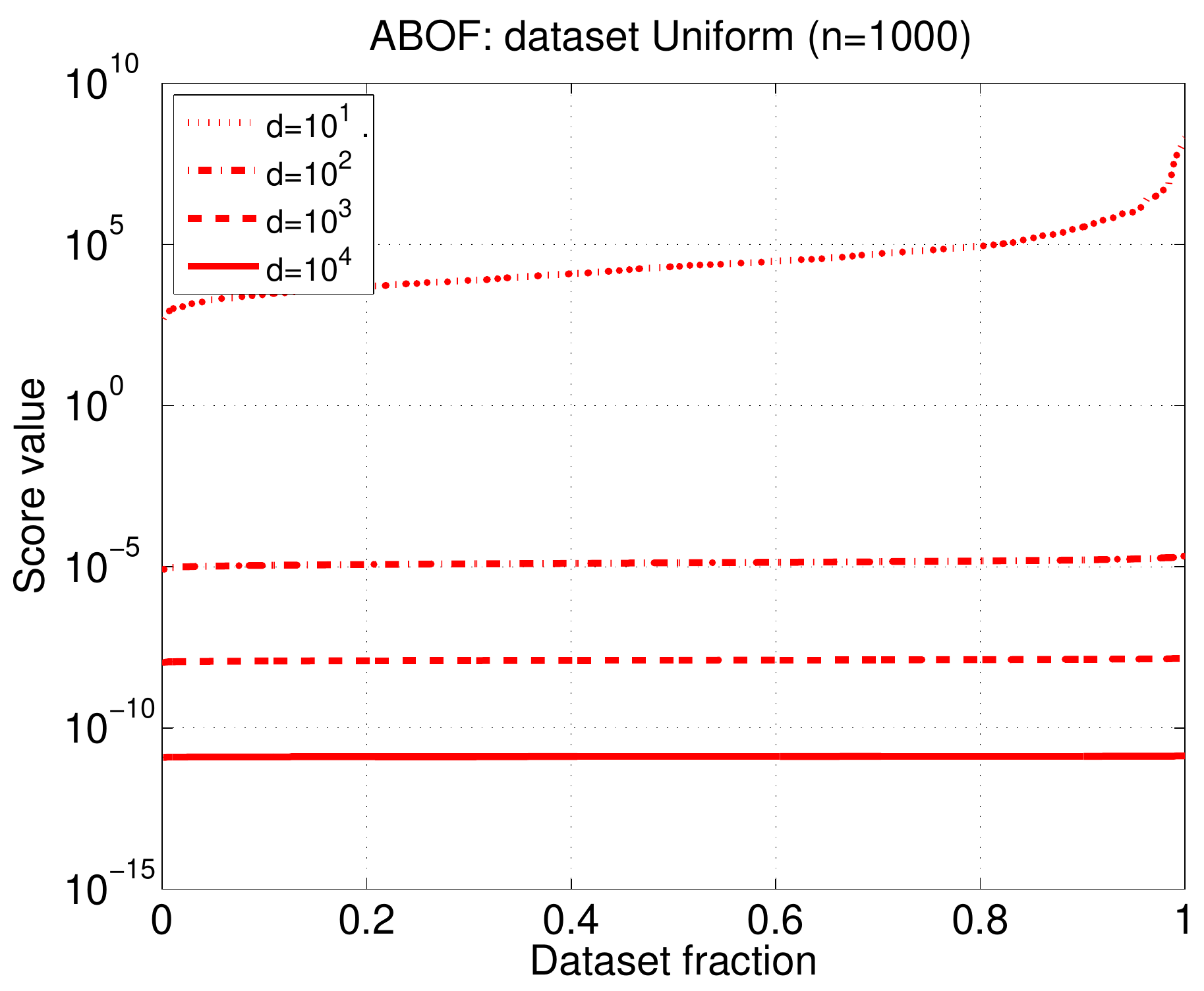}} 
\\
%
\subfloat[\label{fig:score_distr_iforest}]
{\includegraphics[width=0.28\columnwidth]{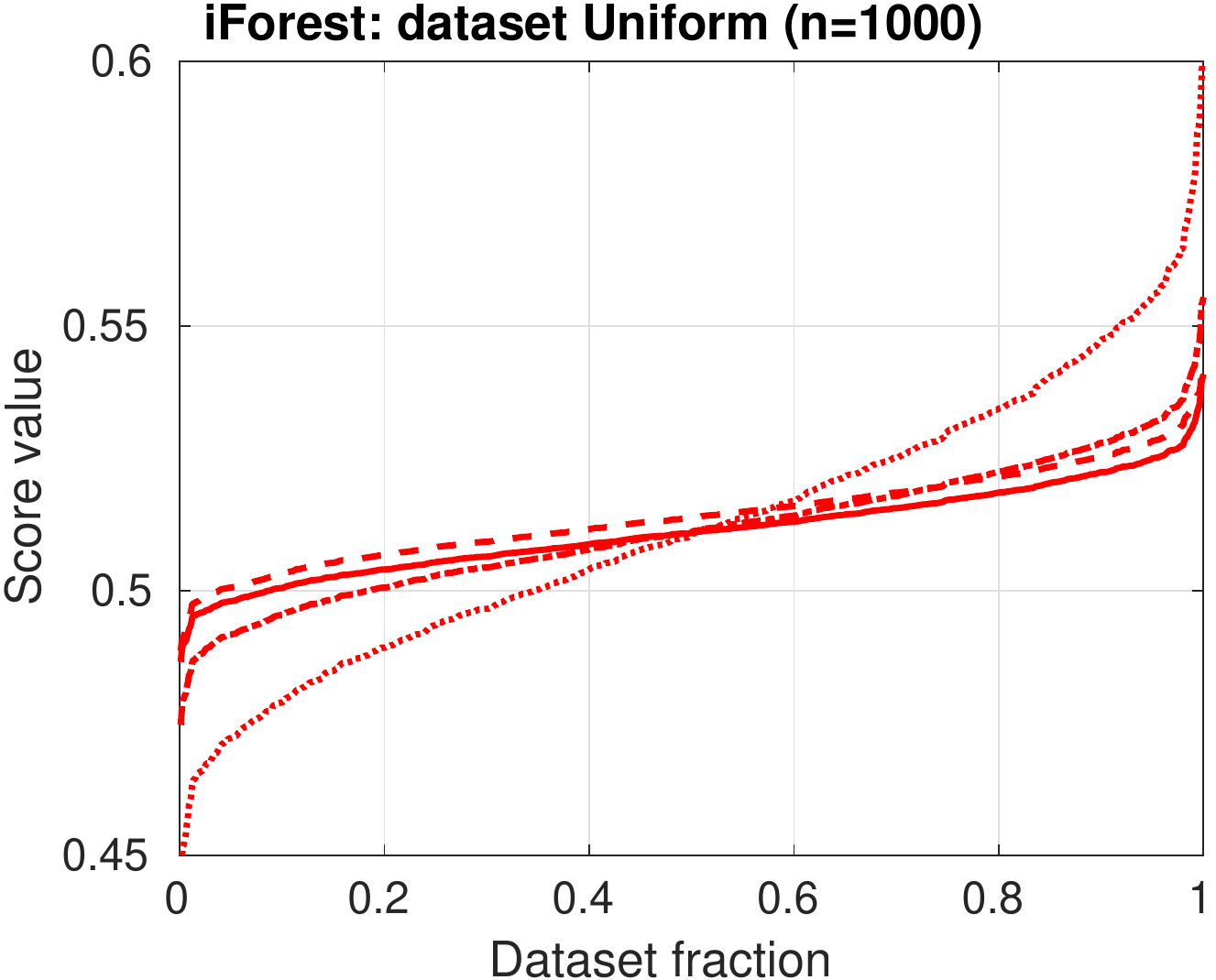}} 
\subfloat[\label{fig:score_distr_cfof}]
{\includegraphics[width=0.28\textwidth]{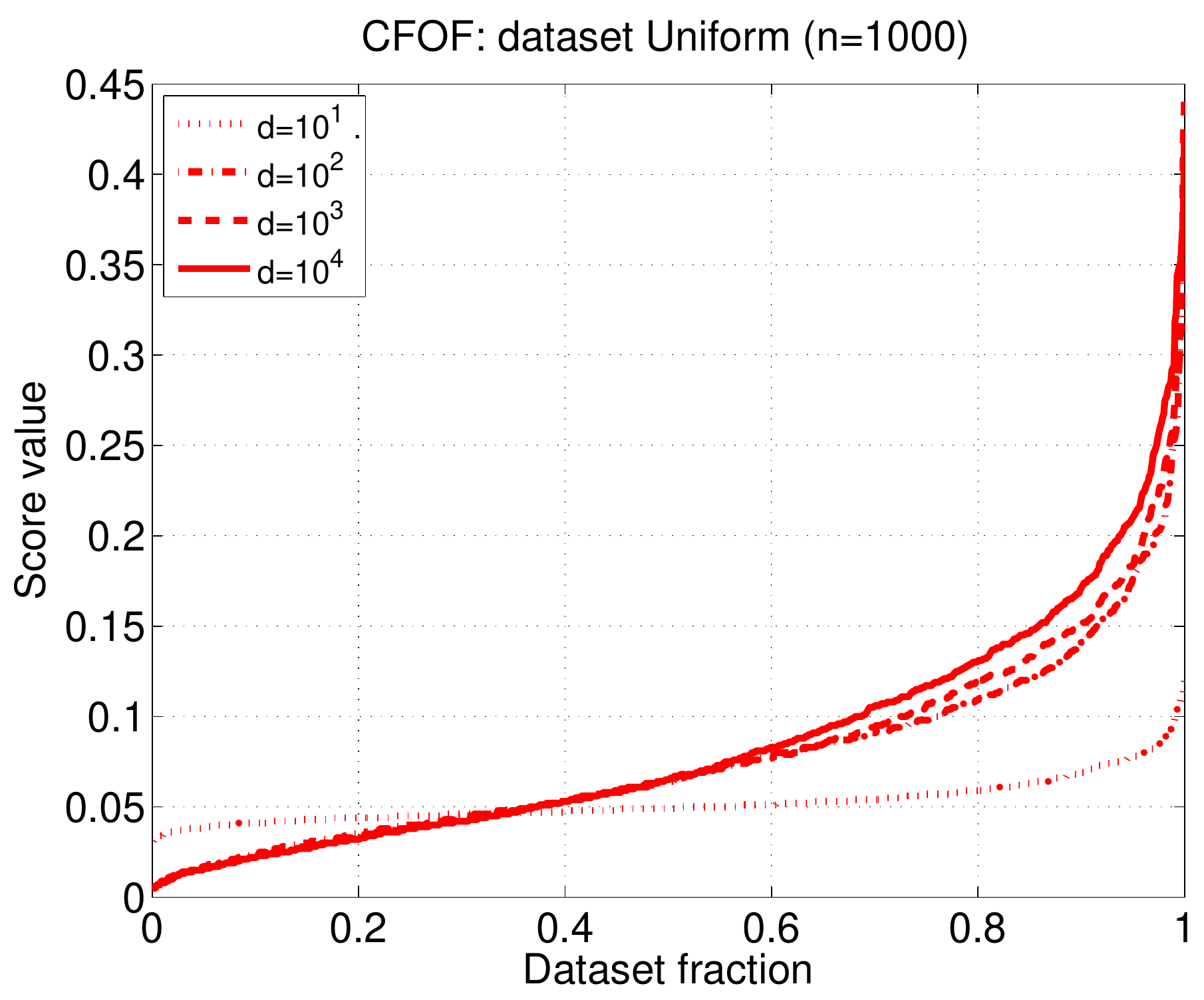}} 
\caption{Sorted outlier scores ($\aKNN$, $\LOF$, ABOF, $\iForest$, and $\CFOF$ methods) 
on uniform data
for different dimensionalities 
$d\in\{10,10^2,10^3,10^4\}$.
}
\label{fig:score_distr}
\end{figure}

\begin{figure}[t]
\centering
\subfloat[\label{fig:ground_truth2A}]
{\includegraphics[width=0.28\columnwidth]{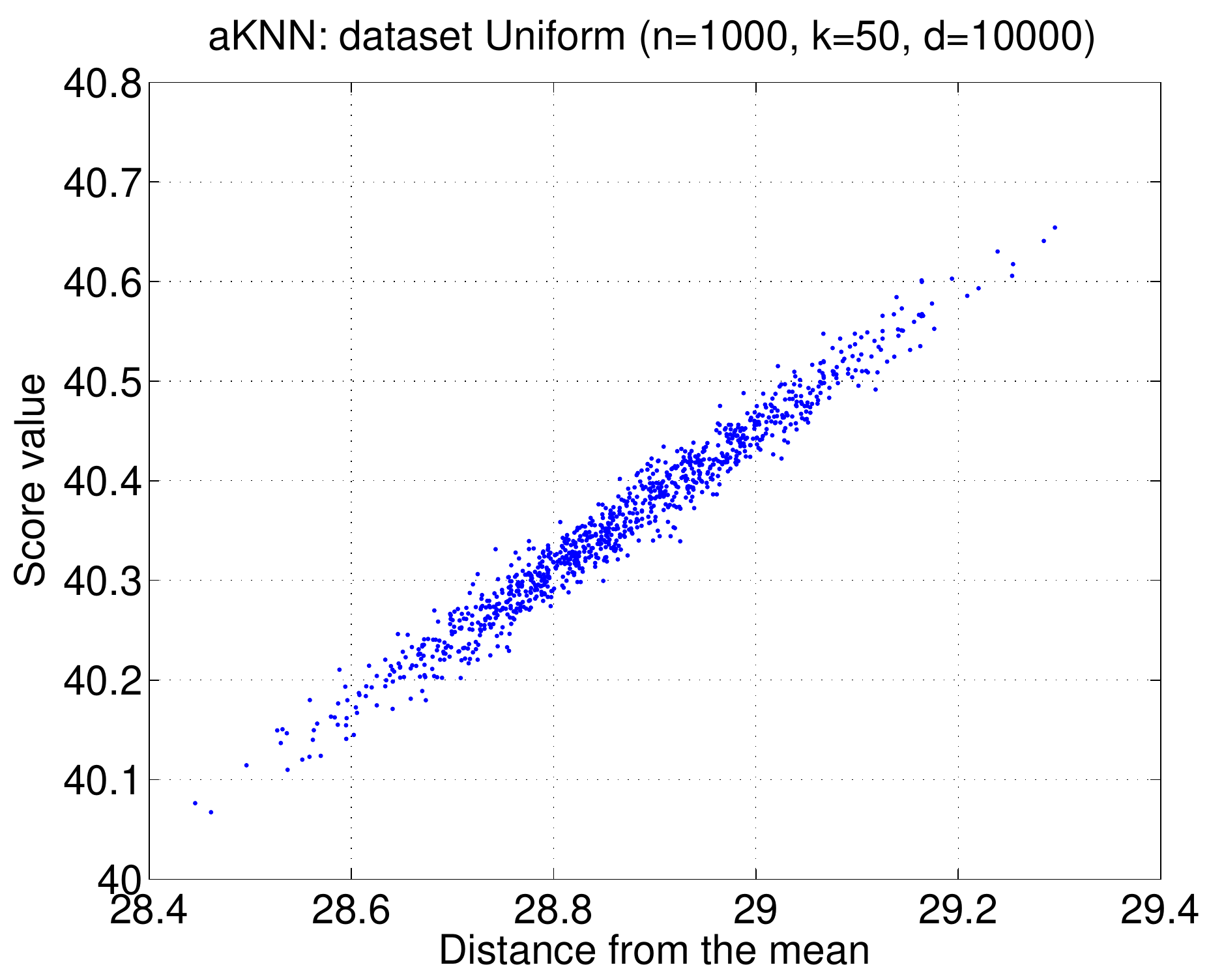}}
\subfloat[\label{fig:ground_truth2B}]
{\includegraphics[width=0.28\columnwidth]{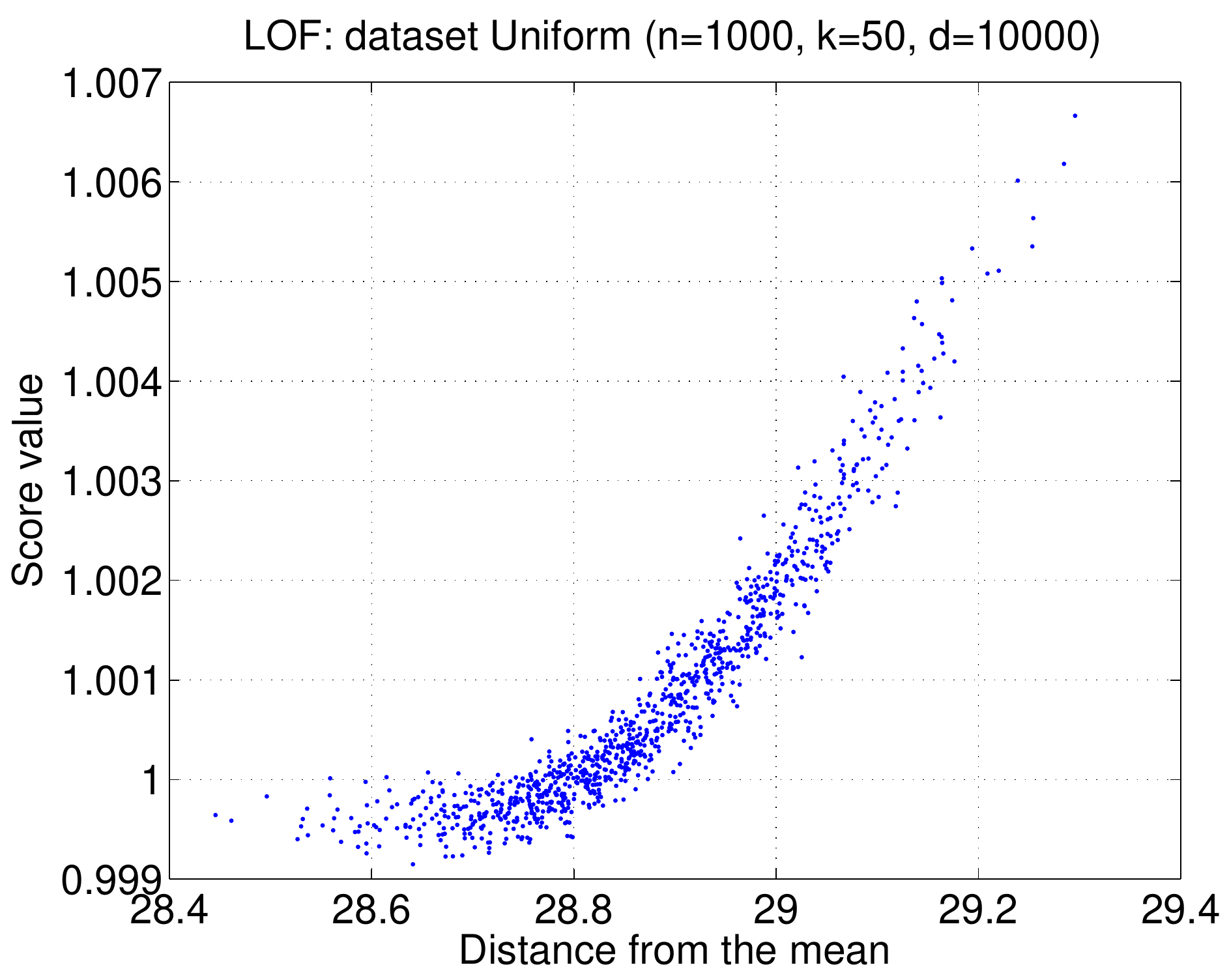}}
\subfloat[\label{fig:ground_truth2C}]
{\includegraphics[width=0.28\columnwidth]{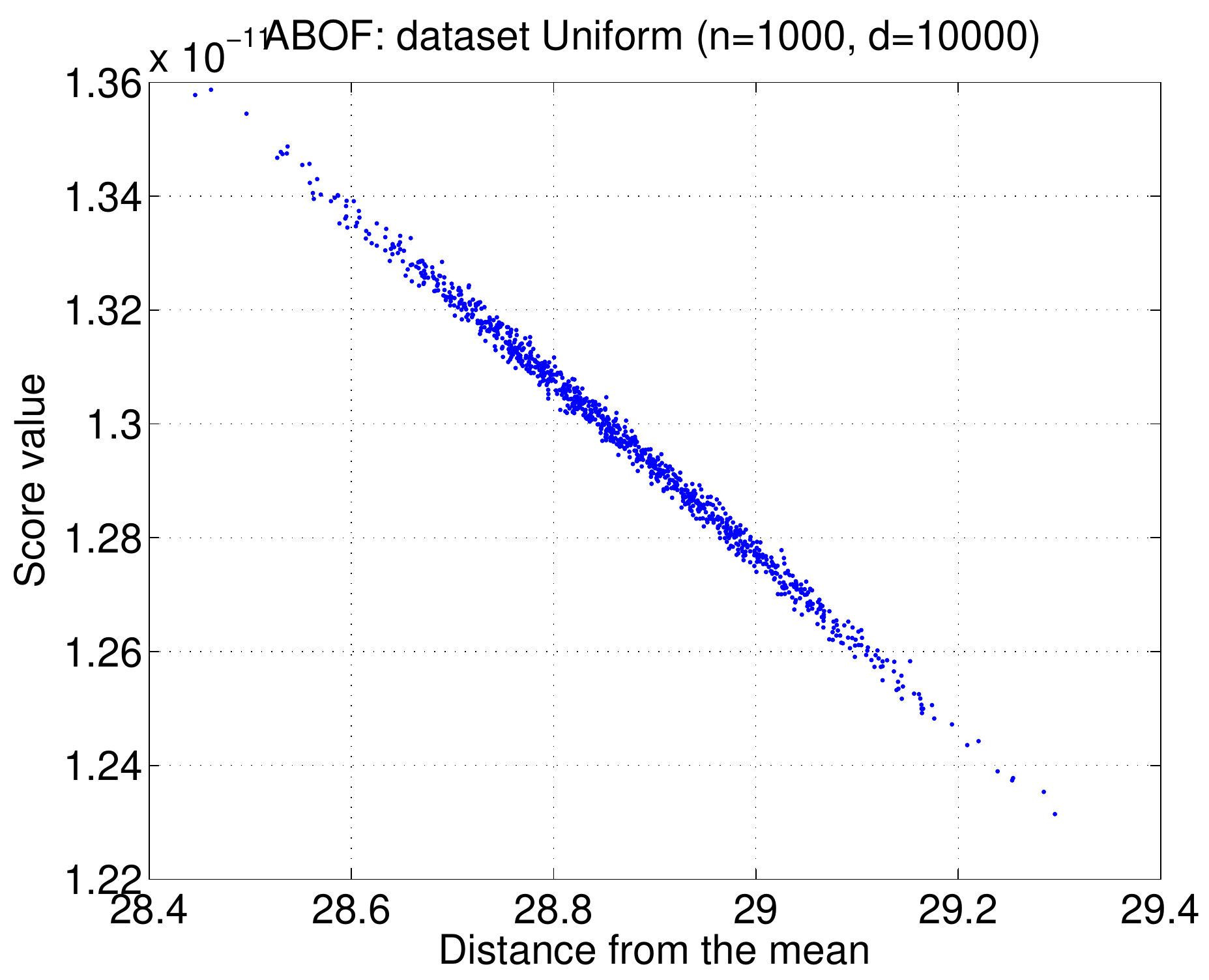}} 
\\
%
\subfloat[\label{fig:ground_truth2Ea}]
{\includegraphics[width=0.28\columnwidth]{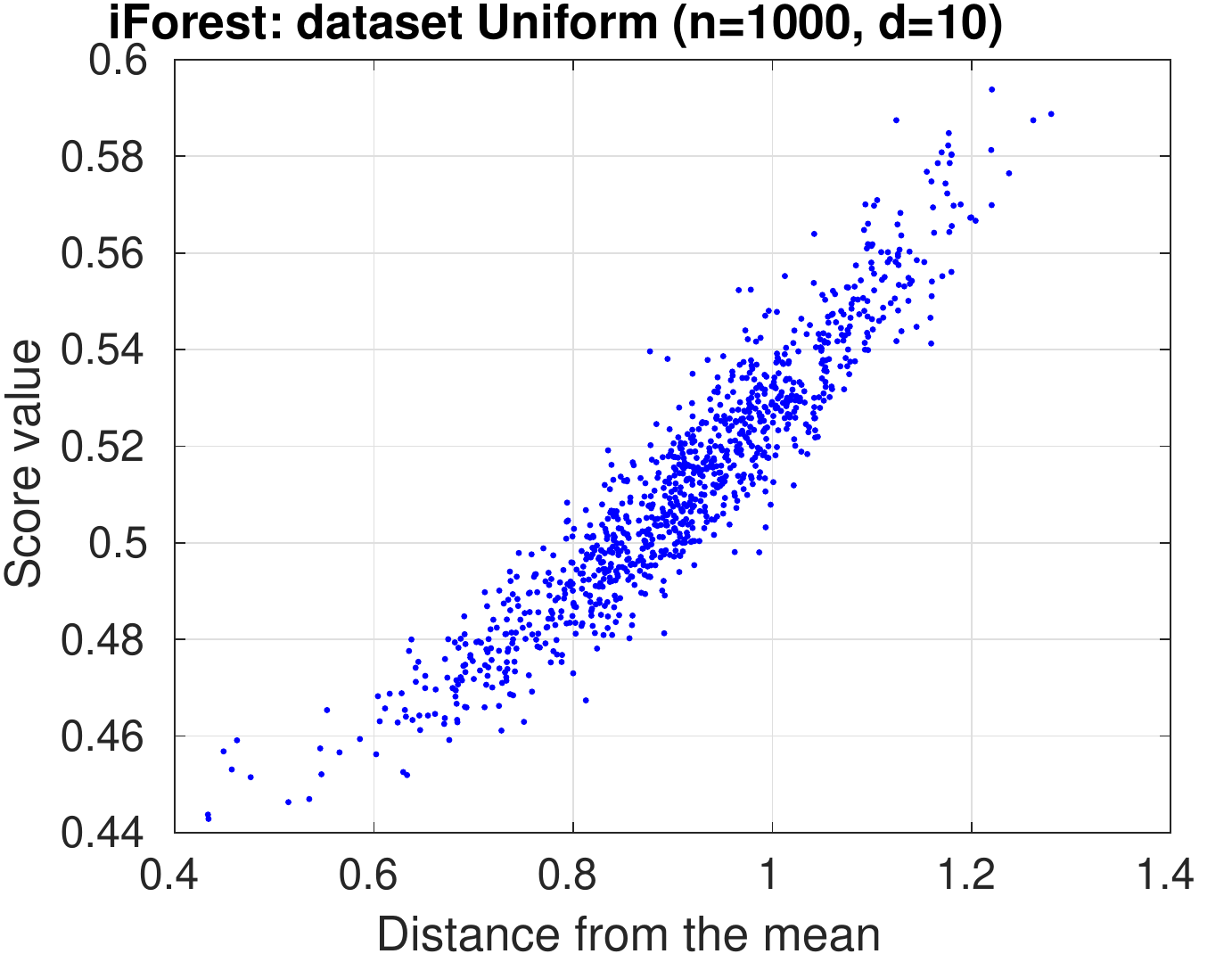}} 
\subfloat[\label{fig:ground_truth2Eb}]
{\includegraphics[width=0.28\columnwidth]{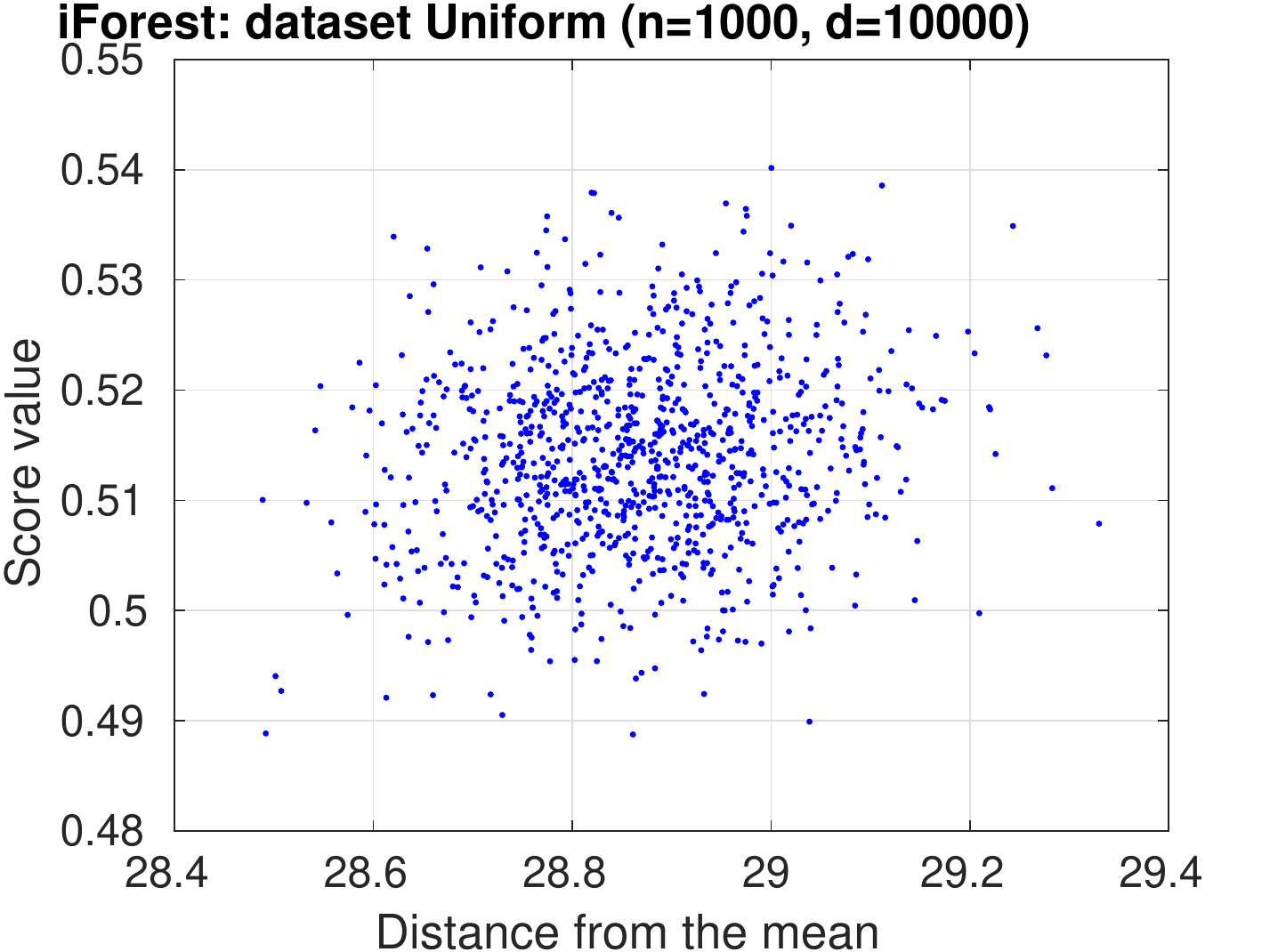}} 
%
\subfloat[\label{fig:ground_truth2D}]
{\includegraphics[width=0.28\columnwidth]{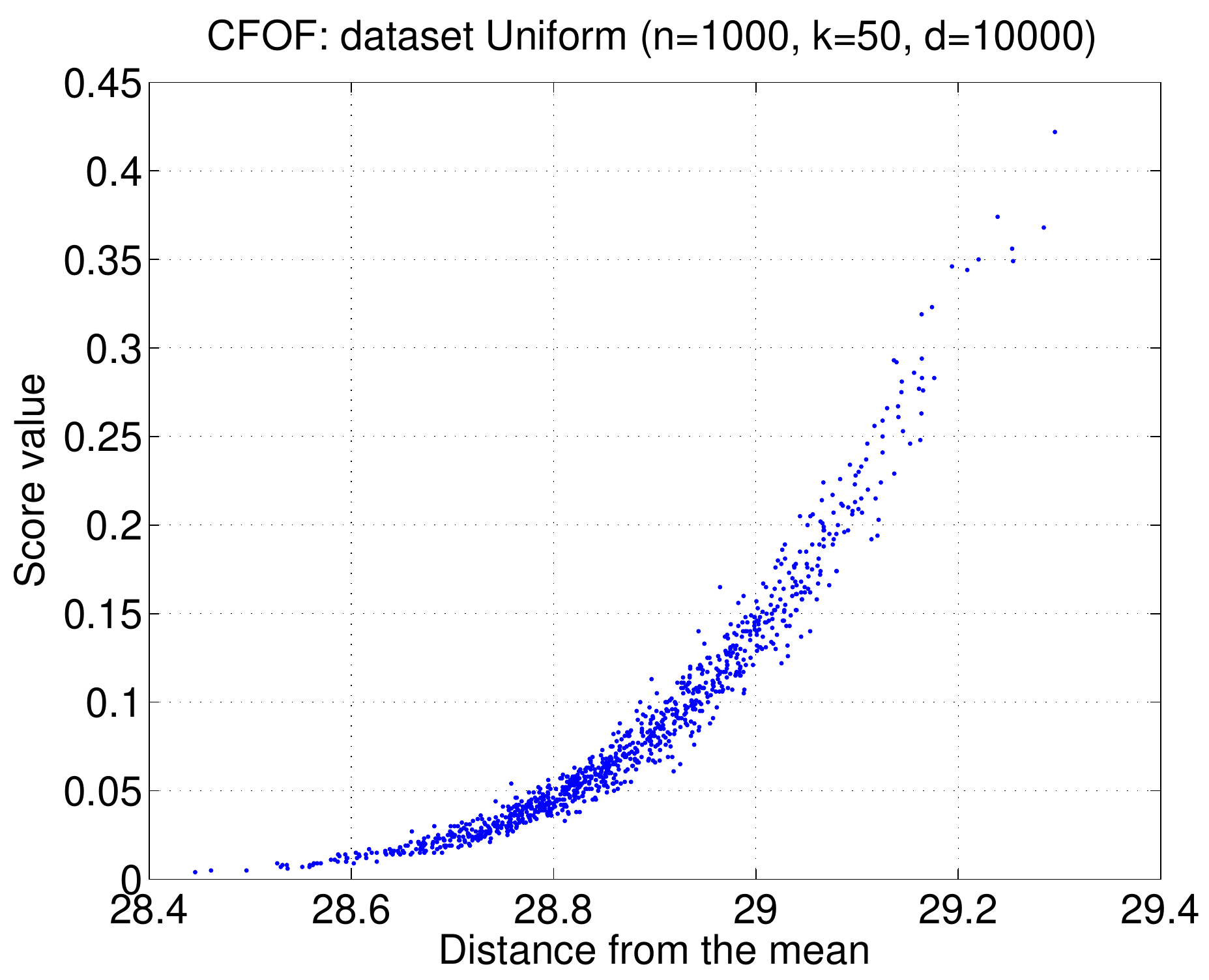}} 
\caption{Score values versus distance
from the mean 
on uniform data
for $d=10^4$ ($\aKNN$, $\LOF$, ABOF, $\iForest$, and $\CFOF$)
and for $d=10$ ($\iForest$).
}
\label{fig:ground_truth2bis}
\end{figure}

In order to visualize score distributions,
Figure \ref{fig:ground_truth2bis}
report the scatter plot of the score values
versus the distance from the mean for $d=10^4$.
As for $\iForest$ is concerned, we reported both 
the scatter plot for $d=10$ (Figure \ref{fig:ground_truth2Ea})
that the scatter plot for $d=10,\!000$ (Figure \ref{fig:ground_truth2Eb}).
Indeed, while $\iForest$ 
associates larger score values to the points maximizing their 
distance from the mean for small dimensionalities (e.g. $d=10$),
when the dimensionality increases
the quality of the scores worsens. 
The scatter plot in Figure \ref{fig:ground_truth2Eb} ($d=10,\!000$)
shows that the $\iForest$ score is almost independent of the distance
from the mean, thus witnessing that its discrimination capability is practically lost.
This can be justified by the kind of strategy pursued by $\iForest$
and by the geometry of intrinsically high-dimensional spaces.
Since the strategy of $\iForest$
consists in recursively partitioning the space by means of 
randomly generated axis-parallel hyperplanes and then using as a score
the recursion depth at which a particular point becomes isolated,
it is conceivable that this procedure 
is destined to output about the same values if the data points
are distributed around
the surface of an hypersphere,
as they tend to do in the case of intrinsically high-dimensional
spaces.

\begin{figure}[t]
\centering
\subfloat[\label{fig:curse_histA}]
{\includegraphics[width=0.48\textwidth]{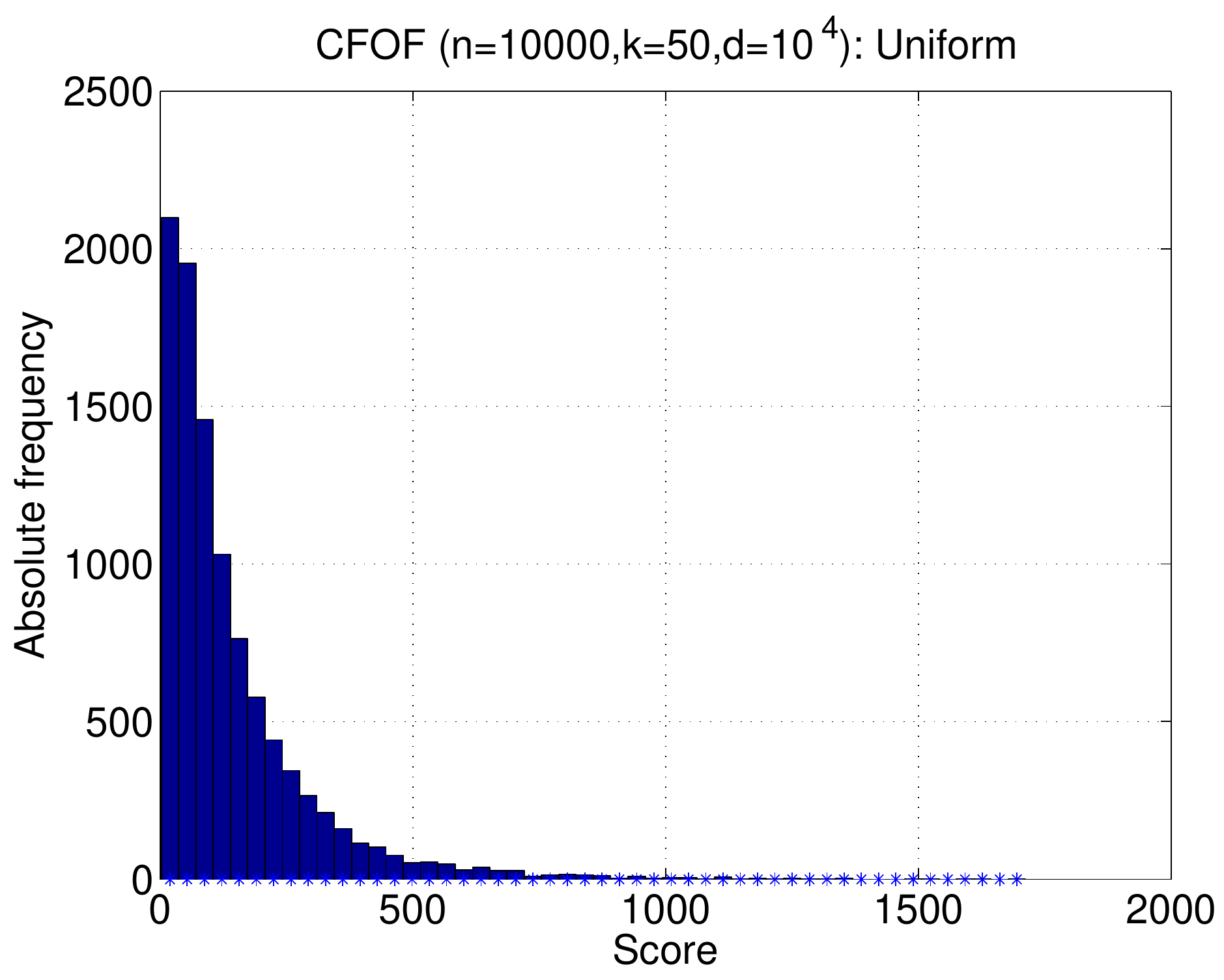}}
\subfloat[\label{fig:curse_histB}]
{\includegraphics[width=0.48\textwidth]{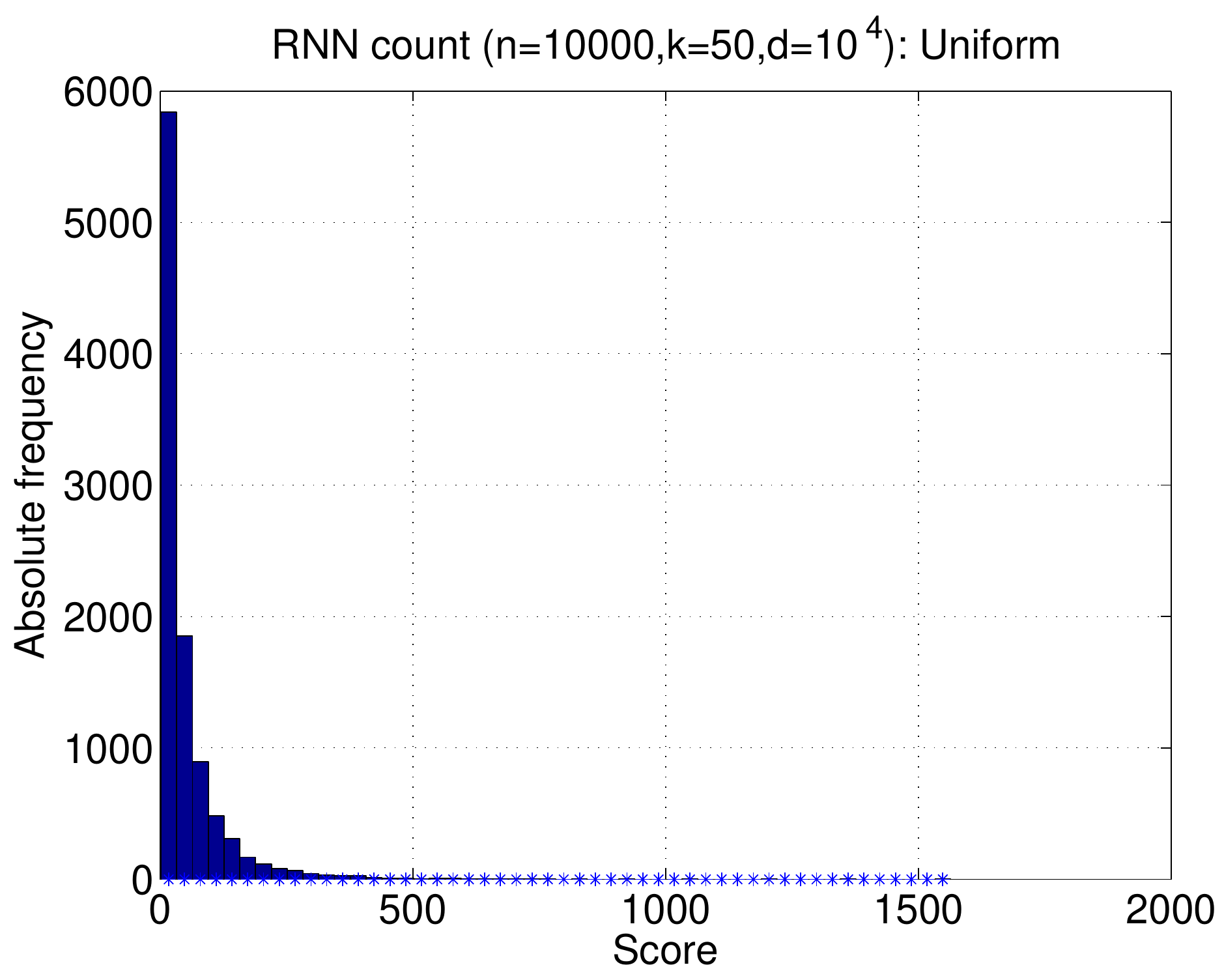}}
\\
\subfloat[\label{fig:curse_cumsumA}]
{\includegraphics[width=0.48\textwidth]{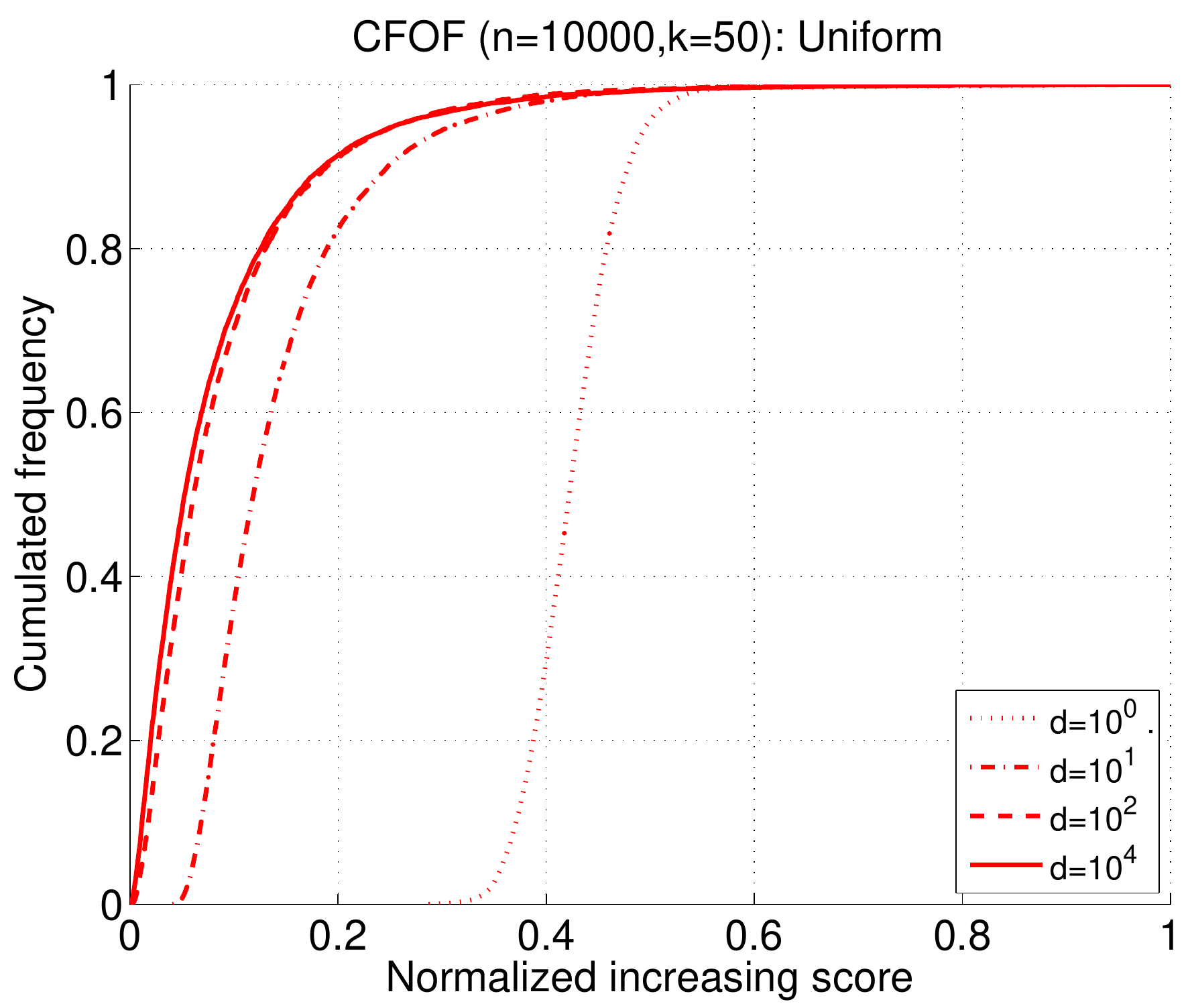}}
\subfloat[\label{fig:curse_cumsumB}]
{\includegraphics[width=0.48\textwidth]{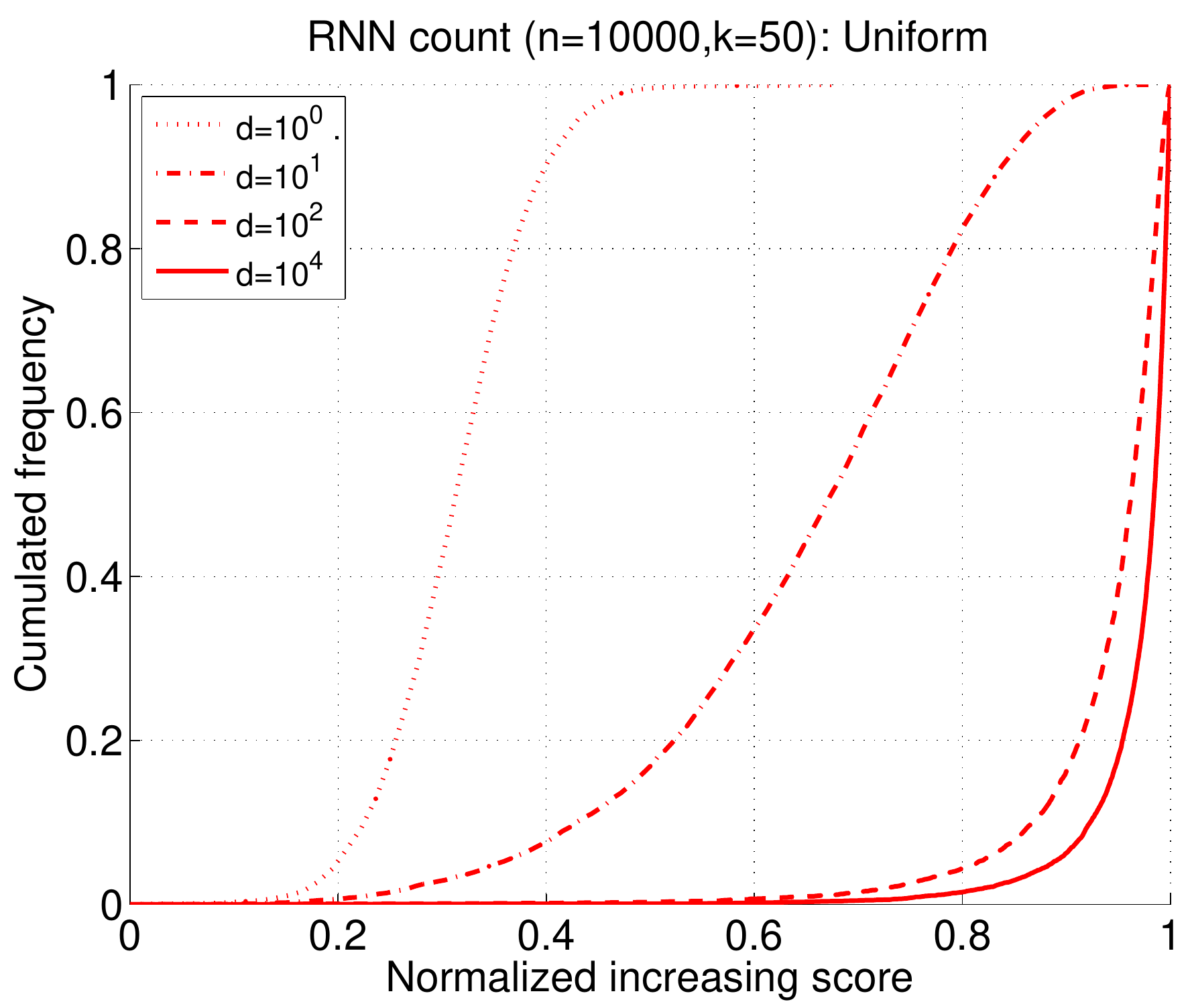}} 
\caption{Distribution of $\CFOF$ (fig. (a))
and $\RNNc$ (fig. (b)) scores on uniformly distributed
$d=10^4$ dimensional data ($n=10,\!000$ and $k=50$)
and cumulated frequency associated with normalized increasing
$\CFOF$ (fig. (c)) and $\RNNc$ (fig. (d)) scores
for different data dimensionalities $d$.
}
\label{fig:curse_cumsum}
\end{figure}

\subsection{Relationship with the hubness phenomenon}
\label{sect:cfof_hubness}

It descends from its definition that
$\CFOF$ has connections with the reverse neighborhood
size, a tool which has been also used for characterizing outliers.
The $\ODIN$ method \cite{HautamakiKF04}
uses of the reverse neighborhood
size $\N_k(\cdot)$ as an outlier score,
which we refer also as RNN count, or $\RNNc$ for short. 
Outliers are those objects 
associated with the smallest RNN counts.
However, it is well-known that the function $\N_k(\cdot)$
suffers of a peculiar problem
known as \textit{hubness} \cite{AucouturierP08,RadovanovicNI09,Angiulli2018}.
As the dimensionality of the 
space increases,
the distribution of $N_k(\cdot)$
becomes skewed to the right with increasing variance,
leading to a very large number of objects
showing very small RNN counts. 
Thus,
the number of 
\textit{antihubs}, that are objects appearing 
in a much smaller number of
$k$ nearest neighbors sets (possibly they are neighbors only
of themselves),
overcomes the number of
\textit{hubs}, that are objects that appear in many
more $k$ nearest neighbor sets than other points,
and, according to the $\RNNc$ score, the vast
majority of the dataset objects become outliers
with identical scores.

Here we provide empirical evidence that $\CFOF$
does not suffer of the hubness problem,
while we refer the reader to Section \ref{sect:concfree}
for the formal demonstration of this property.

Figures \ref{fig:curse_histA} and \ref{fig:curse_histB}
report the distribution of the
$N_k(\cdot)$ value
and of the $\CFOF$ absolute score
for a ten thousand dimensional uniform dataset.
Notice that $\CFOF$ outliers are associated with
the largest score values, hence to the tails of the distribution,
while $\RNNc$ outliers are associated with the smallest score values, hence 
with the largely populated region of the score distribution,
a completely opposite behavior.

To illustrate
the impact of the hubness problem
with the dimensionality,
Figures \ref{fig:curse_cumsumA} and
\ref{fig:curse_cumsumB}
show the 
cumulated frequency associated
with the normalized, between $0$ and $1$, increasing scores.
The normalization has been implemented 
to ease comparison.
As for $\CFOF$,
values have been obtained as
$\frac{{\rm CFOF}(x)}{\max_y {\rm CFOF}(y)}$.
As for $\RNNc$, values have been obtained as
$1-\frac{\N_k(x)}{\max_y\N_k(y)}$.

The curves clarify the 
deep difference between the two approaches.
Here both
$n$ and $k$ are held fixed, while $d$ is increasing
($d\in\{10^0,10^1,10^2,10^4\}$, the curve for $d=10^3$
is omitted for readability, since it is very close to $10^4$).
As for $\RNNc$,
the hubness problem is already evident for $d=10$,
where objects with a normalized score $\ge 0.8$ 
correspond to about the $20\%$ of the dataset,
while the curve for $d=10^2$ closely resembles that for $d=10^4$,
where the vast majority of the dataset objects have
a normalized score close to $1.0$.
As for $\CFOF$, the number of points associated with large 
score values always corresponds to a very small fraction of the
dataset population.

\section{Concentration free property of CFOF}
\label{sect:concfree}

In this section we theoretically ascertain properties
of the $\CFOF$ outlier score.

The rest of the section is organized as follows.
Section \ref{sect:prelim} introduces the notation exploited
throughout the section and some basic definitions.
Section \ref{sect:kurtosis} recalls the concept of kurtosis.
Section \ref{sect:cfofcdf} derives the theoretical cumulative 
distribution function of the $\CFOF$ score.
Section \ref{sect:out_conc} provides the definition of 
concentration of outlier scores together with the 
proof that the $\CFOF$ score does not concentrate 
and Section \ref{sect:cfofcdfvskurt} 
discusses the effect
of the data kurtosis on the distribution of $\CFOF$ outlier scores.
Section \ref{sect:semilocal} studies the behavior $\CFOF$ in
presence of different distributions and establishes its semi--locality property.
Finally, Section \ref{sect:conc_other} studies the concentration
properties of distance-based and density-based outlier scores, and
Section \ref{sect:conc_rnnc}
studies the concentration properties of reverse nearest neighbor-based
outlier scores.

\subsection{Preliminaries}\label{sect:prelim}

The concept of intrinsic dimensionality is related to the analysis of 
independent and identically distributed (i.i.d.) data.
Although variables used to identify each datum could not be statistically
independent, ultimately, the intrinsic dimensionality of the data is identified
as the minimum number $D$ 
of variables needed to represent the data itself \cite{MaatenPH09}.
This corresponds in linear spaces to the number of linearly independent vectors needed 
to describe each point.
Indeed, if random vector components are 
not independent, the concentration phenomenon is still present provided that
the actual number $D$ of ``degrees of freedom'' is sufficiently large \cite{demartines1994}. 
Thus, results 
derived for i.i.d. data continue to be valid provided that the dimensionality $d$
is replaced with $D$.

In the following we use $d$-dimensional i.i.d. random vectors as a 
model of intrinsically high dimensional space.
Specifically, boldface uppercase letters with the symbol ``$(d)$'' 
as a superscript, such as
$\Vect{X}$, $\Vect{Y}$, $\ldots$, denote $d$-dimensional 
random vectors taking values in $\mathbb{R}^d$. The components $X_i$ ($1\le i\le d$) of a 
random vector $\Vect{X}=(X_1,X_2,\ldots,X_d)$ are 
random variables having pdfs $f_{X_i}$ (cdf $F_{X_i}$). 
A random vector is said
independent and identically distributed 
(i.i.d.), if its components are independent
random variables having common pdf $f_X$ (cdf $F_X$).
The generic component $X_i$ of $\Vect{X}$ is also referred to as $X$
when its position does not matter.

Lowercase letters with ``$(d)$'' as a superscript, such as $\vect{x}$, $\vect{y}$, $\ldots$,
denote a specific $d$-dimensional vector taking value in $\mathbb{R}^d$.

Given a random variable $X$, {$\mu_X$ and $\sigma_X$} denote
the mean and standard deviation of $X$.
The symbol $\mu_k(X)$, or simply $\mu_k$ when $X$ is clear from the context,
denotes the $k$th central moment $\E[(X-\mu_X)^k]$ of $X$ ($k\in\mathbb{N}$).
When we used moments, we assume that they exists finite.

A sequence of independent non-identically distributed random variables 
$X_1,X_2,\ldots,X_d$ having non-null variances and finite central moments
$\mu_{i,k}$ ($1\le i\le d$)
is said to have \textit{comparable} central moments if there
exist positive constants $\mu_{\max}\ge\max\{|\mu_{i,k}|\}$
and $\mu_{\min}\le\min\{|\mu_{i,k}|:\mu_{i,k}\neq 0\}$. Intuitively,
this guarantees that the ratio between the greatest and the smallset non-null
moment remains limited.

An independent non-identically distributed random vector $\Vect{X}$
whose components have comparable central moments,
can be treated as an i.i.d. random vector 
whose generic component $X_i$ is such that  
the $h$-th degree ($h\in\mathbb{N}$) of its $k$th central moment $\mu_k(X_i)$
($k\in\mathbb{N}$), that is $\mu_k(X_i)^h$, 
is given by
the average of the $h$-th degree of the central moments 
of its components $X_i$ \cite{Angiulli2018}, defined as follows
\begin{equation}\label{eq:moments}
\mu_k(\Vect{X})^h \equiv
\tilde{\mu}_k^h(\Vect{X}) = \frac{1}{d} \sum_{i=1}^d \E[(X_i-\mu_{X_i})^k]^h 
\end{equation}
This means that all the results given for i.i.d. $d$-dimensional random vectors can be
immediately extended to $d$-dimensional independent non-identically distributed 
random vectors
having comparable central moments and, more in the general, 
to the wider class of real-life data
having $D$ degrees of freedom
with comparable central moments.

$\Phi(\cdot)$ ($\phi(\cdot)$, resp.) denotes the cdf 
(pdf, resp.) of the normal standard distribution.

Results given for distributions in the following,
can be applied to finite set of points by assuming large samples.

\begin{definition}[Squared norm standard score]
Let $\vect{x}$ be a realization of a random vector $\Vect{X}$.
Then, $z_{x,\Vect{X}}$ denotes the \textit{squared norm standard score} 
of $\vect{x}$, that is
\[ z_{x,\Vect{X}} = \frac{\norm{\vect{x}-\vect{\mu}_X}^2 - 
\mu_{\norm{\Vect{X}-\vect{\mu}_X}^2}}{\sigma_{\norm{\Vect{X}-\vect{\mu}_X}^2}}, \]
where $\vect{\mu}_X=(\mu_{X_1},\mu_{X_2},\ldots,\mu_{X_d})$ denotes the mean vector of $\Vect{X}$.
{Clearly, if all the components of the mean vector $\vect{\mu}_X$
assume the same value $\mu_X$, e.g. as in the case of i.i.d. vectors,
then $\vect{\mu}_X$ can be replaced by $\mu_X$.}
The notation $z_x$ is used as a shorthand for $z_{x,\Vect{X}}$ whenever $\Vect{X}$
is clear from the context.
\end{definition}

An outlier score function $sc_{Def}$, or outlier score for simplicity,
according to outlier definition $Def$,
is a function $sc_{Def} : \mathbb{D}\times{\wp}(\mathbb{D}) \mapsto \mathbb{S}$
that, 
given
a set of $n$ objects, or \textit{dataset},
$DS\subseteq\mathbb{D}$ (or $DS\in{\wp}(\mathbb{D})$, where ${\wp}(\mathbb{D})$
denotes the power set of $\mathbb{D}$),
and an object $x\in{DS}$,
returns a real number in 
the interval $\mathbb{S}\subseteq\mathbb{R}$, also said the outlier score (value) of $x$.
The notation $sc_{Def}(\vect{x},\Vect{X})$ is used to denote the outlier score
of $\vect{x}$ in a dataset whose elements are realizations of the random vector
$\Vect{X}$.
The notation $sc_{Def}(x)$ is used 
when $DS$ or $\Vect{X}$ are clear from the context.

\begin{definition}[Outliers]
Given parameter $\alpha\in(0,1)$, the 
\textit{top-$\alpha$ outliers}, or simply \textit{outliers} whenever
$\alpha$ is clear from the context,
in a dataset $DS$ of $n$
points according to outlier definition $Def$, are the $\alpha n$ points
of $DS$ associated with the largest values of score $sc_{Def}$.\footnote{Some definitions
$Def$ associate outliers with the smallest values of score $sc_{Def}$. In these
cases we assume to replace $Def$ with $Def'$ having outlier score $sc_{Def'}=-sc_{Def}$.}
\end{definition}

\subsection{Kurtosis}\label{sect:kurtosis}

\medskip
The kurtosis is a measure of the tailedness of 
the probability distribution of a real-valued random variable,
originating with Karl Pearson \cite{Pearson1905,FioriZ09,Westfall14}. 
Specifically, given random variable $X$, 
the \textit{kurtosis} $\kappa_X$, or simply $\kappa$ whenever $X$ is clear from the context,
is the fourth standardized moment of $X$ 
\begin{equation}\label{eq:kurtosis}
\kappa_X = \E\left[ \left( \frac{X - \mu_X}{\sigma_X} \right)^4 \right] 
= \frac{\mu_4(X)}{\mu_2(X)^2}.
\end{equation}
Higher kurtosis is the result of infrequent extreme deviations or outliers, 
as opposed to frequent modestly sized deviations.
Indeed, since kurtosis is the 
expected value of the standardized data raised to the fourth power. 
data within one standard deviation of the mean
contribute practically nothing to kurtosis 
(note that raising a number that is less than $1$ to the fourth power 
makes it closer to zero),
while the data values that almost totally
contribute to kurtosis are those outside the above region,
that is the outliers.

The lower bound is realized by the Bernoulli distribution with $0.5$ success
probability, having kurtosis $\kappa = 1$. 
\textit{Note that extreme platykurtic distributions, that is
having kurtosis $\kappa=1$, have no outliers.}
However,
there is no upper limit to the kurtosis 
of a general probability distribution, and it may be infinite.

The kurtosis of any univariate normal distribution is $3$, regardless 
of the values of its parameters. It is common to compare 
the kurtosis of a distribution to this value. 
Distributions with kurtosis equal to $3$ are called \textit{mesokurtic}, or 
\textit{mesokurtotic}. 
Distributions with kurtosis less 
than $3$ are said to be \textit{platykurtic}. These distributions 
produce fewer and less extreme outliers than does the normal distribution. An 
example of a platykurtic distribution is the uniform distribution, which has
kurtosis $1.8$. 
Distributions with kurtosis greater than $3$ are said to be 
\textit{leptokurtic}. An example of a leptokurtic distribution is the Laplace 
distribution, which has tails that asymptotically approach zero more slowly than 
a normal distribution, and therefore produces more outliers than the normal distribution. 

\subsection{Theoretical cdf of the $\CFOF$ score}\label{sect:cfofcdf}

Next, we derive the theoretical cdf and pdf of the $\CFOF$ outlier score
together with the expected score associated with a generic realization
of an i.i.d. random vector.

\begin{theorem}\label{theorem:cfofcdf}
Let $\vect{DS}$ be a dataset consisting of realizations of an i.i.d.
random vector $\Vect{X}$ and let $\vect{x}$ be an element of $\vect{DS}$.
For arbitrarily large dimensionalities $d$,
the expected value and 
the cdf of the CFOF score are
\begin{eqnarray}
\label{eq:cfofexpected}
\CFOF(\vect{x}) & \approx & \Phi\left(
\frac{z_{x} \sqrt{\kappa-1} + 2\Phi^{-1}(\varrho) }{ 
\sqrt{\kappa+3} }
\right) \mbox{, and}
\\
\label{eq:cfofcdf}
  \forall \kappa>1, 
 Pr[ \CFOF(\Vect{X}) \le s ] & \approx & \Phi \left(
 \frac{ \Phi^{-1}(s)\sqrt{\kappa+3} - 2\Phi^{-1}(\varrho) }
 {\sqrt{\kappa-1}}
 \right){.}
\end{eqnarray}
{where $\kappa$ denotes the kurtosis of the random variable $X$.}
\end{theorem}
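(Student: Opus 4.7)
The plan is to reduce the event $\{\vect{x}\in\NN_{k'}(\Vect{Y})\}$ to a single Gaussian tail event by composing three CLT-type approximations valid as $d\to\infty$, and then invert the resulting expression in $s=k'/n$ to obtain both closed forms.

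First I would establish three distributional facts. Step (i): the squared centered norm $\|\Vect{X}-\vect{\mu}_X\|^2=\sum_{i=1}^d(X_i-\mu_X)^2$ is a sum of $d$ i.i.d.\ variables with mean $\sigma_X^2$ and variance $\mu_4(X)-\sigma_X^4=\sigma_X^4(\kappa-1)$, so by the CLT it is asymptotically $\mathcal{N}(d\sigma_X^2,\,d\sigma_X^4(\kappa-1))$; in particular $z_{\Vect{X}}$ is asymptotically standard normal. Step (ii): for fixed $\vect{x}$ and an independent copy $\Vect{Y}$, computing the first two moments of each $(x_i-Y_i)^2$ via the binomial expansion of $(x_i-Y_i)^4$, summing, and discarding the $O(\sqrt d)$ contribution of $\mu_3(X)\sum_i(x_i-\mu_X)$ (subleading when $z_x=O(1)$), yields
\[
\|\vect{x}-\Vect{Y}\|^2 \;\approx\; \mathcal{N}\bigl(\|\vect{x}-\vect{\mu}_X\|^2+d\sigma_X^2,\,d\sigma_X^4(\kappa+3)\bigr).
\]
Step (iii): viewing $\Vect{Y}$ now as the reference point, the squared distances from $\Vect{Y}$ to the $n-1$ other data points are conditionally i.i.d.\ Gaussian by (ii), so their $k'$-th order statistic is asymptotically the $s$-quantile
\[
r_{k'}(\Vect{Y})^2 \;\approx\; \|\Vect{Y}-\vect{\mu}_X\|^2+d\sigma_X^2+\Phi^{-1}(s)\,\sigma_X^2\sqrt{d(\kappa+3)}.
\]

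Next I would insert these approximations into $\|\vect{x}-\Vect{Y}\|^2\le r_{k'}(\Vect{Y})^2$. Expanding $\|\vect{x}-\Vect{Y}\|^2=\|\vect{x}-\vect{\mu}_X\|^2+\|\Vect{Y}-\vect{\mu}_X\|^2-2(\vect{x}-\vect{\mu}_X)\!\cdot\!(\Vect{Y}-\vect{\mu}_X)$ cancels the $\|\Vect{Y}-\vect{\mu}_X\|^2$ term, and substituting $\|\vect{x}-\vect{\mu}_X\|^2-d\sigma_X^2=z_x\sigma_X^2\sqrt{d(\kappa-1)}$ reduces the inequality to
\[
(\vect{x}-\vect{\mu}_X)\!\cdot\!(\Vect{Y}-\vect{\mu}_X)\;\ge\;\tfrac{1}{2}\sigma_X^2\sqrt{d}\,\bigl[z_x\sqrt{\kappa-1}-\Phi^{-1}(s)\sqrt{\kappa+3}\bigr].
\]
The left-hand side is a weighted sum of $d$ independent zero-mean terms with total variance $\sigma_X^2\|\vect{x}-\vect{\mu}_X\|^2\approx d\sigma_X^4$, hence asymptotically $\mathcal{N}(0,d\sigma_X^4)$. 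Normalizing by $\sigma_X^2\sqrt d$ converts the event into a standard Gaussian tail event, and using $1-\Phi(a)=\Phi(-a)$ gives
\[
\Pr\bigl[\vect{x}\in\NN_{k'}(\Vect{Y})\bigr]\;\approx\;\Phi\!\left(\frac{\Phi^{-1}(s)\sqrt{\kappa+3}-z_x\sqrt{\kappa-1}}{2}\right).
\]

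To finish, the law of large numbers applied to the $n-1$ (identically distributed) indicator events gives $\N_{k'}(\vect{x})/n\to\Pr[\vect{x}\in\NN_{k'}(\Vect{Y})]$, so the minimum in (\ref{eq:cfof_hard}) is realized at the smallest $s$ for which this probability equals $\varrho$; solving explicitly for $s$ yields (\ref{eq:cfofexpected}). Because $z_x\mapsto\CFOF(\vect{x})$ is strictly increasing whenever $\kappa>1$, one has $\Pr[\CFOF(\Vect{X})\le s]=\Pr[z_{\Vect{X}}\le g(s)]$ with $g$ the explicit inverse of (\ref{eq:cfofexpected}), and the Gaussian limit of $z_{\Vect{X}}$ from Step~(i) produces (\ref{eq:cfofcdf}). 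The main obstacle is composing the three CLT approximations cleanly: the substitution of the order statistic in Step~(iii) by its population quantile must have error of strictly smaller order than the $\sqrt d$ scale on which the Step~(ii) event is resolved, uniformly in $s$ bounded away from $\{0,1\}$ where $\Phi^{-1}$ diverges; additionally, one must verify that the randomness in the variance $\sigma_X^2\|\vect{x}-\vect{\mu}_X\|^2$ of the cross product contributes only subleading corrections. The condition $\kappa>1$ in (\ref{eq:cfofcdf}) is not cosmetic: it is precisely the regularity needed to divide by $\sqrt{\kappa-1}$ when inverting, and the platykurtic boundary $\kappa=1$ is a genuine degeneracy in which $z_{\Vect{X}}$ becomes deterministic.
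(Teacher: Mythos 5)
Your proposal is correct and its overall skeleton coincides with the paper's: obtain a Gaussian closed form for $Pr[\vect{x}\in\NN_{k'}(\Vect{X})]$, invert it in $k'=sn$ at level $\varrho$ to get Equation~\eqref{eq:cfofexpected}, then use monotonicity of the score in $z_x$ together with asymptotic normality of the squared norm to get Equation~\eqref{eq:cfofcdf}. The genuine difference is in the first step. The paper imports the formula for $\N_{k}(\vect{x})$ wholesale from Theorem~30 of \cite{Angiulli2018} (Equation~\eqref{eq:nk}) and does no distributional work of its own beyond the final CLT for $\norm{\Vect{X}-\mu_X}^2$; you instead rederive that formula from scratch by composing three CLT approximations (normality of the squared norm, normality of $\norm{\vect{x}-\Vect{Y}}^2$ with variance $d\sigma_X^4(\kappa+3)$ after discarding the $O(\sqrt{d})$ skewness term, and replacement of the $k'$-th order statistic of squared distances by the $s$-quantile), then resolve the membership event through the cross term $(\vect{x}-\vect{\mu}_X)\cdot(\Vect{Y}-\vect{\mu}_X)\sim\mathcal{N}(0,d\sigma_X^4)$. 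Your expression $\Phi\bigl((\Phi^{-1}(s)\sqrt{\kappa+3}-z_x\sqrt{\kappa-1})/2\bigr)$ is exactly Equation~\eqref{eq:nk} after substituting $\mu_4=\kappa\mu_2^2$, so the two routes meet there. What your route buys is a self-contained and more transparent proof that makes visible where each of $\sqrt{\kappa-1}$, $\sqrt{\kappa+3}$, and the factor $2$ comes from, and it honestly flags the places where the composition of approximations needs uniform error control (order statistic versus quantile, the random variance of the cross term); what the paper's route buys is brevity and the deferral of precisely that error analysis to the cited reference. One caveat, which applies equally to the paper: the passage from $Pr[\vect{x}\in\NN_{k'}(\Vect{Y})]$ to $\N_{k'}(\vect{x})/n$ is a law of large numbers over indicators that are not independent (they share the dataset), so it is a concentration heuristic rather than a literal LLN; neither you nor the paper addresses this, and it does not affect the comparison.
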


\begin{proof}
It is shown in \cite{Angiulli2018} that, 
for arbitrary large values of $d$,
the expected number of $k$-occurrences of $\vect{x}$ 
in a dataset consisting of $n$ realizations of the random vector $\Vect{X}$ 
is given by (see Theorem 30 of \cite{Angiulli2018})
\begin{equation}\label{eq:nk}
\N_k(\vect{x}) = n\cdot Pr[\vect{x}\in\NN_k(\Vect{X})] \approx
n
\Phi\left(
\frac{\Phi^{-1}(\frac{k}{n})\sqrt{\mu_4+3\mu_2^2} - z_x 
\sqrt{\mu_4-\mu_2^2}}{2\mu_2}
\right).
\end{equation}
Let $\tau(z_x)$ denote 
the smallest integer $k'$ such
that $\N_{k'}(\vect{x})\ge n\varrho$.
By exploiting the equation above it can be concluded that
\begin{equation}\label{eq:cfofz0}
\tau(z_x) \approx
n \Phi\left(
\frac{z_{x} \sqrt{\mu_4-\mu_2^2} + 2\mu_2\Phi^{-1}(\varrho) }{ 
\sqrt{\mu_4+3\mu_2^2} }
\right).
\end{equation}
Since $\CFOF(\vect{x})=k'/n = \tau(z_x)/n$,
the expression of Equation \eqref{eq:cfofexpected} follows
by expressing moments $\mu_2$ and $\mu_4$ in terms of the kurtosis 
$\kappa={\mu_4}/{\mu_2^2}$.

For $\kappa=1$ the $\CFOF$ score is constant
and does not depend on $\vect{x}$ and $z_x$.
However, for $\kappa>1$, since $\Phi(\cdot)$ is a cdf,
the $\CFOF$ score is monotone increasing with the 
standard score $z_x$, thus
$\CFOF(x_1^{(d)}) \le \CFOF(x_2^{(d)})$ if and only if
$z_{x_1} \le z_{x_2}$.

As for the cdf 
$Pr[ \CFOF(\Vect{X}) \le s ]$
of the CFOF score, we need to determine
for which values of $\vect{x}$ the condition
$\CFOF(\vect{x}) \le s$ holds. By levaraging
Equation \eqref{eq:cfofexpected}
\begin{multline*}
\CFOF(\vect{x}\le s) 
~~\Longleftrightarrow~~
\Phi \left( 
\frac{z_x\sqrt{\kappa-1} + 2\Phi^{-1}(\varrho)}{\sqrt{\kappa+3}} 
\right) \le s 
~~\Longleftrightarrow~~ 
\\ ~~\Longleftrightarrow~~
 z_x \le 
 \frac{ \Phi^{-1}(s)\sqrt{\kappa+3} - 2\Phi^{-1}(\varrho) }
 {\sqrt{\kappa-1}}
~~\Longleftrightarrow~~ 
z_x \le t(s).
\end{multline*}
Consider the squared norm $\|\Vect{X}-\mu_X\|^2 = \sum_{i=1}^d (X_i-\mu_X)^2$.
Since the squared norm is the sum of $d$ i.i.d. random variables,
as $d\rightarrow\infty$, by the Central Limit Theorem, the distribution of the 
standard score of $\sum_{i=1}^d (X_i-\mu_X)^2$
tends to a standard normal distribution. This implies that
$\norm{\Vect{X}-\mu_X}^2$ approaches a normal distribution with mean 
$\mu_{\norm{\Vect{X}-\mu_X}^2}$ and standard deviation $\sigma_{\norm{\Vect{X}-\mu_X}^2}$.
Hence, for each $z\ge 0$,
\[ 
Pr\left[ 
\frac{\norm{\Vect{X}-\mu_X}^2-\mu_{\norm{\Vect{X}-\mu_X}^2}}{\sigma_{\norm{\Vect{X}-\mu_X}^2}}
\le z \right] = \Phi(z).
\]
Thus, the probability $Pr[z_x \le t(s)]$ converges to $\Phi(t(s))$,
and the expression of Equation \eqref{eq:cfofcdf} follows.
\end{proof}

In the following, the cdf of the CFOF score will be denoted also as
$F_{\rm CFOF}(s; \varrho, \Vect{X})$, or simply $F_{\rm CFOF}(s)$ when
$\varrho$ and the random vector $\Vect{X}$ {generating the dataset}
are clear from the context.
As for the pdf of the $\CFOF$ score, $\forall \kappa>1$
\begin{multline}\label{eq:cfofpdf}
f_{\rm CFOF}(s) =  \frac{\rm d}{{\rm d} s} F_{\rm CFOF}(s) 
= \sqrt{\frac{\kappa+3}{\kappa-1}} \cdot 
 \frac{1}{\phi(\Phi^{-1}(s))}\cdot 
 \phi \left(
 \frac{ \Phi^{-1}(s)\sqrt{\kappa+3} - 2\Phi^{-1}(\varrho) }
 {\sqrt{\kappa-1}}
 \right).
\end{multline}

\begin{figure}[t]
\centering
\subfloat
{\includegraphics[width=0.48\columnwidth]{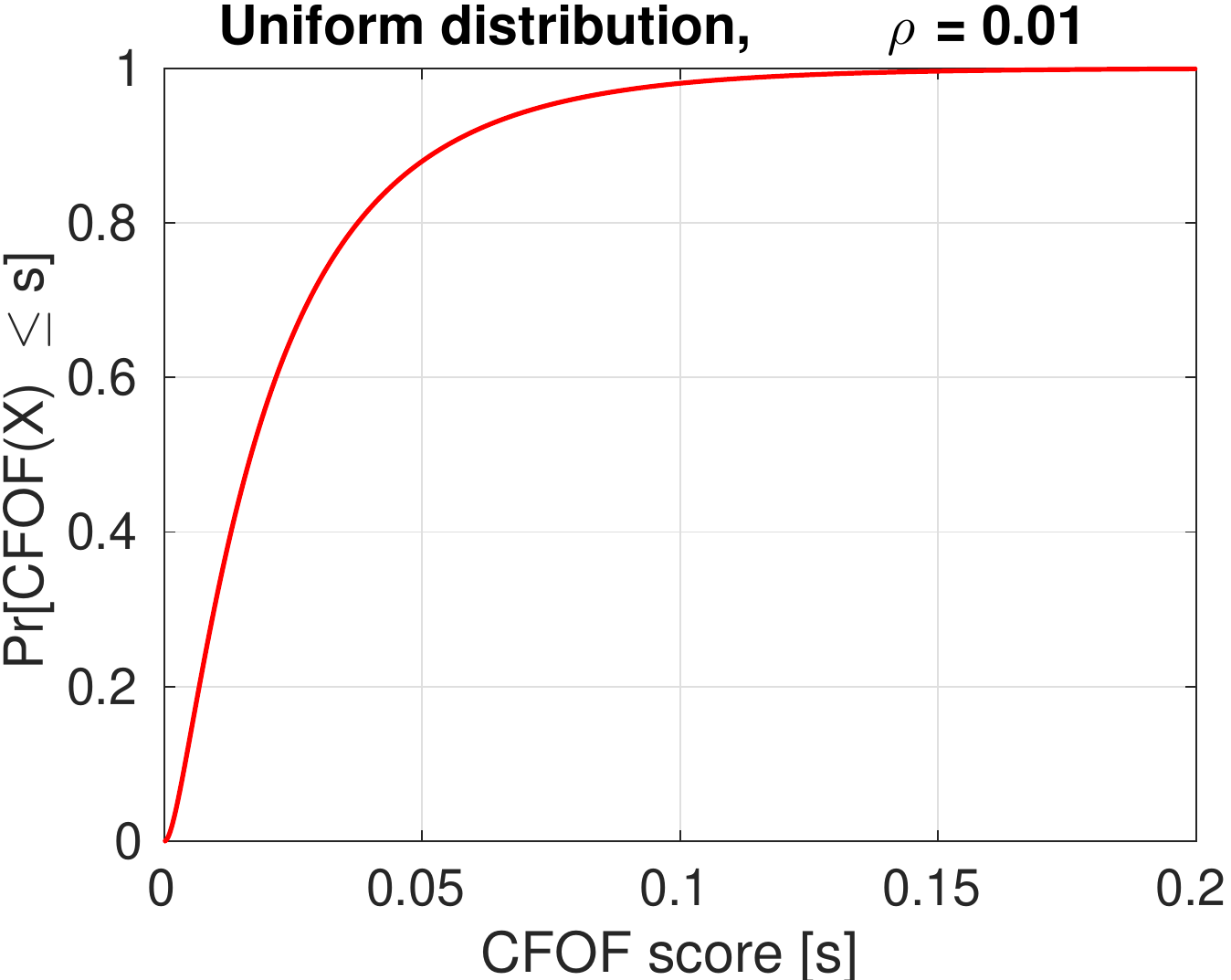}}
~~~~~~~~~~
\subfloat
{\includegraphics[width=0.48\columnwidth]{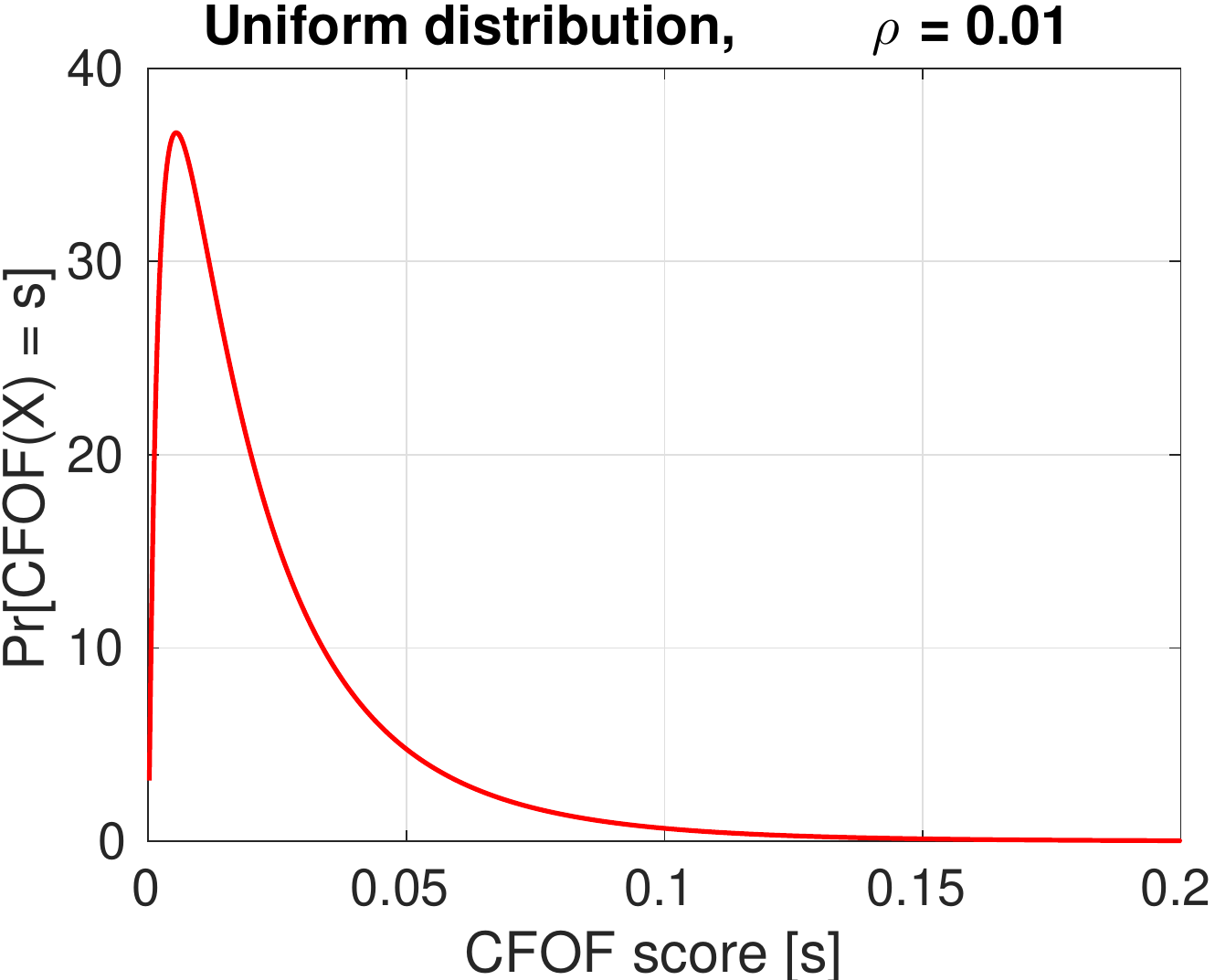}}\\
\subfloat
{\includegraphics[width=0.48\columnwidth]{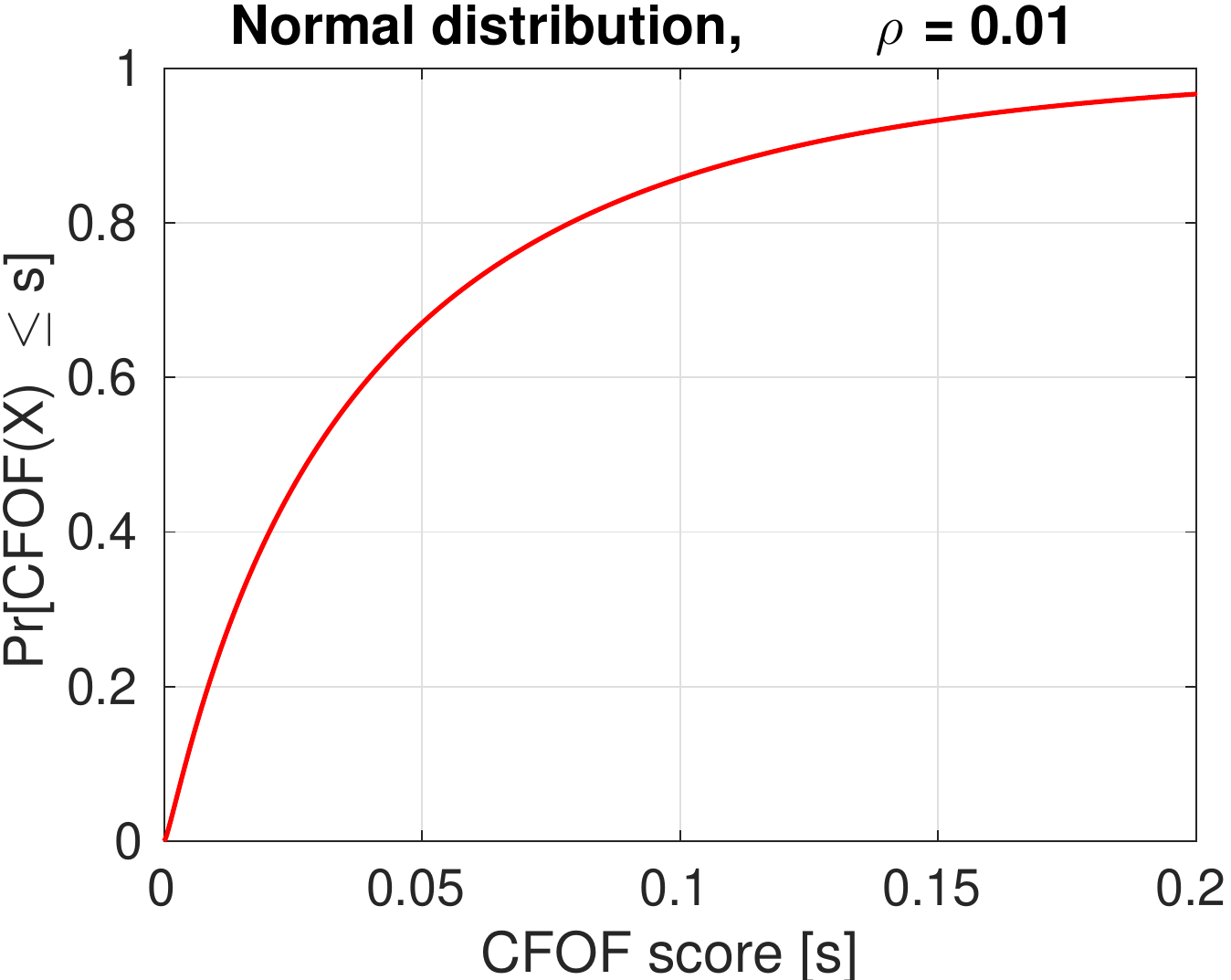}}
~~~~~~~~~~
\subfloat
{\includegraphics[width=0.48\columnwidth]{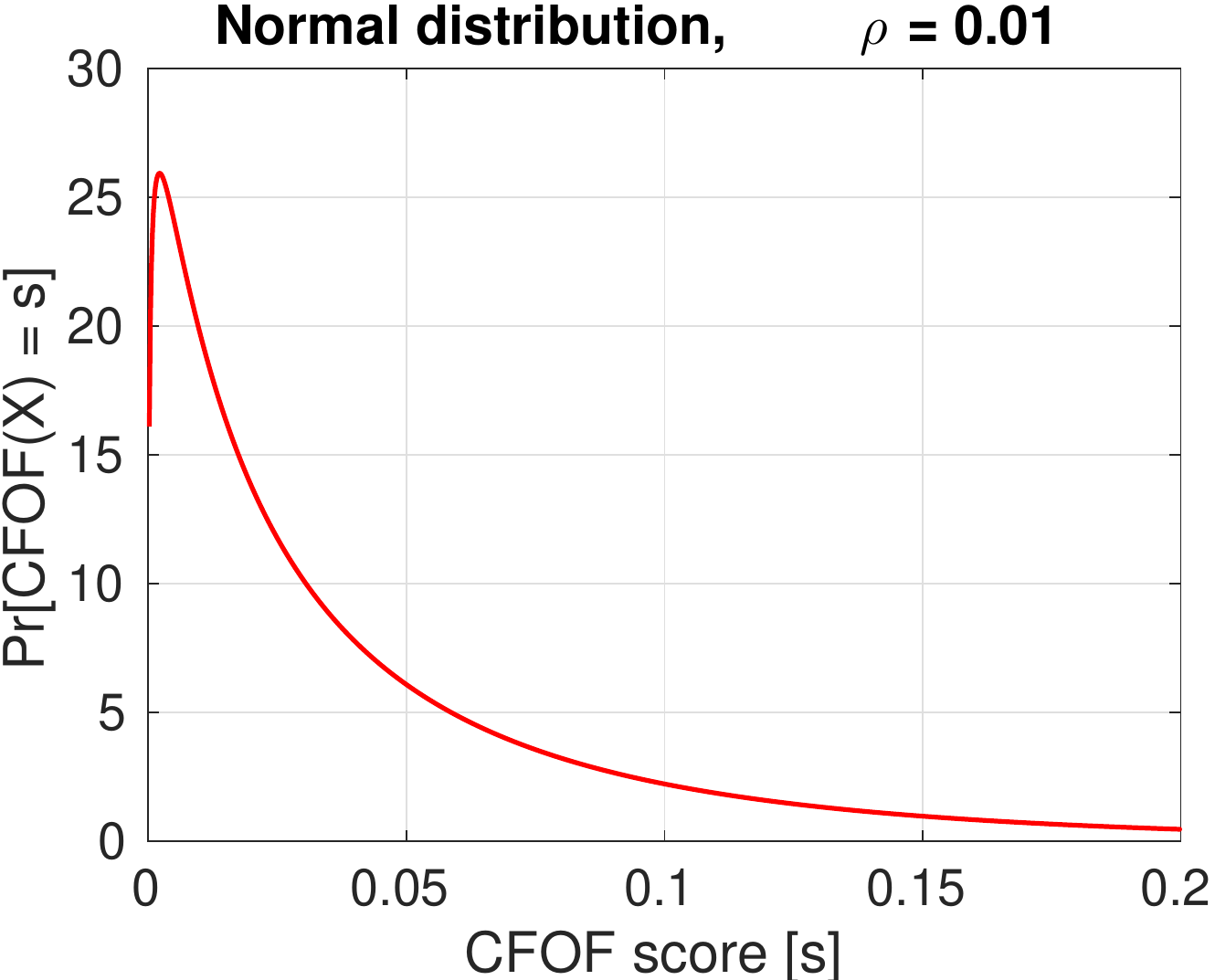}}
\caption{Theoretical cdf (plots on the left) and pdf (plots on the right)
of the $\CFOF$ score associated with an i.i.d. random vector uniformly distributed
(plots on the top) and normally distributed (plots on the bottom).}
\label{fig:cfofcdf}
\end{figure}

Figure \ref{fig:cfofcdf} reports the theoretical cdf according 
to Equation \eqref{eq:cfofcdf} 
and the theoretical pdf according to Equation \eqref{eq:cfofpdf} 
of the $\CFOF$ score associated
with an i.i.d. random vector uniformly distributed 
and normally distributed.

The following result provides a semantical characterization
of $\CFOF$ outliers in large dimensional spaces.

\begin{theorem}
Let $\vect{DS}$ be a dataset consisting of realizations of an i.i.d. random vector
$\Vect{X}$. Then, for arbitrary large dimensionalities $d$,
the $\CFOF$ outliers of $\vect{DS}$ are 
the points associated with the largest 
squared norm standard scores.
\end{theorem}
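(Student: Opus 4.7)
The plan is to reduce the statement directly to the expected-value expression already established in Theorem \ref{theorem:cfofcdf}. By Equation \eqref{eq:cfofexpected}, for arbitrarily large $d$,
\[
\CFOF(\vect{x}) \;\approx\; \Phi\!\left( \frac{z_x \sqrt{\kappa-1} + 2\Phi^{-1}(\varrho)}{\sqrt{\kappa+3}} \right),
\]
so $\CFOF(\vect{x})$ is an explicit function of $z_x$ modulated only by the dataset-wide constants $\kappa$ and $\varrho$. Thus one only needs to analyze the monotonicity of this map in $z_x$.

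First I would observe that the outer map $\Phi(\cdot)$ is strictly increasing on $\mathbb{R}$, and that the inner map $z \mapsto (z\sqrt{\kappa-1} + 2\Phi^{-1}(\varrho))/\sqrt{\kappa+3}$ is affine in $z$ with slope $\sqrt{(\kappa-1)/(\kappa+3)}$. Since the kurtosis satisfies $\kappa \ge 1$, this slope is nonnegative. For the nontrivial regime $\kappa > 1$, the slope is strictly positive, hence the composition is a strictly increasing function of $z_x$. Therefore, for any two points $\vect{x}_1, \vect{x}_2 \in \vect{DS}$,
\[
z_{x_1} \le z_{x_2} \;\Longleftrightarrow\; \CFOF(\vect{x}_1) \le \CFOF(\vect{x}_2).
\]
In particular, the ranking induced by the $\CFOF$ score on the dataset coincides with the ranking induced by the squared norm standard score, so the top-$\alpha$ $\CFOF$ outliers are precisely the $\alpha n$ points of $\vect{DS}$ with the largest values of $z_x$.

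The degenerate case $\kappa = 1$ must be noted separately: the slope vanishes, and $\CFOF(\vect{x}) \approx \Phi(2\Phi^{-1}(\varrho)/2) = \varrho$ is independent of $\vect{x}$, so all points share the same score. This is consistent with the discussion of Section \ref{sect:kurtosis}, where it is recalled that extreme platykurtic distributions ($\kappa=1$, e.g.\ the symmetric Bernoulli) admit no outliers at all, and hence the notion of ``top outliers'' is vacuous here.

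The only delicate point in the argument is that Equation \eqref{eq:cfofexpected} is an asymptotic approximation rather than an exact identity, so in principle ties or near-ties in $z_x$ could be reordered at finite $d$. The hard part is thus ensuring that the conclusion is read in the asymptotic sense intended by the theorem: as $d \to \infty$, the approximation error in Equation \eqref{eq:cfofexpected} vanishes, and since the map $z \mapsto \Phi((z\sqrt{\kappa-1}+2\Phi^{-1}(\varrho))/\sqrt{\kappa+3})$ is continuous and strictly monotone for $\kappa>1$, the induced ordering of points by $\CFOF$ score converges to the ordering induced by $z_x$. No further calculation is needed beyond this monotonicity observation.
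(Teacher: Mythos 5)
Your proposal is correct and follows essentially the same route as the paper, which simply invokes the monotonicity of the $\CFOF$ score in the squared norm standard score established in Theorem \ref{theorem:cfofcdf}; your explicit verification of the strictly positive slope for $\kappa>1$ and the remarks on the degenerate case $\kappa=1$ and the asymptotic reading are consistent elaborations of that same argument.
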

\begin{proof}
The result descends from the fact that the CFOF score is monotone
increasing with the squared norm standard score, as shown in Theorem \ref{theorem:cfofcdf}.
\end{proof}

\subsection{Concentration of outlier scores}\label{sect:out_conc}

\begin{definition}[Concentration of outlier scores]\label{def:concentration}
Let $Def$ be an outlier definition with outlier
score function $sc_{Def}$. 
We say that the outlier score $sc_{Def}$ concentrates if, for any 
i.i.d. random vector $\Vect{X}$ 
having kurtosis $\kappa>1$,
there exists a family of realizations $\vect{x_0}$ of $\Vect{X}$ ($d\in\mathbb{N}^+$)
such that, for any $\epsilon\in(0,1)$ 
the following property holds\footnote{Note that 
if the above property holds ``for any $\epsilon\in(0,1)$''
it is also the case that it holds ``for any $\epsilon>0$''.
We preferred to use the former condition for 
symmetry with respect to the scores smaller than $sc_{Def}(\vect{x})$. 
Indeed, for any $\epsilon\ge 1$ the absolute value function can 
be removed from Equation \eqref{eq:concentration}.}
\begin{equation}\label{eq:concentration}
\lim_{d \rightarrow \infty} 
Pr \left[ \left| \frac{sc_{Def}(\Vect{X})-sc_{Def}(\vect{x_0})}{sc_{Def}(\vect{x_0})} \right| 
\le \epsilon \right] = 1, 
\end{equation}
that is to say the probability
to observe a score value $sc_{Def}(\Vect{X})$ having relative distance 
not greater than $\epsilon$ from the reference score $sc_{Def}(\vect{x_0})$ 
tends to $1$ as the dimensionality $d$ goes to infinity. 
\end{definition}

\begin{figure}[t]
\centering
\subfloat
{\includegraphics[width=0.48\columnwidth]{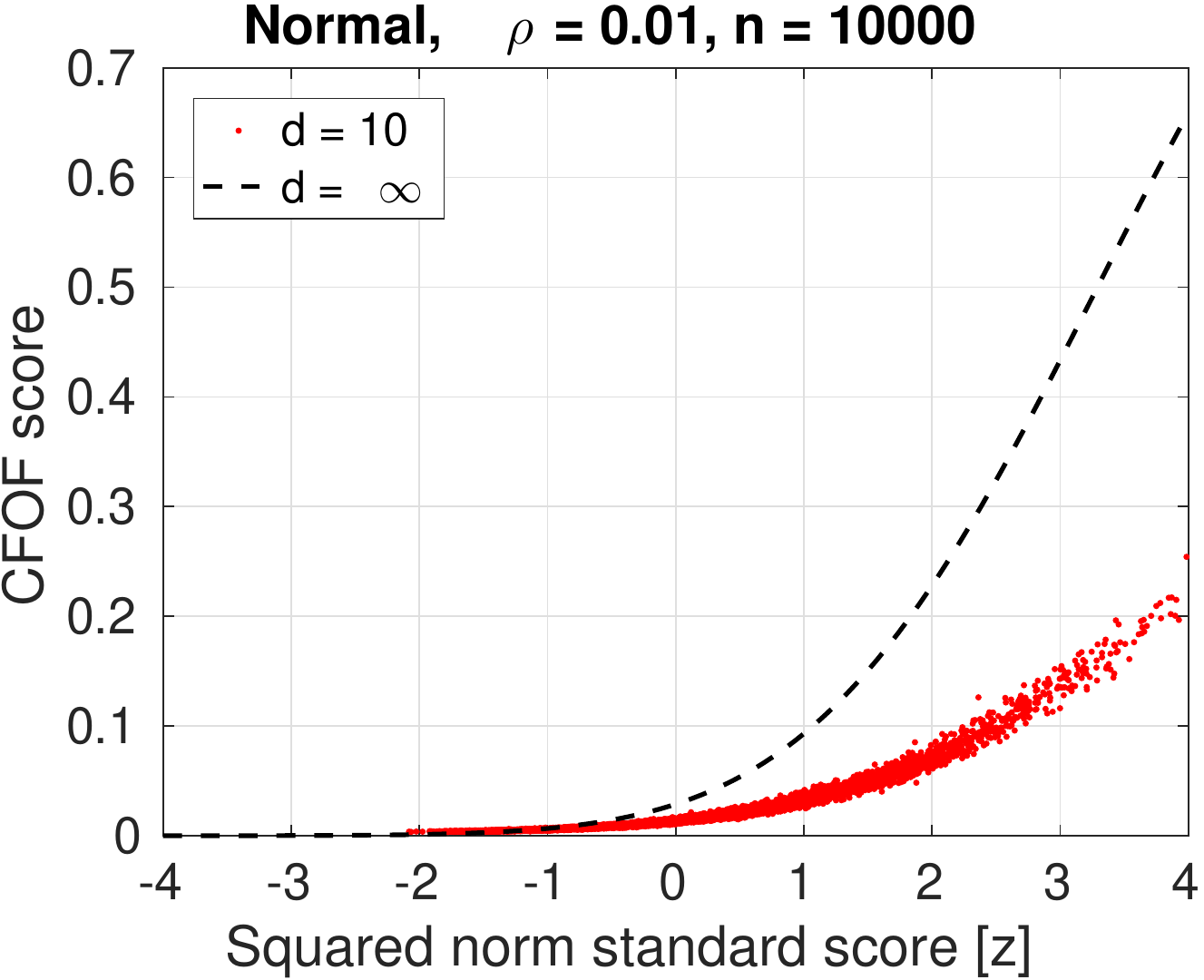}}
~~~~~~~~~~
\subfloat
{\includegraphics[width=0.48\columnwidth]{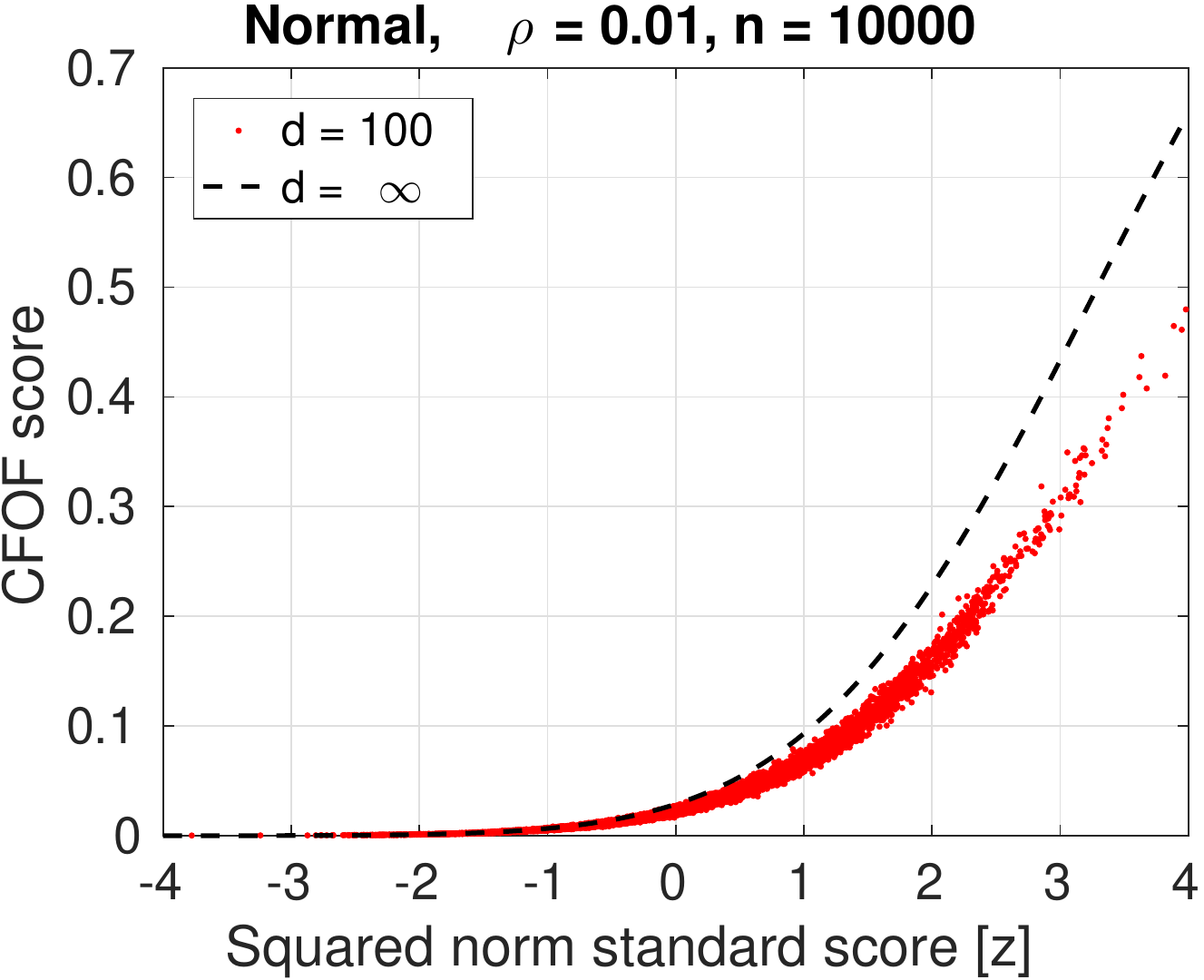}}\\
\subfloat
{\includegraphics[width=0.48\columnwidth]{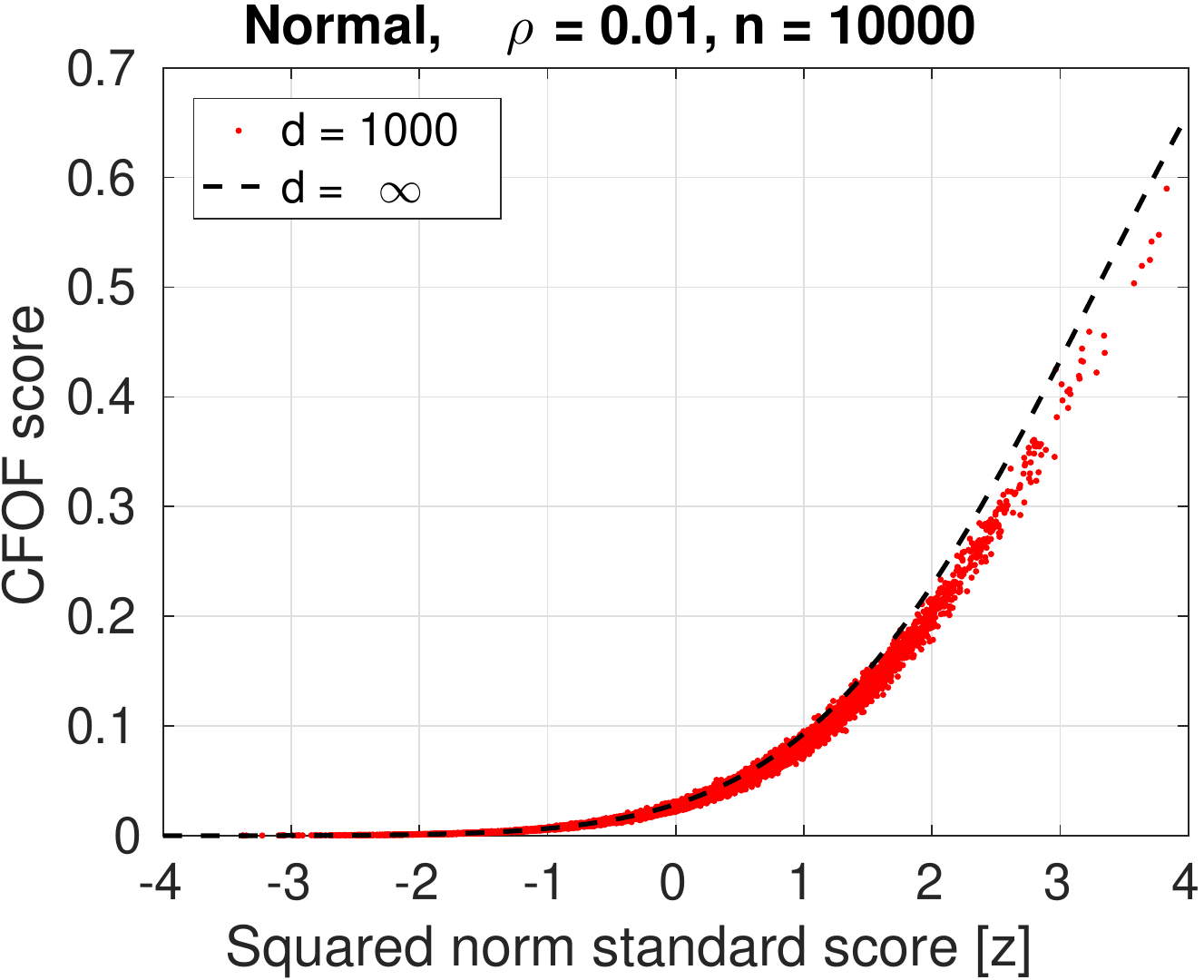}}
~~~~~~~~~~
\subfloat
{\includegraphics[width=0.48\columnwidth]{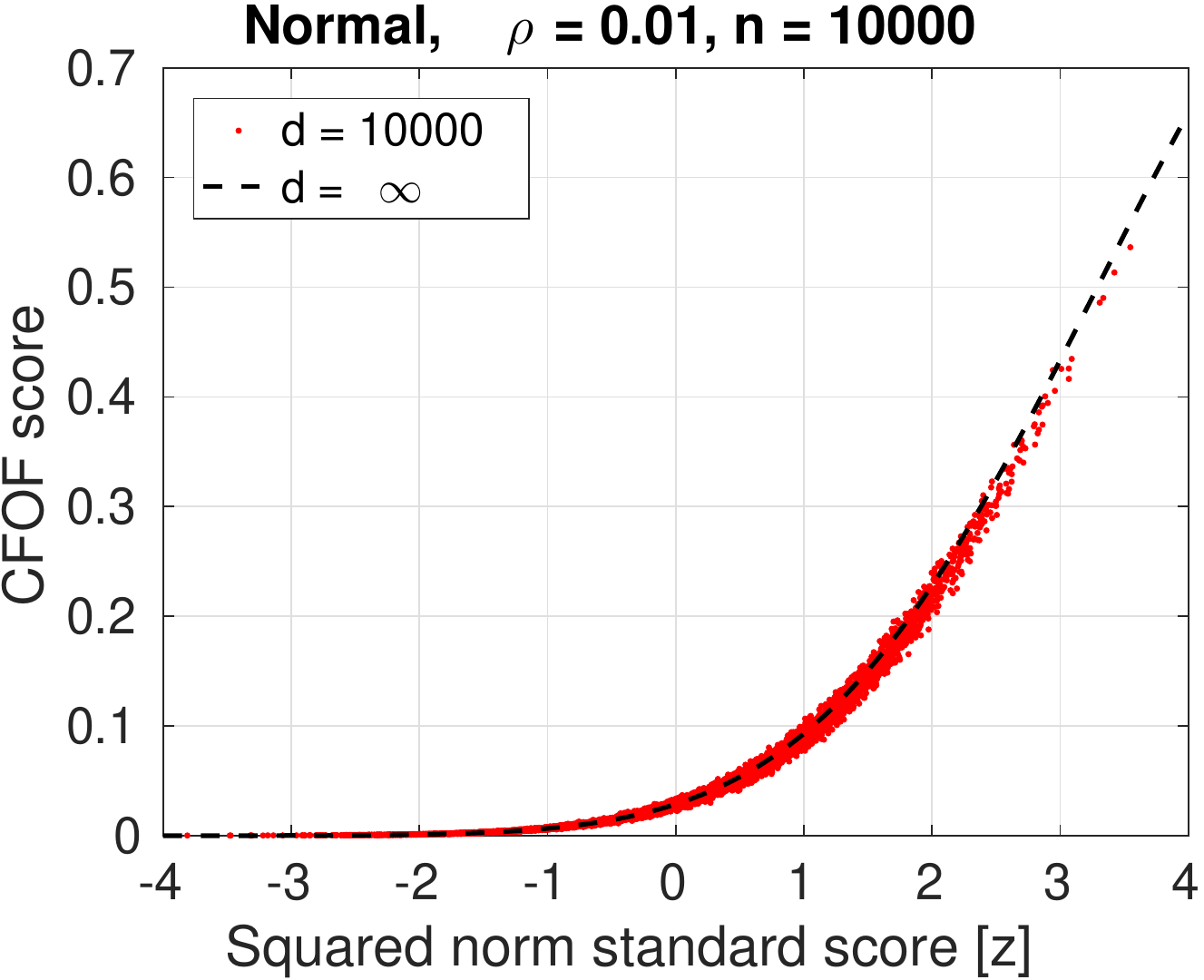}}
\caption{Comparison between the empirical $\CFOF$
scores associated with the squared norm standard score of
a normal dataset for increasing values of $d$ ($d\in\{10,10^2,10^3,10^4\}$)
and the theoretical $\CFOF$ associated
with arbitarily large dimensionalities ($d=\infty$; dashed curve).}
\label{fig:cfoflarged}
\end{figure}

\begin{theorem}\label{lemma:cfof}
Let $Def$ be an outlier definition with outlier score function $sc_{Def}$
which is monotone increasing with respect to the squared norm standard score.
Then, the outlier score $sc_{Def}$ concentrates if and only if, 
for any i.i.d. random vector $\Vect{X}$ having kurtosis $\kappa>1$
and for any $\epsilon\in(0,1)$,
the family of realizations $\vect{x_0}$ of $\Vect{X}$ 
having the property that $z_{x_0} = 0$, is such that 
\[ \lim_{d\rightarrow\infty} \Phi(z_{\vect{x_1}}) \rightarrow 0 ~\mbox{ and }~
\lim_{d\rightarrow\infty} \Phi(z_{\vect{x_2}}) \rightarrow 1,
\]
where $\vect{x_1}$ and $\vect{x_2}$ satisfy the following conditions
\[
sc_{Def}(\vect{x_1}) = (1-\epsilon) sc_{Def}(\vect{x})
~\mbox{ and }~
sc_{Def}(\vect{x_2}) = (1+\epsilon) sc_{Def}(\vect{x}).
\]
\end{theorem}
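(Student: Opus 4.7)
The plan is to exploit the monotonicity hypothesis to convert the score-concentration event into an equivalent event on the squared-norm standard score $z_{\Vect{X}}$, and then to invoke the CLT fact already used in the proof of Theorem~\ref{theorem:cfofcdf}, namely that $z_{\Vect{X}}$ converges in distribution to the standard normal as $d\to\infty$, so that $Pr[z_{\Vect{X}}\le z]\to\Phi(z)$.

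First I fix $\Vect X$ with kurtosis $\kappa>1$, $\epsilon\in(0,1)$, the family $\vect{x_0}$ with $z_{x_0}=0$, and the corresponding $\vect{x_1},\vect{x_2}$ defined by the score relations in the statement. Since $sc_{Def}$ is monotone increasing in $z$, one has $z_{x_1}\le 0\le z_{x_2}$ and the key equivalence
\[
\left|\frac{sc_{Def}(\Vect{X})-sc_{Def}(\vect{x_0})}{sc_{Def}(\vect{x_0})}\right|\le\epsilon \;\Longleftrightarrow\; z_{\Vect{X}}\in[z_{x_1},z_{x_2}],
\]
so by the CLT fact the probability in Equation~\eqref{eq:concentration} equals $\Phi(z_{x_2})-\Phi(z_{x_1})+o(1)$ as $d\to\infty$.

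For the \emph{if} direction, assuming $\Phi(z_{x_2})\to 1$ and $\Phi(z_{x_1})\to 0$ for every $\epsilon$, the displayed probability tends to $1$, which is exactly the concentration condition of Definition~\ref{def:concentration} realized by the family $\vect{x_0}$. For the \emph{only if} direction, Definition~\ref{def:concentration} only guarantees the existence of \emph{some} family $\vect{\hat x_0}$, not a priori the one with $z=0$; I would run the same monotonicity-plus-CLT reduction on $\vect{\hat x_0}$ to deduce $\hat z_1\to-\infty$ and $\hat z_2\to+\infty$ for the corresponding $\vect{\hat x_1},\vect{\hat x_2}$. Hence $z_{x_0}=0$ eventually lies inside $[\hat z_1,\hat z_2]$, and monotonicity forces $(1-\epsilon')sc_{Def}(\vect{\hat x_0})\le sc_{Def}(\vect{x_0})\le(1+\epsilon')sc_{Def}(\vect{\hat x_0})$ for every $\epsilon'>0$, giving $sc_{Def}(\vect{x_0})/sc_{Def}(\vect{\hat x_0})\to 1$. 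A standard $\epsilon/2$ comparison of the two score intervals then transfers concentration from $\vect{\hat x_0}$ to $\vect{x_0}$, and reading the opening equivalence backward delivers the required limits on $\Phi(z_{x_1})$ and $\Phi(z_{x_2})$.

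The main obstacle is precisely this transfer step in the \emph{only if} part: Definition~\ref{def:concentration} is existential in the concentration family, whereas the theorem singles out the center $z_{x_0}=0$. The argument above handles it by observing that the Gaussian limit of $z_{\Vect{X}}$ forces the $z$-interval around any concentration family to eventually engulf the origin, so that monotonicity pins $sc_{Def}(\vect{x_0})$ to $sc_{Def}(\vect{\hat x_0})$. Everything else reduces to the routine monotone change of variables between score intervals and $z$-intervals already used in Theorem~\ref{theorem:cfofcdf}.
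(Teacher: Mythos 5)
Your proposal is correct and follows essentially the same route as the paper's proof: monotonicity converts the relative-deviation event into the $z$-interval $[z_{x_1},z_{x_2}]$, and the CLT limit $Pr[z_{\Vect{X}}\le z]\to\Phi(z)$ reduces concentration to $\Phi(z_{x_2})-\Phi(z_{x_1})\to 1$. The only place you go beyond the paper is the only-if direction, where the paper simply asserts that a concentrating score must concentrate around the family with $z_{x_0}=0$, whereas you justify this with the engulfing/transfer argument; that extra care is warranted and your argument for it is sound.
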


\begin{proof}
As already pointed out in the proof of Theorem \ref{theorem:cfofcdf},
as $d\rightarrow\infty$,
the distribution of the standard score $Z = (Y-\mu_Y)/\sigma_Y$
of the squared norm $Y = \|\Vect{X}-\mu_X\|^2$
tends to a standard normal distribution
and, hence, for each $z\ge 0$, $Pr[Z\le z] = \Phi(z)$.
Since the $sc_{Def}$ score is by assumption 
monotone increasing with the squared norm standard score,
for arbitrarily large values of $d$ and for any 
realization $\vect{x}$ of $\Vect{X}$,
\[ 
Pr\left[ sc_{Def}(\Vect{X}) \le sc_{Def}(\vect{x}) \right] = \Phi(z_x). 
\]
{If $sc_{Def}$ concentrates, then outlier score values must 
converge towards the outlier score of the points $\vect{x_0}$ 
whose squared norm standard score $z_{x_0}$
corresponds to the expected value $\E[Z] = 0$ of $Z$.}
Given $\epsilon>0$,
in order to hold
\begin{multline*}
\lim_{d\rightarrow\infty} Pr\left[ \left| \frac{sc_{Def}(\Vect{X}) - sc_{Def}(\vect{x_0})}
{sc_{Def}(\vect{x_0})} \right| \le \epsilon \right] = \\ =
\lim_{d\rightarrow\infty} Pr\left[ (1-\epsilon)sc_{Def}(\vect{x_0}) \le sc_{Def}(\Vect{X}) \le
(1+\epsilon)sc_{Def}(\vect{x_0}) \right] = 1,
\end{multline*}
it must be the case that
\[ \lim_{d\rightarrow\infty} \left( \Phi(z_{\vect{x_2}}) - \Phi(z_{\vect{x_1}}) \right) = 1, 
\]
where $\vect{x_1}$ and $\vect{x_2}$ are defined as in the statement of the theorem.
The result then follows from the last condition.
\end{proof}

Now we show that the separation
between the $\CFOF$ scores associated with outliers and
the rest of the $\CFOF$ scores is guaranteed
in any arbitrary large dimensionality.

\begin{theorem}\label{th:cfof}
For any fixed $\varrho\in(0,1)$,
the CFOF outlier score does not concentrate.
\end{theorem}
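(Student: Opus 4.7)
The plan is to invoke the necessary and sufficient condition for concentration established in Theorem~\ref{lemma:cfof}, exploiting that the $\CFOF$ score is monotone increasing in the squared norm standard score $z_x$ (a fact already recorded in the proof of Theorem~\ref{theorem:cfofcdf}). Hence it suffices to exhibit an $\epsilon\in(0,1)$ and an i.i.d. random vector $\Vect{X}$ with $\kappa>1$ for which the limits $\Phi(z_{\vect{x_1}})\to 0$ and $\Phi(z_{\vect{x_2}})\to 1$ \emph{both} fail.

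First I would specialize Equation~\eqref{eq:cfofexpected} to the reference point $\vect{x_0}$ with $z_{x_0}=0$, obtaining a constant limiting score
\[
c_0 \;=\; \Phi\!\left(\frac{2\Phi^{-1}(\varrho)}{\sqrt{\kappa+3}}\right)\in(0,1),
\]
which depends only on $\varrho$ and $\kappa$ and not on the dimensionality $d$. Next I would pick $\epsilon>0$ small enough that both $(1-\epsilon)c_0$ and $(1+\epsilon)c_0$ lie strictly inside $(0,1)$ (for instance $\epsilon<\min\{1,(1-c_0)/c_0\}$; such an $\epsilon$ always exists since $c_0<1$). For this $\epsilon$, inverting Equation~\eqref{eq:cfofexpected} yields explicit closed forms
\[
z_{\vect{x_1}}\;=\;\frac{\Phi^{-1}\!\bigl((1-\epsilon)c_0\bigr)\sqrt{\kappa+3}-2\Phi^{-1}(\varrho)}{\sqrt{\kappa-1}},
\qquad
z_{\vect{x_2}}\;=\;\frac{\Phi^{-1}\!\bigl((1+\epsilon)c_0\bigr)\sqrt{\kappa+3}-2\Phi^{-1}(\varrho)}{\sqrt{\kappa-1}},
\]
which are finite constants depending only on $\varrho$, $\kappa$ and $\epsilon$.

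Because $z_{\vect{x_1}}$ and $z_{\vect{x_2}}$ do not depend on $d$, the values $\Phi(z_{\vect{x_1}})$ and $\Phi(z_{\vect{x_2}})$ are fixed numbers strictly in $(0,1)$. Consequently neither $\Phi(z_{\vect{x_1}})\to 0$ nor $\Phi(z_{\vect{x_2}})\to 1$ as $d\to\infty$; in fact their difference is the constant probability mass $\Phi(z_{\vect{x_2}})-\Phi(z_{\vect{x_1}})$, which is strictly less than $1$. Retracing the equivalence in Theorem~\ref{lemma:cfof}, this forces
\[
\lim_{d\to\infty}\Pr\!\left[\,\left|\frac{\CFOF(\Vect{X})-\CFOF(\vect{x_0})}{\CFOF(\vect{x_0})}\right|\le\epsilon\,\right]
\;=\;\Phi(z_{\vect{x_2}})-\Phi(z_{\vect{x_1}})\;<\;1,
\]
which contradicts Definition~\ref{def:concentration}, and the conclusion follows.

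The main obstacle is purely a matter of bookkeeping: ensuring that the inverse normal evaluations $\Phi^{-1}((1\pm\epsilon)c_0)$ are well defined, which is why the small-$\epsilon$ choice above is needed. Beyond this, the argument reduces to observing that the closed form from Theorem~\ref{theorem:cfofcdf} is $d$-independent once the expected-value asymptotic has kicked in, so the whole $\CFOF$ distribution, expressed as a pushforward of the standard normal distribution of $z_x$ through a $d$-independent monotone map, cannot collapse to a point. This is precisely the reason we expect $\CFOF$ to be concentration free, and the proof makes this intuition rigorous by reducing the non-concentration question to the (trivial) non-degeneracy of a fixed $\Phi$-valued window.
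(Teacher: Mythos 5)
Your proposal is correct and follows essentially the same route as the paper's proof: both reduce the question to the criterion of Theorem~\ref{lemma:cfof}, invert the closed form of Equation~\eqref{eq:cfofexpected} at the reference point with $z_{x_0}=0$ to obtain $z_{\vect{x_1}}$ and $z_{\vect{x_2}}$ as finite, $d$-independent constants, and conclude that $\Phi(z_{\vect{x_1}})$ and $\Phi(z_{\vect{x_2}})$ cannot tend to $0$ and $1$. The only (cosmetic) difference is that you restrict $\epsilon$ to keep $(1+\epsilon)c_0<1$, whereas the paper exploits $\CFOF(\vect{x_0})\in(0,0.5)$ and argues that the required divergence would force $\epsilon\rightarrow 1$.
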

\begin{proof}
Let $\Vect{X}$ be an i.i.d. random vector having kurtosis $\kappa>1$.
As far as the $\CFOF$ score is concerned,
from Equation \eqref{eq:cfofexpected},
the point $\vect{x_1}$ defined in Theorem \ref{lemma:cfof} is such that
\[ 
z_{x_1} =  
\frac{\Phi^{-1}\left( (1 - \epsilon) \CFOF(x_0^{(d)}) \right)\sqrt{\kappa+3} 
- 2\Phi^{-1}(\varrho)}{\sqrt{\kappa-1}}.
\]
Note that for any fixed $\varrho\in(0,1)$, $\Phi^{-1}(\varrho)$ is finite.
Moreover,
since $z_{x_0}=0$, from Equation \eqref{eq:cfofexpected} 
it holds that $\CFOF(\vect{x_0}) \in (0,0.5)$.
Hence, $\Phi(z_{x_1}) \rightarrow 0$
if and only if 
$z_{x_1} \rightarrow -\infty$ if and only if 
$\Phi^{-1}( (1 - \epsilon) \CFOF(x_0^{(d)}) ) \rightarrow -\infty$
if and only if $(1 - \epsilon) \CFOF(x_0^{(d)}) \rightarrow 0$
if and only if $\epsilon \rightarrow 1$.

Thus, for arbitrary large dimensionalities $d$,
for each realization $\vect{x_0}$ of $\Vect{X}$ 
there exists $\epsilon>0$ 
such that 
$Pr[\left|(\CFOF(\Vect{X})-\CFOF(\vect{x_0}))/\CFOF(\vect{x_0})\right|<\epsilon]<1$ and, hence, 
the $\CFOF$ outlier score does not concentrate.
\end{proof}

Note that, with parameter $\varrho\in(0,1)$,
the $\CFOF$ score does not concentrate
for both bounded and unbounded sample sizes.

Figure \ref{fig:cfoflarged} compares the empirical $\CFOF$
scores associated with the squared norm standard score of
a normal dataset for increasing values of $d$ 
and the theoretical $\CFOF$ associated
with arbitrarily large dimensionalities ($d=\infty$).
It can be seen that for large $d$ values the scores tend
to comply with the value predicted by Equation \eqref{eq:cfofexpected}.
Moreover, as predicted by Theorem \ref{th:cfof},
the concentration of the $\CFOF$ scores is avoided in any dimensionality.

\begin{figure}[t]
\subfloat
{\includegraphics[width=0.48\columnwidth]{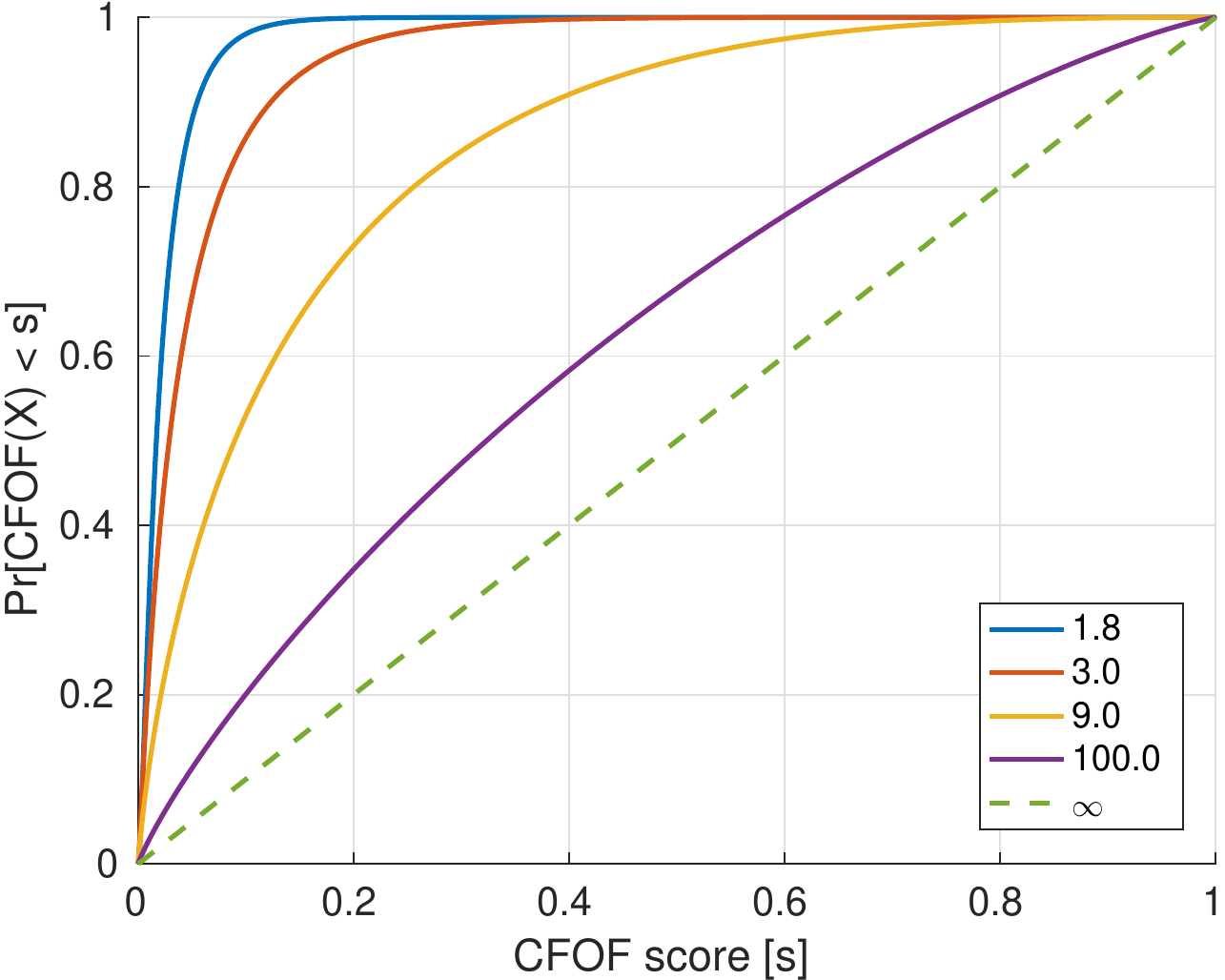}}
~ 
\subfloat
{\includegraphics[width=0.48\columnwidth]{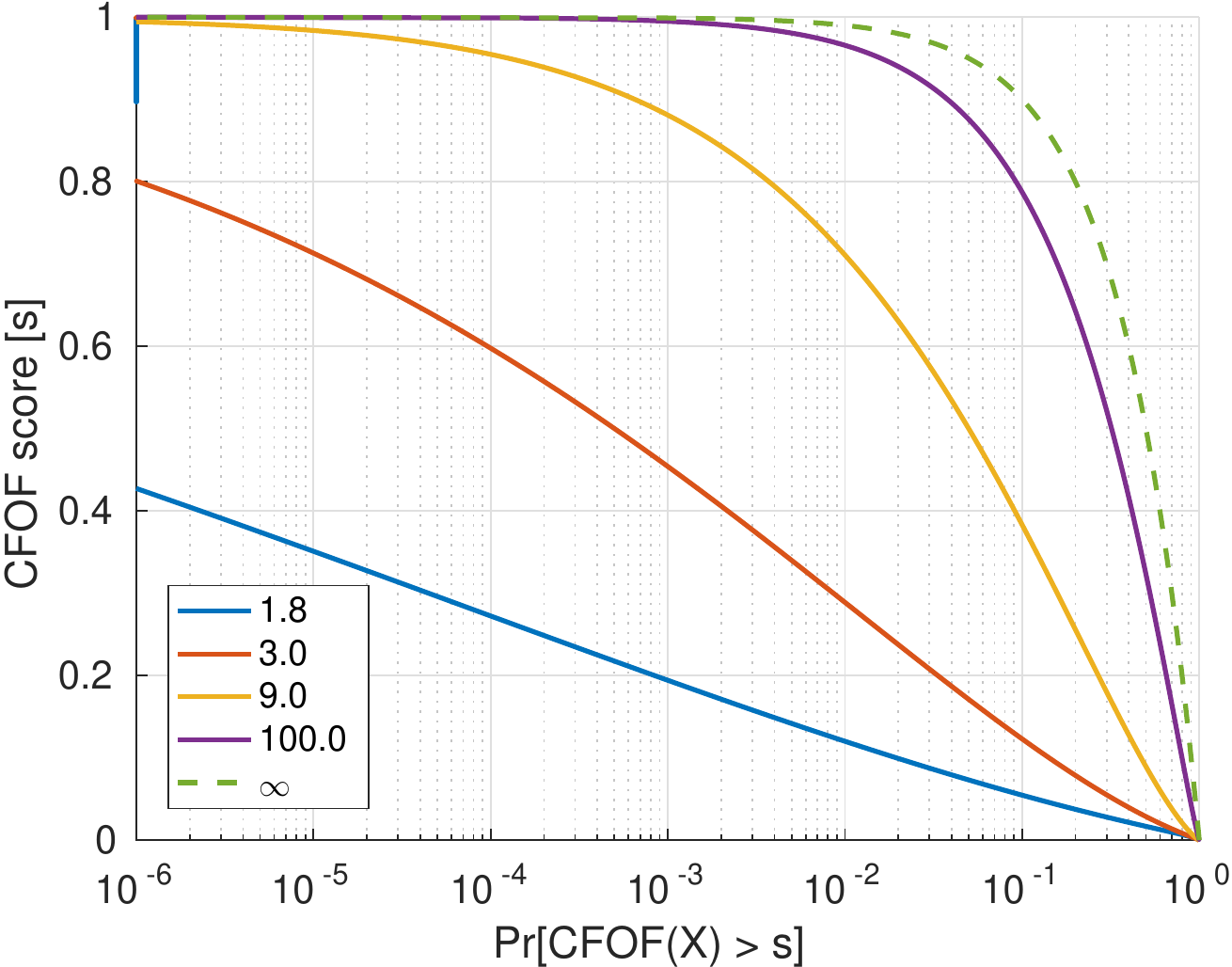}}
\caption{[Best viewed in color.] CFOF scores for different kurtosis values 
$\kappa \in \{1.8, ~3, ~9, ~100, ~\infty\}$ and $\varrho = 0.01$:
cumulative distribution function of the CFOF scores (on the left)
and score values ranked in decreasing order (on the right,
the abscissa reports the fraction of data points  in logarithmic scale).}
\label{fig:cfofcdfvskurt}
\end{figure}

\subsection{The effect of the data kurtosis}\label{sect:cfofcdfvskurt}

\medskip
As recalled above, the kurtosis $\kappa$ is such that
$\kappa\in[1,+\infty]$.
For extreme platykurtic distributions 
(i.e., having $\kappa\rightarrow 1^{+}$)
the $\CFOF$ score tends to $\varrho$, hence to a constant,
while for extreme leptokurtic distributions
(i.e., having $\kappa\rightarrow\infty$)
the $\CFOF$ score tends to the 
cumulative distribution function of the
standard normal distribution.

\begin{theorem}\label{th:cfofcdfvskurt}
Let $\vect{x}$ be a realization of an i.i.d. random vector $\Vect{X}$ having,
w.l.o.g., null mean. Then, for arbitrary large dimensionalities $d$,
\begin{eqnarray*}
\lim_{\kappa_X\rightarrow 1^+} \CFOF(\vect{x}) & \approx & 
\Phi \left( \frac{2}{\sqrt{4}} \Phi^{-1}(\varrho) \right) = \varrho \mbox{, and} \\
\lim_{\kappa_X\rightarrow+\infty} \CFOF(\vect{x}) & \approx & \Phi(z_{x}).
\end{eqnarray*}
\end{theorem}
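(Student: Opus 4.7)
The plan is to reduce both limits to elementary limit calculations on the argument of the standard normal cdf appearing in the closed form for $\CFOF(\vect{x})$ established in Theorem \ref{theorem:cfofcdf}, namely
\[ \CFOF(\vect{x}) \approx \Phi\!\left( \frac{z_x\sqrt{\kappa-1}+2\Phi^{-1}(\varrho)}{\sqrt{\kappa+3}} \right). \]
Since $\Phi:\mathbb{R}\to(0,1)$ is continuous and $\varrho\in(0,1)$ implies that $\Phi^{-1}(\varrho)$ is a fixed finite constant, in each case I can push the limit through $\Phi$ and just evaluate the limit of the argument as a function of $\kappa$ with $z_x$ and $\varrho$ held fixed.

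For the platykurtic limit $\kappa\to 1^+$, the coefficient $\sqrt{\kappa-1}$ in front of $z_x$ vanishes while $\sqrt{\kappa+3}\to 2$, so the argument collapses to $\frac{2\Phi^{-1}(\varrho)}{2}=\Phi^{-1}(\varrho)$ uniformly in $z_x$. Applying $\Phi$ yields $\Phi(\Phi^{-1}(\varrho))=\varrho$, which recovers the intended interpretation that a distribution with $\kappa=1$ has no outliers and every point carries the same score $\varrho$.

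For the leptokurtic limit $\kappa\to\infty$, I would factor $\sqrt{\kappa}$ out of numerator and denominator to obtain
\[ \frac{z_x\sqrt{1-1/\kappa}+2\Phi^{-1}(\varrho)/\sqrt{\kappa}}{\sqrt{1+3/\kappa}} \;\longrightarrow\; z_x, \]
where the fixed-constant term $2\Phi^{-1}(\varrho)/\sqrt{\kappa}$ vanishes. Continuity of $\Phi$ then gives $\CFOF(\vect{x})\to\Phi(z_x)$, the standard-normal cdf of the squared norm standard score, matching the second claim.

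The only subtle point, and the step that deserves care rather than routine calculation, is justifying the interchange of the two asymptotic regimes: Equation \eqref{eq:cfofexpected} is itself an ``$\approx$'' valid for arbitrarily large $d$, and we are now taking a further limit in $\kappa$. However, the theorem statement explicitly situates itself in the same ``arbitrary large dimensionalities $d$'' regime, so the closed form can legitimately be treated as the reference expression for $\CFOF(\vect{x})$, and the kurtosis limits act on it directly; no further probabilistic argument is needed, provided one notes that $\Phi^{-1}(\varrho)$ remains bounded throughout both limits because $\varrho$ is fixed and strictly inside $(0,1)$.
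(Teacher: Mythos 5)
Your proposal is correct and follows essentially the same route as the paper, whose proof simply asserts that both limits follow from the closed form in Equation \eqref{eq:cfofexpected}; you merely make explicit the elementary limit computations ($\sqrt{\kappa-1}\to 0$, $\sqrt{\kappa+3}\to 2$ in the first case, and factoring $\sqrt{\kappa}$ in the second) that the paper leaves implicit. Your added remark on the order of the $d\to\infty$ and $\kappa$ limits is a reasonable caveat but introduces nothing beyond what the paper already assumes.
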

\begin{proof}
The two expression can be obtained by exploiting the closed form of the 
cdf of the $\CFOF$ score reported in Equation \eqref{eq:cfofexpected}.
\end{proof}

Note that extreme platykurtic distributions have no outliers
and, hence, are excluded by the definition of concentration of outlier score.
For $\kappa = 1$, $\CFOF$ scores are constant,
and this is consistent with the absolute absence of outliers.

Figure \ref{fig:cfofcdfvskurt}
reports the CFOF scores associated with different kurtosis values 
$\kappa \in \{1.8, ~3, ~9, ~100, ~\infty\}$.
Curves are obtained by
leveraging Equation \eqref{eq:cfofcdf}
and setting $\varrho=0.01$.
The curves on the left represent the 
cumulative distribution function of the CFOF scores.
For infinite kurtosis the CFOF scores are 
uniformly distributed between $0$ and $1$, since,
from Equation \eqref{eq:cfofcdf}, for $\kappa\rightarrow\infty$,
$Pr[ \CFOF(\Vect{X}) \le s ] = s$.
Moreover, the curves highlight that the larger the kurtosis of the data
and the larger the probability to observe higher scores.
The curves on the right represent
the score values ranked in decreasing order. 
The abscissa reports the fraction of data points in logarithmic scale.
These curves allow to visualize the fraction of data
points whose score will be above a certain threshold.

\subsection{Behavior in presence of different distributions}\label{sect:semilocal}

Despite the apparent similarity of the $\CFOF$ score with distance-based scores,
these two families of scores are deeply different and, while distance-based outliers 
can be categorized among to the \textit{global outlier} scores,
$\CFOF$ shows adaptivity to different density levels,
a characteristics that makes it more similar to 
\textit{local outlier} scores.

This characteristics depends in part on the fact that
actual distance values are not 
employed in the computation of the score.
Indeed, the $\CFOF$ score is invariant
to all of the transformations 
that do not change
the nearest neighbor ranking, such as
translating the data or scaling the data.

\begin{figure}[t]
\centering
\subfloat[\label{fig:example_localA}The top $25$ outliers (circled points) 
according to the $\CFOF$ 
definition.]{\includegraphics[width=0.48\columnwidth]{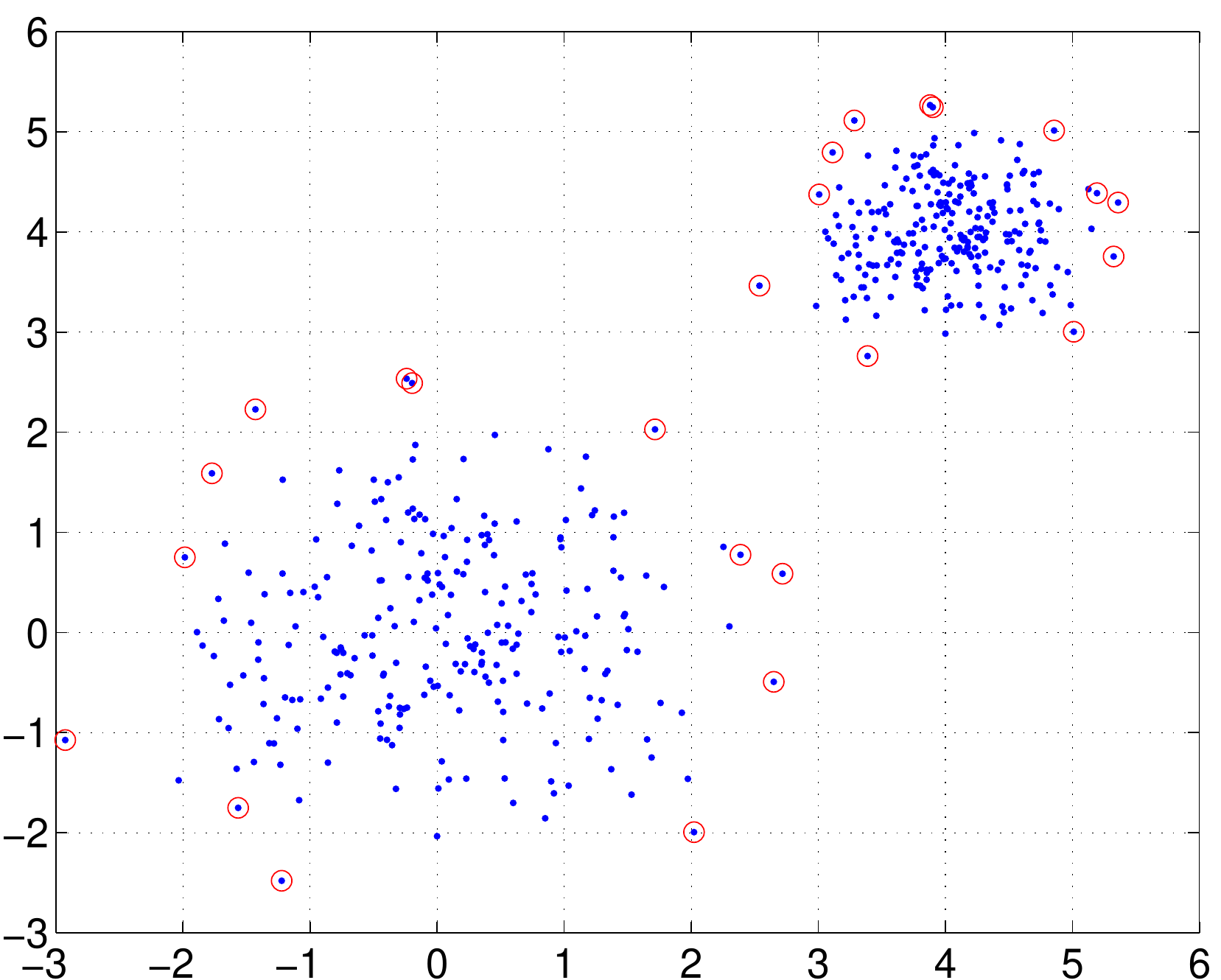}}
~ 
\subfloat[\label{fig:example_localB}
Density estimation performed by means of the $\CFOF$ measure.]
{\includegraphics[width=0.48\columnwidth]{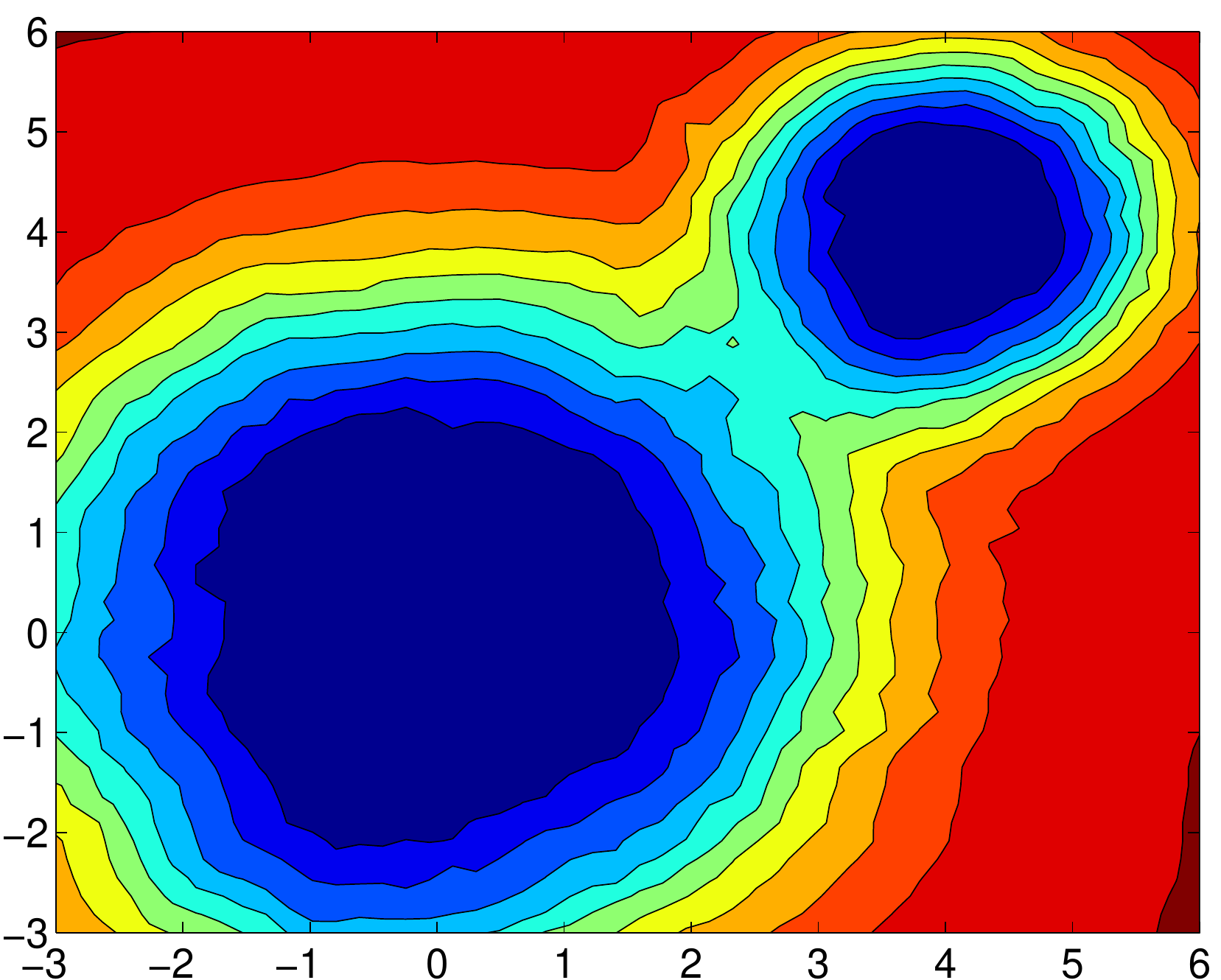}} 
\caption{Outlier scores for two normal clusters with different standard deviations.}
\label{fig:example_local}
\end{figure}

To illustrate, consider Figure \ref{fig:example_local}
showing a dataset consisting of two normally distributed clusters,
each consisting of $250$ points. The cluster centered in $(4,4)$
is obtained by translating and scaling by a factor $0.5$
the cluster centered in the origin.
The top $25$ $\CFOF$ outliers for $k_\varrho=20$ 
are highlighted.  
It can be seen that the outliers are 
the ``same'' objects of the two clusters.
Notice that a similar behavior can be observed,
that is outliers will emerge both in the sparser regions of
the space and along the borders of clusters,
also when the clusters are 
not equally populated,
provided that they contain at least $k_\varrho$ objects.

Now we will provide theoretical evidence that the above
discussed properties of the $\CFOF$ score are valid
in any arbitrary large dimensionality.

\begin{definition}[Translation-invariant and homogeneous outlier score]
Let $\Vect{X}$ an i.i.d. random vector,
let $a\in\mathbb{R}$ and $\vect{b}\in\mathbb{R}^d$, and 
let $\Vect{Y} = a\Vect{X}+\vect{b}$.
An outlier score $sc_{Def}$ is said to be \textit{translation-invariant and homogeneous}
if, for all the realizations $\vect{x}$ of $\Vect{X}$, it holds that 
\[ sc_{Def}(\vect{x},\Vect{X}) = sc_{Def}(a \vect{x}+\vect{b},\Vect{Y}) \]
\end{definition}

\begin{theorem}\label{th:cfofhomogeneous}
For arbitrary large dimensionalities $d$,
the CFOF score is 
translation-invariant and homogeneous.
\end{theorem}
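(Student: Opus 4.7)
The plan is to reduce everything to the closed-form expression in Equation \eqref{eq:cfofexpected}, which states that for arbitrarily large $d$ the $\CFOF$ score of a realization $\vect{x}$ of $\Vect{X}$ depends only on (i) the user parameter $\varrho$, (ii) the squared norm standard score $z_{x,\Vect{X}}$, and (iii) the kurtosis $\kappa_X$. Since $\varrho$ is a given constant that does not change when we look at $\vect{y} = a\vect{x}+\vect{b}$ as an element of a dataset drawn from $\Vect{Y} = a\Vect{X}+\vect{b}$, it suffices to check that both $\kappa$ and $z$ are preserved under the affine transformation $\vect{x}\mapsto a\vect{x}+\vect{b}$ (assuming $a\neq 0$; the case $a=0$ is degenerate and can be excluded).

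\emph{Invariance of the kurtosis.} For each component of $\Vect{Y}$ we have $Y_i-\mu_{Y_i} = a(X_i-\mu_{X_i})$, so $\mu_k(Y_i) = a^k\mu_k(X_i)$ for every $k$. Consequently
\begin{equation*}
\kappa_Y \;=\; \frac{\mu_4(Y)}{\mu_2(Y)^2} \;=\; \frac{a^4\mu_4(X)}{(a^2\mu_2(X))^2} \;=\; \kappa_X.
\end{equation*}

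\emph{Invariance of the squared norm standard score.} The mean vector of $\Vect{Y}$ is $\vect{\mu}_Y = a\vect{\mu}_X+\vect{b}$, hence $\vect{y}-\vect{\mu}_Y = a(\vect{x}-\vect{\mu}_X)$ and therefore $\norm{\vect{y}-\vect{\mu}_Y}^2 = a^2\norm{\vect{x}-\vect{\mu}_X}^2$. Taking expectation and square root of the variance one obtains $\mu_{\norm{\Vect{Y}-\vect{\mu}_Y}^2} = a^2\mu_{\norm{\Vect{X}-\vect{\mu}_X}^2}$ and $\sigma_{\norm{\Vect{Y}-\vect{\mu}_Y}^2} = a^2\sigma_{\norm{\Vect{X}-\vect{\mu}_X}^2}$. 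Substituting into the definition of the standard score, the factor $a^2$ cancels out of numerator and denominator, yielding $z_{y,\Vect{Y}} = z_{x,\Vect{X}}$.

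Plugging these two identities into Equation \eqref{eq:cfofexpected} gives $\CFOF(a\vect{x}+\vect{b},\Vect{Y}) \approx \CFOF(\vect{x},\Vect{X})$ for arbitrary large $d$, which is exactly the translation-invariance and homogeneity claimed in the statement. The only subtle point one should flag is that the closed form has already been established only asymptotically in $d$ (Theorem \ref{theorem:cfofcdf}), so the conclusion is genuinely an asymptotic statement; however, an intuitive justification also holds at the combinatorial level, because the map $\vect{x}\mapsto a\vect{x}+\vect{b}$ preserves the nearest neighbor ranking and thus the quantities $\N_k(\cdot)$ that enter the definition of $\CFOF$ in Equation \eqref{eq:cfof_hard}. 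This route (direct invariance of $\N_k$ under affine maps) is in fact the cleanest alternative and would likely serve as a sanity check next to the closed-form argument; I expect no real technical obstacle beyond carefully tracking the powers of $a$.
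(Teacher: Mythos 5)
Your proof is correct and follows essentially the same route as the paper's: both establish the invariance of the squared norm standard score under $\vect{x}\mapsto a\vect{x}+\vect{b}$ by the $a^2$ cancellation, and both use the homogeneity and translation-invariance of the central moments (you phrase this as invariance of the kurtosis, the paper cancels the powers of $a$ directly in the moment form of Equation \eqref{eq:cfofexpected} --- these are equivalent). Your closing remark about invariance of the nearest neighbor ranking mirrors the informal observation the paper already makes at the start of Section \ref{sect:semilocal}, so there is nothing to add.
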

\begin{proof}
We note that 
the squared norm standard score $z_{x,\Vect{X}}$ of $\vect{x}$ is
identical to the squared norm standard score $z_{y,\Vect{Y}}$ of $\vect{y}=a\vect{x}+\vect{b}$.
Indeed,
the mean of $\Vect{Y}$ is the vector $\vect{\mu}_Y=(\mu_{Y_1},\mu_{Y_2},\ldots,\mu_{Y_d})$,
where $\mu_{Y_i} = \E[a X_i + b_i] = a \mu_X + b_i$ ($1\le i\le d$).
As for
$\norm{\vect{y}-\vect{\mu}_Y}^2 = \norm{(a\vect{x}+\vect{b})-(a\mu_X+\vect{b})}^2 = 
\norm{a(\vect{x}-\mu_X)}^2 = a^2\norm{\vect{x}-\mu_X}^2$.
Analogously,
$\norm{\Vect{Y}-\mu_Y}^2 = a^2\norm{\Vect{X}-\mu_X}^2$, and
$\mu_{\norm{\Vect{Y}-\mu_Y}^2} = a^2\mu_{\norm{\Vect{X}-\mu_X}^2}$ and
$\sigma_{\norm{\Vect{Y}-\mu_Y}^2} = a^2\sigma_{\norm{\Vect{X}-\mu_X}^2}$.
Hence,
\[ z_{y,\Vect{Y}} = \frac{\norm{\vect{y}-\mu_Y}^2-\mu_{\norm{\Vect{Y}-\mu_Y}^2}}{\sigma_{\norm{\Vect{Y}-\mu_Y}^2}} =
\frac{a^2\norm{\vect{x}-\mu_X}^2-a^2\mu_{\norm{\Vect{X}-\mu_X}^2}}{a^2\sigma_{\norm{\Vect{X}-\mu_X}^2}} =
z_{x,\Vect{X}}.
\]
Moreover, 
we recall that the $k$th central moment has the following two properties:
$\mu_k(X+b)=\mu_k(X)$ (called translation-invariance), 
and $\mu_k(a X)=a^k\mu_k(X)$ (called homogeneity).
Hence, $\mu_k(Y) = \mu_k(Y_i) = \mu_k(aX+b_i) = a^k\mu_k(X)$.
Since variables $Y_i$ have identical central moments,
by applying the above property to Equation \eqref{eq:cfofcdf}, the statement eventually follows:
\begin{multline*}
\CFOF(a\vect{x}+\vect{b},\Vect{Y}) =
\Phi\left( \frac{z_y\sqrt{\mu_4(Y)-\mu_2(Y)^2}+2\mu_2(Y)\Phi^{-1}(\varrho)}{\sqrt{\mu_4(Y)+3\mu_2(Y)^2}} \right) = \\
= \Phi\left( \frac{z_x\sqrt{a^4\mu_4(X)-\big(a^2\mu_2(X)\big)^2}+2a^2\mu_2(X)\Phi^{-1}(\varrho)}
{\sqrt{a^4\mu_4(X)+3\big(a^2\mu_2(X)\big)^2}} \right) = \\
= \Phi\left( \frac{z_x\sqrt{\mu_4(X)-\mu_2(X)^2}+2\mu_2(X)\Phi^{-1}(\varrho)}
{\sqrt{\mu_4(X)+3\mu_2(X)^2}} \right) = \CFOF(\vect{x},\Vect{X}).
\end{multline*}
\end{proof}

Next, we introduce the concept of i.i.d. mixture random vector
as a tool for modeling an intrinsically high-dimensional dataset
containing data populations having different characteristics.

An \textit{i.i.d. mixture random vector} $\Vect{Y}$ 
is a random vector defined in terms of $K$ i.i.d. random vectors 
$\Vect{Y_1}, \Vect{Y_2}, \ldots, \Vect{Y_K}$
with associated selection probabilities $\pi_1,\pi_2,\ldots,\pi_K$, respectively.
Specifically, for each $i$, with probability $\pi_i$ the random vector $\Vect{Y}$
assumes value $\Vect{Y_i}$.

A dataset $DS$ generated by an i.i.d. mixture random vector $\Vect{Y}$,
consists of $n$ points
partitioned into $K$ clusters
$C_1, C_2, \ldots, C_K$ composed of $n_1,n_2,\ldots,n_K$ points, respectively,
where each $C_i$ is formed by realizations of the random vector $\Vect{Y_i}$
selected with probability $\pi_i$ ($1\le i\le K$).

Given a set of clusters $C_1,C_2,\ldots,C_K$, we
say that they are \textit{non-overlapping} if,
for each cluster $C_i$ and point $x\in C_i$,
$\NN_{n_i}(x,DS)=C_i$.

{The following result clarifies how $\CFOF$ behaves in
presence of clusters having different densities.}

\begin{theorem}\label{th:cfofclusters}
Let $DS$ be a dataset of $K$ non-overlapping clusters $C_1,C_2,\ldots,C_K$
generated by the i.i.d. mixture random vector $\Vect{Y}$,
let $\varrho \le \min_i\{\pi_i\}$,
let $O$ be the top-$\alpha$ $\CFOF$ outliers of $DS$,
and let $O = O_1 \cup O_2 \cup \ldots \cup O_K$
be the partition of $O$ induced by the clusters of $DS$.
Then, for arbitrary large dimensionalities $d$,
each $O_i$ consists of the top-$(\alpha_i/\pi_i)$ 
$(\varrho/\pi_i)$--$\CFOF$ outliers of the dataset
$C_i$ generated by the i.i.d. random vector $\Vect{Y_i}$ {\rm(}$1\le i\le K${\rm)},
where
\[ 
\frac{\alpha_i}{\pi_i} = 
1 - 
F_{\rm CFOF} \left( \frac{s^\ast}{\pi_i}; \frac{\varrho}{\pi_i}, \Vect{Y_i} \right), \]
and $s^\ast$ is such that $F_{\rm CFOF}(s^\ast;\varrho,\Vect{Y})=1-\alpha$.
\end{theorem}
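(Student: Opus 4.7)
The plan is to exploit the non-overlapping hypothesis to reduce the global $\CFOF$ computation on $DS$ to a scaled $\CFOF$ computation on each individual cluster $C_i$, and then to read off the outlier fractions from the cluster-level cdf provided by Theorem \ref{theorem:cfofcdf}. The guiding identity is that, for $x\in C_i$ and for every $k'\le \min_j n_j$, no point $y\in C_j$ with $j\neq i$ has $x$ among its $k'$ nearest neighbors in $DS$, because $\NN_{n_j}(y,DS)=C_j$ by non-overlappingness. Hence the reverse count $\N_{k'}(x)$ computed on $DS$ coincides with the reverse count of $x$ computed inside $C_i$. The hypothesis $\varrho\le \min_j \pi_j$ is used to ensure that the relevant $k'$ values lie in this safe regime, since $n\varrho\le n\pi_j\approx n_j$, and simultaneously that the rescaled parameter $\varrho/\pi_i\le 1$ is a legitimate $\CFOF$ parameter.

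Once the reverse-count identity is in place, I would translate it into a direct relation between the global and the local $\CFOF$ scores. Using $n_i\approx \pi_i n$ in the asymptotic regime, the defining condition $\N_{k'}(x)\ge n\varrho$ rewrites as $\N_{k'}(x,C_i)\ge n_i\cdot(\varrho/\pi_i)$, so the same integer $k'$ minimizes both the global definition with parameter $\varrho$ and the local definition with parameter $\varrho/\pi_i$ restricted to $C_i$. Normalizing yields the key scaling relation
\[
\CFOF^{(\varrho)}_{DS}(x)\;=\;\frac{k'}{n}\;\approx\;\pi_i\cdot\frac{k'}{n_i}\;=\;\pi_i\cdot \CFOF^{(\varrho/\pi_i)}_{C_i}(x),
\]
valid for every $x\in C_i$.

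With this in hand, identifying $O_i$ is immediate. The global threshold $s^\ast$ satisfies $F_{\rm CFOF}(s^\ast;\varrho,\Vect{Y})=1-\alpha$, and a point $x\in C_i$ belongs to $O$ exactly when $\CFOF^{(\varrho)}_{DS}(x)\ge s^\ast$, which by the scaling above is equivalent to $\CFOF^{(\varrho/\pi_i)}_{C_i}(x)\ge s^\ast/\pi_i$. Therefore $O_i$ is precisely the set of top $(\varrho/\pi_i)$--$\CFOF$ outliers of $C_i$ lying above threshold $s^\ast/\pi_i$. The fraction of such points within $C_i$ is then read off the cluster-level cdf of Theorem \ref{theorem:cfofcdf} applied to $\Vect{Y_i}$, giving $|O_i|/n_i = 1 - F_{\rm CFOF}(s^\ast/\pi_i;\varrho/\pi_i,\Vect{Y_i})$; and since $|O_i|=\alpha_i n$ and $n_i\approx \pi_i n$, this quantity equals $\alpha_i/\pi_i$, matching the statement.

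The main obstacle I expect is the discretization gap: the minimizing $k'$ values for the global and the local problems may differ by one unit at the boundary, and the approximation $n_i\approx \pi_i n$ is only asymptotically exact. I would handle this by working in the same asymptotic regime ($n,d\to\infty$) in which Theorem \ref{theorem:cfofcdf} itself is stated, where the $\CFOF$ score distribution is effectively continuous and the fluctuations of $n_i$ around $\pi_i n$ are negligible on the relevant scale. A secondary subtlety is that Theorem \ref{theorem:cfofcdf} must be applied to each $\Vect{Y_i}$ in isolation; this is legitimate precisely because, by non-overlappingness, the local $\CFOF$ score of $x\in C_i$ is a function only of $C_i$ and hence only of $\Vect{Y_i}$.
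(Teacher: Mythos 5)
Your proposal is correct and follows essentially the same route as the paper's proof: non-overlappingness localizes both the direct and reverse neighborhoods so that the global $\CFOF$ score of $x\in C_i$ rescales to the cluster-level score via $s\mapsto s/\pi_i$ and $\varrho\mapsto\varrho/\pi_i$, and the outlier fractions are then read off the per-cluster cdfs at the threshold $s^\ast/\pi_i$. The paper packages the same observation as the mixture identity $F_{\rm CFOF}(s;\varrho,\Vect{Y})=\sum_i \pi_i F_{\rm CFOF}(s/\pi_i;\varrho/\pi_i,\Vect{Y_i})$ before evaluating at $s^\ast$, which is just an aggregated form of your point-by-point argument.
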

\begin{proof}
Since the clusters are non-overlapping, 
the first $n_i$ direct and reverse neighbors
of each point $\vect{x}\in C_i$ are points belonging to the same cluster $C_i$ of $\vect{x}$.
Hence, being $\varrho n\le n_i$, the $\CFOF$ score of point $\vect{x}$ depends only on
the points within its cluster.
Thus the score $s=k/n$ with respect to the whole dataset
can be transformed into the score $s_i=k/n_i=(k/n)(n/n_i)=s/\pi_i$ 
with respect the cluster $C_i$.
Analogously, the parameter
$\varrho$ with respect to the whole dataset
can be transformed into the parameter $\varrho_i$
with respect to the cluster $C_i$, by requiring that
$\varrho n = \varrho_i n_i$, that is $\varrho_i = \varrho (n/n_i) = \varrho/\pi_i$.
Thus, the cdf of the $\CFOF$ score 
can be formulated as
\[ F_{\rm CFOF}(s;\varrho,\Vect{Y}) = \sum_{i=1}^K \pi_i F_{\rm CFOF} 
\left( \frac{s}{\pi_i}; \frac{\varrho}{\pi_i}, \Vect{Y_i} \right). \]
Consider now the score value $s^\ast$ such that $F_{\rm CFOF}(s^\ast;\varrho,\Vect{Y}) = 1-\alpha$.
Then the expected number of outliers from cluster $C_i$
is $\alpha_i n = n \pi_i \left( 1 - F_{\rm CFOF} ( s^\ast/\pi_i; \varrho/\pi_i, \Vect{Y_i} ) \right)$,
from which the result follows.
\end{proof}

Thus, from the above result it can be concluded that the number
of outliers coming from each cluster $C_i$ is related both to 
its generating distribution $\Vect{Y_i}$ 
and to
its relative size $\pi_i$.

As for the generating distribution,
if we consider points having positive squared
norm standard score (which form the most extreme half of the population),
then
at the same squared norm standard score value,
the $\CFOF$ score is higher for points whose 
generating distribution has larger kurtosis.

\begin{theorem}\label{th:cfof_kurt}
Let $\Vect{X}$ and $\Vect{Y}$ be two i.i.d. random vectors, and let
$\vect{x}$ and $\vect{y}$ be two realizations of $\Vect{X}$ and $\Vect{Y}$, respectively,
such that $z_{x,\Vect{X}} = z_{y,\Vect{Y}} \ge 0$. Then,
for arbitrary large dimensionalities $d$,
$\CFOF(\vect{x},\Vect{X}) \ge \CFOF(\vect{y},\Vect{Y})$
if and only if $\kappa_X \ge \kappa_Y$.
\end{theorem}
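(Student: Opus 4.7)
The plan is to reduce the statement to a one-variable monotonicity check via the closed-form expression for the expected $\CFOF$ score from Equation~\eqref{eq:cfofexpected} of Theorem~\ref{theorem:cfofcdf}. Writing $z := z_{x,\Vect{X}} = z_{y,\Vect{Y}} \ge 0$, that theorem gives $\CFOF(\vect{x},\Vect{X}) = \Phi(g(\kappa_X))$ and $\CFOF(\vect{y},\Vect{Y}) = \Phi(g(\kappa_Y))$, where
\[
g(\kappa) \;=\; \frac{z\sqrt{\kappa-1} + 2\Phi^{-1}(\varrho)}{\sqrt{\kappa+3}}.
\]
Since $\Phi$ is strictly increasing, the desired biconditional is equivalent to: $g(\kappa_X) \ge g(\kappa_Y)$ if and only if $\kappa_X \ge \kappa_Y$. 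It therefore suffices to show that $g$ is strictly monotone increasing on $(1,\infty)$.

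The bulk of the argument is a short calculus step. Differentiating $g(\kappa)$ by the quotient rule and simplifying, one obtains that $g'(\kappa)$ has the same sign as $2z - \Phi^{-1}(\varrho)\sqrt{\kappa-1}$ (the denominator being manifestly positive for $\kappa>1$). In the outlier-detection regime $\varrho < 1/2$ one has $\Phi^{-1}(\varrho) < 0$, so under the hypothesis $z\ge 0$ both contributions in the numerator are non-negative and the second is strictly positive. Hence $g'(\kappa) > 0$ on $(1,\infty)$, which yields the required strict monotonicity and thus the biconditional.

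The main delicate point is the implicit restriction on $\varrho$: when $\Phi^{-1}(\varrho) > 0$ (i.e.\ $\varrho > 1/2$) the numerator of $g'$ can change sign as $\kappa$ grows, since $\sqrt{\kappa-1}\to\infty$ while $z$ is fixed, and strict monotonicity of $g$ would then fail. This case is outside the outlier-detection setting of interest, where $\varrho$ is taken to be a small fraction of the population (consistently with the usage throughout the paper, e.g.\ $\varrho=0.01$ in Figure~\ref{fig:cfofcdfvskurt}). Under this standard restriction, the two-line differentiation argument above delivers the full result; the remaining edge case $z=0$ with $\varrho=1/2$ is degenerate, since then $g\equiv 0$ and the biconditional holds trivially as an equality of scores.
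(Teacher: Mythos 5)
Your proof is correct and takes essentially the same route as the paper's: the paper's entire argument is the one-line assertion that, for $z\ge 0$, the $\CFOF$ score of Equation~\eqref{eq:cfofexpected} is monotone increasing in the kurtosis, which is precisely the monotonicity of your $g(\kappa)$, and your quotient-rule computation (the sign of $g'$ matching that of $2z-\Phi^{-1}(\varrho)\sqrt{\kappa-1}$) checks out. Your additional observation that this monotonicity --- and hence the biconditional as stated --- genuinely requires $\Phi^{-1}(\varrho)\le 0$, i.e.\ $\varrho\le 1/2$, is a real refinement that the paper's proof leaves implicit.
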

\begin{proof}
The result descends from the fact that 
for non-negative $z\ge 0$ values,
the $\CFOF$ score
is monotone increasing with respect to the kurtosis parameter.
\end{proof}

\begin{figure}
\centering
\includegraphics[width=0.48\columnwidth]{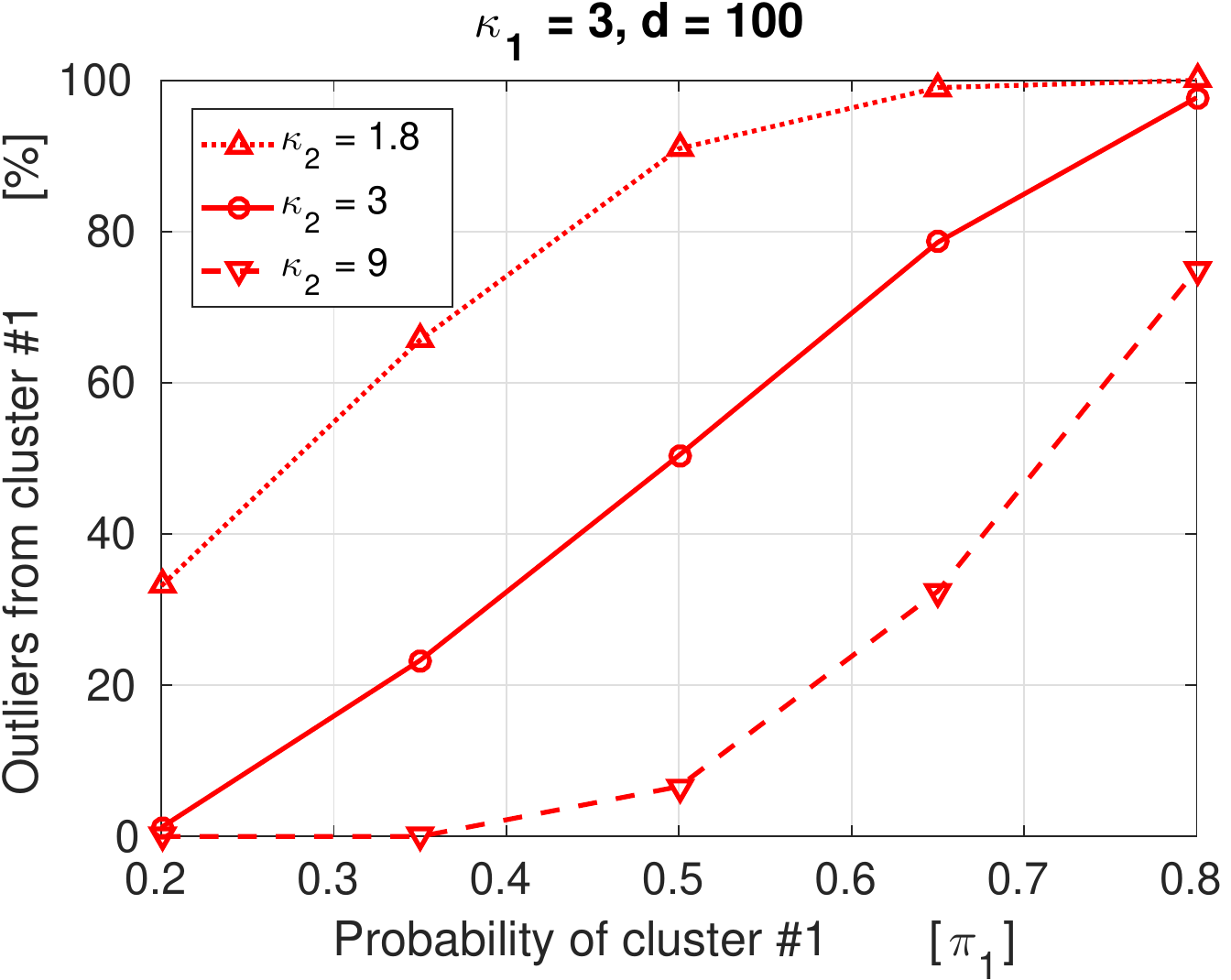} 
\caption{Percentage of $\CFOF$ outliers from the first cluster,
having kurtosis $\kappa_1 = 3$,
versus the fraction of points belonging to the first cluster 
of a $d=100$ dimensional
dataset containing a second cluster having kurtosis $\kappa_2$:
curves concern the cases $\kappa_2\in\{1.8,3,9\}$.}
\label{fig:cfofmixture}
\end{figure}

Thus, the larger the cluster kurtosis, the larger the fraction of 
outliers coming from that cluster.
To clarify
the relationship between kurtosis and number of outliers
we considered a dataset consisting of two clusters: the first
(second, resp.)
cluster consists of points generated according to an i.i.d.
random vector having kurtosis $\kappa_1 = 3$ ($\kappa_2 \in \{1.8, 3, 9\}$, resp.).
We then varied the occurrence probability $\pi_1$ of the first cluster
in the interval $[0.2,0.8]$, while the occurrence probability 
of the second cluster is $\pi_2 = 1-\pi_1$,
and set $\varrho$ to $0.01$ and
the fraction $\alpha$ of outliers to be selected to $0.05$. 
Figure \ref{fig:cfofmixture} reports 
the {percentage} of the top-$\alpha$ outliers that come from the 
first cluster (that is $\alpha_1' = \frac{\alpha_1}{\alpha}\cdot 100$)
as a function of $\pi_1$ for different $\kappa_2$ values.
It can be seen that, for $\kappa_1=\kappa_2$ the fraction $\alpha_1'$ is more closely
related to $\pi_1$ (with $\alpha_1'=0.5$ for $\pi_1=0.5$),
while in the general $\alpha_1'$ is directly proportional to the ratio 
$\frac{\kappa_1}{\kappa_2}$.

\smallskip
Intuitively, 
for datasets consisting of multiple shifted and scaled copies
of a given cluster,
a score able to retrieve local outliers
should return the same outliers from each cluster, 
both in terms of their number
and of their location within the cluster.
Next we show that this is indeed the case of the $\CFOF$ score
in any arbitrary large dimensionality.

We say that a collection of sets $S_1,S_2,\ldots,S_K$ 
is \textit{balanced}, if $|S_1|=|S_2|=\ldots=|S_K|$.

We say that an i.i.d. mixture random vector $\Vect{Y}$ is
\textit{homogeneous} if there exist random vector $\Vect{Y_0}$, 
real numbers $a_i\in\mathbb{R}$, and vectors $\vect{b_i}\in\mathbb{R}^d$,
such that $\Vect{Y_i}=a_i\Vect{Y_0}+\vect{b_i}$ ($1\le i\le K$).

\begin{theorem}\label{th:cfofclustbalanced}
Let $\vect{DS}$ be a dataset composed of balanced non-overlapping clusters
generated by an homogeneous i.i.d. mixture random vector $\Vect{Y}$.
Then, for arbitrary large dimensionalities $d$, 
the partition of the $\CFOF$ outliers of $\vect{DS}$ induced by the clusters of $\vect{DS}$
is balanced.
\end{theorem}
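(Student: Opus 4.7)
The plan is to combine Theorem \ref{th:cfofclusters} with Theorem \ref{th:cfofhomogeneous}, exploiting the fact that homogeneity of the mixture forces all component random vectors to share the same CFOF cdf. Since balance means $\pi_i = 1/K$ for every $i$, the fractions $\alpha_i$ of outliers contributed by each cluster will then all coincide.

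First I would apply Theorem \ref{th:cfofclusters} to decompose $O$. Plugging $\pi_i = 1/K$ into the theorem, the expected fraction of outliers coming from $C_i$ satisfies
\[ \frac{\alpha_i}{\pi_i} \;=\; 1 - F_{\rm CFOF}\bigl( K s^\ast;\, K\varrho,\, \Vect{Y_i} \bigr), \]
where $s^\ast$ solves $F_{\rm CFOF}(s^\ast;\varrho,\Vect{Y}) = 1-\alpha$ and depends only on the mixture as a whole, not on the index $i$. So to prove balance it suffices to show that the cdf $F_{\rm CFOF}(\,\cdot\,;\,\cdot\,,\Vect{Y_i})$ is independent of $i$.

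Next I would invoke the closed form of Equation \eqref{eq:cfofcdf}, which depends on $\Vect{Y_i}$ only through its kurtosis $\kappa_{Y_i}$. By homogeneity of the mixture there exists $\Vect{Y_0}$ together with scalars $a_i$ and shifts $\vect{b_i}$ with $\Vect{Y_i}=a_i\Vect{Y_0}+\vect{b_i}$. Using the translation invariance $\mu_k(X+b)=\mu_k(X)$ and the homogeneity $\mu_k(aX)=a^k\mu_k(X)$ of central moments (already exploited in the proof of Theorem \ref{th:cfofhomogeneous}), the fourth standardized moment collapses:
\[ \kappa_{Y_i} \;=\; \frac{\mu_4(a_i Y_0 + b_i)}{\mu_2(a_i Y_0 + b_i)^2} \;=\; \frac{a_i^4\mu_4(Y_0)}{\bigl(a_i^2\mu_2(Y_0)\bigr)^2} \;=\; \kappa_{Y_0}. \]
Hence $F_{\rm CFOF}(\,\cdot\,;\,\cdot\,,\Vect{Y_i})=F_{\rm CFOF}(\,\cdot\,;\,\cdot\,,\Vect{Y_0})$ for every $i$, and the formula above yields a value of $\alpha_i/\pi_i$ that is the same across all clusters. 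Multiplying by the common $\pi_i = 1/K$ gives $|O_i|=\alpha_i n$ equal for every $i$, so $\{O_1,\ldots,O_K\}$ is balanced.

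The only real subtlety lies in step two, namely in pinning down that $F_{\rm CFOF}$ genuinely depends on $\Vect{Y_i}$ only through its kurtosis (so that affine invariance of the moments suffices). This is immediate from the closed form in Theorem \ref{theorem:cfofcdf}, and the rest of the argument is bookkeeping with $\pi_i = 1/K$; no further computation is required.
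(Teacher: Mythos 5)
Your proposal is correct and follows essentially the same route as the paper's proof: it combines Theorem \ref{th:cfofclusters} (with $\pi_i=1/K$) with the translation-invariance and homogeneity established in Theorem \ref{th:cfofhomogeneous} to conclude that every cluster contributes the same expected number of outliers. Your explicit verification that the kurtosis, and hence the per-cluster cdf $F_{\rm CFOF}(\cdot;\cdot,\Vect{Y_i})$, is unchanged under the affine maps $\Vect{Y_i}=a_i\Vect{Y_0}+\vect{b_i}$ is exactly the content the paper delegates to Theorem \ref{th:cfofhomogeneous}, so there is no gap.
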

\begin{proof}
Since clusters are balanced, $\pi_i = 1/K$.
Moreover, 
by Theorems \ref{th:cfofhomogeneous} and \ref{th:cfofclusters}, 
\[ F_{\rm CFOF}(s; \varrho, \Vect{Y}) = K \cdot F_{\rm CFOF}\left( s K; \varrho K, \Vect{Y_0} \right), \] 
which means that, for any score value $s$, the number of expected
outliers from any cluster is the same.
\end{proof}

We call \textit{semi--local} any outlier definition 
for which the property stated in the statement
of Theorem \ref{th:cfofclustbalanced}
holds for homogeneous i.i.d. mixture random vectors,
but not for non-homogeneous i.i.d. mixture random vectors.

\subsection{On the concentration properties of distance-based and density-based
outlier scores}
\label{sect:conc_other}

We have already discussed the \textit{distance concentration} phenomenon,
that is the tendency of distances to become almost indiscernible as dimensionality
increases, and empirically shown that the concentration effect also affects 
different families of outlier scores (see Section \ref{sect:cfof_distconc}).
In this section we theoretically assess the concentration properties of
some of these scores,
by formally proving that distance-based and density outlier scores concentrate
according to Definition \ref{def:concentration}.
Specifically, we consider the KNN \cite{RRS00},
$\aKNN$ \cite{AP02}, and $\LOF$ \cite{BKNS00} outlier scores.

Intuitively, the above results descend by the specific role played by
distances in these definitions.
Indeed, distance-based outlier scores detect outliers on the basis
of the distance separating points from their nearest neighbors, a measure
whose discrimination capability, due to the distance concetration effect,
is expected to become more and more feeble as 
the dimensionality increases.
As for density-based outliers, in order to capture a notion of \textit{local}
(or normalized) density, they compute the ratio between 
the distance separating a point from its nearest neighbors
and the average of the same measure associated with their nearest neighbors.
Being the two above terms subject to concentration of distances, their ratio 
is expected to become closer and closer to $1$ as the dimensionality increases.

Formal details, including the role played by parameters,
are provided in subsequent Theorems \ref{th:knn_conc}, \ref{th:aknn_conc}, 
and \ref{th:lof_conc}.

\begin{theorem}\label{th:knn_conc}
For any fixed integer number $k$ such that $0<k<n$ (for any fixed 
real number $\varrho\in(0,1)$, resp.), 
the $\KNN$ outlier score with parameter $k$ 
($k = \varrho n$, resp.)
concentrates.
\end{theorem}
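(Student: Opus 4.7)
The plan is to instantiate Theorem~\ref{lemma:cfof}, which characterizes concentration for any outlier score monotone increasing in the squared norm standard score $z_x$. I proceed in three stages.

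First, I derive an asymptotic expression for the $\KNN$ score as a function of $z_x$ following the same strategy that leads to Equation~\eqref{eq:nk}. The squared distance $\norm{\vect{x}-\Vect{X}}^2 = \sum_{i=1}^d (x_i-X_i)^2$ between a fixed realization $\vect{x}$ and an independent i.i.d.\ draw $\Vect{X}$ is a sum of $d$ independent terms with per-coordinate mean $(x_i-\mu_X)^2+\sigma_X^2$ and finite variance. By the Central Limit Theorem, for arbitrarily large $d$ this squared distance is approximately normal with mean $\norm{\vect{x}-\mu_X}^2 + d\sigma_X^2$ and standard deviation of order $\sqrt{d}$ whose coefficient depends on the central moments of $X$. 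Inverting the resulting CDF at the $(k/n)$-quantile and writing $\norm{\vect{x}-\mu_X}^2 = d\sigma_X^2 + z_x\,\sigma_X^2\sqrt{d(\kappa-1)}$ yields
\[
sc_{\KNN}(\vect{x})^2 \;\approx\; 2d\sigma_X^2 \;+\; z_x\,\sigma_X^2\sqrt{d(\kappa-1)} \;+\; c\,\sqrt{d}\,\Phi^{-1}(k/n),
\]
where $c$ is a constant determined by the moments of $X$.

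Second, this expression is strictly monotone increasing in $z_x$, so Theorem~\ref{lemma:cfof} applies. Taking the reference realization $\vect{x_0}$ with $z_{x_0}=0$ gives $sc_{\KNN}(\vect{x_0})\sim \sigma_X\sqrt{2d}$, which diverges with $d$. Imposing $sc_{\KNN}(\vect{x_1}) = (1-\epsilon)\,sc_{\KNN}(\vect{x_0})$ and solving for $z_{x_1}$, the dominant balance yields
\[
z_{x_1} \;\sim\; -\frac{2(2\epsilon-\epsilon^2)\,\sqrt{d}}{\sqrt{\kappa-1}}.
\]
Since $\kappa>1$ by hypothesis, $z_{x_1}\to -\infty$ as $d\to\infty$ and hence $\Phi(z_{x_1})\to 0$. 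By symmetry, $z_{x_2}\to +\infty$ and $\Phi(z_{x_2})\to 1$. Theorem~\ref{lemma:cfof} then delivers concentration.

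Third, the same argument covers both parameter regimes of the statement: for any fixed $k$ with $0<k<n$ the quantity $\Phi^{-1}(k/n)$ is a constant independent of $d$; for $k=\varrho n$ it equals $\Phi^{-1}(\varrho)$, again constant in $d$. In both cases the last term of the asymptotic $\KNN$ expression is $O(\sqrt{d})$ and is absorbed by the $O(\sqrt{d})$ perturbation used to control $z_{x_1}$ and $z_{x_2}$, preserving the limits above. The main obstacle is the careful justification of the CLT-based quantile inversion: the $\KNN$ score is an order statistic of the $n-1$ squared distances from $\vect{x}$ to the remaining dataset points rather than a quantile of the underlying distribution. Showing that the gap between the order statistic and the corresponding quantile is of order strictly lower than $\sqrt{d}$ (the scale of the perturbation that must be controlled) is the technically delicate step; classical asymptotic results for order statistics of approximately Gaussian samples, applicable because $n$ is held fixed, suffice for this purpose.
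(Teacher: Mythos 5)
Your proposal is correct and follows essentially the same route as the paper's proof: an asymptotic expression for the $\KNN$ score in terms of the squared norm standard score (which the paper imports directly as Lemma 23 of \cite{Angiulli2018} rather than re-deriving via the CLT and quantile inversion as you sketch), followed by the observation that the quantile term is lower order, and an application of Theorem \ref{lemma:cfof} with the same dominant-balance computation giving $z_{x_1} = -\frac{2\epsilon(2-\epsilon)}{\sqrt{\kappa-1}}\sqrt{d} \to -\infty$ and, by symmetry, $z_{x_2} \to +\infty$. Your closing remark on controlling the gap between the order statistic and the population quantile is a reasonable flag of the one step the paper delegates to the cited lemma.
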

\begin{proof}
Let $\Vect{X}$ be an i.i.d. random vector having kurtosis $\kappa>1$.
W.l.o.g., assume that $\mu_{X}=0$, then from \cite{Angiulli2018} (see Lemma 23)
\begin{multline*}
\KNN(\vect{x}) =
\dist\left(\vect{x},\nn_k(\vect{x})\right) \approx \sqrt{\norm{\vect{x}}^2 + 
\mu_{\norm{\Vect{X}}^2} + \Phi^{-1}\left(k/n\right)
\sigma_{\norm{\vect{x}-\Vect{X}}^2} } = \\
= \sqrt{\norm{\vect{x}}^2 + 
d\mu_2 + \Phi^{-1}\left(k/n\right)
\sqrt{d(\mu_4-\mu_2^2) + 4\mu_2\norm{\vect{x}}^2} }.
\end{multline*}
For any fixed $k$ such that $0<k<n$ ($\varrho\in(0,1)$, resp.),
it is the case that
$0<k/n<1$ and $\Phi^{-1}(k/n)$ is finite.
Moreover, 
for arbitrary large dimensionalities $d$, 
$\sqrt{O(\norm{\vect{x}}^2,d)+\sqrt{O(\norm{\vect{x}}^2,d)}} \approx
\sqrt{O(\norm{\vect{x}}^2,d)}$ and, then
\[
\KNN(\vect{x})
\approx 
\sqrt{\norm{\vect{x}}^2 + d\mu_2}.
\]
Since
$\norm{\vect{x}}^2 = z_x\sigma_{\norm{\vect{X}}^2} + \mu_{\norm{\vect{X}}^2} =
z_x \sqrt{d(\mu_4-\mu_2^2)} + d\mu_2$, by substituting 
\[
\KNN(\vect{x}) \approx 
\sqrt{z_x\sqrt{d(\mu_4-\mu_2^2)} + 2d\mu_2}.
\]
From the above expressions, the $\KNN$ score is monotone increasing
with the squared norm standard score. Then, Theorem \ref{lemma:cfof}
can be applied.
Consider the family $\vect{x_0}$ ($d\in\mathbb{N}^+$)
of realizations of $\Vect{X}$ 
such that $\norm{\vect{x_0}}^2 = \mu_{\norm{\vect{X}}^2} = d\mu_2$.
Then, the point $\vect{x_1}$ defined in Theorem \ref{lemma:cfof} is such that
\begin{eqnarray*}
\KNN(\vect{x_1}) =
& (1-\epsilon)\KNN(\vect{x_0}) 
& \Longrightarrow \\
\sqrt{\norm{\vect{x_1}}^2 + 2d\mu_2} =
& (1-\epsilon) \sqrt{\norm{\vect{x_0}}^2 + d\mu_2} 
& \Longrightarrow \\
\sqrt{z_{x_1}\sqrt{d(\mu_4-\mu_2^2)} + 2d\mu_2} =
& (1-\epsilon) \sqrt{2 d\mu_2} 
& \Longrightarrow \\
{z_{x_1}\sqrt{d(\mu_4-\mu_2^2)} + 2d\mu_2} =
& (1-\epsilon)^2 {2 d\mu_2} 
& \Longrightarrow \\
z_{x_1} \sqrt{d(\mu_4-\mu_2^2)} = 
& (\epsilon^2- 2\epsilon) 2d\mu_2 
& \Longrightarrow \\ 
z_{x_1} = & \displaystyle \frac{2\mu_2}{\sqrt{\mu_4-\mu_2^2}} (\epsilon^2-2\epsilon) \sqrt{d}
& \Longrightarrow \\
z_{x_1} = & \displaystyle - \frac{2\epsilon(2-\epsilon) }{\sqrt{\kappa-1}} \sqrt{d}.
\end{eqnarray*}
As for the point $\vect{x_2}$ defined in Theorem \ref{lemma:cfof},
for symmetry it is such that
\[ z_{x_2} = \frac{2\mu_2}{\sqrt{\mu_4-\mu_2^2}} (\epsilon^2+2\epsilon) \sqrt{d} =
\frac{2 \epsilon (2 +\epsilon)}{\sqrt{\kappa-1}} \sqrt{d}. \]
Now, consider any $\epsilon\in(0,1)$. Then, $z_{x_1} = O(-\sqrt{d})$ is negative 
and $z_{x_2} = O(\sqrt{d})$ is positive.
Moreover, for $d\rightarrow\infty$, $z_{x_1}$ diverges to $-\infty$ and 
$z_{x_2}$ diverges to $+\infty$ and, hence
\[ \lim_{d\rightarrow\infty}\Phi(z_{x_1}) = \Phi(-\infty)=0 
~~~\mbox{ and }~~~
\lim_{d\rightarrow\infty}\Phi(z_{x_2}) = \Phi(+\infty)=1,
\]
and the statement follows.
\end{proof}
As for the effect of the kurtosis on 
the concentration of distance-based
and density-based scores scores,
from the expressions of $z_{x_1}$ and $z_{x_2}$ in 
Theorem \ref{th:knn_conc}, we can conclude
the convergence rate towards concentration 
of these scores is inversely 
proportional to the 
square root of the data kurtosis.

\begin{theorem}\label{th:aknn_conc}
For any fixed integer number $k$ such that $0<k<n$ 
(for any fixed real number $\varrho\in(0,1)$, resp.), 
the $\aKNN$ outlier score with parameter $k$ 
($k = \varrho n$, resp.)
concentrates.
\end{theorem}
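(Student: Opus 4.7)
The plan is to mirror the argument of Theorem \ref{th:knn_conc}, since $\aKNN$ differs from $\KNN$ only in that it averages the distances to the first $k$ nearest neighbors rather than taking the $k$th one. The key point is that this averaging does not alter the leading-order behavior in $d$.

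First, I would write
\[
\aKNN(\vect{x}) = \frac{1}{k}\sum_{i=1}^{k} \dist(\vect{x},\nn_i(\vect{x})),
\]
and invoke the same approximation from \cite{Angiulli2018} used in Theorem \ref{th:knn_conc} to get, for each $1\le i\le k$,
\[
\dist(\vect{x},\nn_i(\vect{x})) \approx \sqrt{\norm{\vect{x}}^2 + d\mu_2 + \Phi^{-1}(i/n)\sqrt{d(\mu_4-\mu_2^2)+4\mu_2\norm{\vect{x}}^2}}.
\]
For any fixed $k$ (resp.\ for $k=\varrho n$ with $\varrho\in(0,1)$), each $\Phi^{-1}(i/n)$ with $1\le i\le k$ is finite, and the inner radicand is $O(d) + O(\sqrt{d}\,\norm{\vect{x}})$, so the same asymptotic simplification used for $\KNN$ applies term by term and yields
\[
\aKNN(\vect{x}) \approx \sqrt{\norm{\vect{x}}^2 + d\mu_2} \approx \sqrt{z_x \sqrt{d(\mu_4-\mu_2^2)} + 2d\mu_2}.
\]

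Having obtained the same asymptotic expression as for $\KNN$, I would then observe that $\aKNN$ is monotone increasing in the squared norm standard score $z_x$, which lets me apply Theorem \ref{lemma:cfof}. Taking the family $\vect{x_0}$ with $\norm{\vect{x_0}}^2 = d\mu_2$ (so $z_{x_0}=0$), the same algebraic manipulation used in the proof of Theorem \ref{th:knn_conc} produces
\[
z_{x_1} = -\frac{2\epsilon(2-\epsilon)}{\sqrt{\kappa-1}}\sqrt{d}, \qquad z_{x_2} = \frac{2\epsilon(2+\epsilon)}{\sqrt{\kappa-1}}\sqrt{d},
\]
so that for any $\epsilon\in(0,1)$, $z_{x_1}\to-\infty$ and $z_{x_2}\to+\infty$ as $d\to\infty$, giving $\Phi(z_{x_1})\to 0$ and $\Phi(z_{x_2})\to 1$. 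Theorem \ref{lemma:cfof} then implies that $\aKNN$ concentrates.

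The only step that requires some care, and which I expect to be the main obstacle, is justifying that the averaging over $i=1,\ldots,k$ does not spoil the leading-order simplification. For the fixed-$k$ case this is trivial since each $\Phi^{-1}(i/n)$ is bounded, but for $k=\varrho n$ the term $\Phi^{-1}(k/n)$ for the largest $i$ remains bounded by $\Phi^{-1}(\varrho)$, while the inner radicand is $\sqrt{O(d)+O(\sqrt{d}\norm{\vect{x}})}$; the dominant $\sqrt{d\mu_2}$ contribution under the outer square root absorbs these correction terms uniformly in $i$, so the averaged expression still collapses to $\sqrt{\norm{\vect{x}}^2 + d\mu_2}$ asymptotically. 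Once this uniform absorption is established, the rest of the argument is identical to the $\KNN$ proof.
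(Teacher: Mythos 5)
Your proposal is correct, but it takes a more explicit route than the paper. The paper's own proof is a two-line reduction: it recalls that $\aKNN(\vect{x}) = \sum_{i=1}^k \dist(\vect{x},\nn_i(\vect{x}))$ is a sum of $k$ $\KNN$ scores with parameters $1,\ldots,k$, notes that each of these concentrates by Theorem \ref{th:knn_conc}, and concludes that the sum therefore concentrates too. You instead go back to the asymptotic expansion of each distance term, show that the $\Phi^{-1}(i/n)$ correction is absorbed uniformly in $i$ so that the averaged score collapses to the same expression $\sqrt{z_x\sqrt{d(\mu_4-\mu_2^2)}+2d\mu_2}$ as $\KNN$, and then re-run the full Theorem \ref{lemma:cfof} argument with the same $z_{x_1}$, $z_{x_2}$ computation. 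What your version buys is that it makes explicit something the paper leaves implicit: that ``a sum of concentrating scores concentrates'' requires the summands to concentrate around a common reference family $\vect{x_0}$ with a uniform relative error (so that $\sum_i |\KNN_i(\Vect{X})-\KNN_i(\vect{x_0})| \le \epsilon \sum_i \KNN_i(\vect{x_0})$ on the intersection of the relevant events). Your uniform-absorption step supplies exactly this, since $|\Phi^{-1}(i/n)|$ is bounded over $1\le i\le k$ for finite $n$ while the radicand's leading term is $O(d)$. The cost is length; the paper's shortcut is shorter and, for bounded $n$, equally valid. (The $1/k$ normalization you include versus the paper's plain sum is immaterial, since concentration is defined via relative deviation and is invariant under positive scaling.)
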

\begin{proof}
Recall that 
\[ \aKNN(\vect{x}) = \sum_{i=1}^k \dist(\vect{x},\nn_i(\vect{x})) 
= \sum_{i=1}^k \KNN_i(\vect{x}), \]
and, hence, $\aKNN$ can be regarded as the sum of $k$ 
$\KNN$ scores,
each one associated with a different parameter ranging in $\{1,2,\ldots,k\}$.
Since, by Theorem \ref{th:knn_conc}, the $\KNN$ score concentrates,
it must be the case that also the sum of $k$ of these $\KNN$ scores
must concentrate and, hence, also the
$\aKNN$ score concentrates.
\end{proof}

\begin{theorem}\label{th:lof_conc}
For any fixed integer number $k$ such that $0<k<n$ 
(for any fixed real number $\varrho\in(0,1)$, resp.), 
the $\LOF$ 
outlier score with parameter $k$ 
($k = \varrho n$, resp.)
concentrates.
\end{theorem}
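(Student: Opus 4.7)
The plan is to reduce the $\LOF$ score to ratios of $\KNN$-type quantities and then invoke the concentration of $\KNN$ established in Theorem \ref{th:knn_conc}. Recall that
\[
\LOF_k(\vect{x}) = \frac{1}{k}\sum_{\vect{y}\in\NN_k(\vect{x})} \frac{\text{lrd}_k(\vect{y})}{\text{lrd}_k(\vect{x})},
\]
where $\text{lrd}_k(\vect{x})$ is the reciprocal of the average reachability distance from $\vect{x}$ to its $k$ nearest neighbors, and the reachability distance of $\vect{x}$ from $\vect{y}$ is $\max\{\dist(\vect{x},\vect{y}),\KNN_k(\vect{y})\}$. The key observation is that, under the asymptotic expansion derived in the proof of Theorem \ref{th:knn_conc}, both $\dist(\vect{x},\vect{y})$ and $\KNN_k(\vect{y})$ share the same dominating term $\sqrt{2d\mu_2}$, so the reachability distance essentially coincides with $\KNN_k(\vect{y})$ up to lower-order corrections, and $\text{lrd}_k(\vect{x})$ is asymptotically equivalent to $1/\KNN_k(\vect{x})$.

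Granted this reduction, the proof splits into two steps. First, I would apply Theorem \ref{th:knn_conc} to conclude that $\KNN_k(\Vect{X})$ concentrates in relative terms around the common scale $\sqrt{2d\mu_2}$, so that every $\text{lrd}_k(\cdot)$ value converges relatively to $1/\sqrt{2d\mu_2}$. Second, I would select the reference family $\vect{x_0}$ with $z_{x_0}=0$: by the previous step $\LOF_k(\vect{x_0})\to 1$, and the same argument applied to any other realization $\Vect{X}$ yields $\LOF_k(\Vect{X})\to 1$ with probability tending to $1$, which is precisely the concentration property of Definition \ref{def:concentration}. The argument is insensitive to whether $k$ is held fixed or scaled as $k=\varrho n$, because Theorem \ref{th:knn_conc} already covers both regimes and the remaining manipulations are purely algebraic.

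The main obstacle is controlling the relative error incurred when taking the ratio of two concentrating quantities. From the expression $\KNN(\vect{x})\approx\sqrt{z_x\sqrt{d(\mu_4-\mu_2^2)}+2d\mu_2}$ established in Theorem \ref{th:knn_conc}, the $z_x$-dependent fluctuation contributes only a term of order $O(1)$ relative to the leading $O(\sqrt{d})$ scale, and this correction vanishes in the ratio between numerator and denominator. Ensuring that this vanishing is uniform over the $k$ points inside $\NN_k(\vect{x})$, and that realizations whose squared norm standard scores lie in the tails can be neglected, mirrors the $\Phi(z_{x_1})\to 0$ and $\Phi(z_{x_2})\to 1$ argument already used for $\KNN$ and $\aKNN$, which I would port essentially verbatim to fit the framework of Theorem \ref{lemma:cfof}. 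As for $\KNN$ and $\aKNN$, the convergence rate towards concentration will then be inversely proportional to $\sqrt{\kappa-1}$, so heavier tails only slow, but do not prevent, the concentration of $\LOF$.
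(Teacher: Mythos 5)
Your proposal is correct and runs on the same engine as the paper's proof---the asymptotic expansion of the nearest-neighbor distance from Theorem \ref{th:knn_conc}---but it packages the reduction differently. The paper argues via a one-sided bound tailored to outlier points: for an outlier $\vect{x}$ the reachability distances to its neighbors equal the actual distances, so the numerator of $\LOF$ coincides with $\aKNN(\vect{x})$, while the denominator is bounded below by the average of the neighbors' $\aKNN$ scores; hence $\LOF(\vect{x})$ is dominated by a ratio of quantities that concentrate by Theorem \ref{th:aknn_conc}, and the $\max$ in the reachability distance never has to be analyzed. You instead prove a uniform asymptotic equivalence: every term entering the local reachability densities---both $\dist(\vect{x},\vect{y})$ and the $k$-distance of $\vect{y}$, hence their maximum---shares the leading scale $\sqrt{2d\mu_2}$ with fluctuations of order $O(1)$ against an $O(\sqrt{d})$ magnitude, so all densities collapse to a common value and $\LOF\rightarrow 1$ everywhere. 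Your route avoids the paper's outlier/inlier case split and treats the $\max$ explicitly, at the price of having to argue uniformity over the $k$ neighbors and to dismiss tail realizations, which, as you correctly note, is the same $\Phi(z_{x_1})\rightarrow 0$, $\Phi(z_{x_2})\rightarrow 1$ device already deployed for $\KNN$ and $\aKNN$. One small imprecision: for an outlier $\vect{x}$ the reachability distance $\max\{\dist(\vect{x},\vect{y}),\textit{k--dist}(\vect{y})\}$ is realized by $\dist(\vect{x},\vect{y})$ rather than by the $k$-distance, so it does not literally ``coincide with $\KNN_k(\vect{y})$''; this is harmless, since both arguments of the $\max$ are $\sqrt{2d\mu_2}\,(1+o(1))$, which is all your argument actually uses.
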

\begin{proof}
The $\LOF$ score is defined as
\[ 
\LOF(\vect{x}) \approx \frac{ \textit{lr--dist}(\vect{x}) }{\frac{1}{k} \sum_{i=1}^k \textit{lr--dist}(\nn_i(\vect{x}))},
\]
where
\begin{eqnarray*}
\textit{k--dist}(\vect{x}) & = & \dist(\vect{x},\nn_k(\vect{x})), \\
\textit{r--dist}(\vect{x},\vect{y}) & = & \max \{ \dist(\vect{x},\vect{y}), \textit{k--dist}(\vect{y}) \}, \mbox{ and} \\
\textit{lr--dist}(\vect{x}) & = & \sum_{i=1}^k \textit{r-dist}(\vect{x},\nn_i(\vect{x})), \\
\end{eqnarray*}
are the $k$-distance (\textit{k--dist}), 
the reachability-distance (\textit{r--dist}), 
and the local reachability distance (\textit{lr--dist}), respectively.
This score assigns value $1$ to points inside clusters and score significantly 
larger than $1$ to outliers.

Consider outlier points $\vect{x}$. Since outliers are not neighbors of their nearest neighbors,
that is $\dist(\vect{x},\nn_i(\vect{x})) > \textit{k--dist}(\nn_i(\vect{x}))$ for $i\le k$,
then the numerator of $\LOF$ must coincide with $\aKNN(\vect{x})$.
As for the denominator, note that for any point $\vect{x}$, $\aKNN(\nn_i(\vect{x})) \le \textit{lr--dist}(\nn_i(\vect{x}))$.
Hence, for $\vect{x}$ an outlier point
\[ 
\LOF(\vect{x}) \le \frac{ \aKNN(\vect{x}) }{\frac{1}{k} \sum_{i=1}^k \aKNN(\nn_i(\vect{x}))}.
\]
The concentration of $\LOF$ then follows from the concentration of the $\aKNN$ score.
\end{proof}

Summarizing, from the above analysis it follows that distance-based and density-based scores concentrate 
in the following cases:
\begin{itemize}
 \item[---] \textit{sample size $n$}: both bounded ($n>k$ finite) and unbounded ($n\rightarrow\infty$);
 \item[---] \textit{parameter $k$}: both fixed ($k={\rm const.}$) and variable ($k=\varrho n$ with $\varrho \in(0,1)$).
\end{itemize}
Moreover,
note that for finite $n$, these score concentrate even for 
$k=n$ (or, equivalently, $\varrho=1)$.\footnote{
Consider the proof of Theorem \ref{th:knn_conc}.
Even if for $k=n$ the term $\Phi^{-1}(k/n)$  evaluates to $+\infty$, 
this can be considered valid only in the case of infinite sample sizes.
For finite samples, the contribution to the $\KNN$ score associated with the 
standard deviation $\sigma_{\norm{\vect{x}-\Vect{X}}^2}$ of the
squared distance must be finite and, hence, can be ignored, as already done
in the proof of Theorem \ref{th:knn_conc} for $k<n$.}
Thus,
the only way to avoid concentration of distance-based and density-based
outlier scores is to consider infinite samples, that is $n=\infty$, and to employ the
parameter $k=\infty$, or equivalently $\varrho=1$.\footnote{In this case 
distance-outliers coincide with the points located on the
boundaries of the data distribution support. If the support is infinite, 
outliers are located at infinity.}

\subsection{On the concentration properties of
reverse nearest neighbor-based outliers}\label{sect:conc_rnnc}

Consider the following result from \cite{Newman1983,Angiulli2018}: Let $k>0$ be a fixed natural
number, then
\[ \lim_{n\rightarrow\infty} \lim_{d\rightarrow\infty} \N_k \xrightarrow{\,D\,} 0, \]
where the convergence is in distribution.

Intuitively, the above result states that if $k$ is held fixed, while
both the dimensionality $d$ and the sample size $n$ tend to infinity,
then the $k$-occurrences function will tend to be identically equal to zero.
This is a consequence of the hubness phenomenon, since only a few points, the hubs, 
located in proximity of the mean, will be selected as $k$ nearest neighbors
by any other point.
Thus, the $\RNNc$ score for fixed parameter $k$
and unbounded sample sizes $n$ is subject to concentration.\footnote{This also means that
we can enforce $\CFOF$ to concentrate
only by allowing inconsequentially small reverse neighborhood sizes
$\varrho\rightarrow 0$ in presence of unboundedly large samples $n\rightarrow\infty$.}

Moreover, by Equation \eqref{eq:nk}, the
concentration of the $\RNNc$ scores can be avoided by 
relating the parameter $k$ to the sample size.
This behavior of $\RNNc$ scores has been already
observed in the literature \cite{RadovanovicNI15},
by noticing that for $\RNNc$
the discrimination of scores represents a notable weakness
of the methods, with two contributing factors: hubness and inherent discreteness. 
Thus, 
in order to add more discrimination to $\RNNc$ 
they suggested to
raise $k$, possibly to some
value comparable with $n$, but with two concerns: ($i$)
with increasing $k$ the notion of outlier moves from local to global,
thus if local outliers are of interest they can be missed; ($ii$) $k$ values
comparable with $n$ raise issues with computational complexity.

It is important to highlight that the behavior of the $\CFOF$ score 
is deeply different from that of the $\RNNc$ scores,
since $\CFOF$ is not affected at all by the two above mentioned problems.
Indeed, $\CFOF$ outliers are well separated from inliers also for 
relatively small
values of the parameter $\varrho$.

\begin{figure}[t]
\centering
\subfloat[\label{fig:cfofvsnk1}]
{\includegraphics[width=0.48\columnwidth]{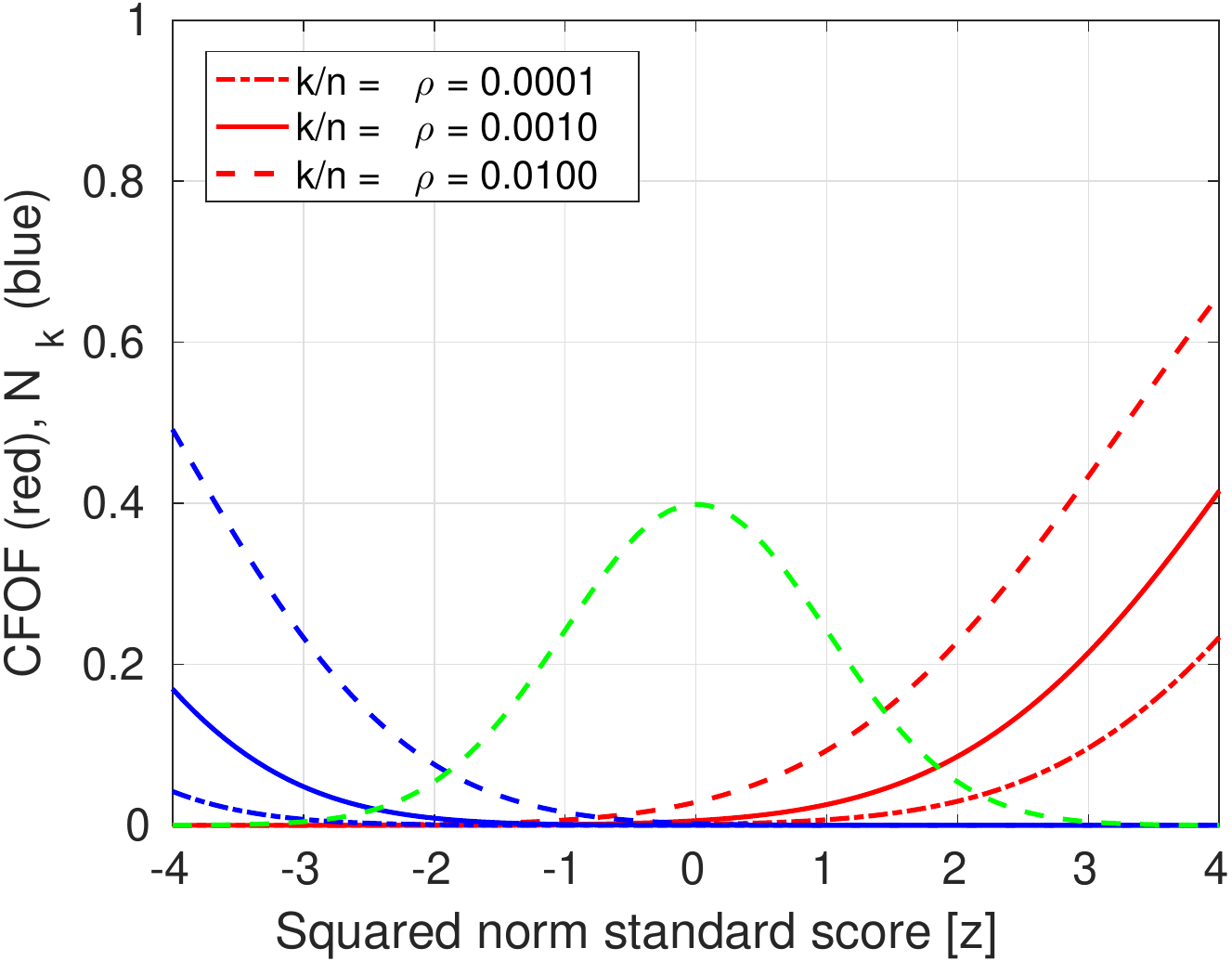}}
~
\subfloat[\label{fig:cfofvsnk2}]
{\includegraphics[width=0.48\columnwidth]{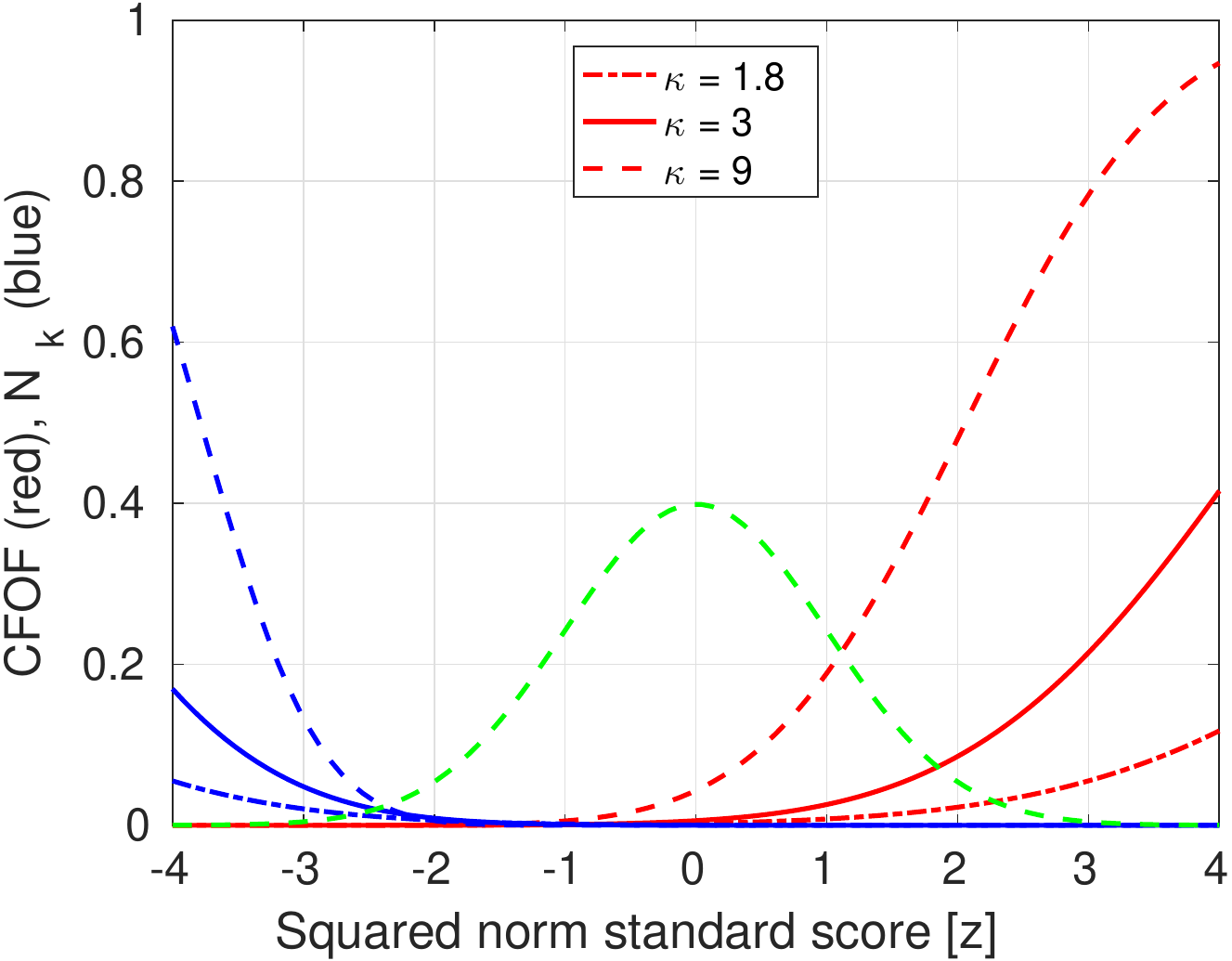}}
\\
\subfloat[\label{fig:cfofvsnk1bis}]
{\includegraphics[width=0.48\columnwidth]{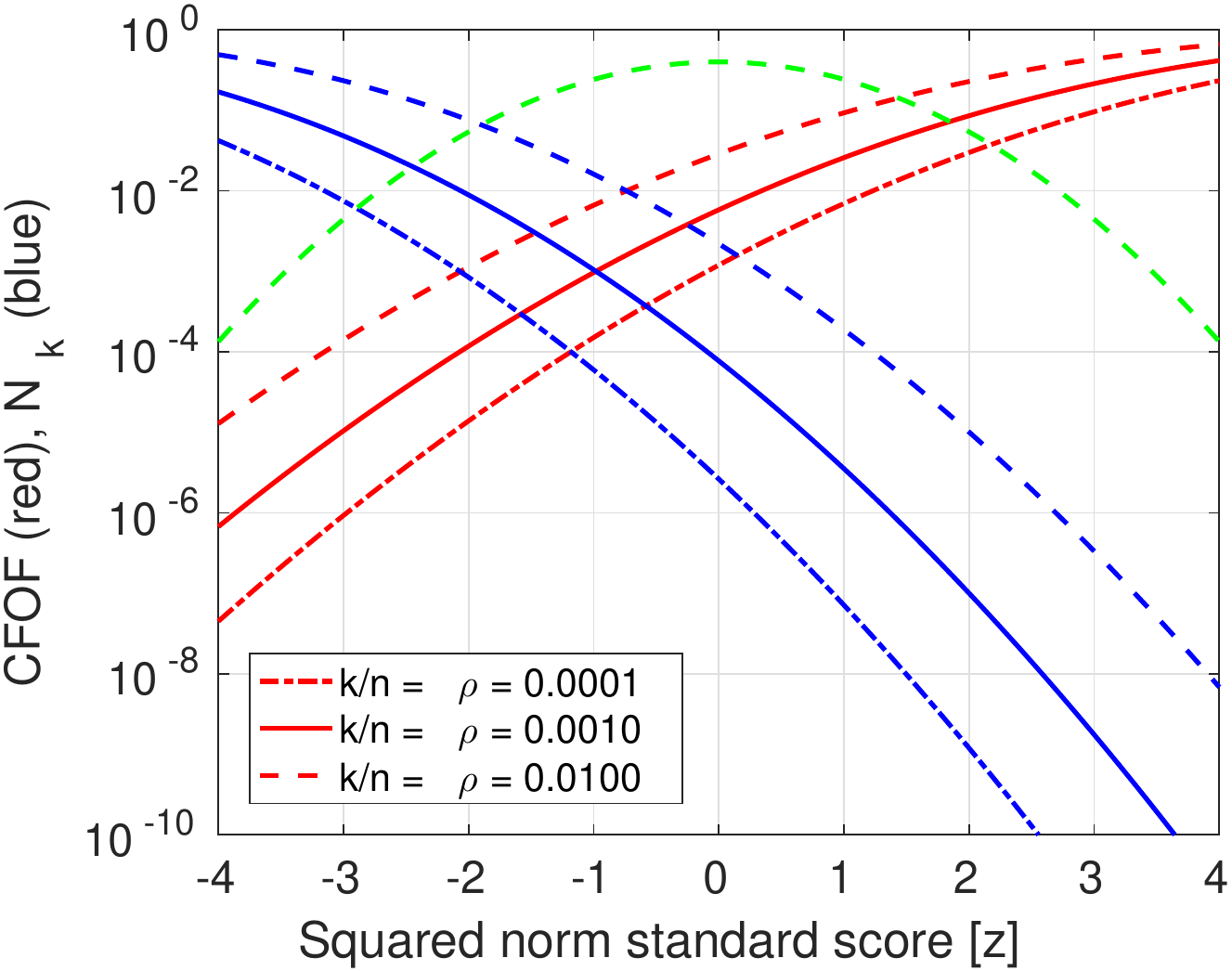}}
~
\subfloat[\label{fig:cfofvsnk2bis}]
{\includegraphics[width=0.48\columnwidth]{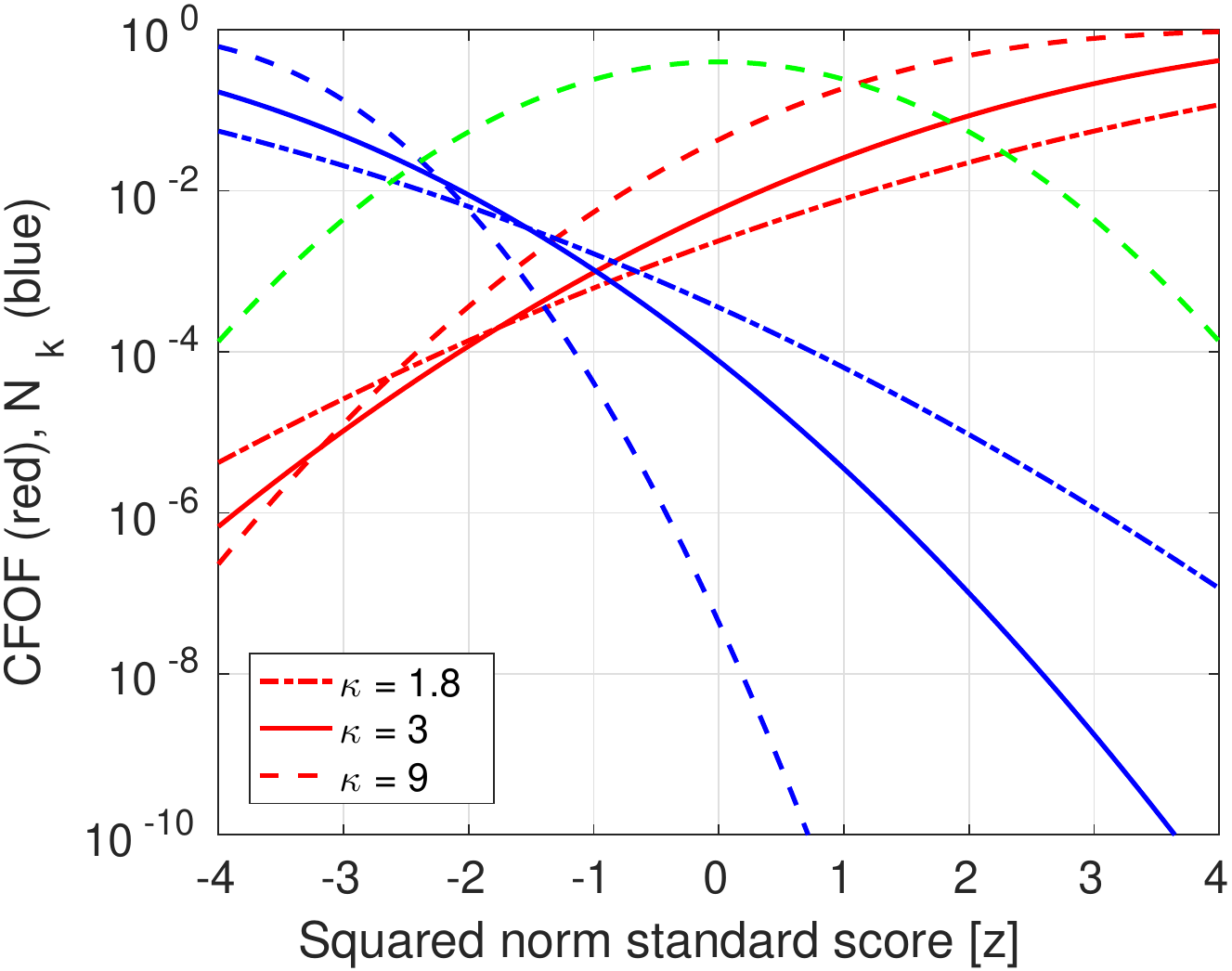}}
\caption{[Best viewed in color.] $\CFOF$ (red curves) and 
normalized $\RNNc$ (blue curves) scores versus the squared norm standard
score $z$, for different combinations of 
$\varrho = n/k\in\{ 0.0001, 0.001, 0.01 \}$ and kurtosis $\kappa=3$
(figs. \ref{fig:cfofvsnk1} and \ref{fig:cfofvsnk1bis},
the latter reporting scores in logarithmic scale),
and different kurtoses $\kappa\in\{1.8,3,9\}$ and $\varrho=n/k=0.001$
(figs. \ref{fig:cfofvsnk2} and \ref{fig:cfofvsnk2bis},
the latter reporting scores in logarithmic scale).}
\label{fig:cfofvsnk}
\end{figure}

To illustrate,
Figure \ref{fig:cfofvsnk} reports
for arbitrary large dimensionalities,
the $\CFOF$ (red increasing curves) 
and $\RNNc$ scores (blue decreasing curves)
as a function of the squared norm standard score $z$.
As for the green dashed (normally distributed) curve, it is proportional to the
fraction of the data population having the
squared norm standard score reported on the abscissa.

Figure \ref{fig:cfofvsnk1} is
concerns i.i.d. random vectors having distribution with
kurtosis $\kappa=3$ (as the normal distribution)
for different values of $\varrho = k/n$,
namely $\varrho=0.0001$ (dash-dotted curves),
$\varrho=0.001$ (solid curves), 
and $\varrho = 0.01$ (dashed curves).
Since outliers are the points associated with largest values
of $z$, it is clear that while $\CFOF$ is able to establish 
a clear separation between outliers and inliers for any value
of the parameter $\varrho$,  
the $\N_k$ function is close to zero for $z\ge 0$,
thus presenting a large false positive rate.

Figure \ref{fig:cfofvsnk2} reports
the $\CFOF$ and $\RNNc$ scores
associated with i.i.d. random vectors having cdfs with
different kurtosis values, namely $\kappa = 1.8$ (as the uniform distribution),
$\kappa = 3$ (as the normal distribution), 
and $\kappa = 9$ (as the exponential distribution)
when {$\varrho = k/n = 0.001$}.

Figures \ref{fig:cfofvsnk1bis} and \ref{fig:cfofvsnk2bis}
report the scores in logarithmic scale,
to appreciate differences between $\N_k$ values.
According to Equation \eqref{eq:nk}, for $z\ge 0$ the $\RNNc$ 
score is monotonically decreasing with the kurtosis.
However, the figures point out that in this case the $\RNNc$ outlier scores 
are infinitesimally small and even decreasing of orders of magnitude
with the kurtosis.
By applying the same line of reasoning of Theorem \ref{th:cfofhomogeneous}
and Theorem \ref{th:cfofclustbalanced} to Equation \eqref{eq:nk}, 
we can show that $\RNNc$ is both translation-invariant and homogeneous
and semi--local.
Unfortunately, it must be also pointed out that 
the hubness phenomenon affecting the $\RNNc$ score
and the discussed behavior of $\RNNc$ for different kurtosis values,
make the above properties immaterial for many combinations
of $n$ and $k$, and for many combinations of clusters kurtoses.

Consider the standard deviation $\sigma^{out}_{z_0}(sc)$
of the outlier score $sc$ distribution starting from $z_0$:
\[ 
\sigma^{out}_{z_0}(sc) = \int_{z_0}^{+\infty} (sc(z)-\mu^{out}_{z_0}(sc))^2 \cdot \phi(z) ~{\rm d}z 
 ~~~\mbox{ with }~~~
\mu^{out}_{z_0}(sc) = \int_{z_0}^{+\infty} sc(z)\cdot \phi(z) ~{\rm d}z,
\]
where,
$\mu^{out}_{z_0}(sc)$ represents the mean of the above distribution.
Intuitively, $\sigma^{out}_{z_0}(sc)$ measures the amount of variability
of the score values $sc$ associated with the most extreme observations of the 
data population.
Thus, the ratio 
\[ \sqrt{\frac{\sigma^{out}_{z_0}(\CFOF)}{\sigma^{out}_{z_0}(\ODIN)}}, \]
called \textit{separation} for short in the following,
measures how larger is the variability of the scores of the $\CFOF$ outliers
with respect to that of $\ODIN$.

\begin{figure}[t]
\centering
\subfloat[\label{fig:sep1}]
{\includegraphics[width=0.48\columnwidth]{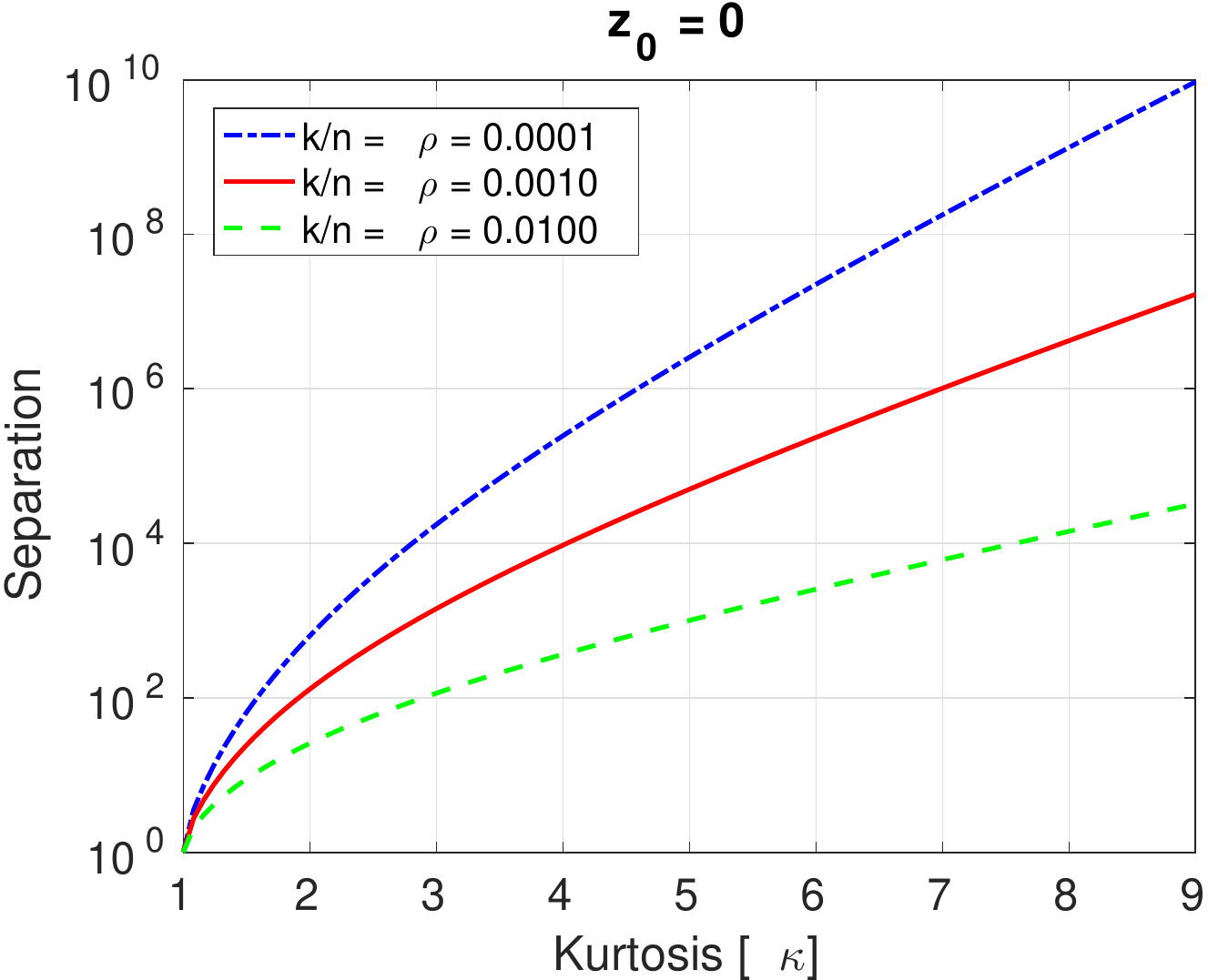}}
\subfloat[\label{fig:sep2}]
{\includegraphics[width=0.48\columnwidth]{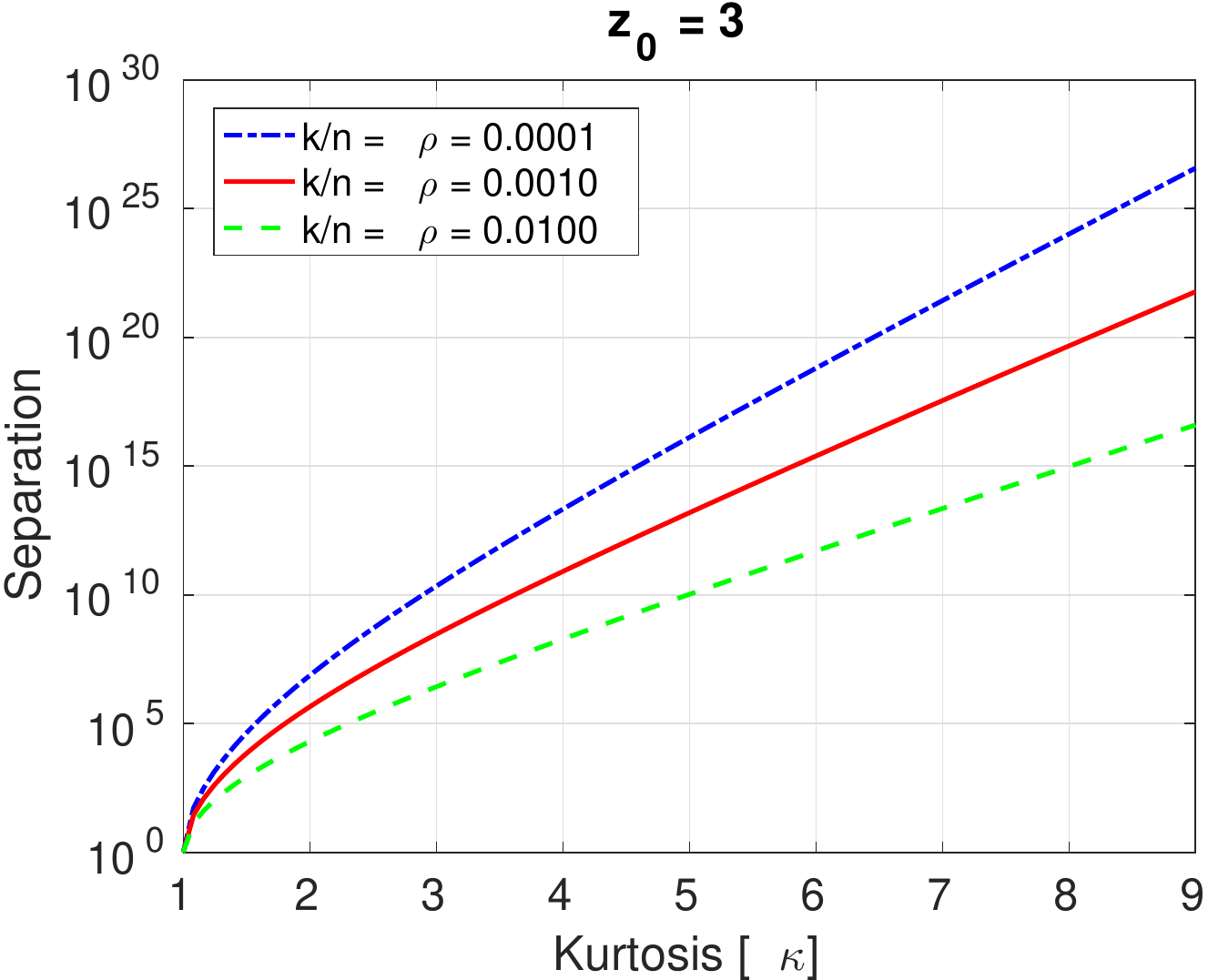}}
\caption{[Best viewed in color.] 
Ratio between the standard deviation of $\CFOF$ and normalized $\RNNc$ scores
associated with points having squared norm standard score greater than
$z_0$ ($z_0 = 0$ on the left and $z_0=3$ on the right).
}
\label{fig:separation}
\end{figure}

Figure \ref{fig:separation} shows the separation 
for arbitrary large dimensionalities, 
for kurtosis $\kappa\in[1,9]$,
and for $z_0=0$ (Figure \ref{fig:sep1}) and $z_0=3$ (Figure \ref{fig:sep2}).
According to the values reported in the plots, the separation 
is of several orders of magnitude and,
moreover, the smaller the parameter $\varrho$, 
the larger its value.

\section{The $\FastCFOF$ algorithm}
\label{sect:algorithm}

In general,
$\CFOF$ scores can be determined
in time $O(n^2 d)$, 
where $d$ denotes is the dimensionality
of the feature space (or the cost of computing a distance),
after computing all pairwise dataset distances.

In this section,
we introduce the $\FastCFOF$ technique for dealing with 
very large and high-dimensional datasets.
As its main peculiarity,
$\FastCFOF$ does not require the computation of the exact
nearest neighbor sets and, from the computational point of
view, does not suffer of 
the curse of dimensionality affecting 
nearest neighbor search techniques.

The technique builds on the following probabilistic formulation of the $\CFOF$
score. 
Let $\Vect{X}$ be a random vector 
from an unknown probability law $p(\cdot)$. Given parameter $\varrho\in(0,1)$,
the (\textit{Probabilistic}) \textit{Concentration Outlier Factor} $\CFOF$ of
the realization $\vect{x}$ of $\Vect{X}$ 
is defined as follows:
\begin{equation}\label{eq:cfof_soft}
\CFOF(\vect{x}) = \min \left\{ 
\frac{k'}{n} 
\mbox{ s.t. } Pr[\vect{x}\in\NN_{k'}(\Vect{X})] \ge \varrho
\right\},
\end{equation}
where the probability has to be considered conditional on $\vect{x}$ being
drawn in a sample of $n$ realizations of the random vector $\Vect{X}$.
The above probabiliy is estimated by using
the empirical distribution of the data as $p(\cdot)$.
To differentiate the two definitions reported in Equations \eqref{eq:cfof_hard}
and \eqref{eq:cfof_soft}, we also refer to the former as \textit{hard}-$\CFOF$ and
to the latter as \textit{soft}-$\CFOF$.

Intuitively, the \textit{soft}-$\CFOF$ score measures how many neighbors have to be
taken into account in order for the expected number of dataset objects having it
among their neighbors to correspond to a fraction $\varrho$ of the overall population.

The rationale of Equation \eqref{eq:cfof_soft} is that of providing 
a definition
tailored to leverage sampling for determining $\CFOF$ outliers.
Indeed, as discussed in the following section describing the 
$\FastCFOF$ algorithm, the probability $Pr[\vect{x}\in\NN_{k'}(\Vect{X})]$ 
is estimated by looking at the number of observations having $\vect{x}$
among their $k'$ nearest neighors in a sample of size $s$ of the whole dataset.
However, we point out that if $s$ is equal to the sample size $n$, then 
Equation \eqref{eq:cfof_soft} 
reduces to Equation \eqref{eq:cfof_hard},
that is \textit{soft}-$\CFOF$ and \textit{hard}-$\CFOF$ coincide.
Indeed, in this case the above probability is smaller $\varrho$
for $k' < n\cdot\CFOF(\vect{x})$ and not smaller than $\varrho$ for
$k' \ge n\cdot\CFOF(\vect{x})$.
This also means that the (exact) $\CFOF$ algorithm coincides with $\FastCFOF$ for $s=n$.

\subsection{The $\FastCFOF$ technique}

\newcommand{\stepF}[1]{H\big(#1\big)}

The
$\FastCFOF$ algorithm aims at improving efficiency by
estimating $\SoftCFOF$ scores.

Given a dataset $\DS$ and two objects $x$ and $y$
from $\DS$, 
the building block of the algorithm is the computation of 
the probability $Pr[x\in\NN_k(y)]$.
Consider the boolean function $B_{x,y}(z)$ defined on
instances $z$ of $\DS$ such that
$B_{x,y}(z)=1$ 
if $z$ lies within the hyper-ball of center $y$ and radius $\dist(x,y)$,
and $0$ otherwise.
We want to estimate the average value $\overline{B}_{x,y}$ of $B_{x,y}$ in 
$\DS$,
that is the ratio of instances $z\in\DS$ such that $B_{x,y}(z)=1$ holds,
which corresponds to the probability $p(x,y)$
that a randomly picked dataset object $z$ is at distance
not grater than $\dist(x,y)$ from $y$, that is
$\dist(y,z) \le \dist(x,y)$.

In particular, for our purposes it is enough to compute
$\overline{B}_{x,y}$ within a certain error bound.
Thus, we resort to the \textit{batch sampling} technique,
which consists in picking up $s$
elements of $\DS$ randomly and estimating
$p(x,y) = \overline{B}_{x,y}$ as the fraction of the elements of the sample
satisfying $B_{x,y}$ \cite{Watanabe2005}.

Let $\hat{p}(x,y)$ the value of
${p}(y,x)$ estimated by means of the sampling procedure.
Given $\delta>0$, an \textit{error probability},
and $\epsilon$, $0<\epsilon<1$, an \textit{absolute error},
if the size $s$ of the sample satisfies certain
conditions \cite{Watanabe2005}
the following relationship holds
\begin{equation}\label{eq:pxy_bound}
Pr[|\hat{p}(x,y)-p(x,y)|\le\epsilon] > 1-\delta. 
\end{equation}
For large values of $n$, it holds that
the variance of the Binomial distribution
becomes negligible with respect to the mean
and the cumulative distribution function
$\textit{binocdf}(k;p,n)$
tends to the step function $\stepF{k-np}$,
where $\stepF{k}=0$ for $k<0$ and $\stepF{k}=1$ for $k>0$.
Thus, we can approximate the value 
$Pr[x\in\NN_k(y)] = \textit{binocdf}(k;p(x,y),n)$ 
with the boolean function $\stepF{k-k_{up}(x,y)}$,
with $k_{up}(x,y)=n \widehat{p}(x,y)$.\footnote{Alternatively,
by exploiting the Normal approximation
of the Binomial distribution,
a suitable value for $k_{up}(x,y)$
is given by
$k_{up}(x,y) = n \widehat{p}(x,y) + c \sqrt{n \widehat{p}(x,y) 
(1-\widehat{p}(x,y))}$
with $c\in[0,3]$.}
It then follows that 
we can obtain $Pr[x\in\NN_k(\textbf{X})]$
as the average value of
the boolean function $\stepF{k-n \widehat{p}(x,y)}$,
whose estimate can be again obtained by exploiting
batch sampling.
Specifically, $\FastCFOF$ exploits one single sample
in order to perform the two estimates above described.

\begin{algorithm}[t]\small 
	\KwIn{Dataset $\DS = x_1,\ldots,x_n$ of size $n$,
	parameters $\bm{\varrho}=\varrho_1,\ldots,\varrho_\ell\in(0,1)$,
	parameters $\epsilon,\delta \in(0,1)$,
	number of histogram bins $B$
	}
	\KwOut{$\CFOF$ scores $sc_{1,\bm{\varrho}},\ldots,sc_{n,\bm{\varrho}}$}
	\SetAlgorithmName{Algorithm}{}{}
	$s$ = 
$\left\lceil\frac{1}{2\epsilon^2}\log\left(\frac{2}{\delta}
\right)\right\rceil$\;
	$i$ = $0$\;
	\While{$i < n$}{
	    \If{$i+s < n$}{
		$a$ = $i+1$\;
	    }\Else{
		$a$ = $n-s+1$\;
	    }
	    $b$ = $a+s-1$\;
	    $part$ = $\langle x_{a},\ldots,x_{b} \rangle$\;
	    $\langle sc_{a,\bm{\varrho}},\ldots,sc_{b,\bm{\varrho}} \rangle$ = 
			$\CFOFprocpart(part,\bm{\varrho},B,n)$\;
	    $i$ = $i+s$\;
	}
	\caption{\FastCFOF}
	\label{algo:fast_cfof}
\end{algorithm}

The pseudo-code of $\FastCFOF$ 
is reported in Algorithm \ref{algo:fast_cfof}.
The algorithm receives in input a list
$\bm{\varrho} = \varrho_1,\ldots,\varrho_\ell$
of values for the parameter $\varrho$, since it is able to
perform a \textit{multi-resolution analysis}, that is 
to simultaneously compute dataset scores associated with different values
of the parameter $\varrho$, with the same asymptotic temporal cost.
We assume 
$\bm{\varrho}=0.001, 0.005, 0.01, 0.05, 0.1$ and $\epsilon,\delta=0.01$
as default values for the parameters.
See later for details on the effect of the parameters.

First, 
the algorithm determines the size $s$
of the \textit{sample} (or \textit{partition}) of the dataset
needed in order to guarantee the
bound reported in Eq. \eqref{eq:pxy_bound}.
We notice that
the algorithm does not require the dataset to be 
entirely loaded in main memory, since only one partition at
a time is needed to carry out the computation. Thus, the technique
is suitable also for disk resident datasets.
Moreover, 
we assume that the position of the objects within
the dataset has no connection with their spatial relationship,
for otherwise a preprocessing step consisting in
randomizing the dataset is required \cite{Sanders98}.

Then, each partition consisting of a group of $s$ 
consecutive objects, is processed by
$\CFOFprocpart$, whose pseudo-code is reported in Procedure 
\ref{algo:process_partition}.
This subroutine 
estimates $\CFOF$ scores of the objects within
the partition
through batch sampling.

The matrix $hst$, consisting of $s\times B$
counters, is employed by $\CFOFprocpart$.
The entry $hst(i,k)$ of $hst$ is used to estimate
how many times the sample object $x'_i$ is 
the $k$th nearest neighbor of a generic object
dataset.
Values of $k$, ranging from $1$ to $n$, are partitioned into $B$
log-spaced bins. The function $\textit{k\_bin}$ maps original
$k$ values to the corresponding bin, while $\textit{k\_bin}^{-1}$
implements the reverse mapping by returning a certain value
within the corresponding bin.

\begin{algorithm}[t]\small 
	\KwIn{Dataset sample $\langle x'_1,\ldots,x'_s \rangle$ of size $s$, 
	parameters $\varrho_1,\ldots,\varrho_\ell\in(0,1)$,
	histogram bins $B$,
	dataset size $n$}
	\KwOut{$\CFOF$ scores $\langle 
sc'_{1,\bm{\varrho}},\ldots,sc'_{s,\bm{\varrho}} \rangle$}
	\SetAlgorithmName{Procedure}{}{}
	initialize matrix $hst$ of $s\times B$ elements to $0$\; 
	\tcp{Nearest neighbor count estimation}
	\ForEach{$i=1$ to $s$}{
	    \tcp{Distances computation}
	    \ForEach{$j=1$ to $s$}{
		$dst(j)$ = $\dist(x'_i,x'_j)$\;
	    }
	    \tcp{Count update}
	    $ord$ = $\textit{sort}(dst)$\;
	    \ForEach{$j=1$ to $s$}{
		$p$ = $j/s$\;
		$k_{up}$ = $\lfloor n p + c\sqrt{np(1-p)} + 0.5 \rfloor$\;
		$k_{pos}$ = $\textit{k\_bin}(k_{up})$\;
		$hst(ord(j),k_{pos}) = hst(ord(j),k_{pos}) + 1$\;
	    }
	}
	\tcp{Scores computation}
	\ForEach{$i=1$ to $s$}{
	    $count$ = $0$\;
	    $k_{pos}$ = $0$\;
	    $l$ = $1$\;
	    \While{$l \le \ell$}{
		\While{$count <s \varrho_l$}{
		    $k_{pos}$ = $k_{pos} + 1$\;
		    $count$ = $count + hst(i,k_{pos})$\;
		}
		$sc'_{i,\varrho_l} = \textit{k\_bin}^{-1}(k_{pos})/n$\;
		$l$ = $l + 1$\;
	    }
	}
	\caption{\CFOFprocpart}
	\label{algo:process_partition}
\end{algorithm}

For each sample object $x'_i$
the distance $dst(j)$ from any other sample object $x'_j$
is computed (lines 3-4) and, then,
distances are ordered (line 5)
obtaining 
the list $ord$ of sample identifiers
such that
$dst(ord(1))\le dst(ord(2)) \le\ldots\le dst(ord(s))$.
Notice that, since we are interested only in ranking distances,
as far as distance computations is concerned, 
in the Euclidean space the squared distance can
be employed to save time.

Moreover, for each element $ord(j)$ of $ord$,
the variable $p$ is set to $j/s$ (line 7),
representing the probability $p(x'_{ord(j)},x'_i)$, 
estimated through the sample,
that a randomly picked 
dataset object is located within the region
of radius $dst(ord(j)) = dist(x'_i,x'_{ord(j)})$
centered in $x'_i$.
The value $k_{up}$ (line 8)
represents the point of transition 
from $0$ to $1$ of the step function $\stepF{k-k_{up}}$
employed to approximate the probability 
$Pr[x'_{ord(j)}\in\NN_k(y)]$ when $y=x'_i$.
Thus, before concluding each cycle of the inner loop (lines 6-10),
the $\textit{k\_bin}(k_{up})$-th 
entry of $hst$ associated with the sample
$x'_{ord(j)}$ is incremented.

The last step of the procedure consists
in the computation of the $\CFOF$ scores.
For each sample $x'_i$, the associated counts
are accumulated, by using the variable $count$, 
until their sum exceeds the value
$s \varrho$ and, then, the associated value of $k_{pos}$
is employed to obtain the score.
We notice that the parameter $\varrho$ is
used only to determine when to stop the above
computation. By using this strategy, $\FastCFOF$
supports multi-resolution outlier analysis
with no additional cost (lines 12-20).

\subsection{Temporal and spatial cost}\label{sect:fastcfof_cost}

Now we consider
the temporal cost of the algorithm $\FastCFOF$.
The procedure $\CFOFprocpart$
is executed $\left\lceil\frac{n}{s}\right\rceil$ times.
During this procedure,
the first part, concerning nearest neighbor counts estimation,
costs $O(s \cdot (sd + s\log s + s))$, since
computing distances costs $O(s d)$, 
sorting distances costs $O(s\log s)$,
and updating counts costs $O(s)$ ($s$ iterations, each of which has cost 
$O(1)$).
As for the second part, concerning scores computation,
in the worst case all the entries of the matrix $hst$ are
visited, and the cost is $O(s B)$, where $B$,
the number of bins within $hst$, is a constant.
Summarizing, the temporal cost of $\FastCFOF$ is
\begin{equation}\label{eq:fastcfof_cost}
O\left( \left\lceil \frac{n}{s} \right\rceil \cdot
\big( s \cdot \left( s d + s\log s + s \right) + s B \big) \right),
\end{equation}
that is $O(n\cdot s\cdot \max\{d, \log s\})$.
If we assume that $d$ dominates $\log s$, we can conclude that
the asymptotic cost of the technique is
\begin{equation}\label{eq:fastcfof_cost_asympt}
O\left( s\cdot n\cdot d \right), 
\end{equation}
where $s$ is independent of the number $n$
of dataset objects, and can be considered a constant
depending only on $\epsilon$ and $\delta$,
and $n\cdot d$ represents precisely the size of the input.
We can conclude that technique is linear in the 
size of the input data.

Notice that the method can be easily adapted
to return the top $m$ outliers, by introducing $\ell$
heaps of $m$ elements each, to be updated 
after each partition
computation. The additional $O(n\log m)$ temporal cost 
has no impact on the overall cost.

As for the spatial cost, $\CFOFprocpart$
needs $O(s B)$ space for storing counters $hst$,
space $O(2 s)$ for storing distances $dst$ and
the ordering $ord$, and space $\ell s$ for storing scores
to be returned in output, hence $O(s \big( B + 2 + \ell) \big)$.
Hence, the cost is linear in the sample size.
If the dataset is disk-resident,
the a buffer maintaining the sample
is needed requiring additional $O(s d)$ space.

\subsection{Parallel $\FastCFOF$}
\label{sect:alg_parcfof}

The $\FastCFOF$ algorithm
can take advantage 
of parallelization techniques.
First of all, partitions can be processed 
independently by different processors or nodes
of a multi-processor/computer system.
As for the computations pertaining to single
partitions, it follows from the cost analysis
that the heaviest task is the computation of 
pairwise distances.
This task can 
take fully advantage of 
MIMD parallelism
by partitioning distance computations on the 
cores of a multi-core processor, and 
of SIMD parallelization
through the use of vector primitives
that are part of instruction sets
for computing distances.
All of these computations 
are embarrassingly parallel, since they
do not need communication
of intermediate results.

Let $P$ denote the number of distinct processors/nodes
available (in the multicomputer scenario the overhead due to need of 
partitioning the dataset on the different
nodes has to be considered),
let $C$ denote the number of cores (or hardware threads) 
per multi-core processor,
and $V$ denote the width of the vector registers 
within each core, then the cost of the parallel version
of $\FastCFOF$ is
\begin{equation}\label{eq:fastcfof_cost_par}
O\left( \frac{s\cdot n\cdot d}{P\cdot C\cdot V} \right).
\end{equation}
We implemented a parallel version
for multi-core processors 
working on disk-resident datasets
that elaborates partitions sequentially, 
but simultaneously employs all cores and the vector registers to elaborate
each single partition.
Specifically, this version has been implemented
by using the C Programming Language ({\tt gcc} compiler),
with the Advanced Vector Extensions (AVX) 
intrinsics functions to perform SIMD computations
(vector instructions process in parallel $V=8$
pairs of {\tt\small float}),
and with the Open Multiprogramming (OpenMP)
API to perform MIMD (multi-core) computations.

The implementation exploits column-major ordering of the 
sample points, to make the code independent of the number of 
attributes, and the {loop unrolling} and 
{cache blocking} techniques \cite{PetersenA04}:
\textit{loop unrolling} allows
to reduce loop control instructions and branch penalties,
to hide memory latencies, and to increase exploitation
of replicated vectorial arithmetic logical units 
typical of super-scalar processors, while
\textit{cache blocking} improves the locality of memory accesses.

\section{Experimental results}
\label{sect:experiments}

In this section, we present experimental results concerning $\CFOF$.

{Experiments are organized as follows.}
We discuss experimental
results involving the $\FastCFOF$ algorithm,
including \textit{scalability} and \textit{approximation accuracy} of the approach
(Section \ref{sect:exp_fastcfof}).
We illustrate experiments 
designed to study the \textit{concentration properties} of the $\CFOF$
definition \textit{on real-life data}
(Section \ref{sect:exp_conc}).
We investigate the behavior of
the $\CFOF$ definition on 
\textit{synthetically generated multivariate
data}, and compare it with existing reverse nearest neighbor-based, 
distance-based, density-based,
and angle-based outlier definitions
(Section \ref{sect:exp_synth}).
Finally,
we compare
$\CFOF$ with 
other outlier definitions
by using \textit{labelled data} as \textit{ground truth}
(Section \ref{sect:exp_real}).

Since in some experiments, results of $\CFOF$ are compared
with those obtained by other outlier detection methods,
namely $\ODIN$, $\antiHub$, $\aKNN$, $\LOF$, and $\FastABOD$, 
next we briefly recall these methods.

The $\aKNN$ method, for average KNN, is a distance-based approach 
that rank points on the basis of 
the average distance from their $k$ nearest neighbors, 
also called \textit{weight} in the literature \cite{AP02,AP05,AP06}.
The Local Outlier Factor method \cite{BKNS00}, $\LOF$ for short, 
is a density-based method which
measures the degree of an object to be an outlier by comparing 
the density in its neighborhood 
with the average density in the neighborhood of its neighbors,
where the density of a point is related to the distance
to its $k$-th nearest neighbor.\footnote{$\LOF$ uses the concept of 
\textit{reachability distance} to model the density of a point,
which, roughly speaking, corresponds to the distance from $k$-th nearest neighbor
with some minor modifications
aiming to mitigate the effect of statistical fluctuations of the distance.
}
Differently from distance-based definitions, which declare as outliers 
the points where the estimated data density is low, density-based definitions 
score points on the basis of the degree of disagreement between the estimated 
density of the point and the estimated density of its surrounding or neighboring 
points.
Thus, density-based outlier definitions are better characterized as a notion of local 
outlier, as opposite to distance-based definitions, representing a notion of global outlier.
The Angle-Based Outlier Detection method \cite{KriegelSZ08}, ABOD for short,
scores data points on the basis of the variability of the angles formed 
by a point with each other pair of points (the score is also called
ABOF, for Angle-Based Outlier Factor).
The intuition is that for a point within a cluster, 
the angles between difference vectors to pairs of other points 
differ widely, while the variance of these angles will become smaller 
for points at the border of a cluster and for isolated points.
The ABOF measures the variance 
over the angles between the difference vectors of a point to all pairs 
of other points in the dataset, 
weighted by the distance from these points.
The $\FastABOD$ method approximates the ABOF score 
by considering only the pairs of points with the 
strongest weight in the variance, that are the $k$ nearest neighbors.
{In some cases, this method is not included in the comparison
because of its slowness (the method has {cubic temporal cost}).}
$\ODIN$ \cite{HautamakiKF04}, also referred to as $\RNNc$,
is a {reverse nearest neighbor-based}
approach, which uses $\N_k(x)$ as outlier score of $x$.
Since $\ODIN$ is prone to the hubness phenomenon,
the $\antiHub$ method \cite{RadovanovicNI15}
refines $\ODIN$ 
by returning the weighted mean of the sum of the $\N_k$ scores of 
the neighbors of the point and of the $\N_k$ score of the point itself.

\subsection{Experiments with $\FastCFOF$}
\label{sect:exp_fastcfof}

Experiments of this section are designed to study 
the scalability (see Section \ref{sect:exp_fastcfof_scal})
and the accuracy (see Section \ref{sect:exp_fastcfof_acc})
of the $\FastCFOF$ algorithm.

\medskip
Experiments were performed on a PC equipped with an
Intel Core i7-3635QM $2.40$GHz CPU
having $4$ physical cores and $8$ hardware 
threads.\footnote{Ivy Bridge 
microarchitecture, launched in Q3'12, $6$MB of cache L3.}
The PC is equipped with $12$GB of main memory 
and the software runs under the Linux operating system.

The number $B$ of $hst$ bins was set to $1,\!000$ and
the constant $c$ used to compute $k_{up}$ was set to $0$.
The implementation of $\FastCFOF$ is that described in
Section \ref{sect:alg_parcfof} with
number of processors $P=1$, 
number of hardware threads $C=8$, 
and SIMD register width $V=8$ single-precision
floating point numbers ($64$ bit code
having $16$ vectorial $256$ bit registers).
If not otherwise stated, we assume $0.01$ as 
the default value for the parameters
$\varrho$, $\epsilon$, and $\delta$.

\medskip
The dataset employed are described next.
\textit{Clust2} 
is a synthetic dataset family, with $n\in[10^4,10^6]$ and $d\in[2,10^3]$,
consisting of two normally distributed clusters,
the first
centered in the origin and having standard deviation $1$,
and the second centered in $(4,\ldots,4)$ and having
standard deviation $0.5$.
The \textit{MNIST} 
dataset,\footnote{See {\tt http://yann.lecun.com/exdb/mnist/}} 
consists of $n=60,\!000$ vectors
having $d=784$ dimensions,
representing handwritten digits.
Digits have been
size-normalized and centered in a fixed-size 
$28\times 28$ gray level image.
The \textit{YearPredictionMSD} 
dataset,\footnote{See {\tt https://archive.ics.uci.edu/ml/datasets/yearpredictionmsd}}
or \textit{MSD} for short in the following,
is a subset of the Million Song Dataset,
a freely-available collection of audio features 
and metadata for a million contemporary popular music 
tracks.\footnote{See {\tt http://labrosa.ee.columbia.edu/millionsong/}}
The dataset consists mostly of western songs, 
commercial tracks ranging from $1922$ to $2011$.
There are $n=515,\!345$ instances
of $90$ attributes:
$12$ encode timbre averages and $78$ encode timbre covariances.
The \textit{SIFT10M} 
dataset,\footnote{See {\tt https://archive.ics.uci.edu/ml/datasets/SIFT10M}}
consists of $11,\!164,\!866$ data points
representing SIFT features ($d=128$)
extracted from the 
Caltech-256 object category 
dataset.\footnote{See {\tt http://resolver.caltech.edu/CaltechAUTHORS:CNS-TR-2007-001}}
Each SIFT feature 
is extracted from a $41\times 41$ image patch.
This data set has been used for evaluating 
approximate nearest neighbor search methods.

Datasets are encoded by using single-precision floating point values
(requiring $32$ bits)
and are stored in secondary memory, where they occupy
$3.8$GB ($n=10^6$ and $d=10^3$) for \textit{Clust2},
$180$MB for \textit{MNIST},
$177$MB for \textit{MSD},
and $5.5$GB for \textit{SIFT10M}.

\subsubsection{Scalability of $\FastCFOF$}
\label{sect:exp_fastcfof_scal}

\begin{figure}[t]
\centering
\subfloat[\label{fig:exp_scal_nd}]
{\includegraphics[width=0.48\textwidth]{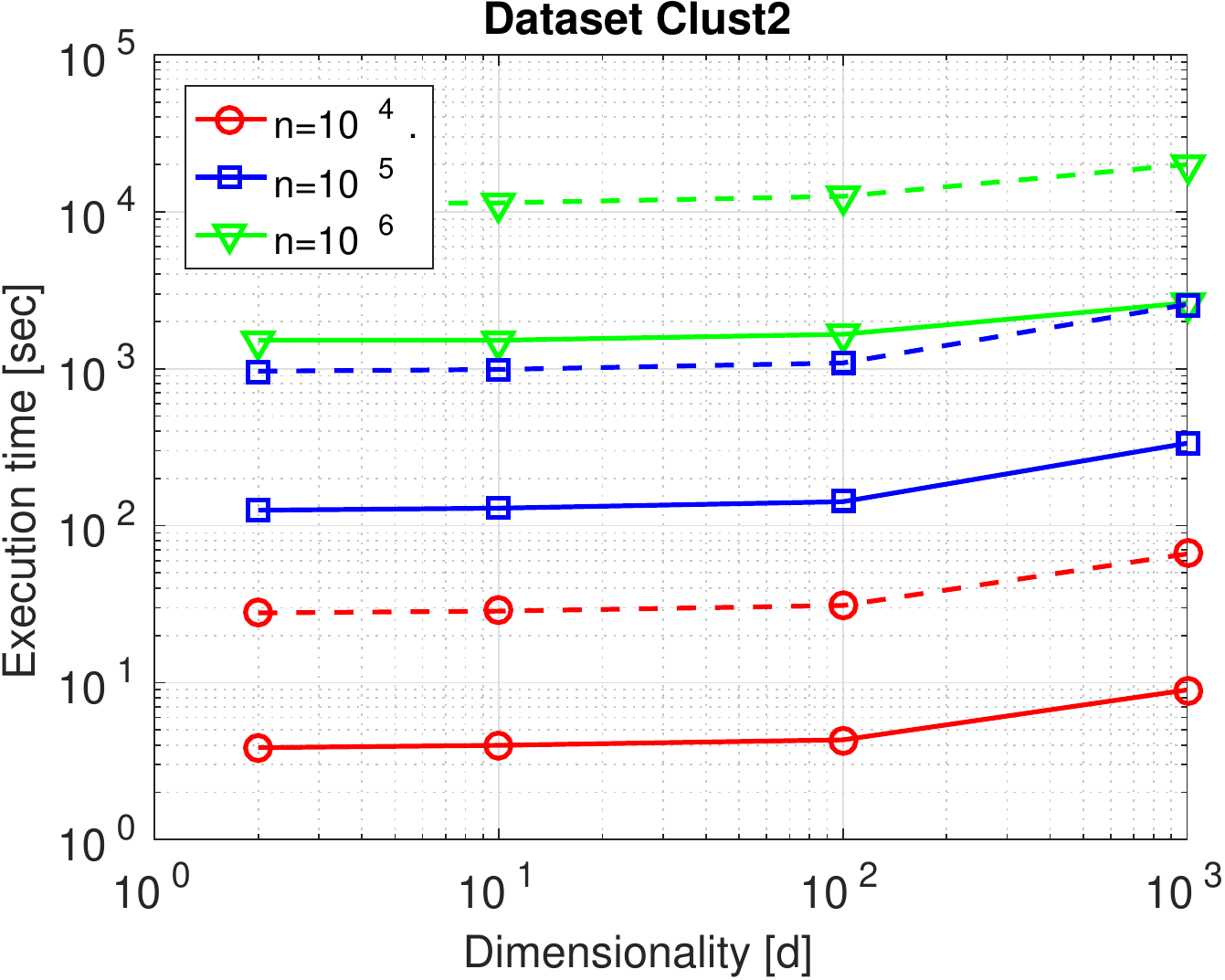}}
\caption{[Best viewed in color.] Scalability analysis of $\FastCFOF$
with respect to the dataset dimensionality $d$ and the dataset size $n$.}
\label{fig:exp_scal}
\end{figure}

In this section, we study the scalability of $\FastCFOF$
w.r.t. the dataset size $n$ and the dataset dimensionality $d$.
For this experiment we considered the \textit{Clust2} synthetic
dataset family.
Figure \ref{fig:exp_scal_nd}
shows the execution time on \textit{Clust2}
for the default sample size $s=26624$ ($\epsilon=0.01$ and $\delta=0.01$),
$n\in[10^4,10^6]$ and $d\in[2,10^3]$. 
The largest dataset considered is that
for $n=1,\!000,\!000$ and $d=1,\!000$
whose running time was about $44$ minutes.

As for the scalability with respect to $d$,
we can notice that
there are no appreciable differences
between the cases $d=2$ and $d=10$ and
only about the ten percent of increment between the cases
$d=10$ and $d=100$.
Moreover,
the execution time for the case $d=1,\!000$
is about a factor of two larger than that for the case $d=100$.

To understand this behavior, consider the asymptotic execution time
of the parallel $\FastCFOF$.
Since both distance computation and sorting are distributed among
the CPU cores, but only distance computations are 
vectorized, 
for relatively small dimensionality values $d$,
the cost of sorting distances (specifically the term $\log s$) dominates the
cost of computing distances by exploiting vectorization (that is the term $\frac{d}{V}$).
This confirms that the algorithm takes full advantage of  
the SIMD parallelism,
for otherwise the cost would soon be dependent on the 
dimensionality $d$.\footnote{The 
temporal dependence on the dimensionality
could be emphasized for small $d$ values
by including a vectorized sorting algorithm
in the implementation of the $\FastCFOF$ algorithm,
see e.g. \cite{InoueT15}, though for large $d$ values this 
modification should not modify the temporal trend.
}
Moreover, we note that 
the larger the dimensionality, the greater the exploitation of 
SIMD primitives associated with highly regular code
which is efficiently pipelined
(distances computation code 
presents higher instruction level parallelism efficiency due to
reduced stalls, predictable branches, and loop unrolled and
cache blocked code).

The dashed curves represent the execution times 
obtained by disabling the MIMD
parallelism (only one thread hardware, i.e. $C=1$).
The ratio between the execution time of the algorithms
for $C=8$ and $C=1$
is about $7.6$, thus confirming the 
effectiveness of the MIMD parallelization schema.
Summarizing, this experiment confirms that the parallel
$\FastCFOF$ takes full advantage of both MIMD and SIMD
parallelism.

As for the scalability w.r.t. the dataset size $n$,
the execution time for $n=10^5$
is roughly one order of magnitude larger than that for $n=10^4$
and the
the execution time for $n=10^6$ is 
roughly two orders of magnitude larger than that for $n=10^4$.
Moreover,
the execution time for $n=10^6$ 
is about a factor of $10$ larger than that for $n=10^5$,
which appears to be consistent with the asymptotic cost analysis.

\begin{figure}[t]
\centering
\subfloat[\label{fig:exp_scal_time}]
{\includegraphics[width=0.48\textwidth]{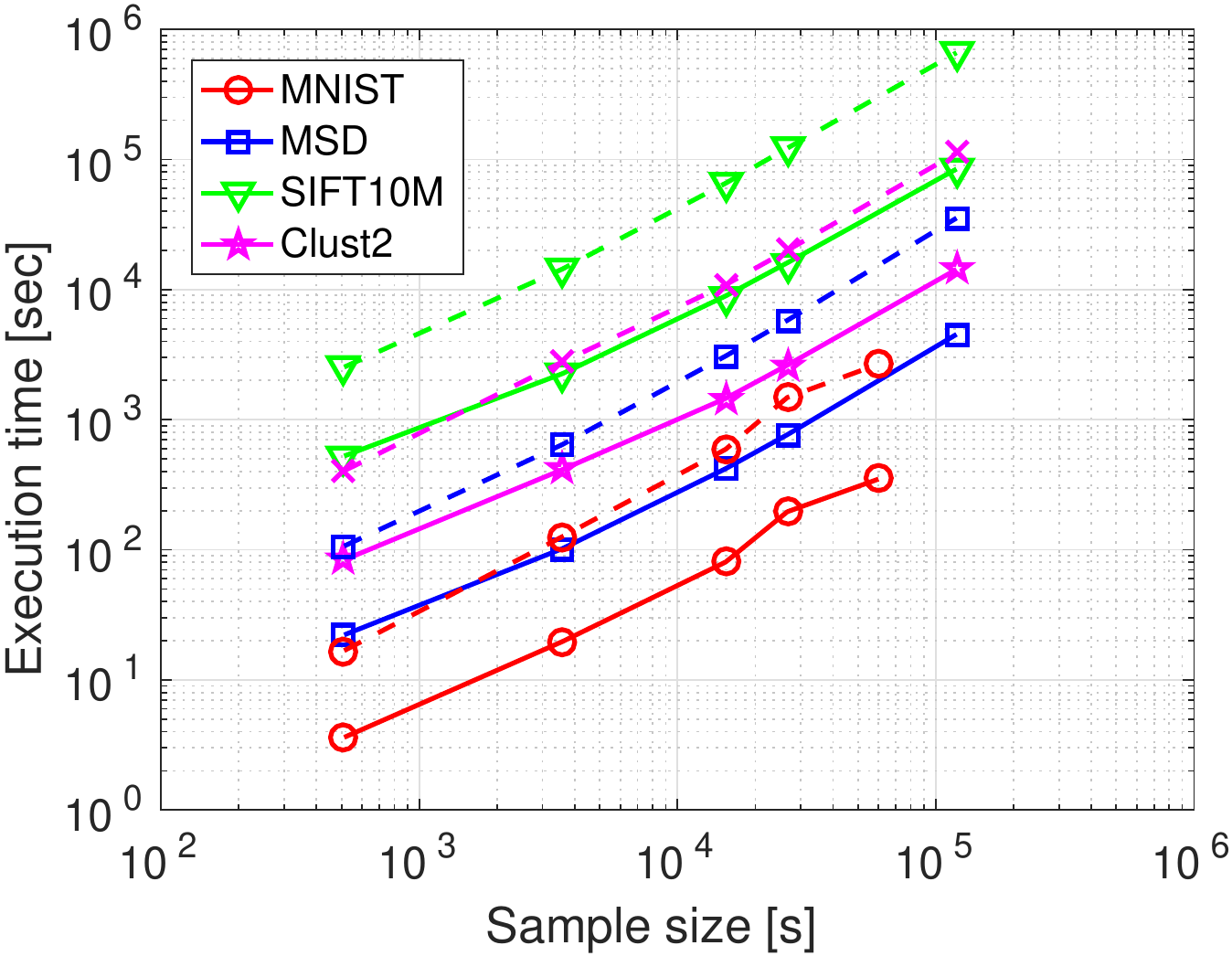}}
~
\subfloat[\label{fig:exp_scal_norm_time}]
{\includegraphics[width=0.48\textwidth]{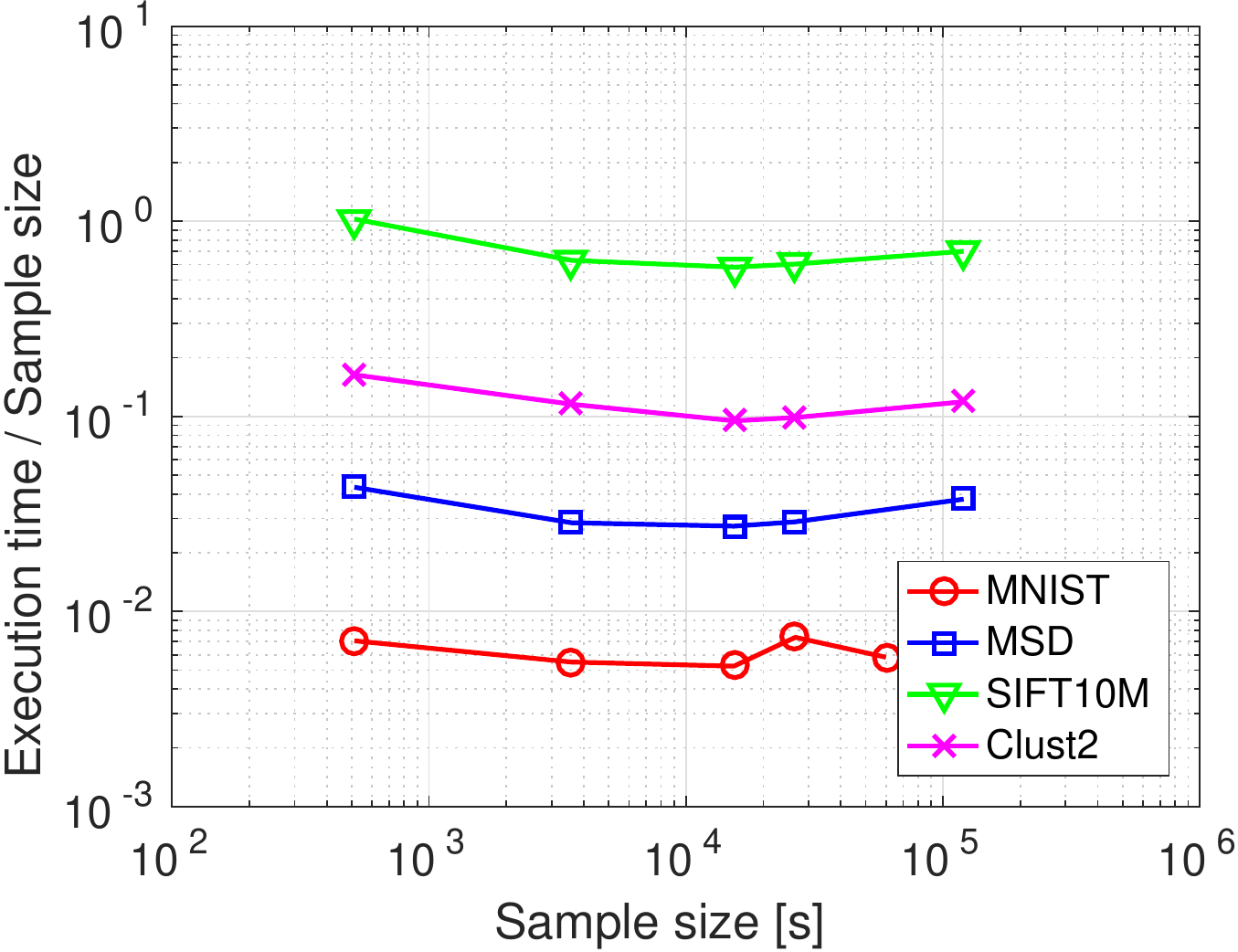}}
\caption{[Best viewed in color.] Scalability analysis of $\FastCFOF$
with respect to the sample size $s$.}
\label{fig:exp_scal_s}
\end{figure}

\medskip
Figure \ref{fig:exp_scal_time}
shows the execution time of $\FastCFOF$
on the datasets \textit{MNIST}, 
\textit{MSD}, \textit{MNIST10M}, and \textit{Clust2}
($n=10^6$, $d=10^3$)
for different sample sizes $s$, that are 
$s=512$ ($\epsilon=0.1$, $\delta=0.1$),
$s=3,\!584$ ($\epsilon=0.025$, $\delta=0.025$),
$s=15,\!360$ ($\epsilon=0.01$, $\delta=0.1$),
$s=26,\!624$ ($\epsilon=0.01$, $\delta=0.01$), and
$s=120,\!320$ ($\epsilon=0.005$, $\delta=0.005$).
Solid lines concern the parallelized version,
while dashed lines are the execution times obtained by disabling
MIMD parallelism.
Table \ref{table:speedup} reports the MIMD speedup, that is ratio between the execution time 
of the latter version over the former version.

\begin{table}[t]
\begin{center}
\begin{tabular}{|l||r|r|r|r|r|}
\hline
Dataset / $s$  &  \multicolumn{1}{|c|}{$512$}    &     $3,\!584$   &     $15,\!360$    &    $26,\!624$  & $120,\!320$ \\
\hline\hline
\textit{MNIST}   &        $4.6407$  &     $6.4068$   &    $7.3548$   &    $7.5545$   &    $~^\ast7.6404$ \\
\textit{MSD}     &        $4.8205$  &     $6.3137$   &    $7.3426$   &    $7.5671$   &    $7.8296$ \\
\textit{SIFT10M} &        $4.8265$  &     $6.3677$   &    $7.3545$   &    $7.5998$   &    $7.8325$ \\
\textit{Clust2}  &        $4.8381$  &     $6.7250$   &    $7.4681$   &    $7.6165$   &    $7.8330$ \\
\hline
\textit{Average} & $4.7815$    &   $6.4533$     &    $7.3800$   &    $7.5845$    &   $7.8317$ \\
\hline
\end{tabular}
\end{center}
\caption{MIMD speedup of parallel $\FastCFOF$
for different sample sizes $s$
($^\ast$the last speedup of \textit{MNIST} is relative to whole dataset,
i.e. to the sample size $s=60,\!000$, and is not considered
in the average speedup).
}
\label{table:speedup}
\end{table}

Except for very small sample sizes, that is up a few thousands of points,
the ratio rapidly approaches the number $V=8$ of hardware threads employed.
We can conclude that the overhead associated with the management 
of the threads is almost satisfactorily amortized 
starting from samples of ten thousand elements.

To understand the dependency of the execution time from
the sample size, Figure \ref{fig:exp_scal_norm_time} reports
the ratio between the algorithm run time and the sample size $s$,
as a function of $s$.
In the range of samples considered, the above ratio
is quite insensitive to the sample size
and its trend can be considered approximatively constant.
This observation agrees with the linear dependence on
the sample size in the asymptotic cost.
However,
by taking a closer look, the curves are slightly deceasing 
till to the intermediate sample size and, then,
slightly increasing. We explain this trend by noticing that,
during the descending course, the MIMD speedup increases rapidly and then,
during the ascending course, the cost of sorting becomes more appreciable.

\subsubsection{Accuracy of $\FastCFOF$}
\label{sect:exp_fastcfof_acc}

\begin{figure}[t]
\centering
\subfloat[\label{fig:exp_accB}]
{\includegraphics[width=0.48\textwidth]{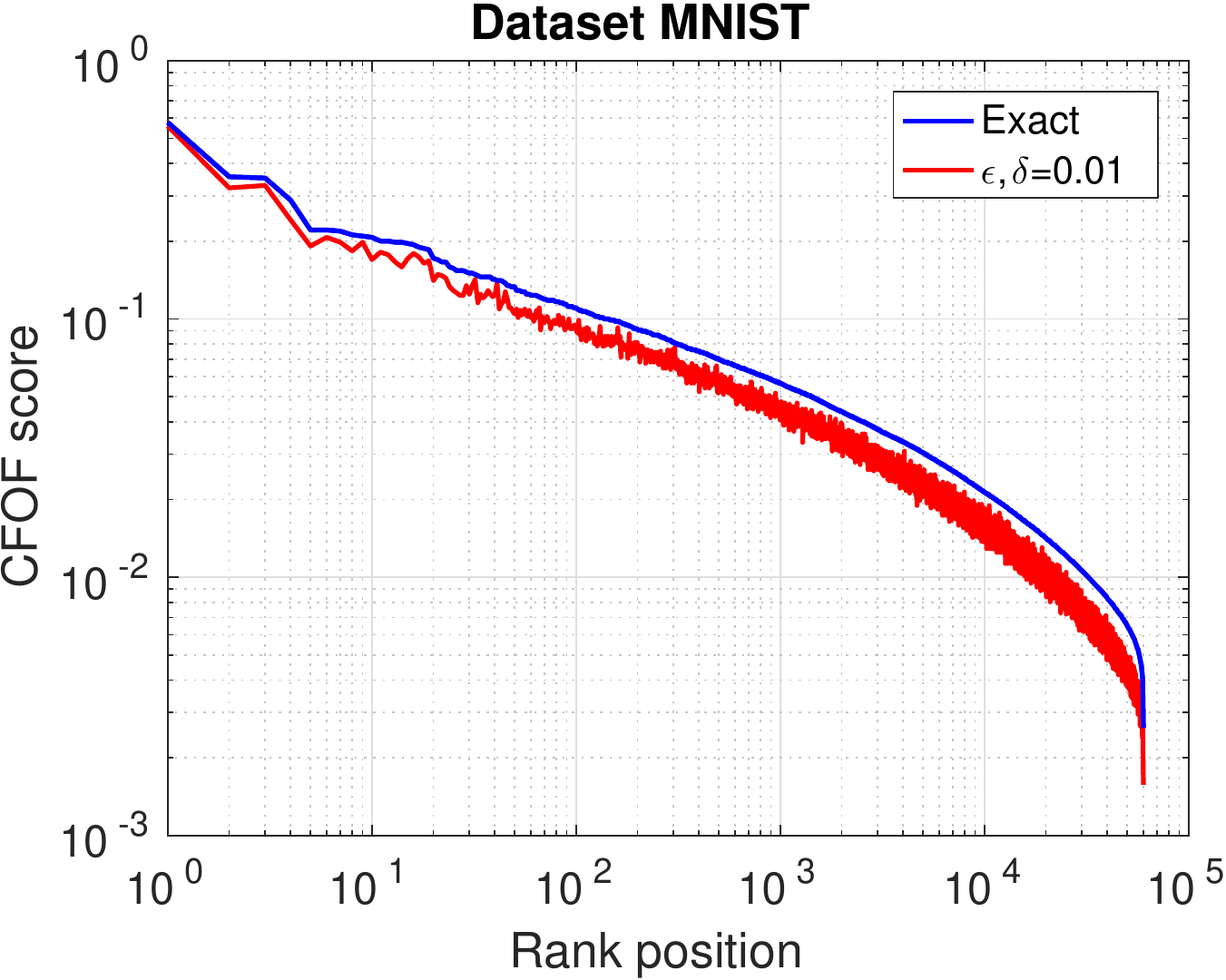}}
~
\subfloat[\label{fig:exp_accC}]
{\includegraphics[width=0.48\textwidth]{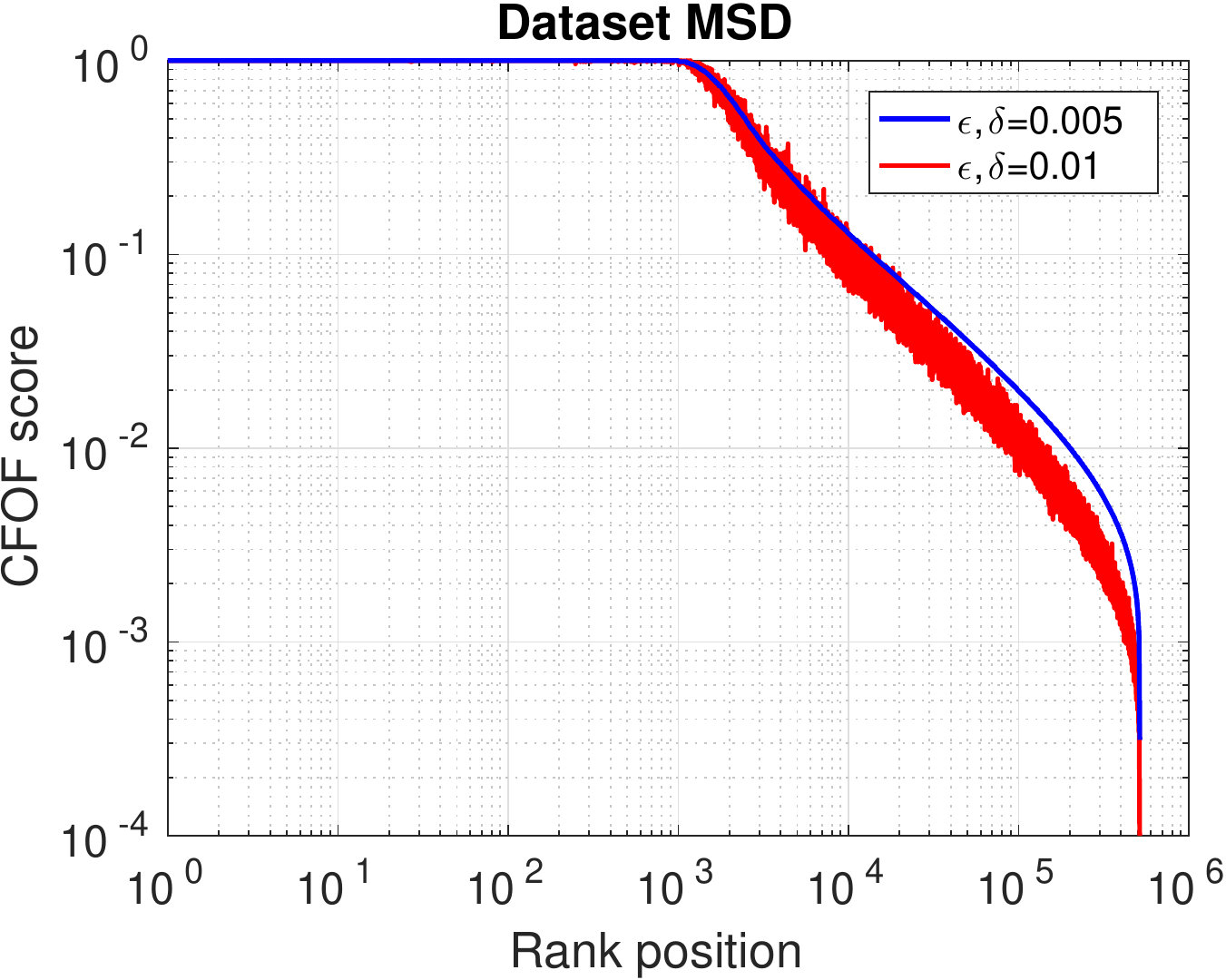}}
\\
\subfloat[\label{fig:exp_accC}]
{\includegraphics[width=0.48\textwidth]{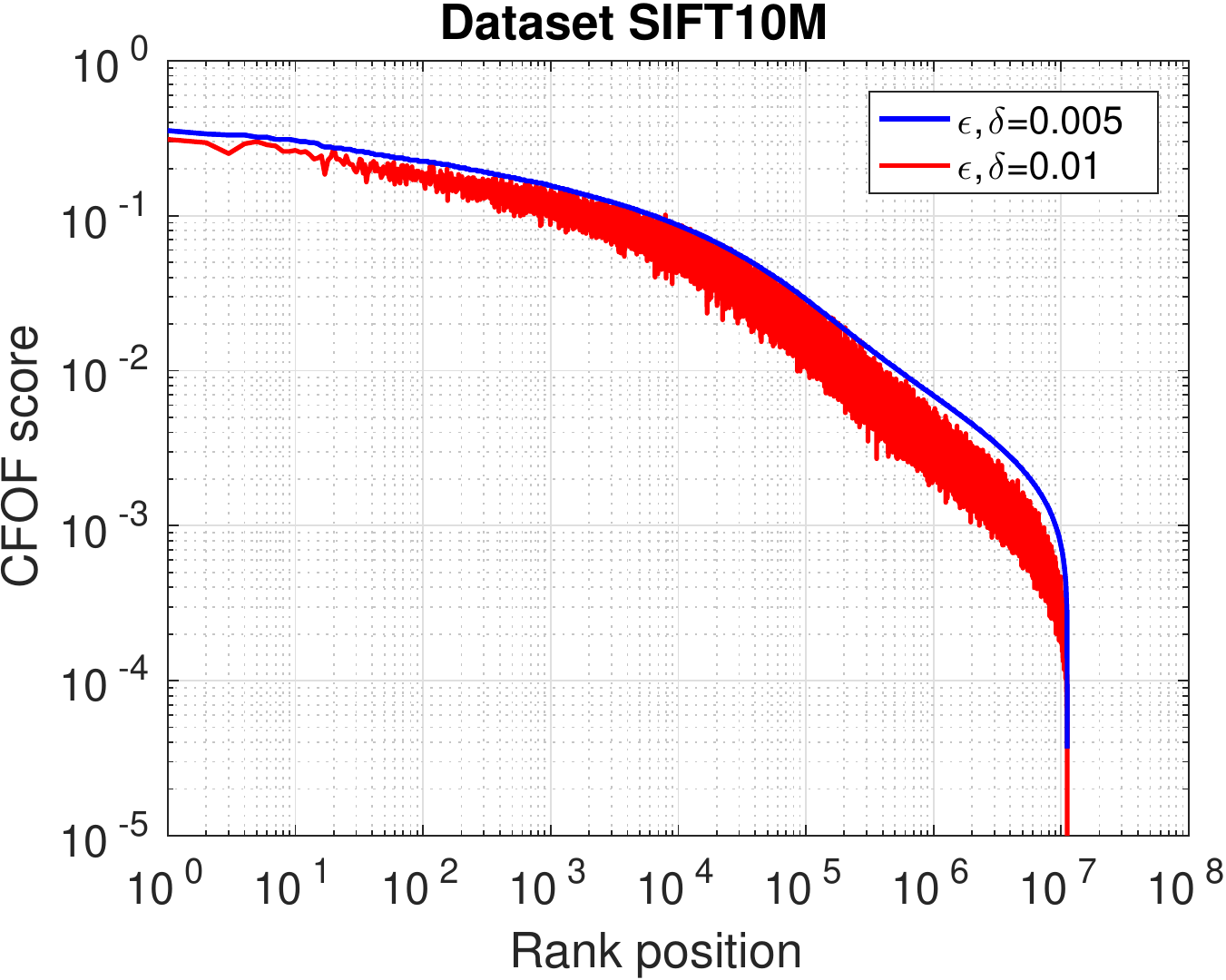}}
~
\subfloat[\label{fig:exp_accA}]
{\includegraphics[width=0.48\textwidth]{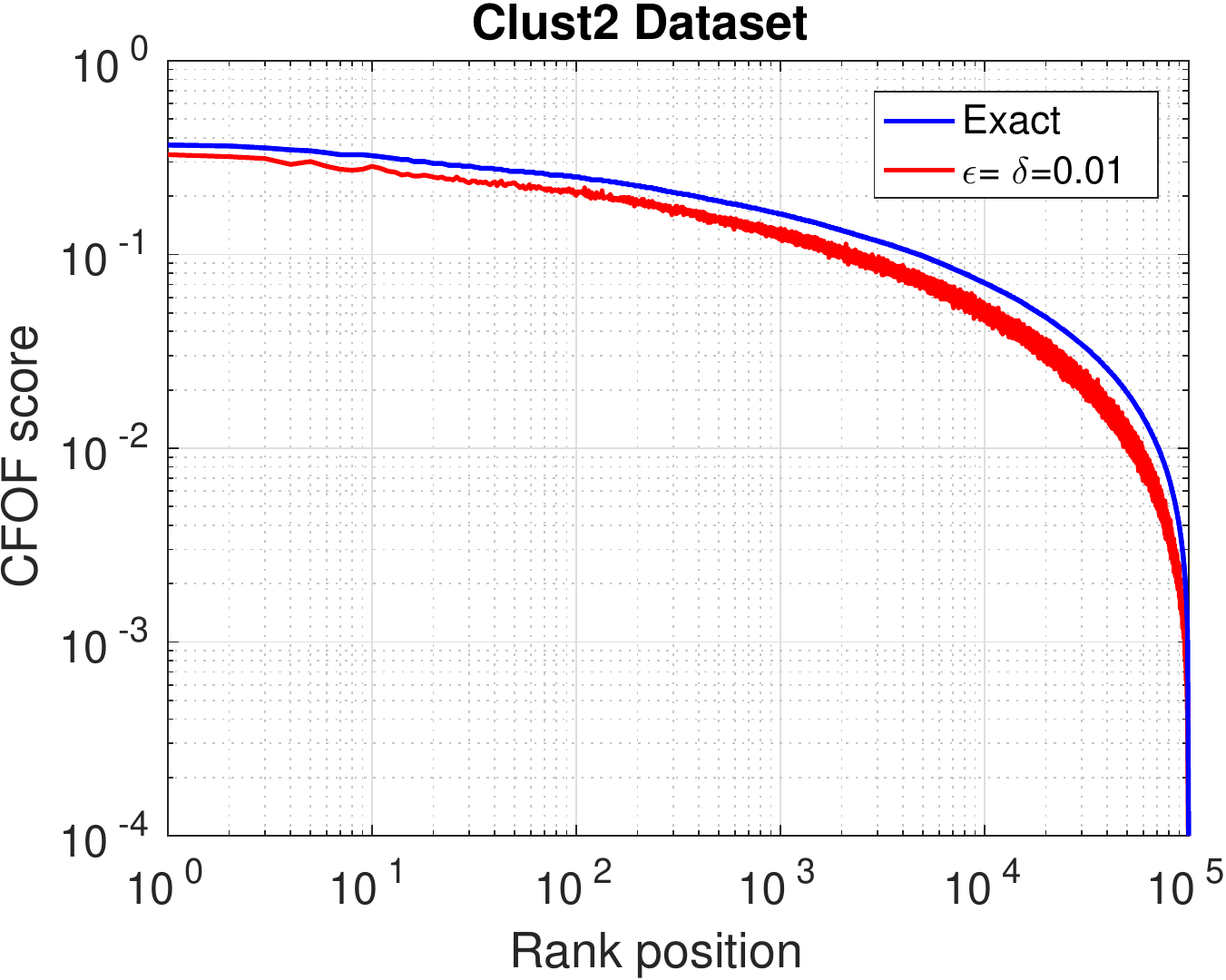}}
\caption{[Best viewed in color.]
Comparison between the exact $\CFOF$ scores (blue smooth curve)
and the corresponding approximate $\CFOF$ score (red noisy curve) 
computed by $\FastCFOF$ for $s=26,\!624$ and $\varrho=0.01$.}
\label{fig:exp_acc}
\end{figure}

The goal of this experiment is to assess the
quality of the result of $\FastCFOF$
for different sample sizes,
that is different combinations of the parameters
$\epsilon$ and $\delta$.
With this aim we first computed the scores
for a very large sample size $s_{max}=120,\!320$ associated
with parameters $\epsilon=0.005$ and $\delta=0.005$.
For \textit{MNIST} and \textit{Clust2} ($n=100,\!000$, $d=100$)
we determined the exact scores, since in these cases the sample size exceeds the 
dataset size. 
For the other two datasets, determining the exact scores
is prohibitive, thus we assumed as exact scores those associated with
the sample of size $s_{max}$.

Figure \ref{fig:exp_acc} compares the scores for $s=s_{max}$
with those obtained for the standard sample size $s=26,\!624$.
The blue (smooth) curve represents the exact scores
sorted in descending order and the $x$-axis represents
the outlier rank position of the dataset points.
As for the red (noisy) curve, it shows the approximate scores
associated with the points at each rank position.
The curves highlight that the ranking position
tends to be preserved and that
in both cases
top outliers
are associated with the largest scores.

To measure the accuracy of $\FastCFOF$, we
used the \textit{Precision}, or \textit{Prec} for short,
also referred to as $P@n$ in the literature \cite{Craswell2016}.
The \textit{Prec} measure is the precision associated with
the top outliers,
defined as the fraction of true outliers among
the top-$\alpha$ outliers reported by the technique
($Prec@\alpha$ for short in the following).
The latter measure is important in context
of outlier detection evaluation,
since outlier detection techniques 
aim at detecting the top-$\alpha$
most deviating objects of the input dataset.
We also employed the
\textit{Spearman's rank correlation coefficient}
to assesses relationship between the two rankings \cite{CorderF14}.
This coefficient is high (close to $1$)
when observations have a similar rank.

Tables \ref{table:mnist}, \ref{table:msd}, \ref{table:sift10m},
and \ref{table:clust2}
report 
the precision for $\alpha=0.001$ ($Prec@0.001$)
and for $\alpha=0.01$ ($Prec@0.01$)
in correspondence of different combinations 
of the sample size $s\in\{512, 3,\!584, 15,\!360, 26,\!624 \}$ 
and of the parameter $\varrho\in\{\varrho_1,\ldots,\varrho_5\}$
($\varrho_1=0.001$, $\varrho_2=0.005$, $\varrho_3=0.01$,
$\varrho_4=0.05$, $\varrho_5=0.1$).
Moreover,
Table \ref{table:spear} reports the Spearman ranking coefficient
for the same combinations of the parameters.

Both the precision and the correlation improve 
for increasing sample sizes $s$
and $\varrho$ values.
Interestingly,
excellent values can be achieved even with small samples
especially for larger $\varrho$ values.
Even the worst results, for $s=512$, are surprisingly good, 
considering also the modest cost at which they were obtained.

This behavior is remarkable. 
On one side, the ability to achieve very good accuracy with reduced sample
sizes $s$, enables $\CFOF$ to efficiently process huge datasets.
On the other side, 
the ability to improve accuracy by increasing $\varrho$
with a given sample size $s$, that is to say at the same temporal cost,
enables $\CFOF$ to efficiently deal even large values of the parameter
$\varrho$. 

We point out that 
the latter kind of behavior has been posed
as a challenge in \cite{RadovanovicNI15} for the method $\antiHub$.
Indeed, since $\antiHub$ reaches peak performance
for values of $k\approx n$, they concluded that the development 
of an approximate version of this method 
represents a significant challenge since, to the best of their knowledge, 
current approximate
$k$-nearest neighbors algorithms assume small constant $k$.

We can justify the very good accuracy of the method by noticing 
that the larger the $\CFOF$ score of point $x$ and, 
for any other point $y$, 
the larger the probability $p(x,y)$ that a generic dataset point
will be located between $x$ and $y$ and, moreover,
the smaller the impact of the error $\epsilon$
on the estimated value $\widehat{p}(x,y)$.
Intuitively, the points we are interested in, that is the outliers,
are precisely those less prone to bad estimations.

\begin{table}
\scriptsize
\begin{center}
\begin{tabular}{cc}
\begin{tabular}{|c||c|c|c|c|c|}
\multicolumn{6}{c}{$Prec@0.001$} \\
\hline
$s$ & $\varrho_1$ & $\varrho_2$ & $\varrho_3$  & $\varrho_4$  & $\varrho_5$ \\
 & $0.001$ & $0.005$ &  $0.01$ &  $0.05$ & $0.1$ \\
\hline\hline
$   512$ & --- & $0.2333$ & $0.4500$ & $0.7167$ & $0.7167$ \\
$  3584$ & $0.6333$ & $0.6667$ & $0.7833$ & $0.9333$ & $0.9000$ \\
$ 15360$ & $0.8333$ & $0.9167$ & $0.9333$ & $0.9833$ & $0.9833$ \\
$ 26624$ & $0.9000$ & $0.9500$ & $0.9667$ & $0.9667$ & $0.9833$ \\
\hline
\end{tabular}
&
\begin{tabular}{|c||c|c|c|c|c|}
\multicolumn{6}{c}{$Prec@0.005$} \\
\hline
$s$ & $\varrho_1$ & $\varrho_2$ & $\varrho_3$  & $\varrho_4$  & $\varrho_5$ \\
 & $0.001$ & $0.005$ &  $0.01$ &  $0.05$ & $0.1$ \\
\hline\hline
$   512$ & --- & $0.4400$ & $0.5700$ & $0.7133$ & $0.7567$ \\
$  3584$ & $0.5733$ & $0.7533$ & $0.8033$ & $0.9367$ & $0.9400$ \\
$ 15360$ & $0.7733$ & $0.9233$ & $0.9367$ & $0.9733$ & $0.9867$ \\
$ 26624$ & $0.8900$ & $0.9467$ & $0.9633$ & $0.9900$ & $0.9867$ \\
\hline
\end{tabular}
\\
\begin{tabular}{|c||c|c|c|c|c|}
\multicolumn{6}{c}{$Prec@0.01$} \\
\hline
$s$ & $\varrho_1$ & $\varrho_2$ & $\varrho_3$  & $\varrho_4$  & $\varrho_5$ \\
 & $0.001$ & $0.005$ &  $0.01$ &  $0.05$ & $0.1$ \\
\hline\hline
$   512$ & --- & $0.4433$ & $0.5667$ & $0.7467$ & $0.7583$ \\
$  3584$ & $0.5517$ & $0.7500$ & $0.8200$ & $0.9250$ & $0.9167$ \\
$ 15360$ & $0.8117$ & $0.9050$ & $0.9300$ & $0.9633$ & $0.9650$ \\
$ 26624$ & $0.8633$ & $0.9350$ & $0.9600$ & $0.9633$ & $0.9717$ \\
\hline
\end{tabular}
&
\begin{tabular}{|c||c|c|c|c|c|}
\multicolumn{6}{c}{$Prec@0.05$} \\
\hline
$s$ & $\varrho_1$ & $\varrho_2$ & $\varrho_3$  & $\varrho_4$  & $\varrho_5$ \\
 & $0.001$ & $0.005$ &  $0.01$ &  $0.05$ & $0.1$ \\
\hline\hline
$   512$ & --- & $0.5353$ & $0.6427$ & $0.7827$ & $0.8430$ \\
$  3584$ & $0.6490$ & $0.8057$ & $0.8470$ & $0.9277$ & $0.9490$ \\
$ 15360$ & $0.8443$ & $0.9260$ & $0.9497$ & $0.9683$ & $0.9787$ \\
$ 26624$ & $0.8890$ & $0.9553$ & $0.9643$ & $0.9777$ & $0.9817$ \\
\hline
\end{tabular}
\end{tabular}
\end{center}
\caption{\textit{MNIST} dataset: $Prec@n$.}
\label{table:mnist}
\end{table}

\begin{table}
\scriptsize
\begin{center}
\begin{tabular}{cc}
\begin{tabular}{|c||c|c|c|c|c|}
\multicolumn{6}{c}{$Prec@0.001$} \\
\hline
$s$ & $\varrho_1$ & $\varrho_2$ & $\varrho_3$  & $\varrho_4$  & $\varrho_5$ \\
 & $0.001$ & $0.005$ &  $0.01$ &  $0.05$ & $0.1$ \\
\hline\hline
$   512$ & --- & $0.7771$ & $0.8411$ & $0.8643$ & $0.8411$ \\
$  3584$ & $0.7829$ & $0.9574$ & $0.9690$ & $1.0000$ & $1.0000$ \\
$ 15360$ & $0.9128$ & $0.9729$ & $0.9787$ & $1.0000$ & $1.0000$ \\
$ 26624$ & $0.9341$ & $0.9787$ & $0.9826$ & $1.0000$ & $1.0000$ \\
\hline
\end{tabular}
&
\begin{tabular}{|c||c|c|c|c|c|}
\multicolumn{6}{c}{$Prec@0.005$} \\
\hline
$s$ & $\varrho_1$ & $\varrho_2$ & $\varrho_3$  & $\varrho_4$  & $\varrho_5$ \\
 & $0.001$ & $0.005$ &  $0.01$ &  $0.05$ & $0.1$ \\
\hline\hline
$   512$ & $0.0062$ & $0.6923$ & $0.8196$ & $0.8514$ & $0.8157$ \\
$  3584$ & $0.7439$ & $0.8956$ & $0.9274$ & $0.9988$ & $0.9992$ \\
$ 15360$ & $0.8867$ & $0.9519$ & $0.9697$ & $0.9988$ & $0.9992$ \\
$ 26624$ & $0.9208$ & $0.9674$ & $0.9794$ & $0.9984$ & $0.9988$ \\
\hline
\end{tabular}
\\
\begin{tabular}{|c||c|c|c|c|c|}
\multicolumn{6}{c}{$Prec@0.01$} \\
\hline
$s$ & $\varrho_1$ & $\varrho_2$ & $\varrho_3$  & $\varrho_4$  & $\varrho_5$ \\
 & $0.001$ & $0.005$ &  $0.01$ &  $0.05$ & $0.1$ \\
\hline\hline
$   512$ & --- & $0.6962$ & $0.7965$ & $0.9523$ & $0.9816$ \\
$  3584$ & $0.7396$ & $0.8948$ & $0.9265$ & $0.9957$ & $1.0000$ \\
$ 15360$ & $0.8840$ & $0.9542$ & $0.9643$ & $0.9986$ & $1.0000$ \\
$ 26624$ & $0.9191$ & $0.9664$ & $0.9785$ & $0.9983$ & $1.0000$ \\
\hline
\end{tabular}
&
\begin{tabular}{|c||c|c|c|c|c|}
\multicolumn{6}{c}{$Prec@0.05$} \\
\hline
$s$ & $\varrho_1$ & $\varrho_2$ & $\varrho_3$  & $\varrho_4$  & $\varrho_5$ \\
 & $0.001$ & $0.005$ &  $0.01$ &  $0.05$ & $0.1$ \\
\hline\hline
$   512$ & $0.0512$ & $0.6891$ & $0.7988$ & $0.9089$ & $0.9401$ \\
$  3584$ & $0.7418$ & $0.8920$ & $0.9272$ & $0.9680$ & $0.9809$ \\
$ 15360$ & $0.8852$ & $0.9527$ & $0.9700$ & $0.9837$ & $0.9907$ \\
$ 26624$ & $0.9173$ & $0.9675$ & $0.9806$ & $0.9876$ & $0.9934$ \\
\hline
\end{tabular}
\end{tabular}
\end{center}
\caption{\textit{MSD} dataset: $Prec@n$.}
\label{table:msd}
\end{table}

\begin{table}
\scriptsize
\begin{center}
\begin{tabular}{cc}
\begin{tabular}{|c||c|c|c|c|c|}
\multicolumn{6}{c}{$Prec@0.001$} \\
\hline
$s$ & $\varrho_1$ & $\varrho_2$ & $\varrho_3$  & $\varrho_4$  & $\varrho_5$ \\
 & $0.001$ & $0.005$ &  $0.01$ &  $0.05$ & $0.1$ \\
\hline\hline
$   512$ & --- & $0.2959$ & $0.4381$ & $0.7260$ & $0.8462$ \\
$  3584$ & $0.4495$ & $0.6248$ & $0.7247$ & $0.8970$ & $0.9653$ \\
$ 15360$ & $0.7126$ & $0.8191$ & $0.8738$ & $0.9577$ & $0.9927$ \\
$ 26624$ & $0.7897$ & $0.8699$ & $0.9135$ & $0.9684$ & $0.9964$ \\
\hline
\end{tabular}
&
\begin{tabular}{|c||c|c|c|c|c|}
\multicolumn{6}{c}{$Prec@0.005$} \\
\hline
$s$ & $\varrho_1$ & $\varrho_2$ & $\varrho_3$  & $\varrho_4$  & $\varrho_5$ \\
 & $0.001$ & $0.005$ &  $0.01$ &  $0.05$ & $0.1$ \\
\hline\hline
$   512$ & $0.0053$ & $0.3861$ & $0.5693$ & $0.8418$ & $0.8941$ \\
$  3584$ & $0.4920$ & $0.7085$ & $0.8099$ & $0.9483$ & $0.9705$ \\
$ 15360$ & $0.7448$ & $0.8622$ & $0.9166$ & $0.9840$ & $0.9912$ \\
$ 26624$ & $0.8124$ & $0.9016$ & $0.9417$ & $0.9903$ & $0.9942$ \\
\hline
\end{tabular}
\\
\begin{tabular}{|c||c|c|c|c|c|}
\multicolumn{6}{c}{$Prec@0.01$} \\
\hline
$s$ & $\varrho_1$ & $\varrho_2$ & $\varrho_3$  & $\varrho_4$  & $\varrho_5$ \\
 & $0.001$ & $0.005$ &  $0.01$ &  $0.05$ & $0.1$ \\
\hline\hline
$   512$ & --- & $0.4221$ & $0.6116$ & $0.8603$ & $0.9086$ \\
$  3584$ & $0.5096$ & $0.7431$ & $0.8359$ & $0.9490$ & $0.9703$ \\
$ 15360$ & $0.7558$ & $0.8859$ & $0.9257$ & $0.9781$ & $0.9888$ \\
$ 26624$ & $0.8209$ & $0.9206$ & $0.9468$ & $0.9850$ & $0.9930$ \\
\hline
\end{tabular}
&
\begin{tabular}{|c||c|c|c|c|c|}
\multicolumn{6}{c}{$Prec@0.05$} \\
\hline
$s$ & $\varrho_1$ & $\varrho_2$ & $\varrho_3$  & $\varrho_4$  & $\varrho_5$ \\
 & $0.001$ & $0.005$ &  $0.01$ &  $0.05$ & $0.1$ \\
\hline\hline
$   512$ & $0.0510$ & $0.4887$ & $0.6527$ & $0.8754$ & $0.9173$ \\
$  3584$ & $0.5784$ & $0.7926$ & $0.8659$ & $0.9585$ & $0.9730$ \\
$ 15360$ & $0.7989$ & $0.9099$ & $0.9437$ & $0.9840$ & $0.9899$ \\
$ 26624$ & $0.8542$ & $0.9377$ & $0.9611$ & $0.9896$ & $0.9935$ \\
\hline
\end{tabular}
\end{tabular}
\end{center}
\caption{\textit{SIFT10M} dataset: $Prec@n$.}
\label{table:sift10m}
\end{table}

\begin{table}
\scriptsize
\begin{center}
\begin{tabular}{cc}
\begin{tabular}{|c||c|c|c|c|c|}
\multicolumn{6}{c}{$Prec@0.001$} \\
\hline
$s$ & $\varrho_1$ & $\varrho_2$ & $\varrho_3$  & $\varrho_4$  & $\varrho_5$ \\
 & $0.001$ & $0.005$ &  $0.01$ &  $0.05$ & $0.1$ \\
\hline\hline
$   512$ & --- & $0.5000$ & $0.5500$ & $0.7600$ & $0.6800$ \\
$  3584$ & $0.6200$ & $0.7900$ & $0.8600$ & $0.9300$ & $0.9200$ \\
$ 15360$ & $0.8100$ & $0.8900$ & $0.9100$ & $0.9600$ & $0.9700$ \\
$ 26624$ & $0.8700$ & $0.9200$ & $0.9500$ & $0.9800$ & $1.0000$ \\
\hline
\end{tabular}
&
\begin{tabular}{|c||c|c|c|c|c|}
\multicolumn{6}{c}{$Prec@0.005$} \\
\hline
$s$ & $\varrho_1$ & $\varrho_2$ & $\varrho_3$  & $\varrho_4$  & $\varrho_5$ \\
 & $0.001$ & $0.005$ &  $0.01$ &  $0.05$ & $0.1$ \\
\hline\hline
$   512$ & $0.0060$ & $0.5840$ & $0.6820$ & $0.7960$ & $0.7760$ \\
$  3584$ & $0.6900$ & $0.8420$ & $0.8900$ & $0.9240$ & $0.9200$ \\
$ 15360$ & $0.8580$ & $0.9500$ & $0.9620$ & $0.9740$ & $0.9580$ \\
$ 26624$ & $0.8940$ & $0.9560$ & $0.9700$ & $0.9760$ & $0.9700$ \\
\hline
\end{tabular}
\\
\begin{tabular}{|c||c|c|c|c|c|}
\multicolumn{6}{c}{$Prec@0.01$} \\
\hline
$s$ & $\varrho_1$ & $\varrho_2$ & $\varrho_3$  & $\varrho_4$  & $\varrho_5$ \\
 & $0.001$ & $0.005$ &  $0.01$ &  $0.05$ & $0.1$ \\
\hline\hline
$   512$ & --- & $0.6010$ & $0.7120$ & $0.8330$ & $0.8090$ \\
$  3584$ & $0.6960$ & $0.8520$ & $0.9000$ & $0.9500$ & $0.9340$ \\
$ 15360$ & $0.8720$ & $0.9350$ & $0.9600$ & $0.9810$ & $0.9540$ \\
$ 26624$ & $0.8980$ & $0.9620$ & $0.9700$ & $0.9920$ & $0.9790$ \\
\hline
\end{tabular}
&
\begin{tabular}{|c||c|c|c|c|c|}
\multicolumn{6}{c}{$Prec@0.05$} \\
\hline
$s$ & $\varrho_1$ & $\varrho_2$ & $\varrho_3$  & $\varrho_4$  & $\varrho_5$ \\
 & $0.001$ & $0.005$ &  $0.01$ &  $0.05$ & $0.1$ \\
\hline\hline
$   512$ & $0.0526$ & $0.6788$ & $0.7612$ & $0.8680$ & $0.8794$ \\
$  3584$ & $0.7676$ & $0.8864$ & $0.9110$ & $0.9622$ & $0.9568$ \\
$ 15360$ & $0.9050$ & $0.9544$ & $0.9642$ & $0.9868$ & $0.9820$ \\
$ 26624$ & $0.9318$ & $0.9690$ & $0.9762$ & $0.9960$ & $0.9858$ \\
\hline
\end{tabular}
\end{tabular}
\end{center}
\caption{\textit{Clust2} dataset: $Prec@n$.}
\label{table:clust2}
\end{table}

\begin{table}
\scriptsize
\begin{center}
\begin{tabular}{cc}
\begin{tabular}{|c||c|c|c|c|c|}
\multicolumn{6}{c}{MNIST} \\
\hline
$s$ & $\varrho_1$ & $\varrho_2$ & $\varrho_3$  & $\varrho_4$  & $\varrho_5$ \\
 & $0.001$ & $0.005$ &  $0.01$ &  $0.05$ & $0.1$ \\
\hline\hline
$   512$ & --- & $0.6265$ & $0.8022$ & $0.9526$ & $0.9793$ \\
$  3584$ & $0.7429$ & $0.9375$ & $0.9669$ & $0.9934$ & $0.9971$ \\
$ 15360$ & $0.9447$ & $0.9880$ & $0.9938$ & $0.9987$ & $0.9994$ \\
$ 26624$ & $0.9737$ & $0.9946$ & $0.9972$ & $0.9995$ & $0.9997$ \\
\hline
\end{tabular}
&
\begin{tabular}{|c||c|c|c|c|c|}
\multicolumn{6}{c}{MSD} \\
\hline
$s$ & $\varrho_1$ & $\varrho_2$ & $\varrho_3$  & $\varrho_4$  & $\varrho_5$ \\
 & $0.001$ & $0.005$ &  $0.01$ &  $0.05$ & $0.1$ \\
\hline\hline
$   512$ & --- & $0.7757$ & $0.8976$ & $0.9741$ & $0.9852$ \\
$  3584$ & $0.8542$ & $0.9712$ & $0.9853$ & $0.9965$ & $0.9979$ \\
$ 15360$ & $0.9714$ & $0.9941$ & $0.9969$ & $0.9992$ & $0.9995$ \\
$ 26624$ & $0.9847$ & $0.9968$ & $0.9984$ & $0.9996$ & $0.9998$ \\
\hline
\end{tabular}
\\
\begin{tabular}{|c||c|c|c|c|c|}
\multicolumn{6}{c}{SIF10M} \\
\hline
$s$ & $\varrho_1$ & $\varrho_2$ & $\varrho_3$  & $\varrho_4$  & $\varrho_5$ \\
 & $0.001$ & $0.005$ &  $0.01$ &  $0.05$ & $0.1$ \\
\hline\hline
$   512$ & --- & $0.6315$ & $0.8022$ & $0.9482$ & $0.9725$ \\
$  3584$ & $0.7495$ & $0.9369$ & $0.9665$ & $0.9925$ & $0.9960$ \\
$ 15360$ & $0.9424$ & $0.9863$ & $0.9927$ & $0.9984$ & $0.9991$ \\
$ 26624$ & $0.9683$ & $0.9927$ & $0.9961$ & $0.9991$ & $0.9995$ \\
\hline
\end{tabular}
&
\begin{tabular}{|c||c|c|c|c|c|}
\multicolumn{6}{c}{Clust2} \\
\hline
$s$ & $\varrho_1$ & $\varrho_2$ & $\varrho_3$  & $\varrho_4$  & $\varrho_5$ \\
 & $0.001$ & $0.005$ &  $0.01$ &  $0.05$ & $0.1$ \\
\hline\hline
$   512$ & --- & $0.8725$ & $0.9425$ & $0.9822$ & $0.9881$ \\
$  3584$ & $0.9333$ & $0.9860$ & $0.9922$ & $0.9975$ & $0.9983$ \\
$ 15360$ & $0.9884$ & $0.9972$ & $0.9984$ & $0.9995$ & $0.9996$ \\
$ 26624$ & $0.9943$ & $0.9986$ & $0.9992$ & $0.9997$ & $0.9998$ \\
\hline
\end{tabular}
\end{tabular}
\end{center}
\caption{Spearman rank correlation.}
\label{table:spear}
\end{table}

\medskip
We point out
that the trade-off between the 
reduction in computation time obtained by $\FastCFOF$ 
and its approximation accuracy
can be understood by looking both at
Figures \ref{fig:exp_scal}-\ref{fig:exp_scal_s} 
and Tables \ref{table:mnist}-\ref{table:clust2}.
Before concluding,
in order
to make more immediately intelligible the above
trade-off, we report in Figure \ref{fig:exp_scal_tradeoff} 
the {relative execution time} 
versus the sample size $s$
(see Figure \ref{fig:exp_scal_sample_rtime})
and the $Prec@\alpha$ measure versus the relative execution time 
(see Figure \ref{fig:exp_scal_rtime_prec}).
The \textit{relative execution time} is the ratio
between the execution time of $\FastCFOF$ for a given
sample size $s$ and its execution time for the case $s=n$,
{corresponding to the temporal cost of $\CFOF$}.
Since, in the case of \textit{MSD} and \textit{SIFT10M},
using a sample of the same size of the whole dataset
resulted practically infeasible, we estimated the associated running time 
by exploiting interpolation.\footnote{
We employed quadratic interpolation in order to take into account the 
effect of the term $\log s$ associated with sorting distances, 
as already discussed in Sections \ref{sect:fastcfof_cost}
and \ref{sect:exp_fastcfof_scal}, obtaining an infinitesimal 
second order coefficient.}

Figure \ref{fig:exp_scal_sample_rtime} shows that the 
execution time of $\FastCFOF$
corresponds to a small fraction of that
required by the exact $\CFOF$ computation,
and that this fraction becomes smaller and smaller 
as the dataset size increases, the 
time savings corresponding to different orders of magnitude.
In Figure \ref{fig:exp_scal_rtime_prec}, 
the solid line represents the value of 
$Prec@0.01$ for $\varrho = 0.05$ and the dashed line the
same value for $\varrho = 0.01$, while the dotted line
represents the ranking correlation for $\varrho = 0.05$
(see also Tables \ref{table:mnist}-\ref{table:spear}).
Specifically, the figure 
provides a picture of how
the accuracy level of $\FastCFOF$ varies
with time savings.
E.g., consider the solid line, the accuracy is at least close to $0.95$ 
even for the second sample size, 
corresponding to a relative execution time of a few percent
on the smallest datasets and
to a relative execution time smaller than the one percent 
on the largest ones.

\begin{figure}[t]
\centering
\subfloat[\label{fig:exp_scal_sample_rtime}]
{\includegraphics[width=0.48\textwidth]{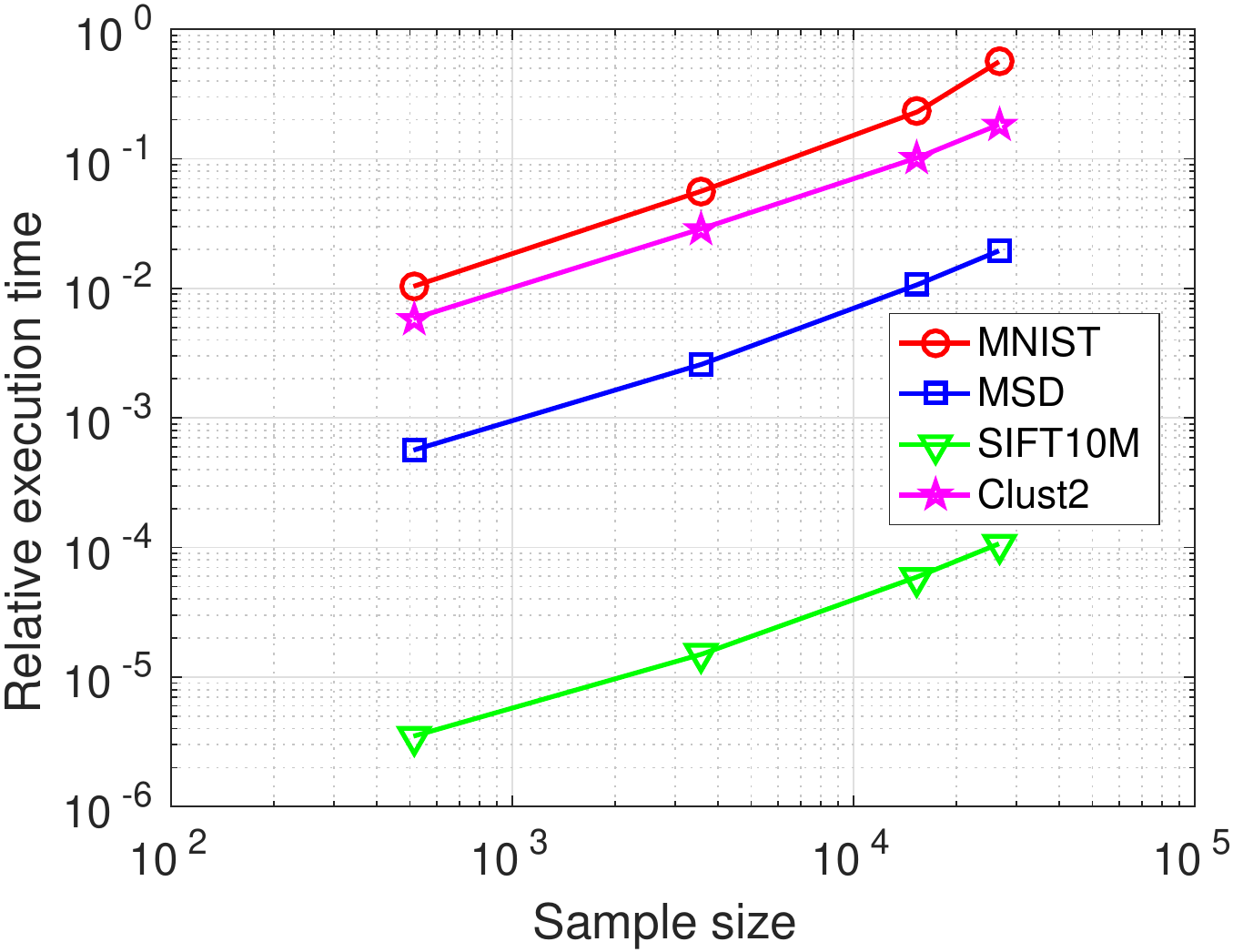}}
~
\subfloat[\label{fig:exp_scal_rtime_prec}]
{\includegraphics[width=0.48\textwidth]{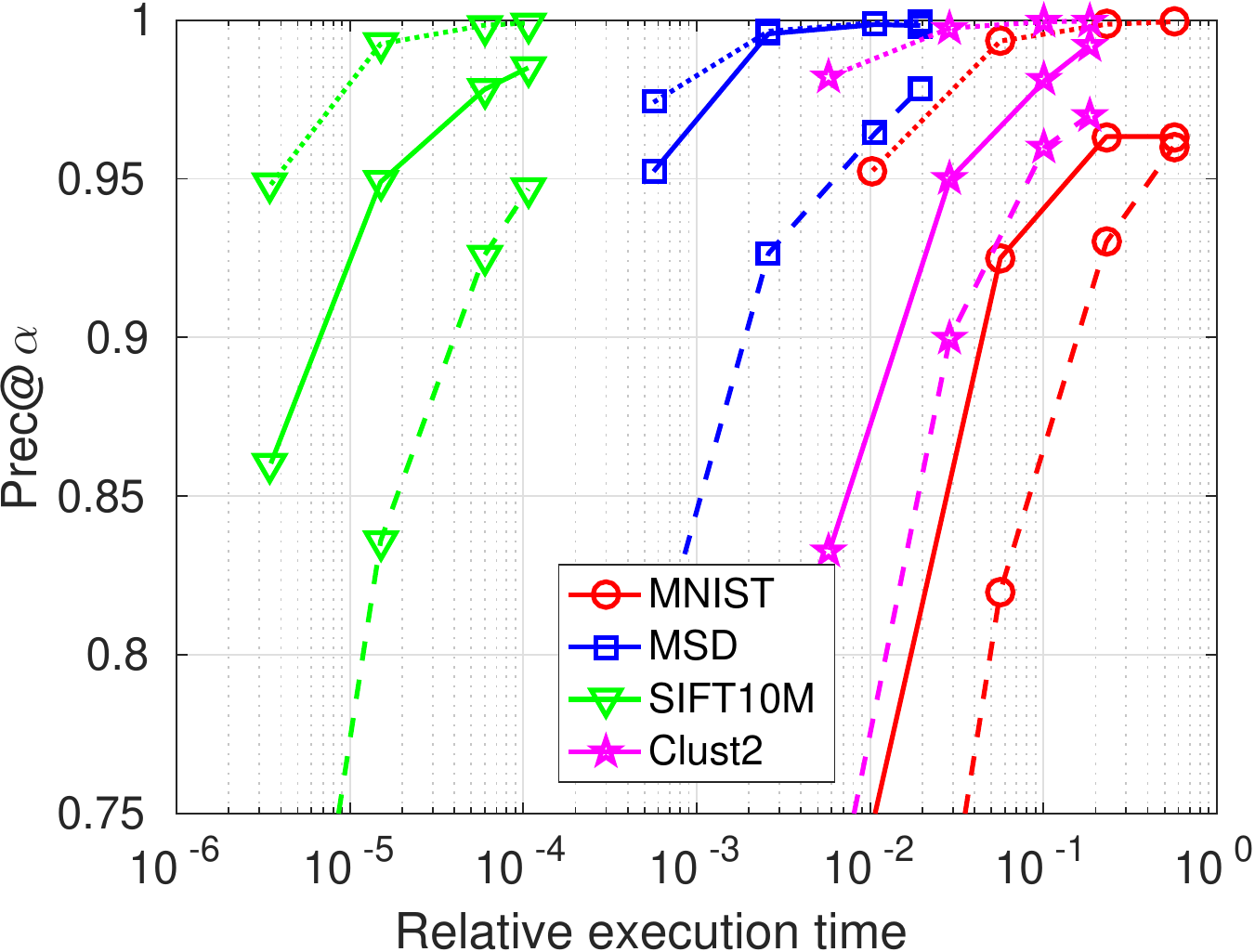}}
\caption{[Best viewed in color.] 
Reduction in computation time obtained by $\FastCFOF$ and trade-off versus
approximation accuracy.
}
\label{fig:exp_scal_tradeoff}
\end{figure}

\begin{figure}[t]
\centering
\subfloat[]
{\includegraphics[width=0.33\textwidth]{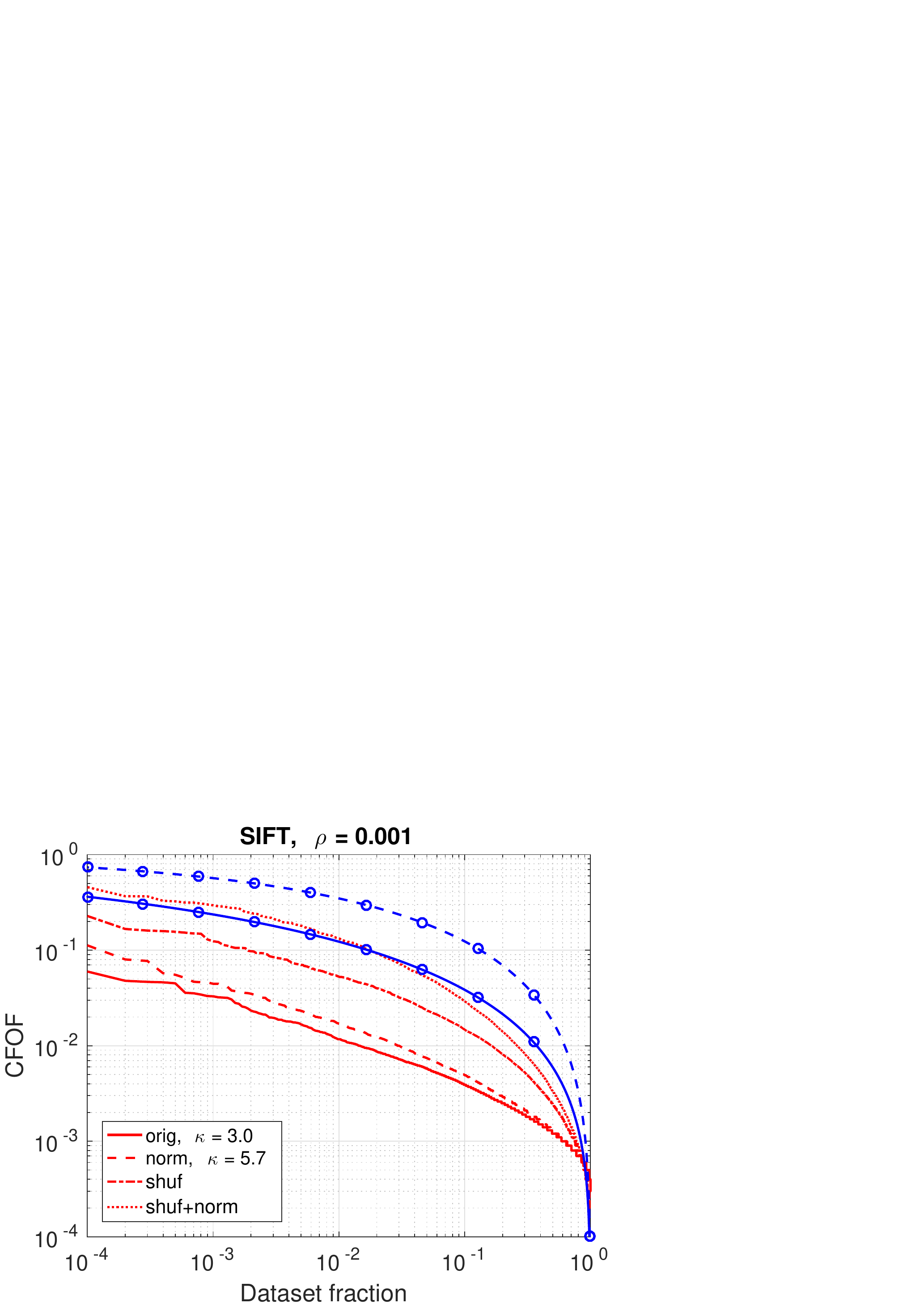}}
~
\subfloat[]
{\includegraphics[width=0.33\textwidth]{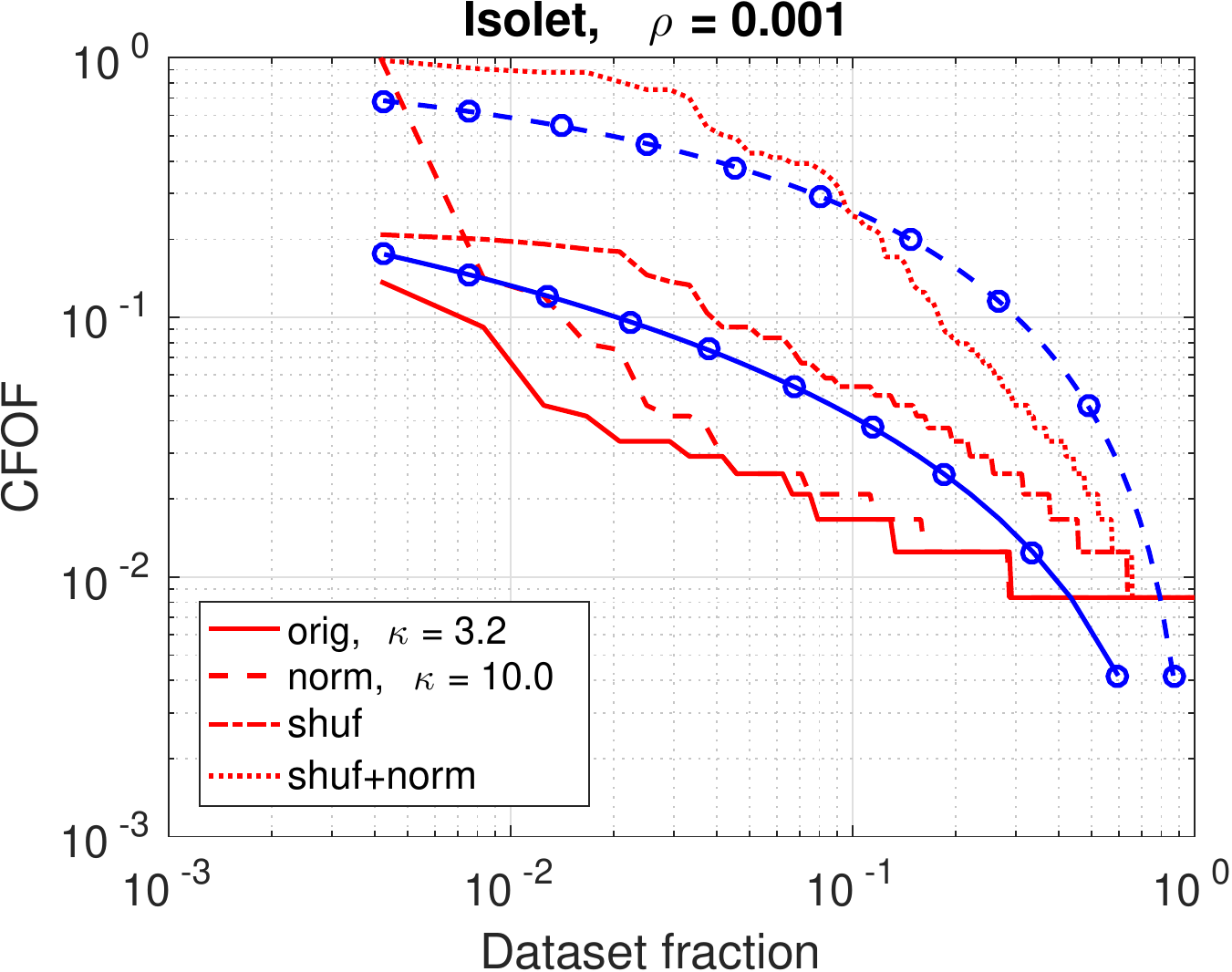}}
~
\subfloat[]
{\includegraphics[width=0.33\textwidth]{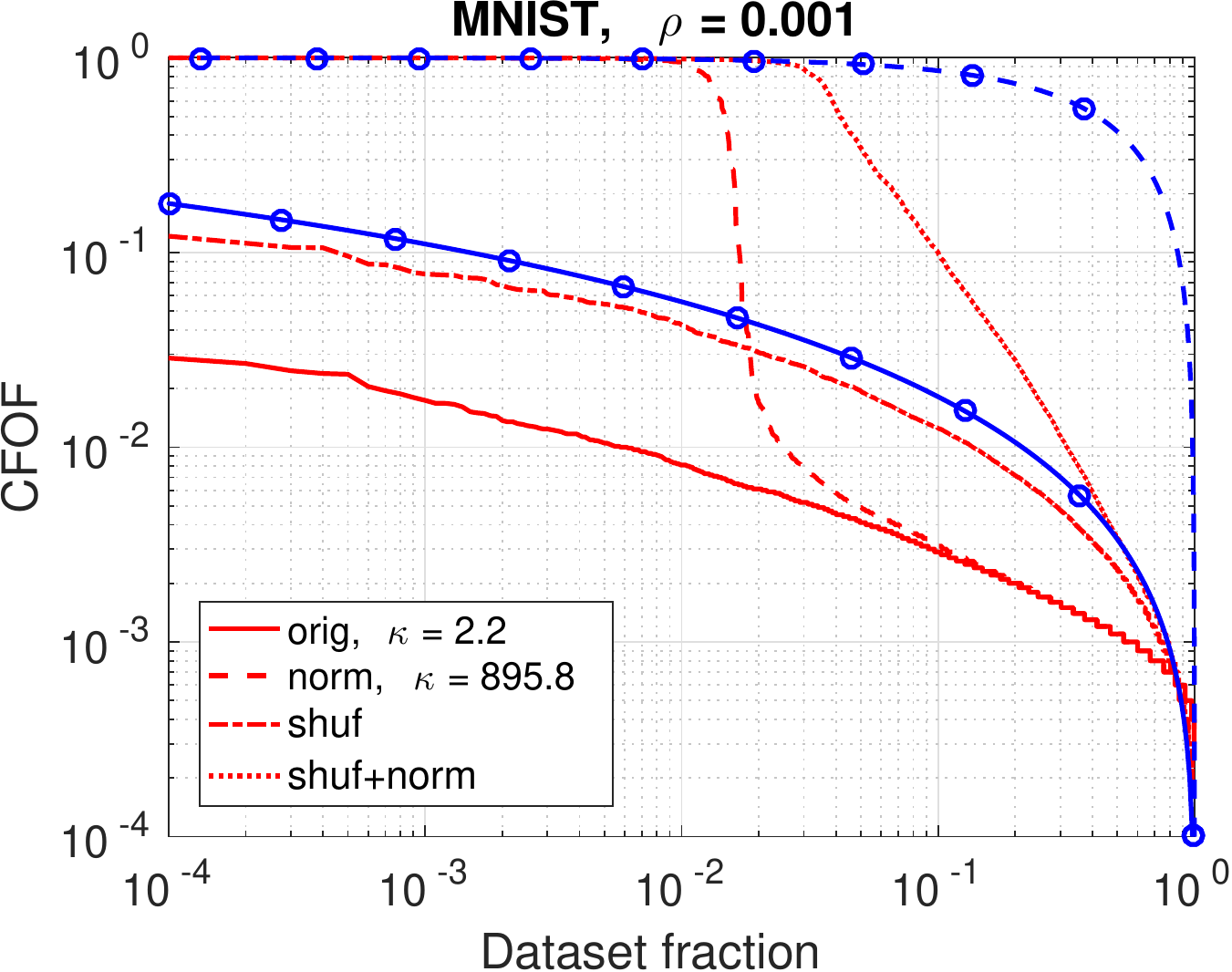}}
\\
\subfloat[]
{\includegraphics[width=0.33\textwidth]{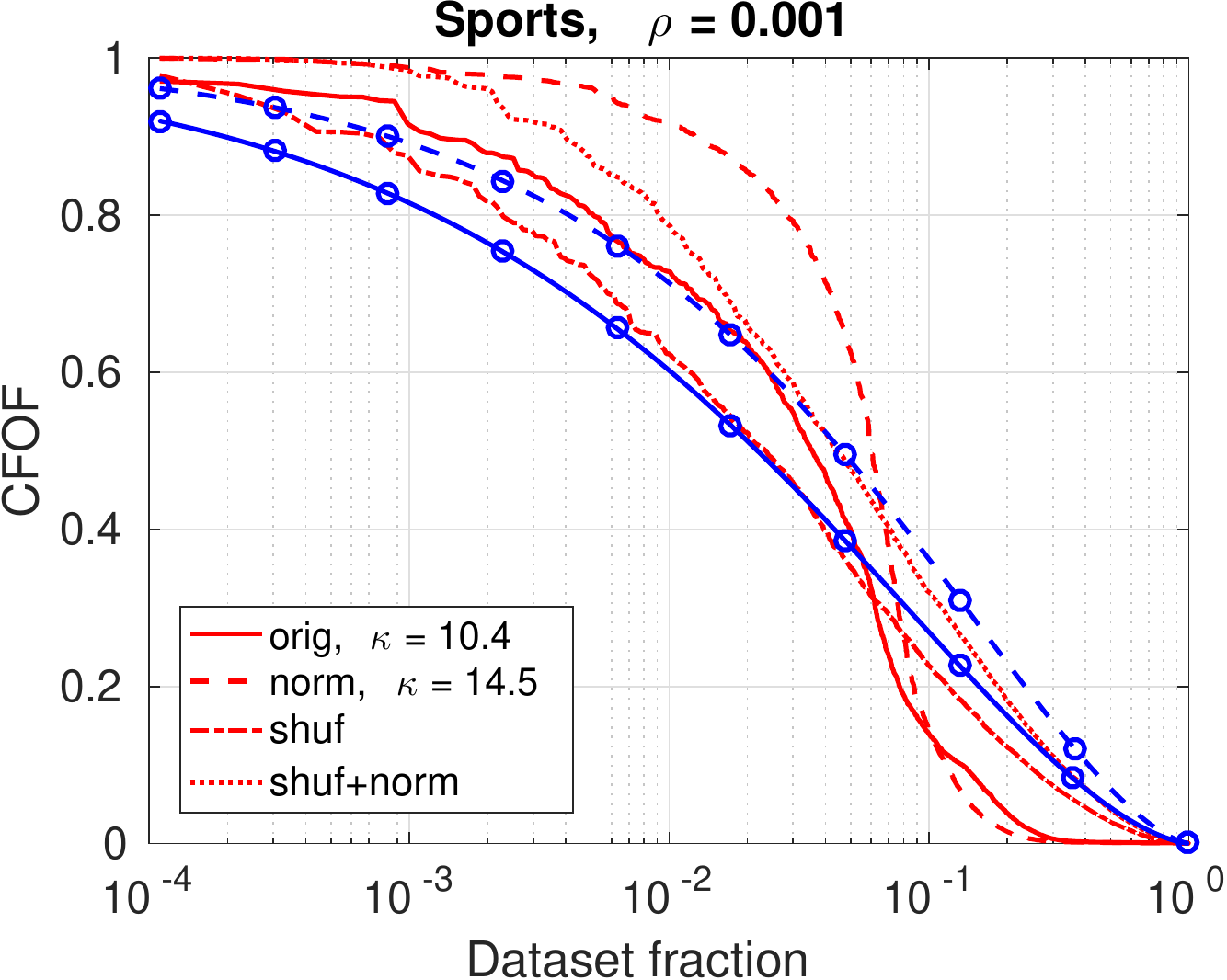}}
~
\subfloat[]
{\includegraphics[width=0.33\textwidth]{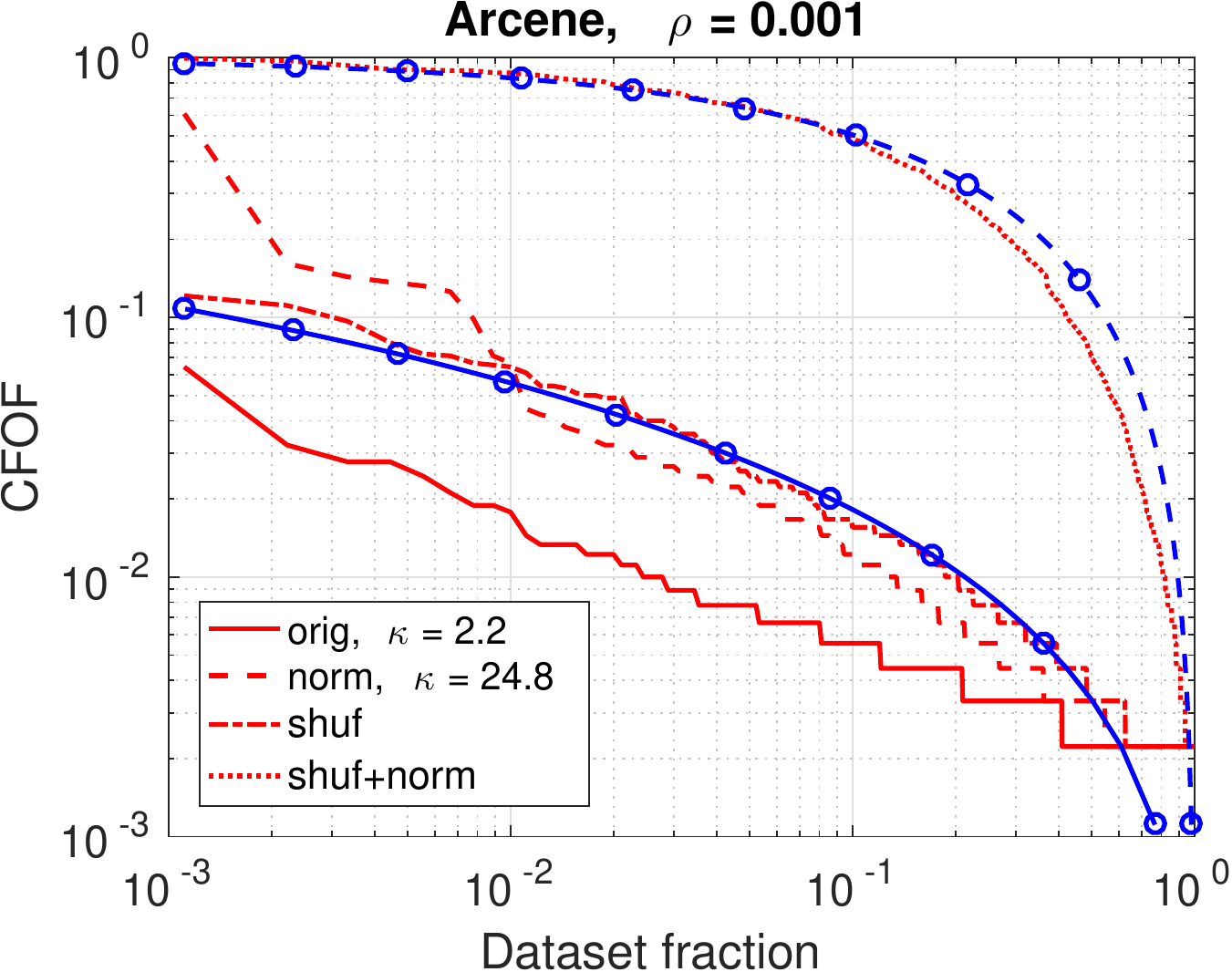}}
~
\subfloat[]
{\includegraphics[width=0.33\textwidth]{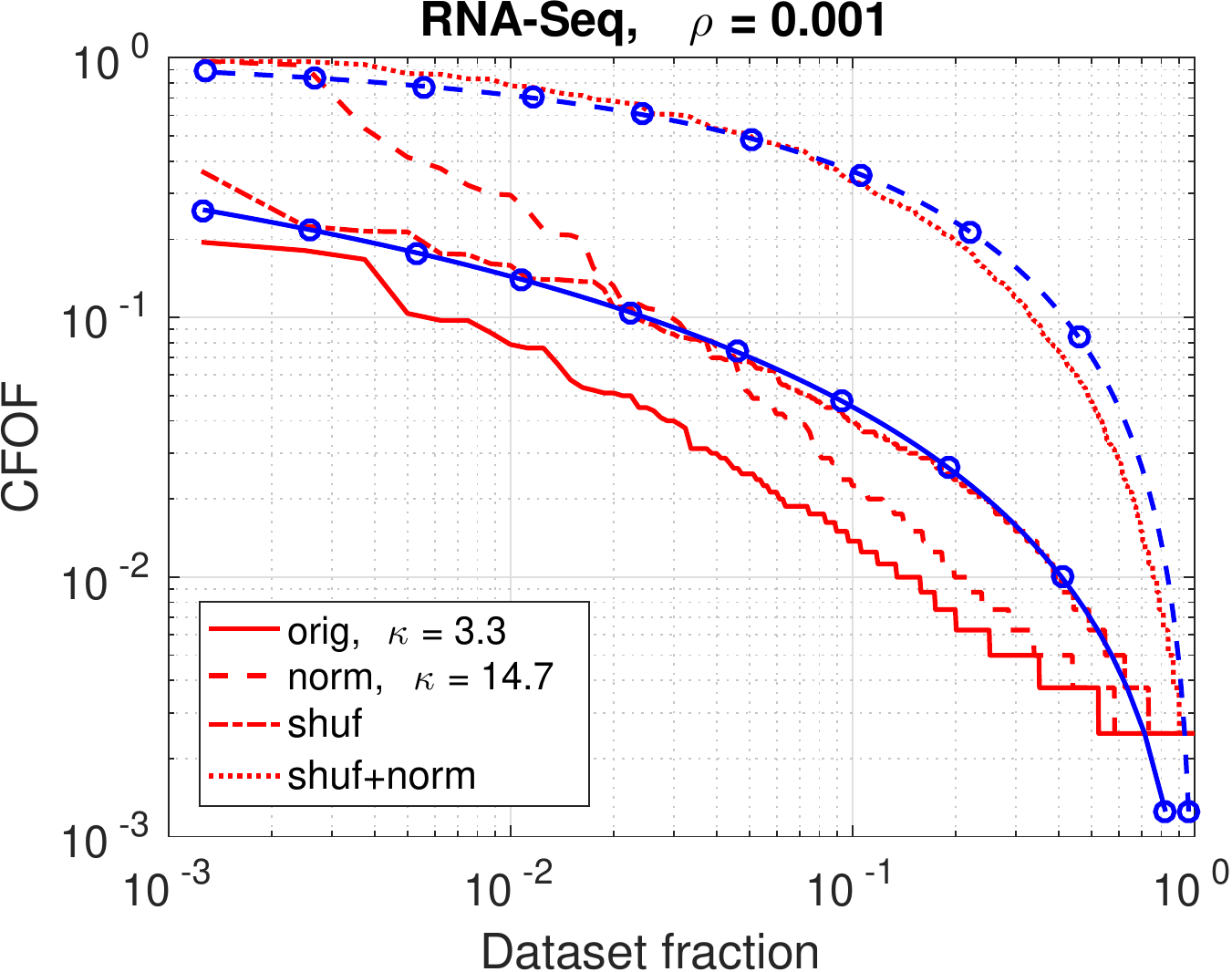}}
\caption{[Best viewed in color.]
Distribution of $\CFOF$ scores for $\varrho = 0.001$:
original data (red solid line), 
normalized data (red dashed line),
shuffled data (red dashed-dotted line), 
shuffled and normalized data (red dotted line),
theoretical distribution with kurtosis $\kappa=\kappa_{orig}$,
(blue solid line), and 
theoretical distribution with kurtosis $\kappa=\kappa_{norm}$
(blue dashed line).
}
\label{fig:realconc001}
\end{figure}

\begin{figure}[t]
\centering
\subfloat[]
{\includegraphics[width=0.33\textwidth]{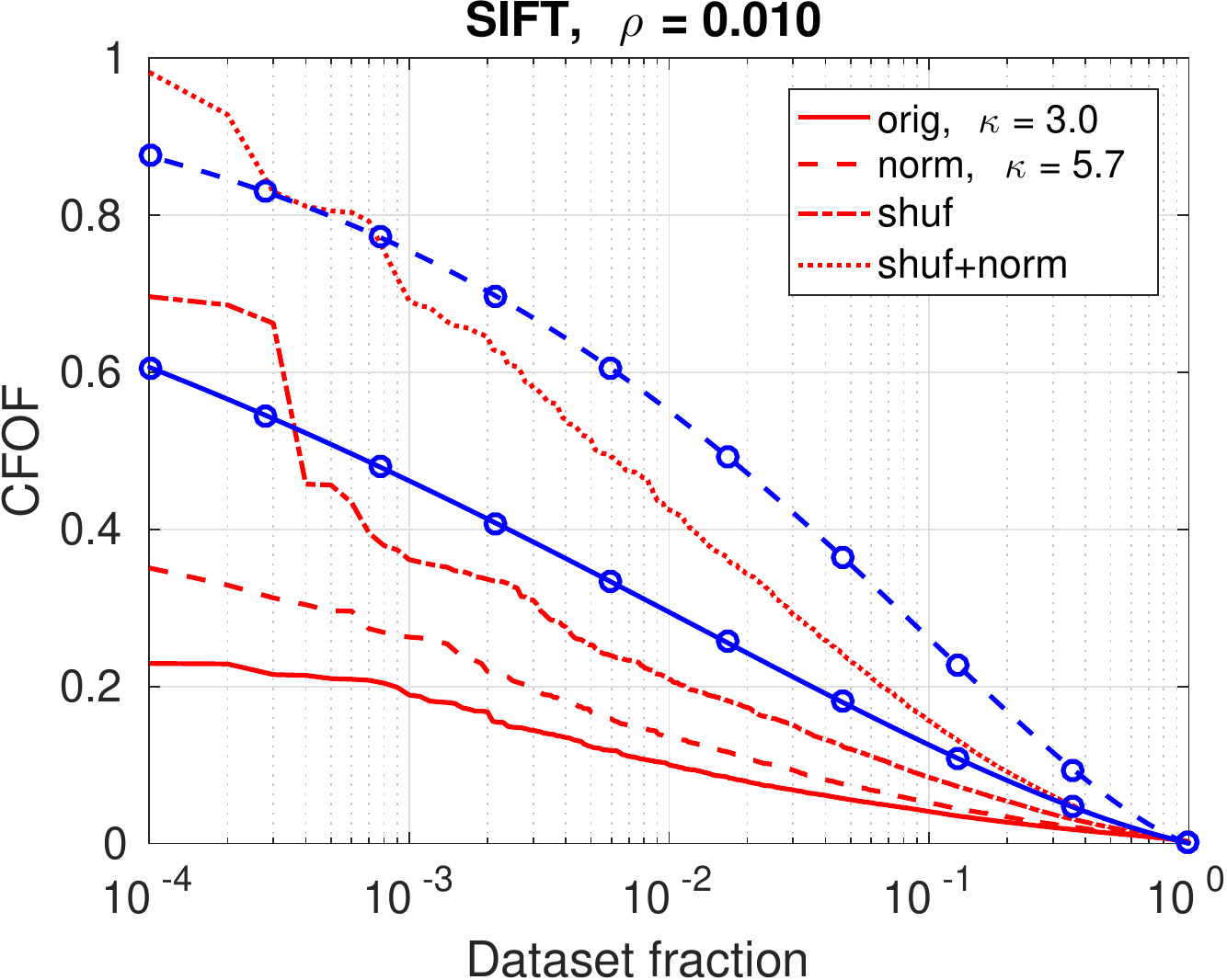}}
~
\subfloat[]
{\includegraphics[width=0.33\textwidth]{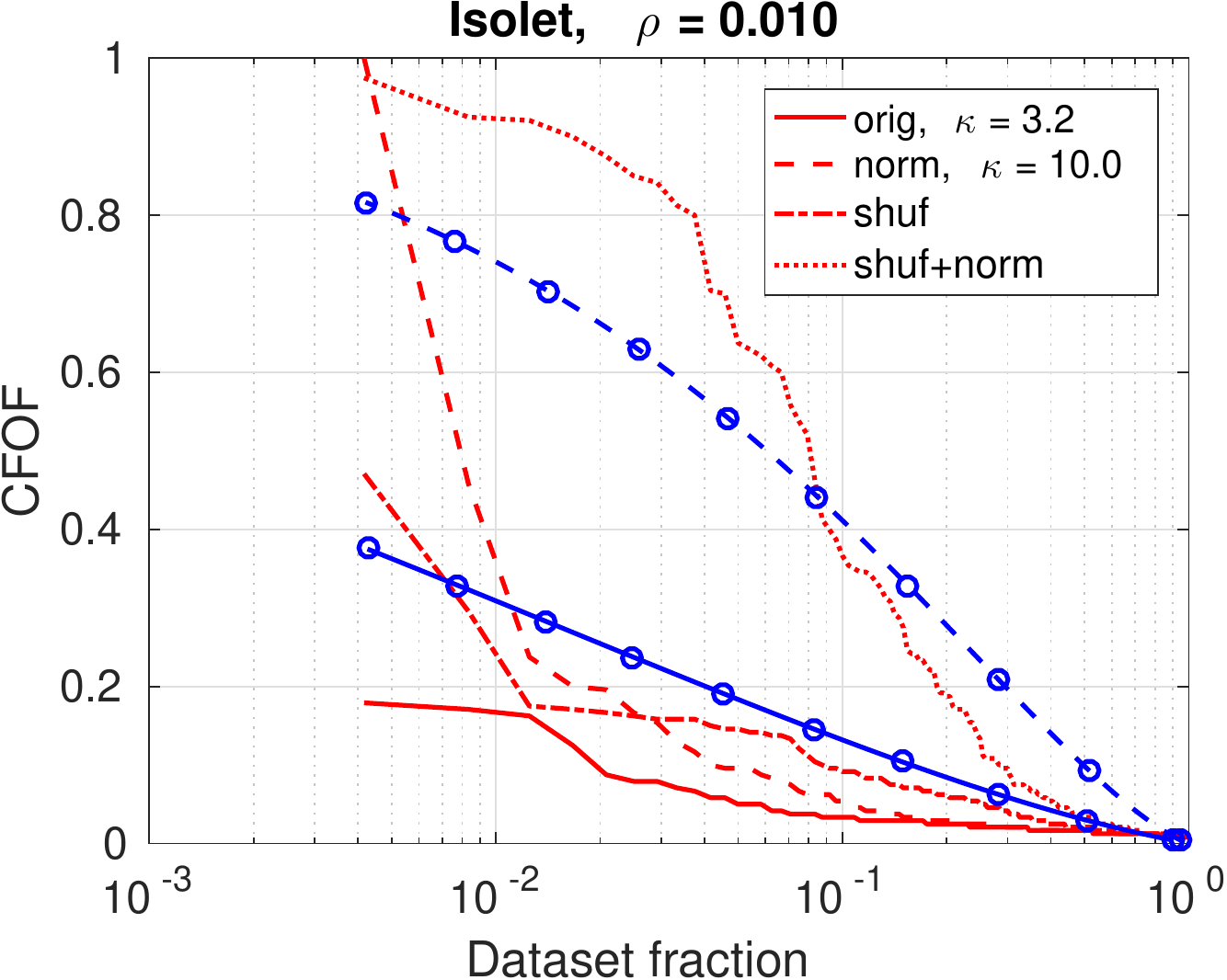}}
~
\subfloat[]
{\includegraphics[width=0.33\textwidth]{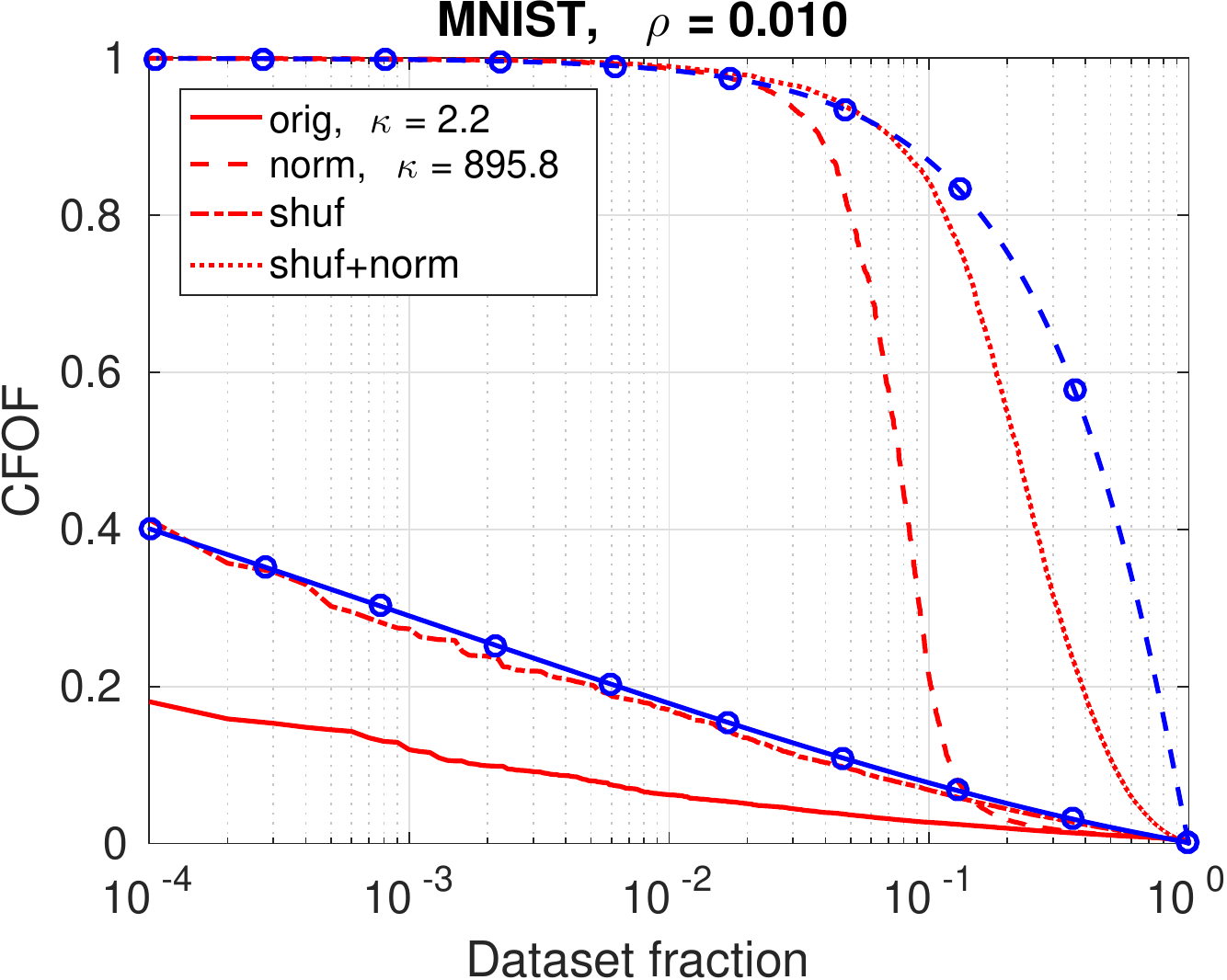}}
\\
\subfloat[]
{\includegraphics[width=0.33\textwidth]{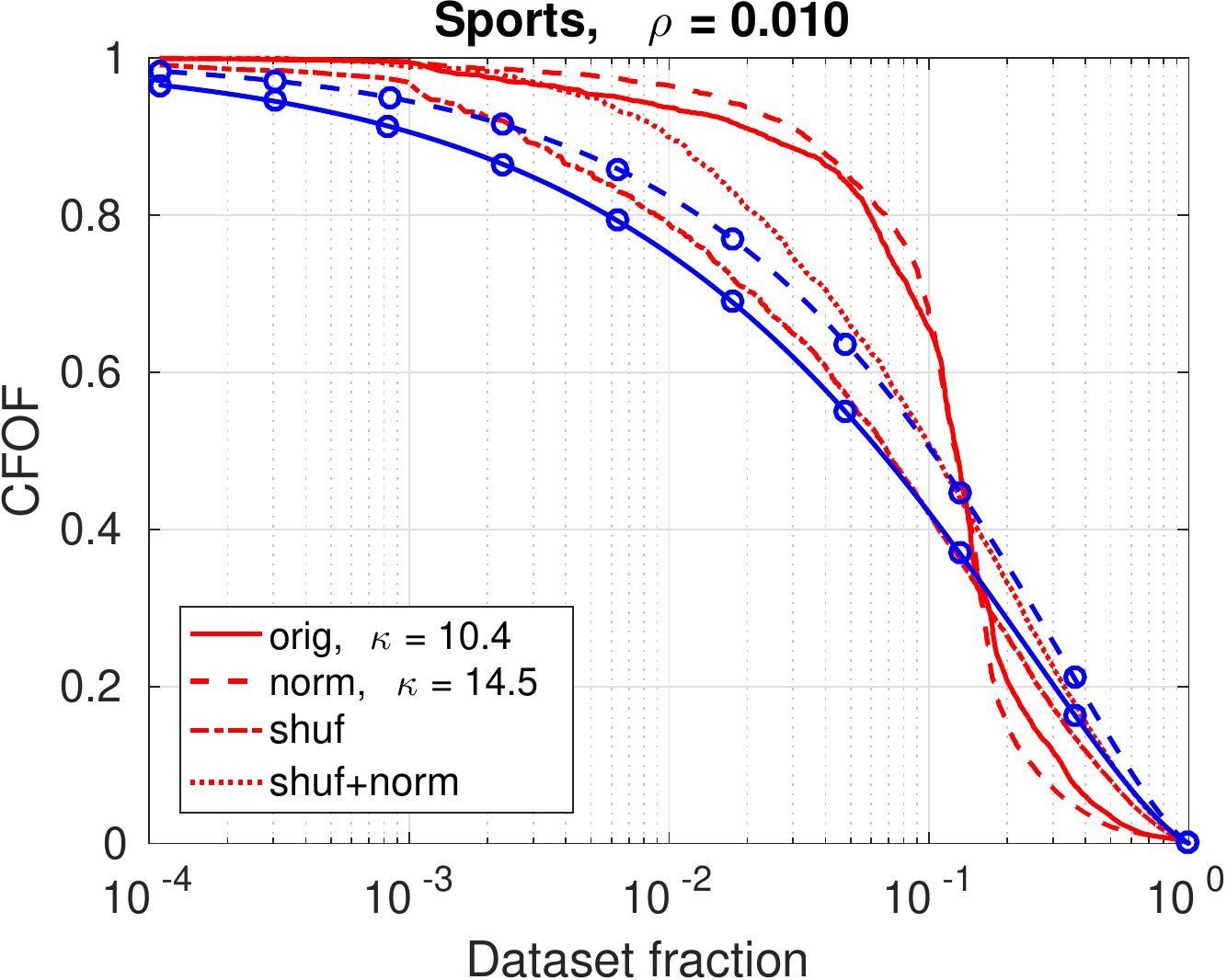}}
~
\subfloat[]
{\includegraphics[width=0.33\textwidth]{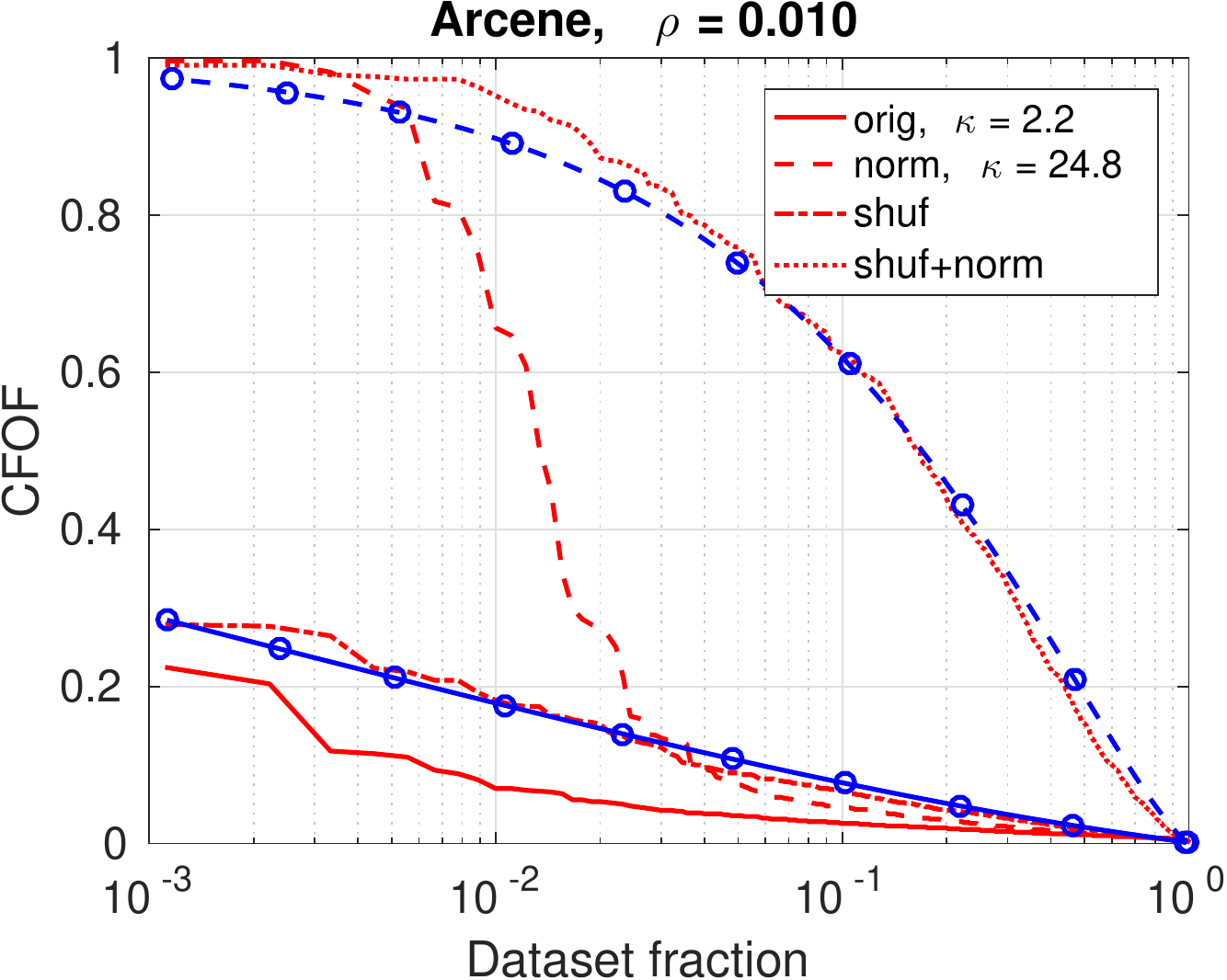}}
~
\subfloat[]
{\includegraphics[width=0.33\textwidth]{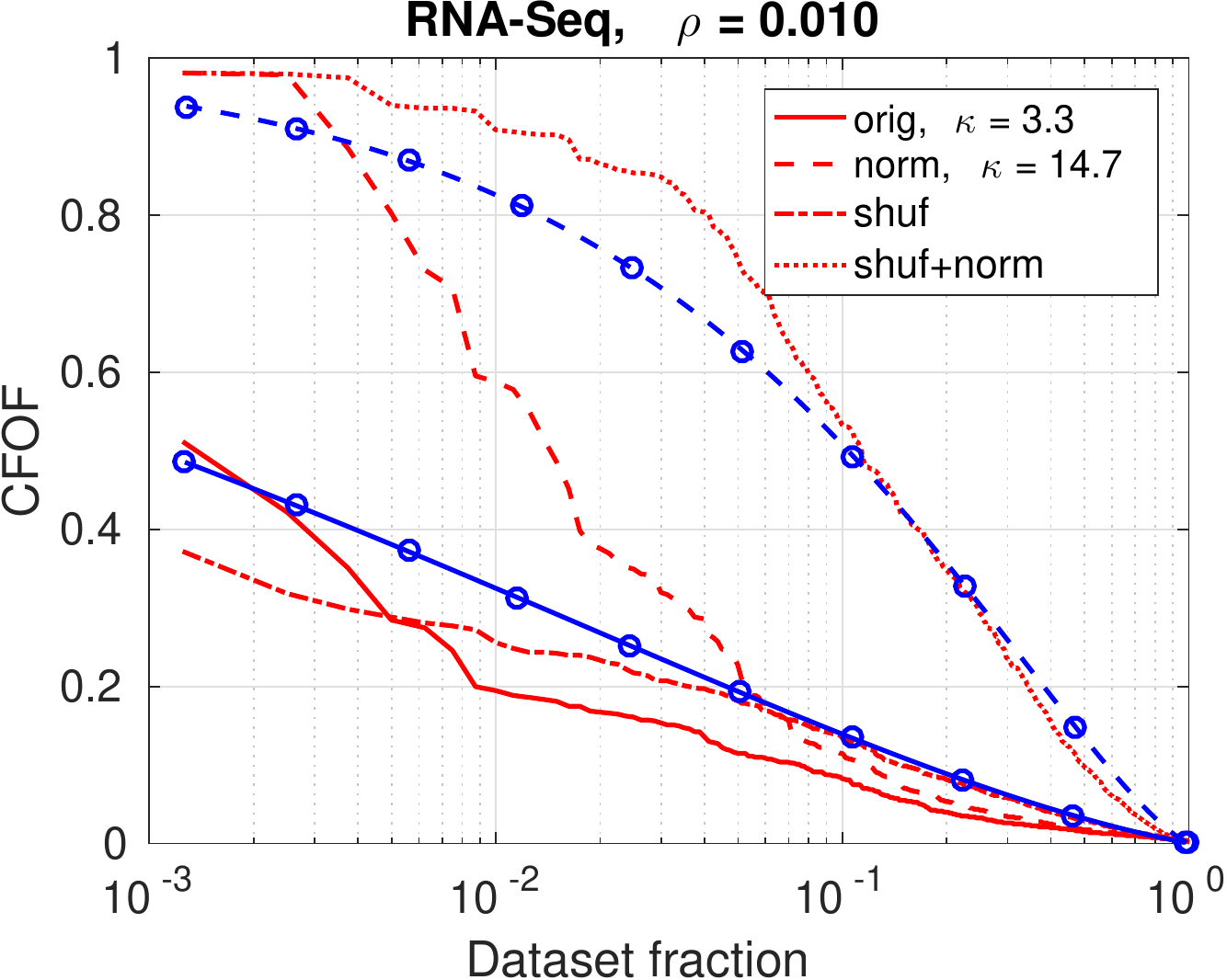}}
\caption{[Best viewed in color.]
Distribution of $\CFOF$ scores for $\varrho = 0.01$:
original data (red solid line), 
normalized data (red dashed line),
shuffled data (red dashed-dotted line), 
shuffled and normalized data (red dotted line),
theoretical distribution with kurtosis $\kappa=\kappa_{orig}$,
(blue solid line with circles markers), and 
theoretical distribution with kurtosis $\kappa=\kappa_{norm}$
(blue dashed line with circles markers).}
\label{fig:realconc01}
\end{figure}

\subsection{Concentration on real data}
\label{sect:exp_conc}

In this section and in the following ones,
we employed the $\HardCFOF$ definition.

To study the concentration behavior of $\CFOF$ on real
data we considered some datasets
having dimensionality
varying from hundreds to thousands,
namely
\textit{SIFT} ($d=128$, $n=10,\!000$),\footnote{The 
dataset consist of the base vectors of the {\tt ANN\_SIFT10K} vector set
available at {\tt http://corpus-texmex.irisa.fr/}.},
\textit{Isolet} ($d=617$, $n=240$),\footnote{See {\tt https://archive.ics.uci.edu/ml/datasets/isolet}.}
\textit{MNIST} test ($d=698$, $n=10,\!000$),\footnote{See {\tt http://yann.lecun.com/exdb/mnist/}.}
\textit{Sports} ($d=5,\!625$, $n=9,\!120$),\footnote{See {\tt https://archive.ics.uci.edu/ml/datasets/daily+and+sports+activities}.}
\textit{Arcene} ($d=10,\!000$, $n=900$),\footnote{See {\tt https://archive.ics.uci.edu/ml/datasets/Arcene}.}
and
\textit{RNA-Seq} ($d=20,\!531$, $n=801$).\footnote{See {\tt https://archive.ics.uci.edu/ml/datasets/gene+expression+cancer+RNA-Seq}.}
Attributes having null variance have been removed from
the original datasets.

To help result interpretation,
in addition to the original data, we considered also normalized and shuffled data.
The normalized data is obtained by mean centering the data and then
dividing by the standard deviation. 
The shuffled data is obtained by randomly permuting the elements within 
every attribute. As already noted in the literature \cite{Francois2007},
the shuffled dataset is marginally distributed as the original one,
but because all the relationships between variables are destroyed, its 
component are independent, and its intrinsic dimension is equal
to its embedding dimension.

\medskip
Figures \ref{fig:realconc001} and \ref{fig:realconc01} show the distribution of the
$\CFOF$ scores for $\varrho = 0.001$ and $\varrho = 0.01$, respectively,
on the above described datasets.
The abscissa reports score values, either in linear or in logarithmic scale, 
while the ordinate reports
the dataset fraction, always in logarithmic scale. Scores are ranked in descending
order, thus the scores on the left are associated with the 
outliers. Note that in the plots the dataset fraction increases exponentially
from left to right.

Specifically, red curves are relative to the scores associated with
the original data (solid line), the normalized data (dashed line),
the shuffled data (dashed-dotted line), and the 
shuffled and normalized data (dotted line).
To compare the empirical score distribution with the theoretical
distribution described in Sections \ref{sect:cfofcdf} and \ref{sect:cfofcdfvskurt}, we
measured the kurtosis $\kappa_{orig}$ of the original data
and the kurtosis $\kappa_{norm}$ of the normalized data
(see Equations \eqref{eq:moments} and \eqref{eq:kurtosis} and
also Equations \eqref{eq:kurt_real} and \eqref{eq:kurt_normreal} in the following). 
Note that shuffling the
data has no effect on the kurtosis.
The blue solid line with circles markers is the theoretical distribution
of the $\CFOF$ scores 
having kurtosis $\kappa_{orig}$,
and the blue dashed line with circles markers is the theoretical distribution
of the $\CFOF$ scores
having kurtosis $\kappa_{norm}$. These curves have been
obtained by leveraging the expression of Equation \eqref{eq:cfofcdf}.

In general, the curves show that the distribution of the 
scores is unbalanced, 
with a small fraction of points associated with the 
largest scores and the majority of the points associated
with score values that may vary over orders of magnitude.
These curve witness for the absence of concentration on real life data.

By comparing curves for $\varrho = 0.001$ and
$\varrho = 0.01$, it can be seen that  
scores increase with the parameter $\varrho$,
as expected from the theoretical analysis.

Moreover, normalizing data 
enlarges the probability to observe larger scores.
More specifically,
this effect can be explained 
in the light of the analysis of Section \ref{sect:cfofcdfvskurt},
by noting that often the kurtosis of the normalized data increases 
sensibly. 
E.g. consider 
$\kappa_{orig}=2.2$ versus
$\kappa_{norm}=24.8$ for \textit{Arcene}
or $\kappa_{orig}=2.2$ versus
$\kappa_{norm}=895.8$ for \textit{MIST}.

Consider an independent non-identically distributed random vector
$\Vect{X}$ having, w.l.o.g., null mean.\footnote{The vector $\Vect{X}$
can be always be replaced by the vector $\Vect{X}-\mu_{\Vect{X}}$.}
Then, according to Equations \eqref{eq:kurtosis} and \eqref{eq:moments},
it can be considered equivalent to an i.i.d. random vector 
having kurtosis
\begin{equation}\label{eq:kurt_real}
\kappa_{\Vect{X}} = \frac{ \tilde{\mu}_4(\Vect{X})}{\tilde{\mu}_2^2(\Vect{X})} =
\frac{\E[{\mu}_4(X_i)]}{\E[{\mu}_2(X_i)^2]},
\end{equation}
and the kurtosis $\kappa_{orig} = \kappa_{\Vect{X}}$ of the original vector
is given by the ratio between the average fourth central moment
and the average squared second central moment of the coordinates of $\Vect{X}$.

Consider now the normalized random vector $\Vect{Z}$ such that,
for $i=1,\ldots,d$,
$Z_i = X_i/\sigma(X_i)$. Then, $\mu_2(Z_i)=\sigma^2(Z_i)=1$, and
\begin{align}\label{eq:kurt_normreal}
\kappa_{\Vect{Z}} = 
\frac{ \tilde{\mu}_4(\Vect{Z})}{\tilde{\mu}_2^2(\Vect{Z})} =
\frac{ \E[{\mu}_4(Z_i)]}{\E[{\mu}_2^2(Z_i)]} =
\frac{1}{d} \sum_{i=1}^d \E\left[\left(\frac{X_i}{\sqrt{\E[X_i^2]}}\right)^4\right] = 
\nonumber
\\
= \frac{1}{d} \sum_{i=1}^d \frac{\E[X_i^4]}{\E[X_i^2]^2}  =
\frac{1}{d} \sum_{i=1}^d \frac{\mu_4(X_i)}{\mu_2^2(X_i)}  =
\frac{1}{d} \sum_{i=1}^d \kappa_{X_i} = 
\E[\kappa_{X_i}],
\end{align}
from which it can be concluded that 
the kurtosis $\kappa_{norm}$ of the normalized vector $\Vect{Z}$
is given by the average kurtosis 
of the coordinates of the original vector $\Vect{X}$

Thus, the kurtosis of the normalized data is larger
than that of the original data
when attributes have non-homogeneous kurtosis $\kappa_{X_i}$,
with some attributes having extreme deviations,
while the fourth central moments $\mu_4(X_i)$ 
and the squared second central moments $\mu_2(X_i)^2$
of the attributes are, on the average, much more similar.

For example, consider the dataset \textit{MNIST}
having $\kappa_{orig}=2.2$ and $\kappa_{norm}=895.8$.
Despite its moments $\tilde{\mu}_4$ and $\tilde{\mu}_2^2$ are comparable,
the kurtosis of the single attributes vary of four orders
of magnitude (from about $1$ to $10^4$).
This is related to the presence of a large number of zeros ($77.4\%$
of the whole dataset, after removing attributes consisting 
only of zeros) which are not uniformly distributed among attributes.
Indeed, the attributes with very high kurtosis consist almost
entirely of zeros: the attribute having kurtosis $10^4$
contains only one non-zero entry, while the attributes
having small kurtosis contain at least the fifty percent of non-zero values.
Note that is not our intention to argue that normalizing makes always sense, 
but only to observe and explain the behavior of the score on normalized data.

To understand the effect of normalization 
on the result of the $\CFOF$ technique, we 
computed the Spearman rank correlation
between the $\CFOF$ scores associated with the original data
and the $\CFOF$ scores associated with normalized data.
The correlations are:
$0.9511$ for \textit{SIFT}, $0.7454$ for \textit{Isolet}, 
$0.8338$ for \textit{MNIST}, $0.8934$ for \textit{Sports}, 
$0.6090$ for \textit{Arcene}, and $0.8898$ for \textit{RNA-Seq}.
These correlations suggest that in different cases there is some agreement
between the ranking induced by the method on original and normalized data.

\medskip
As already noticed,
the shuffled data is marginally distributed as the original data,
but its component are independent.
Thus, we expect that shuffled data 
complies better with the theoretical prediction.
Indeed, the $\CFOF$ score distribution of the shuffled data
follows quite closely the theoretical distribution of 
Equation \eqref{eq:cfofcdf}. This experiment 
provides strong evidence 
for the validity of the analysis accomplished
in Section \ref{sect:cfofcdf}.
This is true both for the original shuffled data,
compare the red dash-dotted curve with the solid blue line,
and for the normalized shuffled data, compare the red 
dotted line with the blue dashed line.
E.g., for \textit{Arcene} the shuffled data curves
appear to be superimposed with the theoretical ones.
In any case, the general trend of the theoretical $\CFOF$
distribution is always followed by both the original and the 
normalized data.

\begin{figure}[t]
\centering
\subfloat[]
{\includegraphics[width=0.33\textwidth]{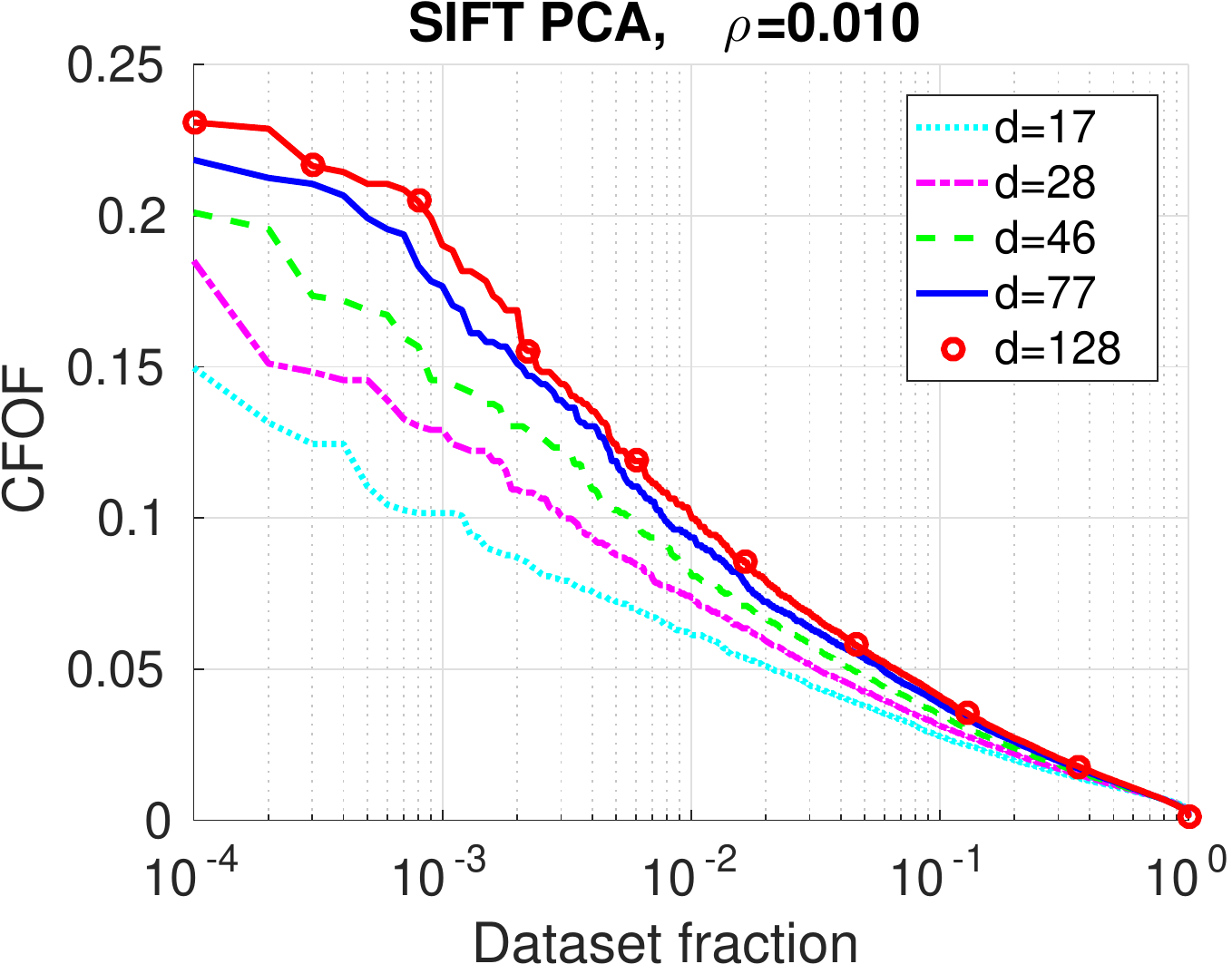}}
~
\subfloat[]
{\includegraphics[width=0.33\textwidth]{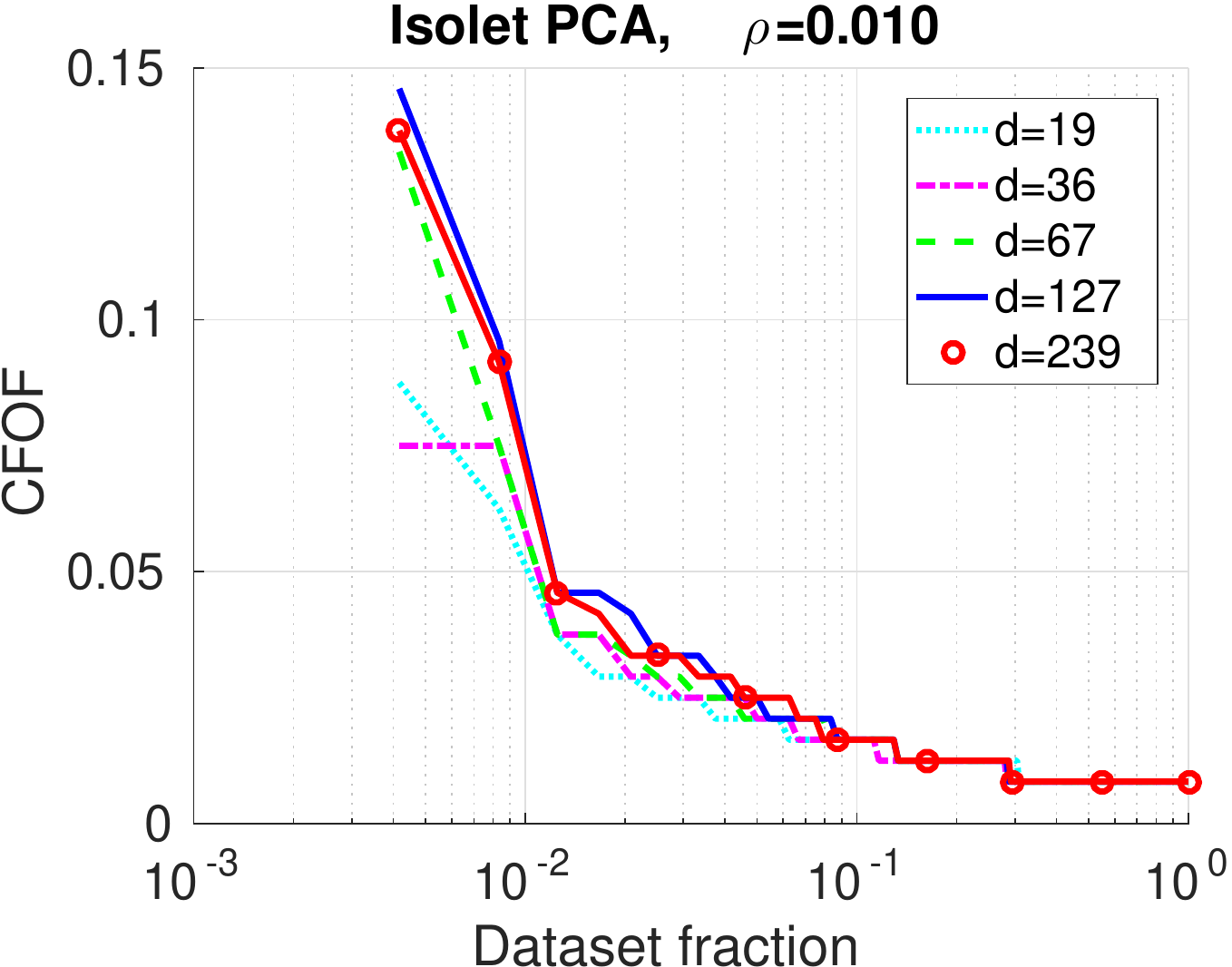}}
~
\subfloat[]
{\includegraphics[width=0.33\textwidth]{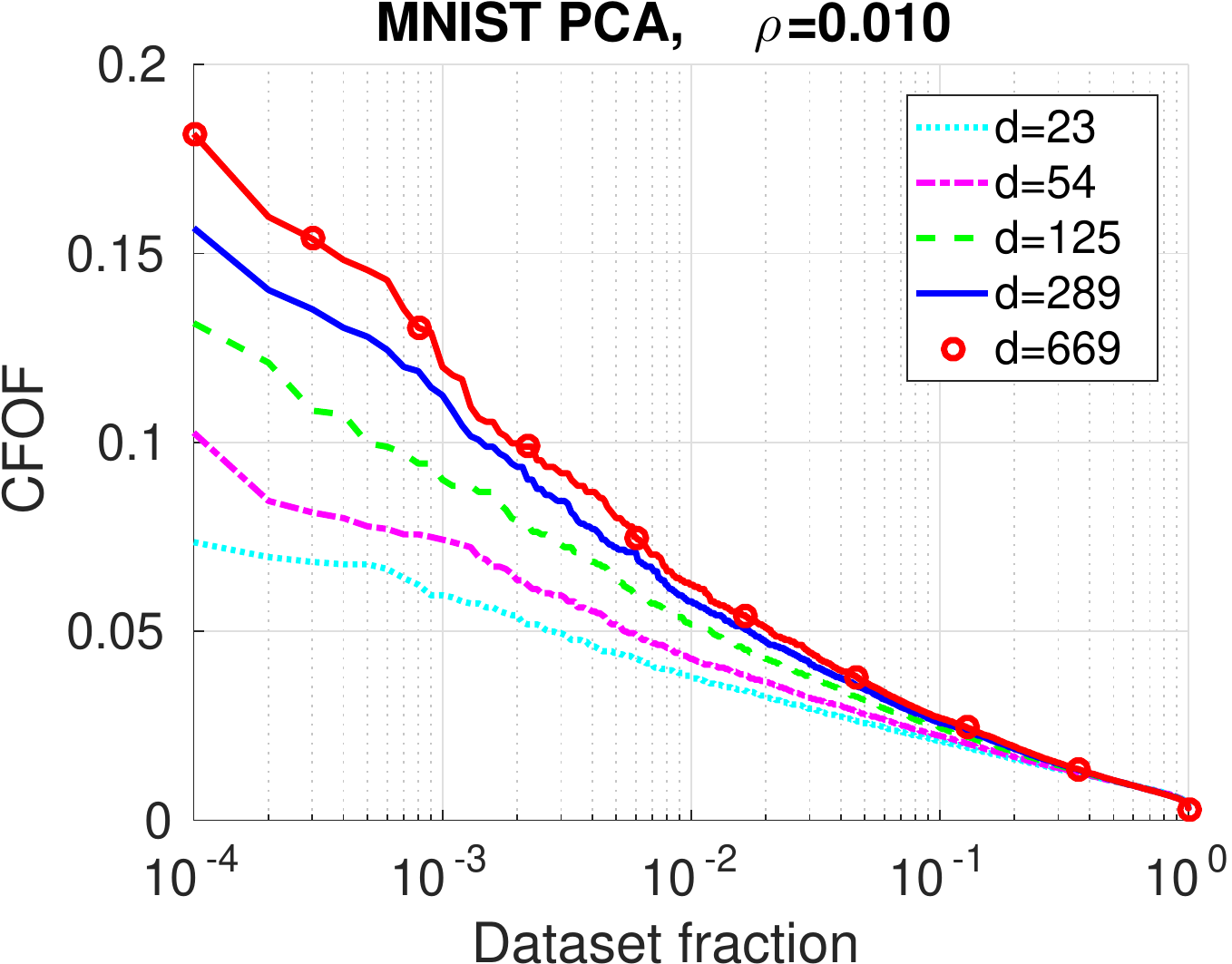}}
\\
\subfloat[]
{\includegraphics[width=0.33\textwidth]{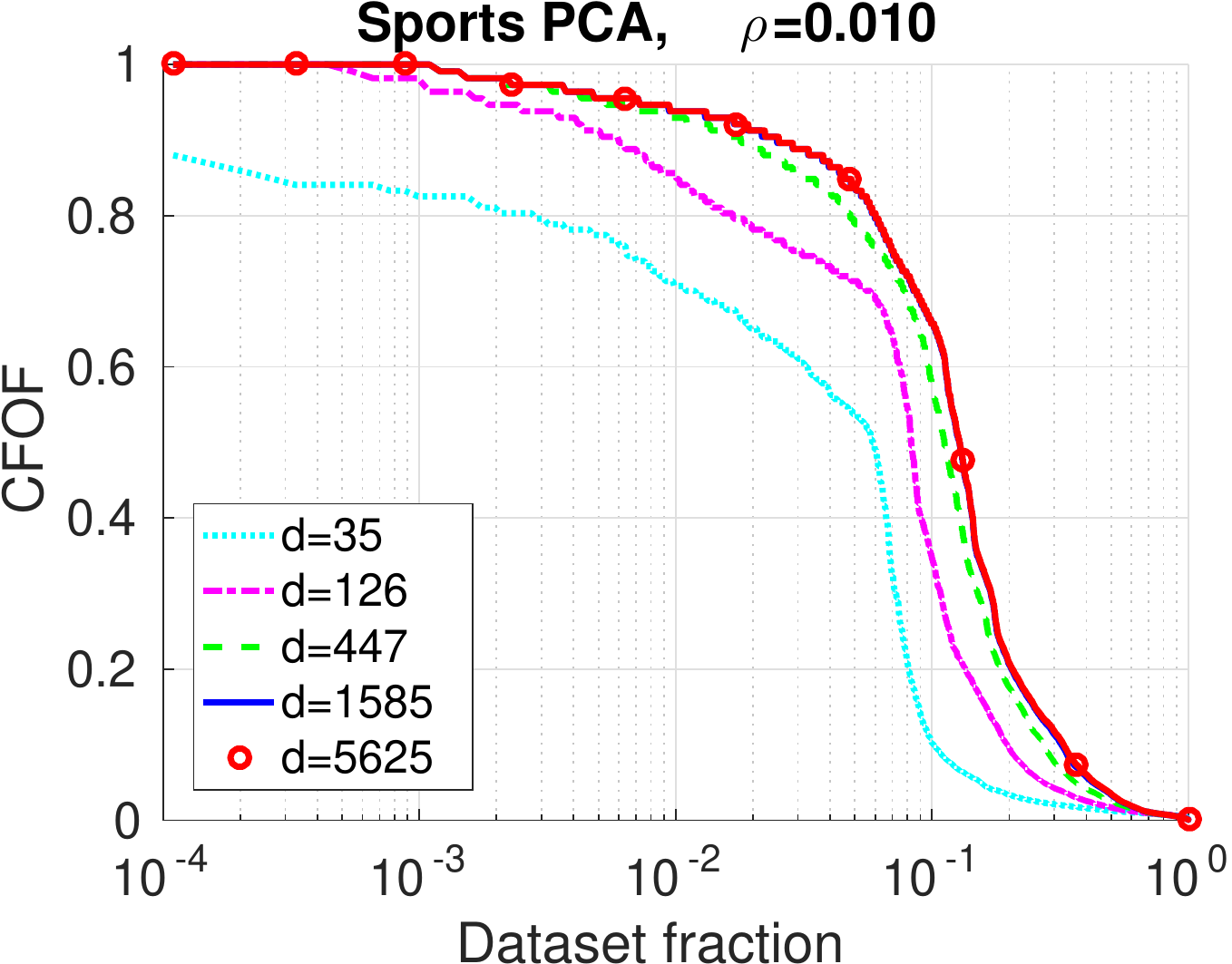}}
~
\subfloat[]
{\includegraphics[width=0.33\textwidth]{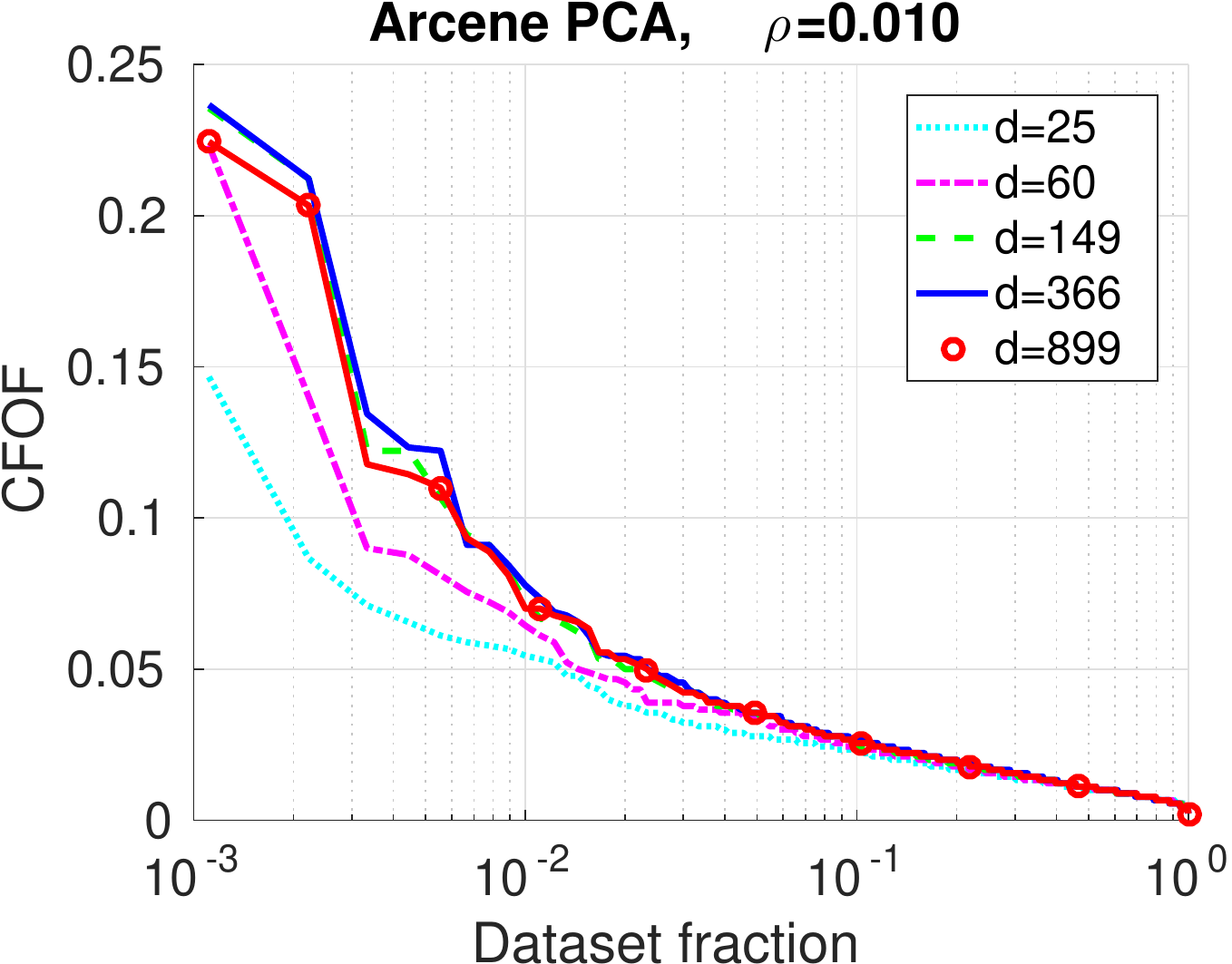}}
~
\subfloat[]
{\includegraphics[width=0.33\textwidth]{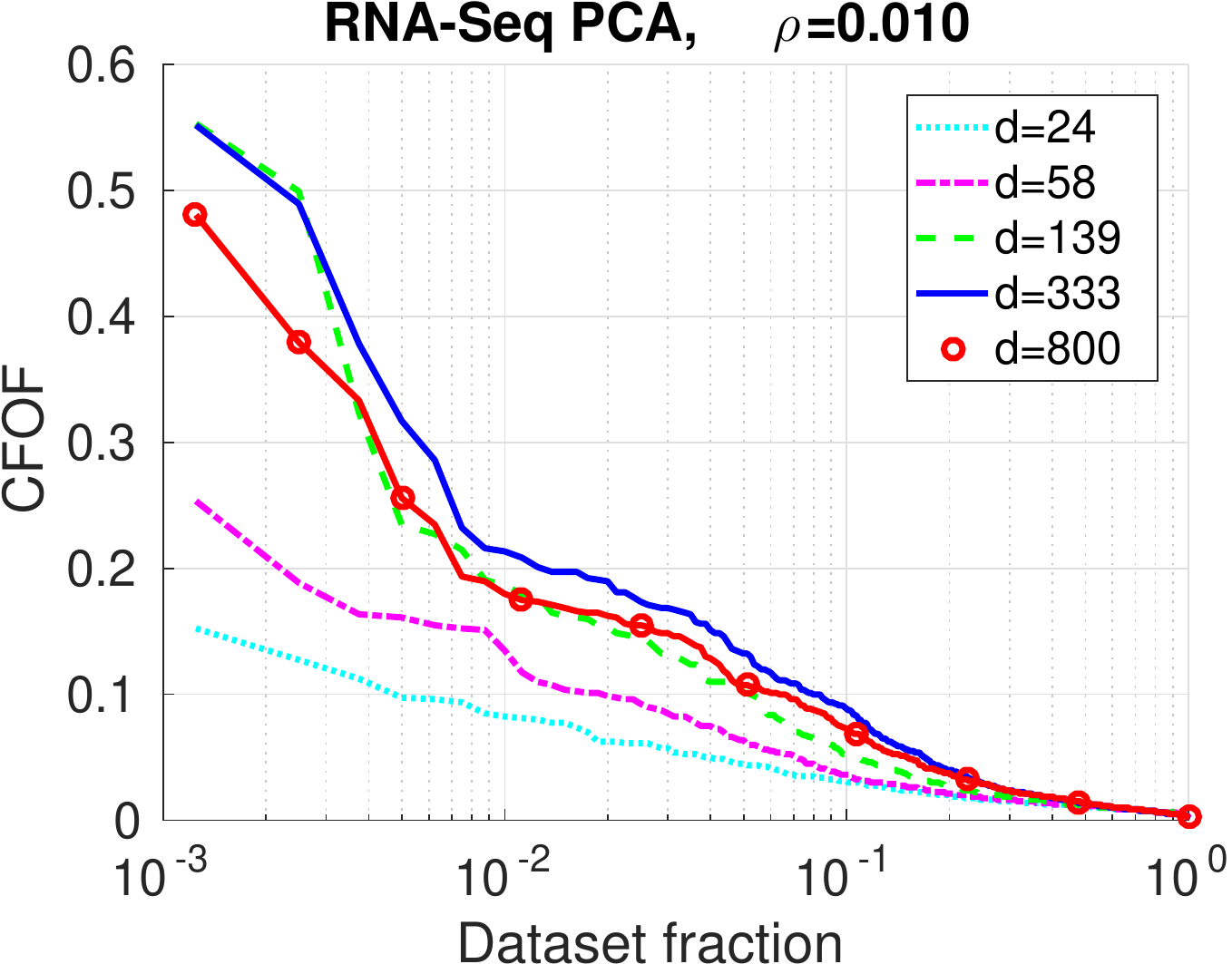}}
\caption{[Best viewed in color.] $\CFOF$ score distribution for $\varrho=0.01$
on real data projected
on the first principal components. The curve marked with
small circles concerns the full feature space.}
\label{fig:dupont1}
\end{figure}

\medskip
To study the effect of varying the dimensionality,
we used Principal Component Analysis to determine 
the principal components of the original data and, then,
considered datasets of increasing dimensionality
obtained by projecting the original dataset on its
first $d$ principal components.
Here $d$ is log-spaced between $1$ and
the number of distinct principal components
$d_{max} = \min \{ D, n-1 \}$,
where $D$ denotes the number of attributes 
of the original dataset.

Figure \ref{fig:dupont1} shows the distribution of $\CFOF$
scores for $\varrho = 0.01$. 
The dotted (red) curve concerns the full feature space, while the other
curves are relative to the projection on the first principal
components.
It can be seen that the general form of the score distribution holding
in the full feature space is still valid for the subspace projected
datasets.

\begin{figure}[t]
\centering
\subfloat[]
{\includegraphics[width=0.33\textwidth]{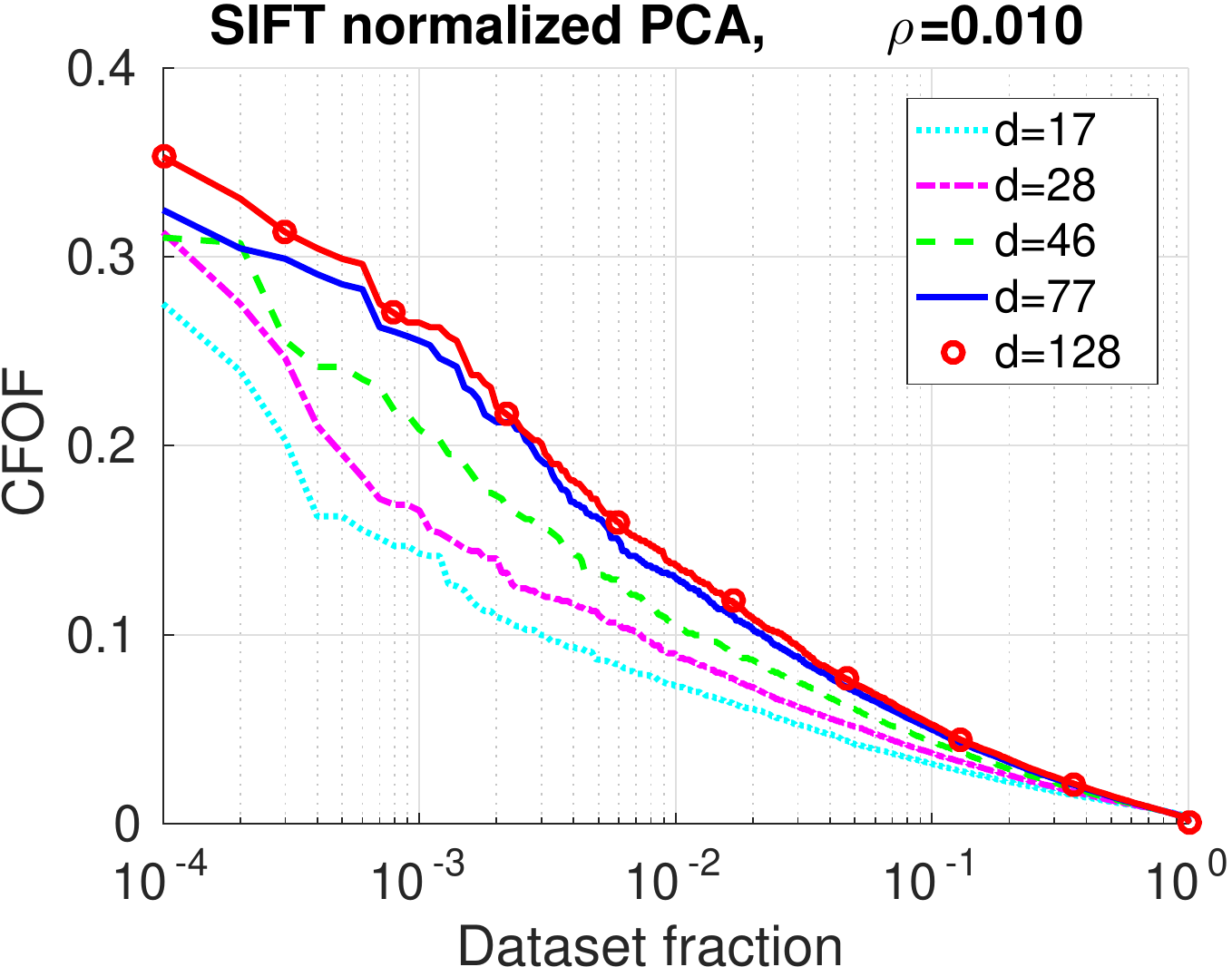}}
~
\subfloat[]
{\includegraphics[width=0.33\textwidth]{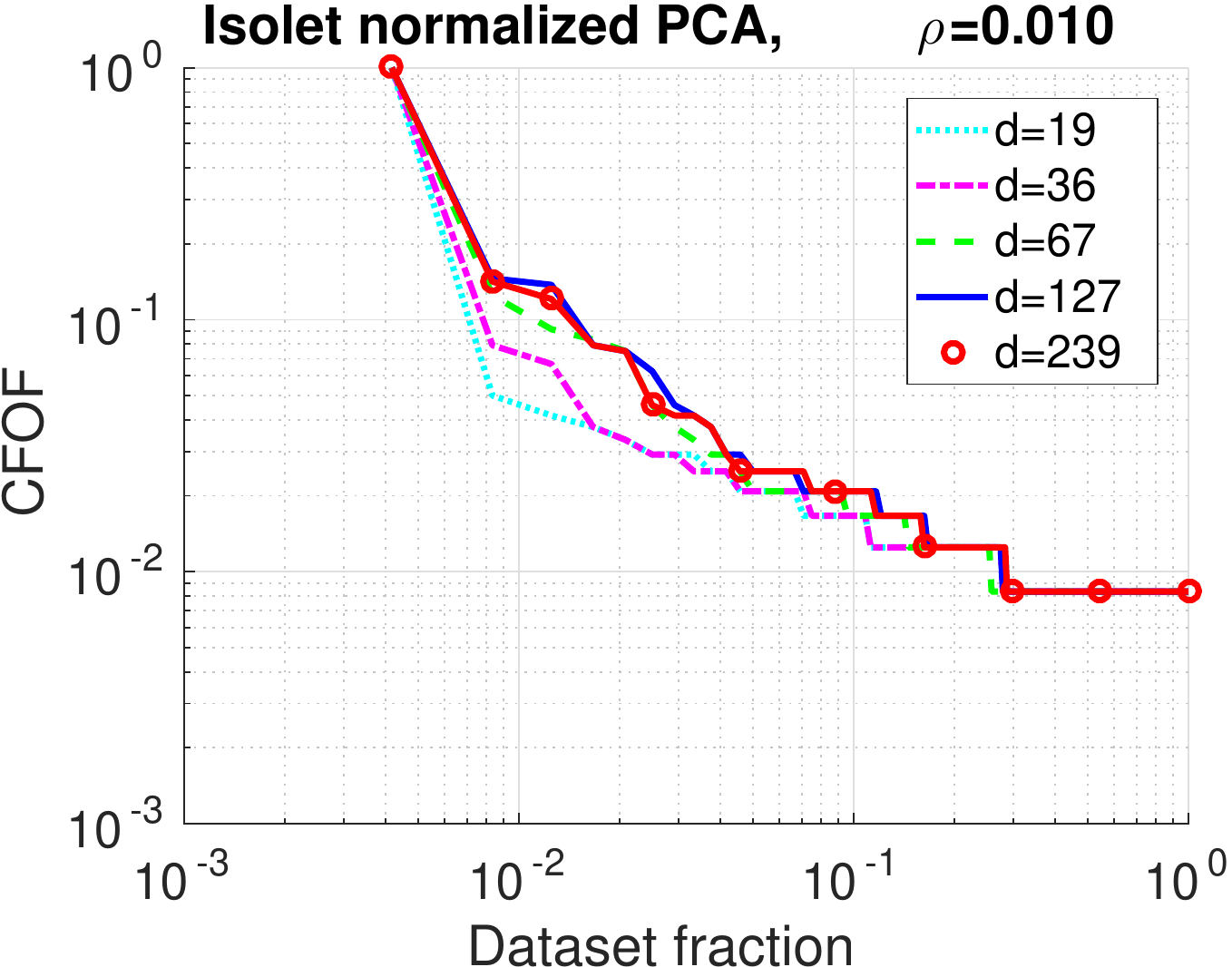}}
~
\subfloat[]
{\includegraphics[width=0.33\textwidth]{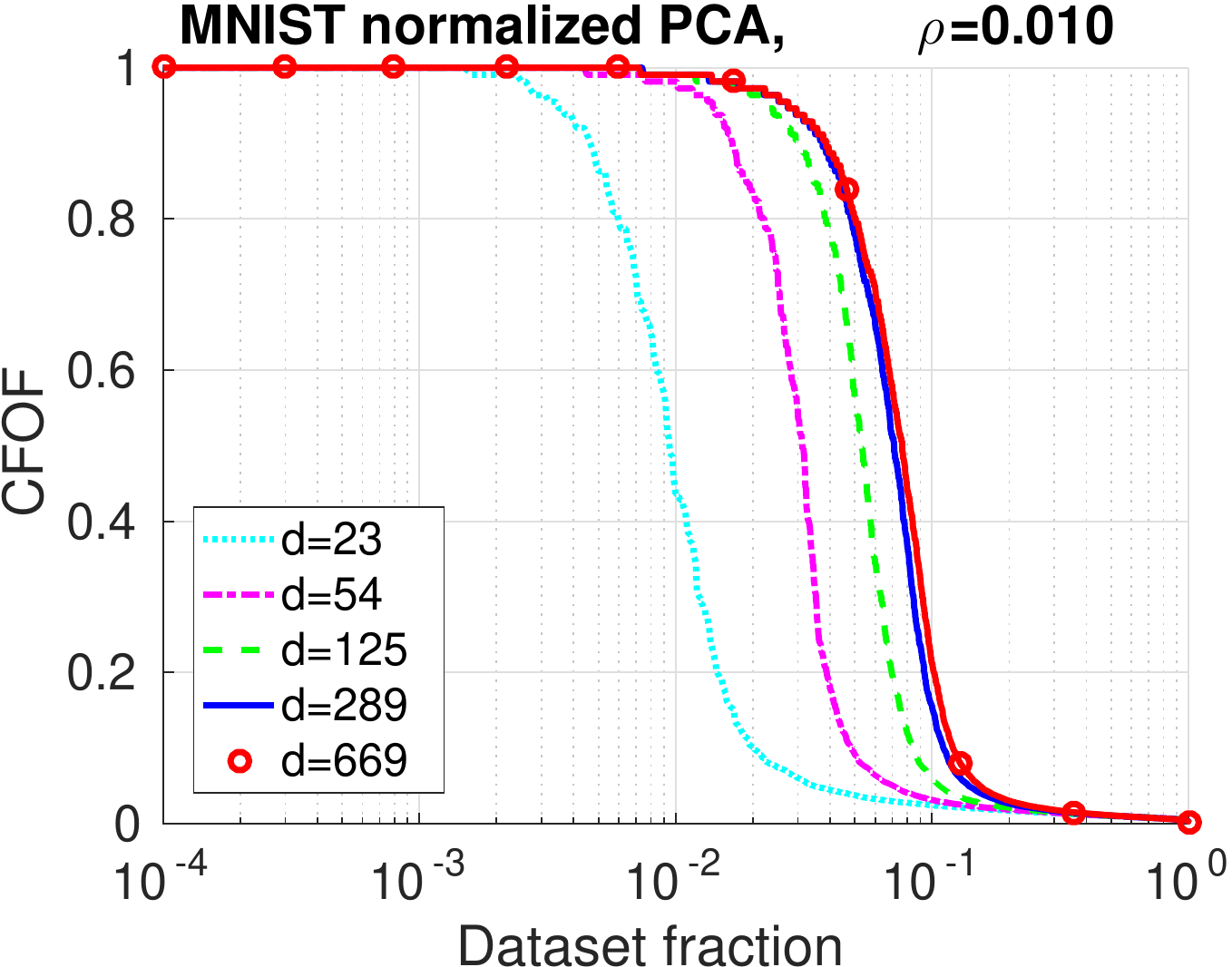}}
\\
\subfloat[]
{\includegraphics[width=0.33\textwidth]{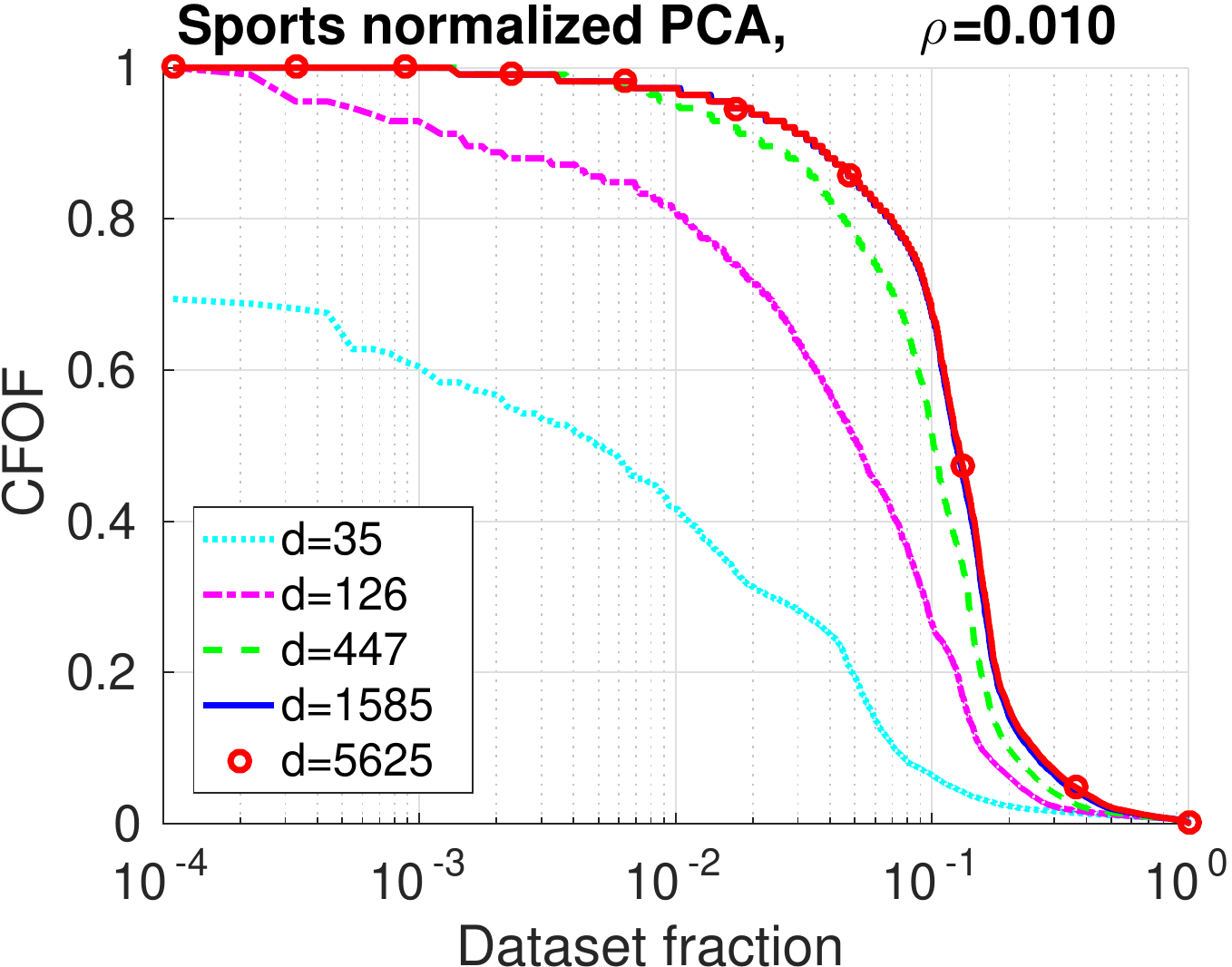}}
~
\subfloat[]
{\includegraphics[width=0.33\textwidth]{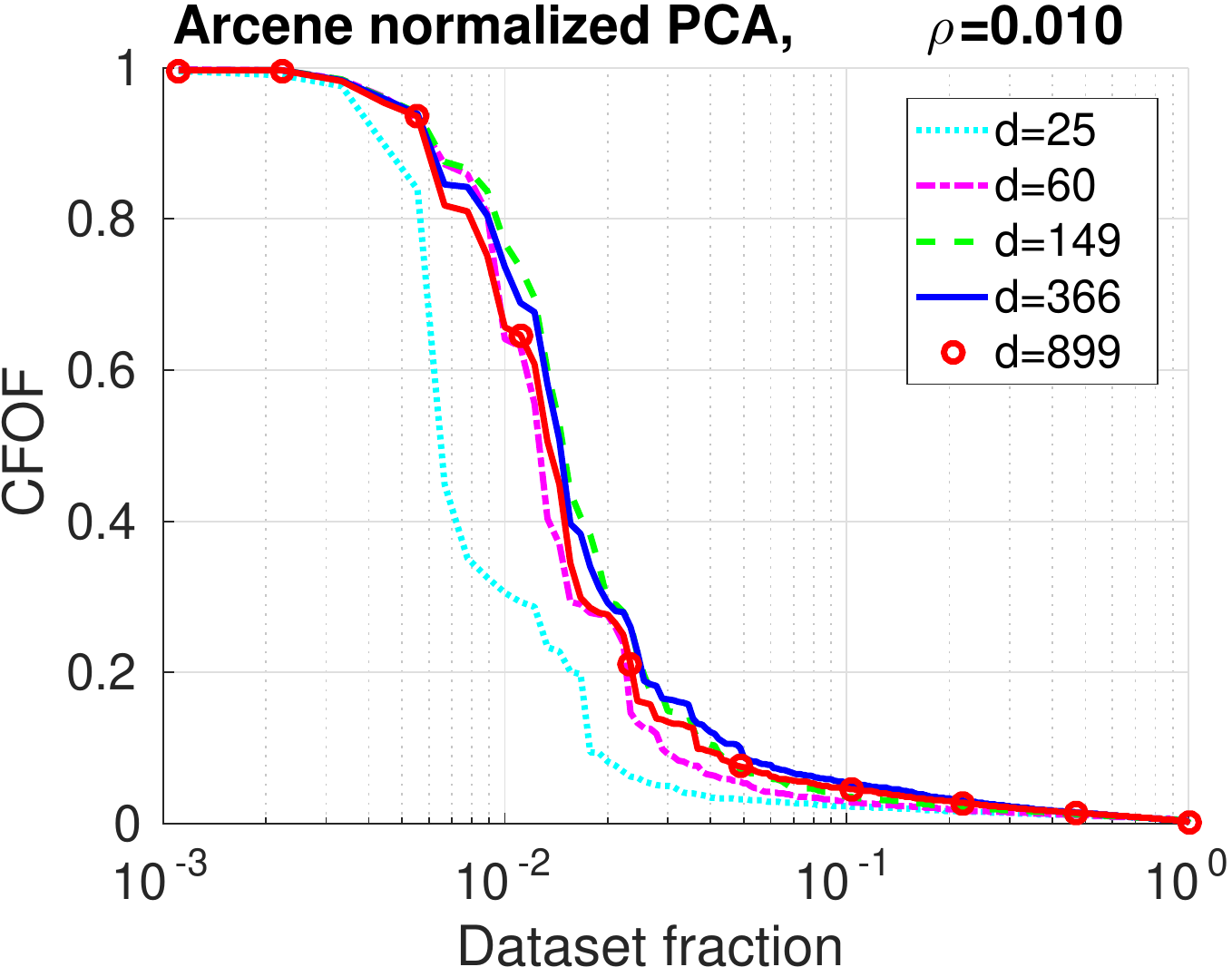}}
~
\subfloat[]
{\includegraphics[width=0.33\textwidth]{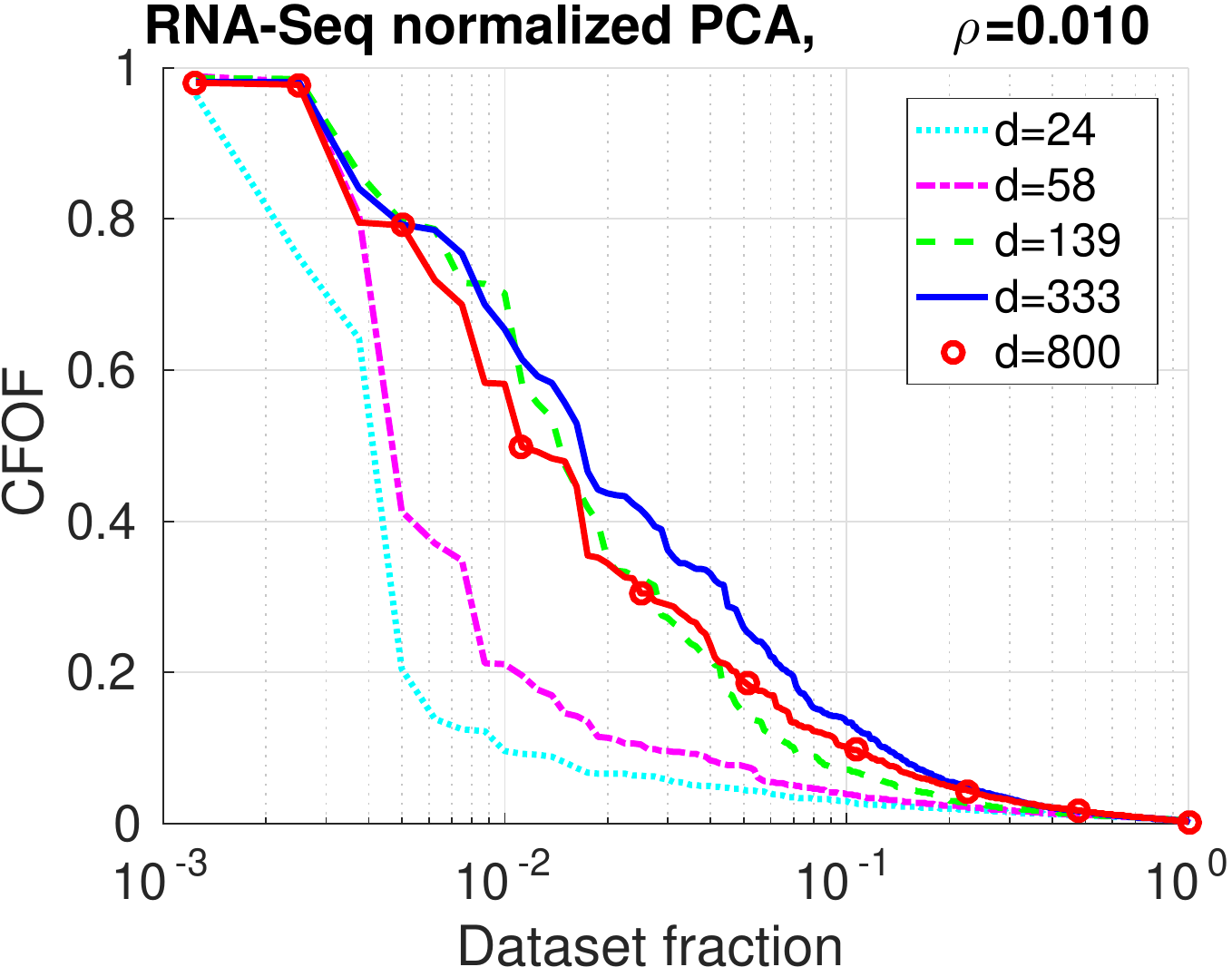}}
\caption{[Best viewed in color.] $\CFOF$ score distribution for $\varrho=0.01$
on real data normalized and projected
on the first principal components. The curve marked with small circles 
concerns the full
feature space.}
\label{fig:dupont2}
\end{figure}

In order to compare the different curves,
we define a measure of abruptness
of an outlier score distribution.
Let $Sc$ be a set of outlier scores, let
$\mbox{top}_\alpha(Sc)$ denote the scores associated with the 
top $\alpha$ outliers, and let 
$\mbox{med}(Sc)$ denote the median score in $Sc$.
We call \textit{concentration ratio} $\mbox{CR}$ the ratio
\begin{equation}\label{eq:conc_ratio}
\mbox{CR}_\alpha(Sc) = \frac{\sigma\big(\mbox{top}_{\alpha}(Sc)\big)}{\mbox{med}({Sc})} 
\end{equation}
between the standard deviation associated with the 
top $\alpha$ outlier scores and the median outlier score.
The numerator term above 
measures how well the score distribution 
separates the most deviating
observations of the population.
Since the standard deviations are relative to scores having
different distributions,
the term in the denominator serves the purpose of normalizing the
above separation, expressing it as a quantity
relative to the magnitude of the most central score.
We note that the mean is not suitable for
normalizing the top outlier scores,
in that scores may vary of different orders of magnitude,
with possibly a few large score values and a lot of 
negligible score values, and thus the mean would be mostly influenced by
the most extreme scores.

From the analysis of Section \ref{sect:cfofcdfvskurt},
extremely high kurtosis distributions are 
those maximizing the probability to observe large scores
(close to $1$), but also the ones 
associated with the less steep $\CFOF$ score distribution.
The lower bound to the relative concentration
for high kurtosis distributions 
can be obtained by exploiting Theorem \ref{th:cfofcdfvskurt}.
In this case,
$\lim_{\kappa\rightarrow\infty} Pr[\CFOF(\Vect{X})\le s] = s$
and the $\CFOF$ scores are uniformly distributed between $0$ and $1$.
Then, $\sigma(\mbox{top}_\alpha(Sc))$ is the standard deviation
of an uniform distribution in the interval $[1-\alpha,\alpha]$,
that is $\frac{\alpha}{2\sqrt{3}}$, and $\mbox{med}(Sc)$ is the
median value of an uniform distribution in the interval $[0,1]$,
that is $0.5$. Hence, 
$\lim_{\kappa\rightarrow\infty} \mbox{CR}_\alpha(Sc) = \frac{\alpha}{4\sqrt{3}}$.

\begin{table}[t]
\centering
\begin{tabular}{|l||r|r|r|r|r|}
\multicolumn{6}{c}{original data, $\alpha = 0.1$, $\varrho = 0.01$} \\
\hline
& $d_1$ & $d_2$ & $d_3$ & $d_4$ & $d_{max}$ \\
\hline\hline
\textit{SIFT}    & $ 1.334$ & $ 1.661$ & $1.891$ & $2.100$ & $2.213$ \\
\textit{Isolet}  & $ 1.985$ & $ 1.933$ & $3.018$ & $3.496$ & $3.263$ \\
\textit{MNIST}   & $ 0.762$ & $ 0.974$ & $1.300$ & $1.568$ & $1.747$ \\
\textit{Sports}  & $17.014$ & $ 9.161$ & $4.246$ & $2.847$ & $2.732$ \\
\textit{Arcene}  & $ 1.712$ & $ 2.387$ & $3.032$ & $3.100$ & $2.900$ \\
\textit{RNA-Seq} & $ 2.063$ & $ 3.549$ & $7.354$ & $5.781$ & $4.834$ \\
\hline
\multicolumn{6}{c}{normalized data, $\alpha = 0.1$, $\varrho = 0.01$} \\
\hline
& $d_1$ & $d_2$ & $d_3$ & $d_4$ & $d_{max}$ \\
\hline\hline
\textit{SIFT}    & $ 1.880$ & $ 2.146$ & $ 2.445$ & $ 2.755$ & $ 2.810$ \\
\textit{Isolet}  & $23.924$ & $23.914$ & $23.845$ & $23.830$ & $23.818$ \\
\textit{MNIST}   & $23.795$ & $36.031$ & $34.974$ & $26.449$ & $22.963$ \\
\textit{Sports}  & $12.145$ & $14.678$ & $ 8.780$ & $ 4.543$ & $ 4.062$ \\
\textit{Arcene}  & $21.883$ & $24.090$ & $22.639$ & $18.310$ & $19.433$ \\
\textit{RNA-Seq} & $12.858$ & $15.504$ & $18.314$ & $12.606$ & $12.297$ \\
\hline
\end{tabular}
\caption{Concentration ratio (see Equation \eqref{eq:conc_ratio}) 
obtained by $\CFOF$ on real data projected
on its $d_i$ first principal components for $\alpha=0.1$ and $\varrho = 0.01$.}
\label{table:RC_PCA}
\end{table}

Figure \ref{table:RC_PCA} reports the concentration ratio
as the dimensionality increases for both original and normalized data.
For the five values of the dimensions $d_1, d_2, d_3, d_4, d_{max}$, please
refer to the legends of Figures \ref{fig:dupont1} and \ref{fig:dupont2}.

The table shows that the trend of the concentration ratio
depends on the data at hand,
in some cases it is monotone increasing
and and some others it
reaches a maximum at some intermediate dimensionality.

Unexpectedly, normalizing raises the value of the concentration ratio.
Probably this can be explained by the fact that,
despite the kurtosis is increased,
correlations between variables are not lost,
and, hence, the final effect 
seems to be that of emphasizing scores of the most
deviating points.
We note that 
a secondary effect of normalization is to
prevent variables
from having too little effect on the distance values.
As pointed out in Section \ref{sect:cfofcdf}, Equation \eqref{eq:cfofcdf} holds
also in the case of independent non-identically distributed
random variables provided, that they have comparable central moments.
However,
since the shuffled original data 
is in good agreement with the theoretical prediction,
normalization do not seem central to achieve 
comparable moments for these datasets.

Table \ref{table:relconc_comp} reports the concentration ratios
obtained by $\CFOF$, ODIN, $\antiHub$ ($\aHub$, for short), $\aKNN$, $\LOF$, and $\iForest$
on the above datasets.
$\ODIN$ presents in different cases extremely small concentration ratios.
Indeed, from Equation \eqref{eq:nk}, the standard deviation 
of the top CFOF scores tend to $0$, while the median score tends to $\varrho$.
In any case,
$\CFOF$ has concentration ratios that are different orders of magnitude larger than
that of the other scores.
$\antiHub$ mitigates the concentration problems of
$\ODIN$.
As for the concentration ratios of
distance-based and density-based methods, usually they
get larger on normalized data with increased kurtosis.
For example, on the normalized \textit{MNIST} dataset,
having a very large kurtosis,
$\aKNN$ and $\LOF$ show the greatest concentration ratio.
Indeed, according the analysis of Section \ref{sect:conc_other},
concentration of these scores can be avoided only for data having
infinite kurtosis.
$\iForest$ has  very small concentration ratios on normalized
data, even for large kurtosis values. The same holds for the original data, 
except that now the ratio appears to benefit of a larger kurtosis.

\begin{table}[t]
\centering
\begin{tabular}{|l|r||r|r|r|r|r|r|}
\multicolumn{8}{c}{original data, $\alpha = 0.1$, $\varrho = 0.01$}\\
\hline
 & $\kappa_{orig}$ & $\CFOF$ & $\ODIN$ & $\aHub$ & $\aKNN$ & $\LOF$ & $\iForest$ \\
\hline\hline
\textit{SIFT}     &  $3.0$ & $2.213$ & $0.071$ & $0.264$ & $0.052$ & $0.038$ & $0.015$ \\ 
\textit{Isolet}   &  $3.2$ & $3.263$ & $<10^{-6}$ & $0.050$ & $0.055$ & $0.055$ & $0.011$ \\ 
\textit{MNIST}    &  $2.2$ & $1.747$ & $0.085$ & $0.168$ & $0.056$ & $0.141$ & $0.009$ \\ 
\textit{Sports}   & $10.4$ & $2.732$ & $0.013$ & $0.368$ & $0.411$ & $1.492$ & $0.154$ \\ 
\textit{Arcene}   &  $2.2$ & $2.900$ & $0.098$ & $0.163$ & $0.169$ & $0.064$ & $0.008$ \\ 
\textit{RNA-Seq}  &  $3.3$ & $4.834$ & $<10^{-6}$ & $0.548$ & $0.107$ & $0.093$ & $0.011$ \\ 
\hline
\multicolumn{7}{c}{normalized data, $\alpha$ = $0.1$, $\varrho$ = $0.01$}\\
\hline
  & $\kappa_{norm}$ & $\CFOF$ & $\ODIN$ & $\aHub$ & $\aKNN$ & $\LOF$ & $\iForest$ \\
\hline\hline
\textit{SIFT}        &   $5.7$ &  $2.810$ & $0.066$ & $0.285$ & $0.054$ & $0.036$ & $0.012$ \\ 
\textit{Isolet}      &  $10.0$ & $23.817$ & $<10^{-6}$ & $0.164$ & $0.270$ & $0.073$ & $0.016$ \\ 
\textit{MNIST}       & $895.8$ & $22.963$ & $0.077$ & $0.389$ & $1.178$ & $1.030$ & $0.009$ \\ 
\textit{Sports}      &  $14.5$ & $4.0617$ & $0.005$ & $0.186$ & $0.318$ & $0.807$ & $0.013$ \\ 
\textit{Arcene}      &  $24.8$ & $19.433$ & $<10^{-6}$ & $0.457$ & $0.284$ & $0.143$ & $0.009$ \\ 
\textit{RNA-Seq}     &  $14.7$ & $12.297$ & $<10^{-6}$ & $0.514$ & $0.141$ & $0.153$ & $0.011$ \\ 
\hline
\end{tabular}
\caption{Concentration ratio (see Equation \eqref{eq:conc_ratio}) 
obtained by different outlier
detection definitions on real data 
in the full feature space
for $\alpha=0.1$ and $\varrho=0.01$.}
\label{table:relconc_comp}
\end{table}

\subsection{Comparison on synthetic data}
\label{sect:exp_synth}

This section concerns the 
scenario in which all the coordinates are independent.

With this aim,
we generated multivariate data
having the following characteristics:
($i$) independent identically distributed (i.i.d.) coordinates, ($ii$) independent
non-identically distributed (i.non-i.d.) coordinates, and ($iii$)
normal multivariate with non-diagonal covariance matrix.\footnote{
In the last case, data consists of 
$d$-dimensional points coming from a multivariate
normal distribution with covariance matrix $\Sigma = S^\top S$,
where $S$ is a matrix of standard normally distributed random values.}
Since we obtained very similar results for all the above kinds of data,
for the sake of shortness, 
in the following we report results concerning i.i.d. data.

\begin{table}[t]
\centering
\scriptsize
\begin{tabular}{c}
\textit{Unimodal} dataset \\
\begin{tabular}{|l||c|c|c|c||c|c|c|c|}
\cline{2-9}
\multicolumn{1}{c}{} & \multicolumn{4}{|c||}{$AUC_{mean}$} & \multicolumn{4}{|c|}{$AUC_{max}$} \\
\hline 
\multicolumn{1}{|r||}{$Method$ / $d$} & $10$ & $100$ & $1000$ & $10000$ & $10$ & $100$ & $1000$ & $10000$ \\
\hline\hline
$\CFOF$ & $\it 0.9886$ & $\bf 0.9945$ & $\bf 0.9957$ & $\bf 0.9962$ & $\bf 0.9999$ & $\bf 0.9999$ & $\bf 0.9998$ & $\bf 0.9999$ \\
$\ODIN$ & $0.9523$ & $0.8777$ & $0.8331$ & $0.8194$ & $0.9995$ & $0.9997$ & $\it 0.9997$ & $0.9996$ \\
$\antiHub$ & $0.9361$ & $0.7802$ & $0.6560$ & $0.6261$ & $0.9868$ & $0.9639$ & $0.9154$ & $0.8646$ \\
$\aKNN$ & $\bf 0.9939$ & $\it 0.9934$ & $\it 0.9935$ & $\it 0.9925$ & $\it 0.9998$ & $\bf 0.9999$ & $\bf 0.9998$ & $\it 0.9998$ \\
$\LOF$ & $0.9725$ & $0.9850$ & $0.9877$ & $0.9879$ & $\bf 0.9999$ & $\bf 0.9999$ & $\bf 0.9998$ & $\it 0.9998$ \\
$\FastABOD$ & $0.9833$ & $0.9317$ & $0.8466$ & $0.7161$ & $0.9988$ & $0.9990$ & $0.9937$ & $0.9485$ \\
$\iForest$ & --- & --- & --- & --- & $0.9790$ & $0.9045$ & $0.6925$ & $0.5781$ \\
\hline
\end{tabular}
\\
\begin{tabular}{|l||c|c|c|c||c|c|c|c|}
\cline{2-9}
\multicolumn{1}{c}{} & \multicolumn{4}{|c||}{$Prec_{mean}$} & \multicolumn{4}{|c|}{$Prec_{max}$} \\
\hline 
\multicolumn{1}{|r||}{$Method$ / $d$} & $10$ & $100$ & $1000$ & $10000$ & $10$ & $100$ & $1000$ & $10000$ \\
\hline\hline
$\CFOF$ & $\it 0.8321$ & $\bf 0.8758$ & $\bf 0.8840$ & $\bf 0.8780$ & $\bf 0.9760$ & $\bf 0.9790$ & $\bf 0.9673$ & $\bf 0.9730$ \\
$\ODIN$ & $0.6292$ & $0.5387$ & $0.4891$ & $0.4750$ & $0.9380$ & $0.9560$ & $0.9533$ & $0.9500$ \\
$\antiHub$ & $0.5080$ & $0.2602$ & $0.1271$ & $0.1038$ & $0.7120$ & $0.5780$ & $0.4000$ & $0.3180$ \\
$\aKNN$ & $\bf 0.8582$ & $\it 0.8606$ & $\it 0.8572$ & $\it 0.8424$ & $\it 0.9700$ & $\it 0.9740$ & $\it 0.9680$ & $0.9660$ \\
$\LOF$ & $0.7614$ & $0.8171$ & $0.8283$ & $0.8195$ & $\it 0.9700$ & $0.9720$ & $0.9620$ & $\it 0.9680$ \\
$\FastABOD$ & $0.7750$ & $0.6119$ & $0.3941$ & $0.1907$ & $0.9080$ & $0.9200$ & $0.7980$ & $0.5140$ \\
$\iForest$ & --- & --- & --- & --- & $0.7160$ & $0.4140$ & $0.1480$ & $0.0800$ \\
\hline
\end{tabular}
~\\
\textit{Multimodal} dataset \\
\begin{tabular}{|l||c|c|c|c||c|c|c|c|}
\cline{2-9}
\multicolumn{1}{c}{} & \multicolumn{4}{|c||}{$AUC_{mean}$} & \multicolumn{4}{|c|}{$AUC_{max}$} \\
\hline 
\multicolumn{1}{|r||}{$Method$ / $d$} & $10$ & $100$ & $1000$ & $10000$ & $10$ & $100$ & $1000$ & $10000$ \\
\hline\hline
$\CFOF$ & $\bf 0.9730$ & $\bf 0.9851$ & $\bf 0.9837$ & $\bf 0.9825$ & $\bf 0.9988$ & $\bf 0.9989$ & $\bf 0.9989$ & $\bf 0.9989$ \\
$\ODIN$ & $0.9317$ & $0.8346$ & $0.8089$ & $0.7972$ & $0.9987$ & $\it 0.9988$ & $\it 0.9987$ & $\bf 0.9989$ \\
$\antiHub$ & $0.8981$ & $0.7218$ & $0.6192$ & $0.5802$ & $0.9727$ & $0.9649$ & $0.8931$ & $\it 0.8284$ \\
$\aKNN$ & $0.7582$ & $0.7584$ & $0.7584$ & $0.7582$ & $0.7614$ & $0.7622$ & $0.7621$ & $0.7621$ \\
$\LOF$ & $\it 0.9512$ & $\it 0.9605$ & $\it 0.9642$ & $\it 0.9632$ & $\it 0.9961$ & $\it 0.9988$ & $\it 0.9987$ & $\bf 0.9989$ \\
$\FastABOD$ & $0.7513$ & $0.7286$ & $0.6785$ & $0.6204$ & $0.7588$ & $0.7619$ & $0.7620$ & $0.7621$ \\
$\iForest$ & --- & --- & --- & --- & $0.7457$ & $0.6889$ & $0.6005$ & $0.5461$ \\
\hline
\end{tabular}
\\
\begin{tabular}{|l||c|c|c|c||c|c|c|c|}
\cline{2-9}
\multicolumn{1}{c}{} & \multicolumn{4}{|c||}{$Prec_{mean}$} & \multicolumn{4}{|c|}{$Prec_{max}$} \\
\hline 
\multicolumn{1}{|r||}{$Method$ / $d$} & $10$ & $100$ & $1000$ & $10000$ & $10$ & $100$ & $1000$ & $10000$ \\
\hline\hline
$\CFOF$ & $\bf 0.7566$ & $\bf 0.7979$ & $\bf 0.8125$ & $\bf 0.8047$ & $\bf 0.9097$ & $\bf 0.9130$ & $\bf 0.9229$ & $\bf 0.9183$ \\
$\ODIN$ & $0.5891$ & $0.4919$ & $0.4725$ & $0.4576$ & $\it 0.8930$ & $\it 0.9113$ & $0.9130$ & $0.9128$ \\
$\antiHub$ & $0.4358$ & $0.2154$ & $0.1334$ & $0.1066$ & $0.6220$ & $0.6580$ & $0.5000$ & $0.4040$ \\
$\aKNN$ & $0.4814$ & $0.4819$ & $0.4843$ & $0.4858$ & $0.4980$ & $0.5000$ & $0.5000$ & $0.5000$ \\
$\LOF$ & $\it 0.6715$ & $\it 0.7320$ & $\it 0.7530$ & $\it 0.7473$ & $0.8540$ & $0.9060$ & $\it 0.9140$ & $\it 0.9160$ \\
$\FastABOD$ & $0.4490$ & $0.3847$ & $0.2722$ & $0.1656$ & $0.4860$ & $0.5000$ & $0.5000$ & $0.5000$ \\
$\iForest$ & --- & --- & --- & --- & $0.4100$ & $0.2920$ & $0.1140$ & $0.0620$ \\
\hline
\end{tabular}
~\\
\textit{Multimodal artificial} dataset
\\
\begin{tabular}{|l||c|c|c|c||c|c|c|c|}
\cline{2-9}
\multicolumn{1}{c}{} & \multicolumn{4}{|c||}{$AUC_{mean}$} & \multicolumn{4}{|c|}{$AUC_{max}$} \\
\hline 
\multicolumn{1}{|r||}{$Method$ / $d$} & $10$ & $100$ & $1000$ & $10000$ & $10$ & $100$ & $1000$ & $10000$ \\
\hline\hline
$\CFOF$ & $\bf 0.9834$ & $\bf 0.9999$ & $\bf 1.0000$ & $\bf 1.0000$ & $\bf 1.0000$ & $\bf 1.0000$ & $\bf 1.0000$ & $\bf 1.0000$ \\
$\ODIN$ & $0.9408$ & $0.8631$ & $0.8147$ & $0.8048$ & $\bf 1.0000$ & $\bf 1.0000$ & $\bf 1.0000$ & $\bf 1.0000$ \\
antiHub2 & $0.9365$ & $0.8425$ & $0.7554$ & $0.7401$ & $\it 0.9970$ & $\it 0.9985$ & $\it 0.9956$ & $\it 0.9788$ \\
$\aKNN$ & $0.7624$ & $0.7625$ & $0.7625$ & $0.7625$ & $0.7625$ & $0.7625$ & $0.7625$ & $0.7625$ \\
$\LOF$ & $\it 0.9719$ & $\it 0.9773$ & $\it 0.9777$ & $\it 0.9779$ & $\bf 1.0000$ & $\bf 1.0000$ & $\bf 1.0000$ & $\bf 1.0000$ \\
$\FastABOD$ & $0.7604$ & $0.7548$ & $0.7495$ & $0.7449$ & $0.7624$ & $0.7625$ & $0.7625$ & $0.7625$ \\
$\iForest$ & --- & --- & --- & --- & $0.7603$ & $0.7575$ & $0.7450$ & $0.7368$ \\
\hline
\end{tabular}
\\
\begin{tabular}{|l||c|c|c|c||c|c|c|c|}
\cline{2-9}
\multicolumn{1}{c}{} & \multicolumn{4}{|c||}{$Prec_{mean}$} & \multicolumn{4}{|c|}{$Prec_{max}$} \\
\hline 
\multicolumn{1}{|r||}{$Method$ / $d$} & $10$ & $100$ & $1000$ & $10000$ & $10$ & $100$ & $1000$ & $10000$ \\
\hline\hline
$\CFOF$ & $\bf 0.9324$ & $\bf 0.9991$ & $\bf 1.0000$ & $\bf 1.0000$ & $\bf 1.0000$ & $\bf 1.0000$ & $\bf 1.0000$ & $\bf 1.0000$ \\
$\ODIN$ & $0.7349$ & $0.6012$ & $0.5564$ & $0.5434$ & $\bf 1.0000$ & $\bf 1.0000$ & $\bf 1.0000$ & $\bf 1.0000$ \\
antiHub2 & $0.6317$ & $0.4255$ & $0.3346$ & $0.3028$ & $\it 0.8880$ & $\it 0.9560$ & $\it 0.9120$ & $\it 0.8080$ \\
$\aKNN$ & $0.4996$ & $0.5000$ & $0.5000$ & $0.5000$ & $0.5000$ & $0.5000$ & $0.5000$ & $0.5000$ \\
$\LOF$ & $\it 0.8743$ & $\it 0.9463$ & $\it 0.9521$ & $\it 0.9521$ & $\bf 1.0000$ & $\bf 1.0000$ & $\bf 1.0000$ & $\bf 1.0000$ \\
$\FastABOD$ & $0.4874$ & $0.4660$ & $0.4506$ & $0.4409$ & $0.5000$ & $0.5000$ & $0.5000$ & $0.5000$ \\
$\iForest$ & --- & --- & --- & --- & $0.4880$ & $0.4940$ & $0.4680$ & $0.4540$ \\
\hline
\end{tabular}
\end{tabular}
\caption{\textit{AUC} and \text{Precision} for the synthetic datasets.}
\label{table:synthetic_data}
\end{table}

We considered three dataset families.
The first family, called \textit{Unimodal}, consists
of multivariate i.i.d. standard normal data.

The two other dataset families 
are described in \cite{RadovanovicNI15}
and concern the scenario involving clusters of different
densities. Each dataset of this family is composed of 
points partitioned into two equally-sized clusters.
The first (second, resp.) cluster consists of points coming from a $d$-dimensional
normal distribution with independent components having mean $-1$ ($+1$, resp.) and 
standard deviation $0.1$ ($1$, resp.).
Moreover, a variant of each dataset containing artificial outliers
is obtained by moving the $\alpha$ fraction of the points maximizing the distance from their
cluster center 
even farther from
the center by $20\%$ of the distance.
We refer to the family of datasets above described with (without, resp.) 
artificial outliers as to \textit{Multimodal artificial}
(\textit{Multimodal}, resp.).\footnote{We note that 
\cite{RadovanovicNI15} experimented
only the \textit{Multimodal artificial} family.}

We considered datasets of $n=1,\!000$ points and
varied the dimensionality $d$ from $10$ to $10^4$.
Points maximizing the distance from their cluster center were 
marked as outliers, and the fraction of outliers was set to $\alpha=0.05$.
For each dataset we considered $N=20$ log-spaced values for
the parameter $k$ ranging from $2$ to $n/2$.
Results are averaged over ten runs.

\begin{figure}[t]
\centering
\subfloat[\textit{Unimodal}]
{\includegraphics[width=0.33\columnwidth]{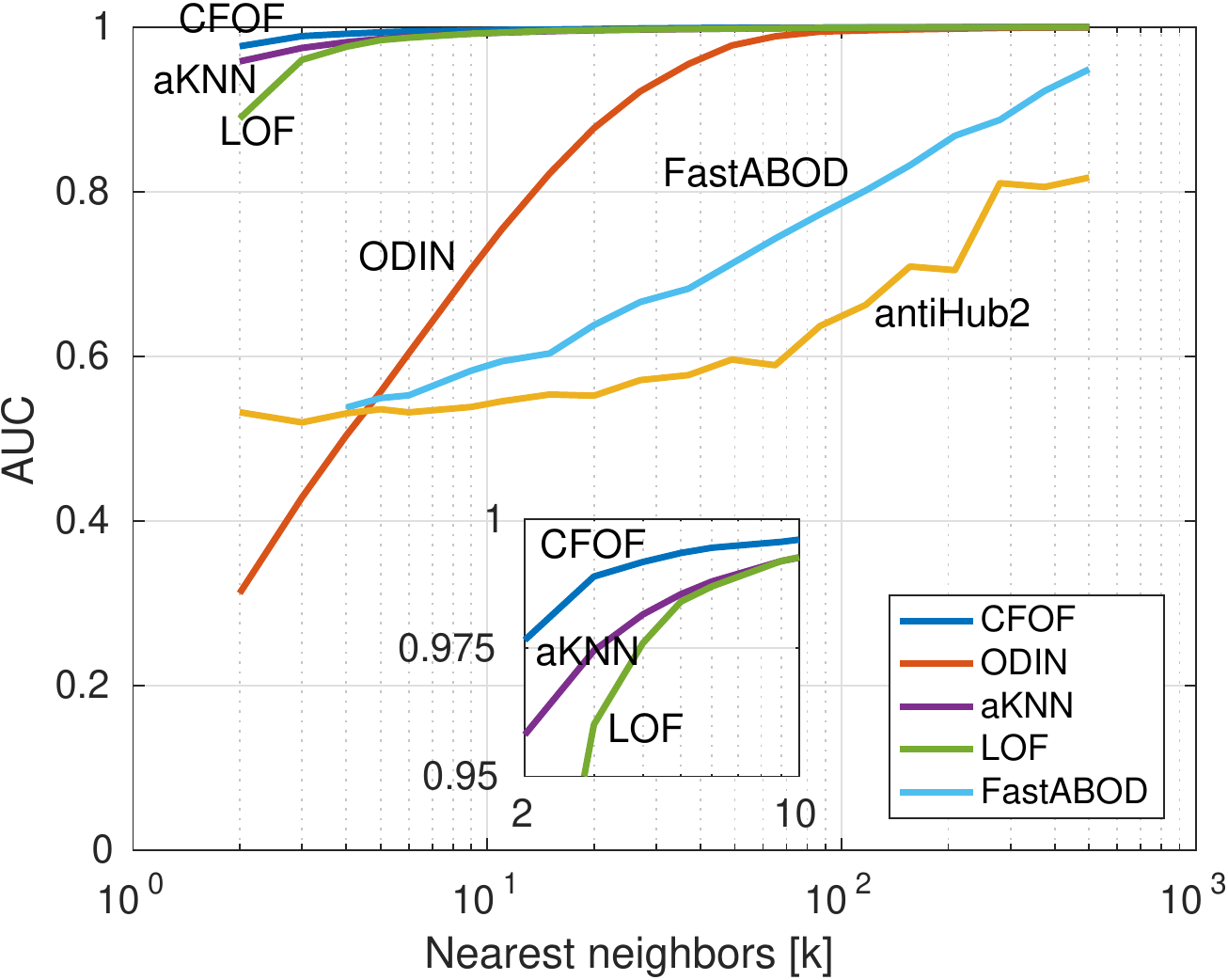}}
~
\subfloat[\textit{Multimodal}]
{\includegraphics[width=0.33\columnwidth]{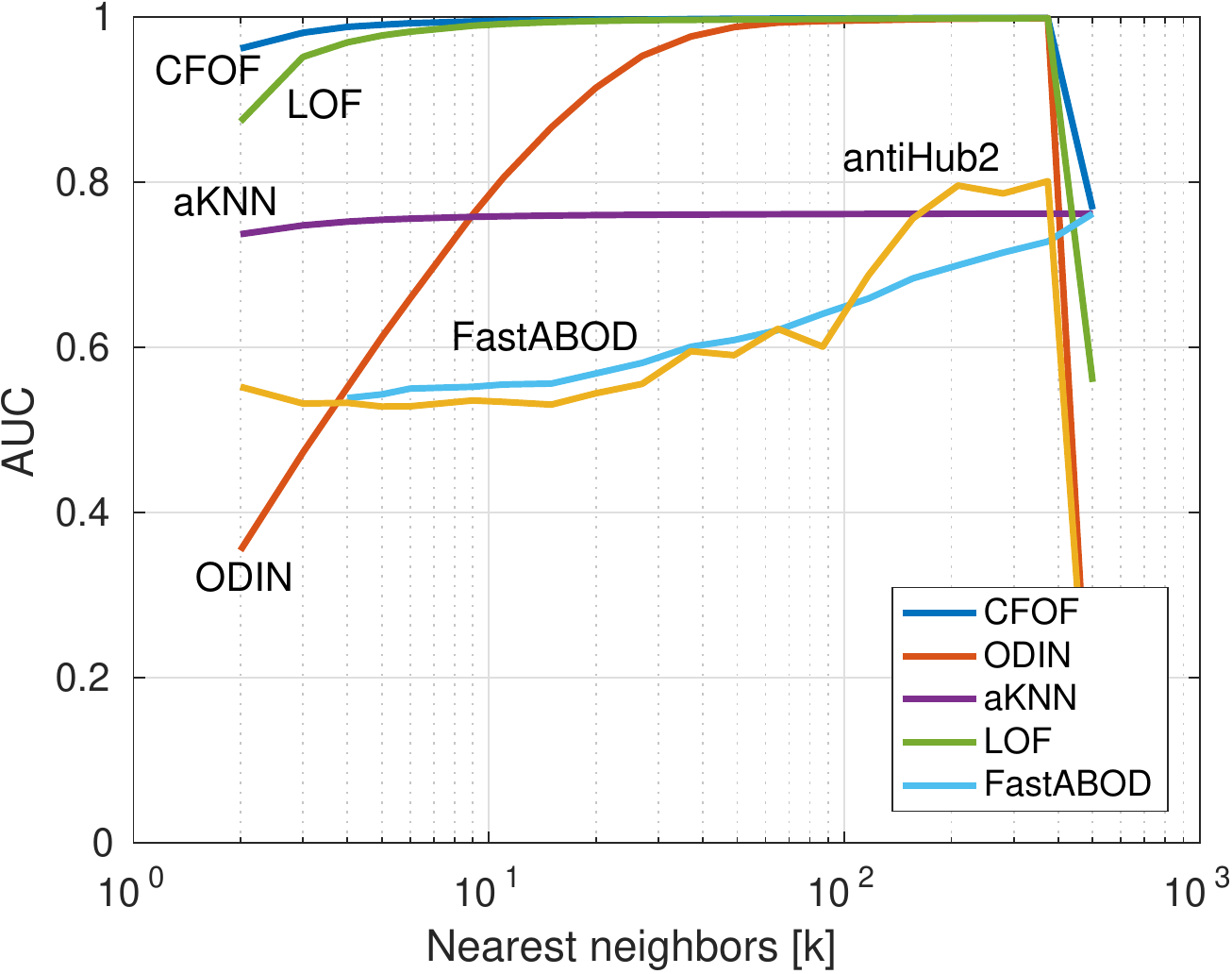}}
~
\subfloat[\label{fig:multimodalart_plot1}\textit{Multimodal artificial}]
{\includegraphics[width=0.33\columnwidth]{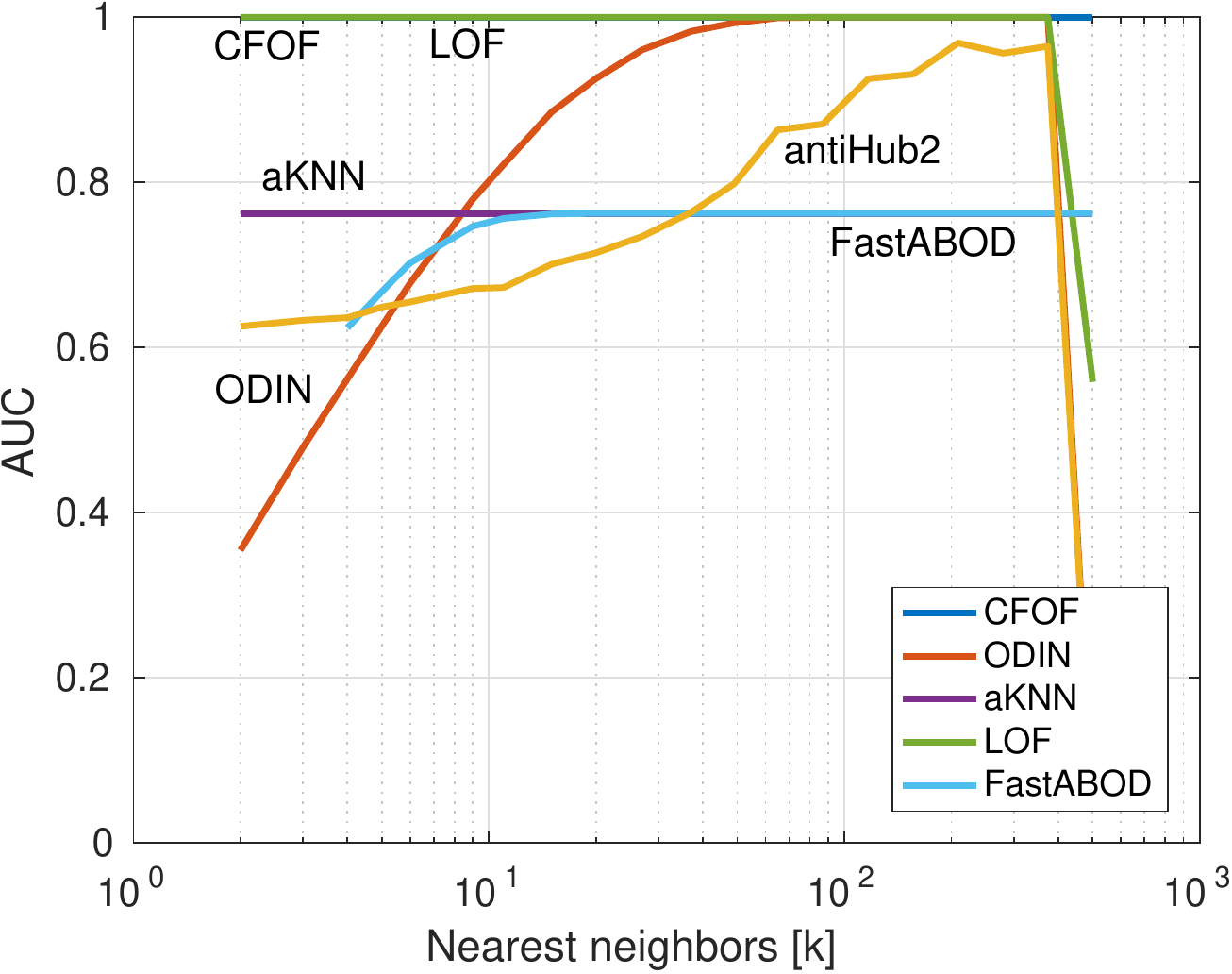}}
\\
\subfloat[\textit{Unimodal}]
{\includegraphics[width=0.33\columnwidth]{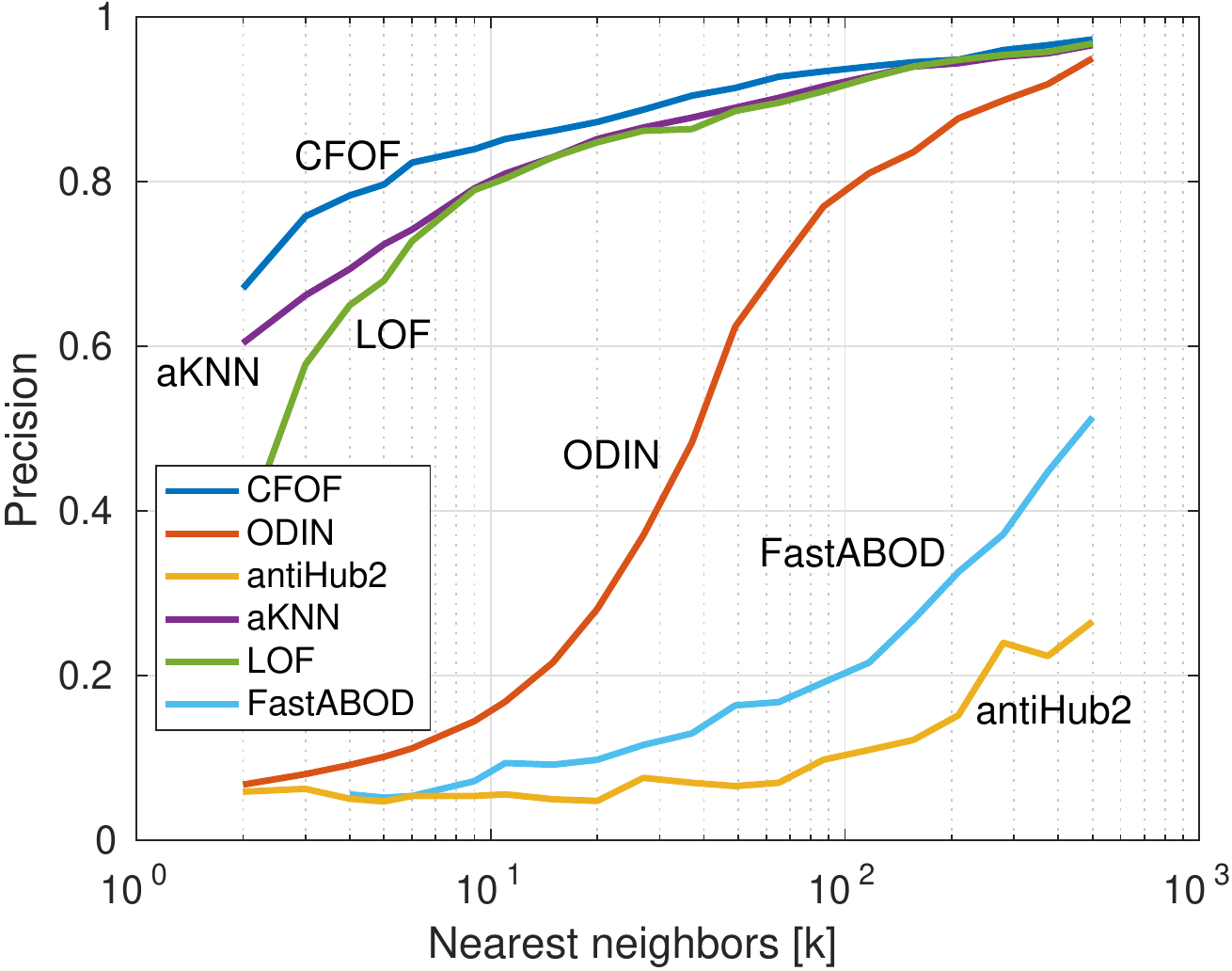}} 
~
\subfloat[\textit{Multimodal}]
{\includegraphics[width=0.33\columnwidth]{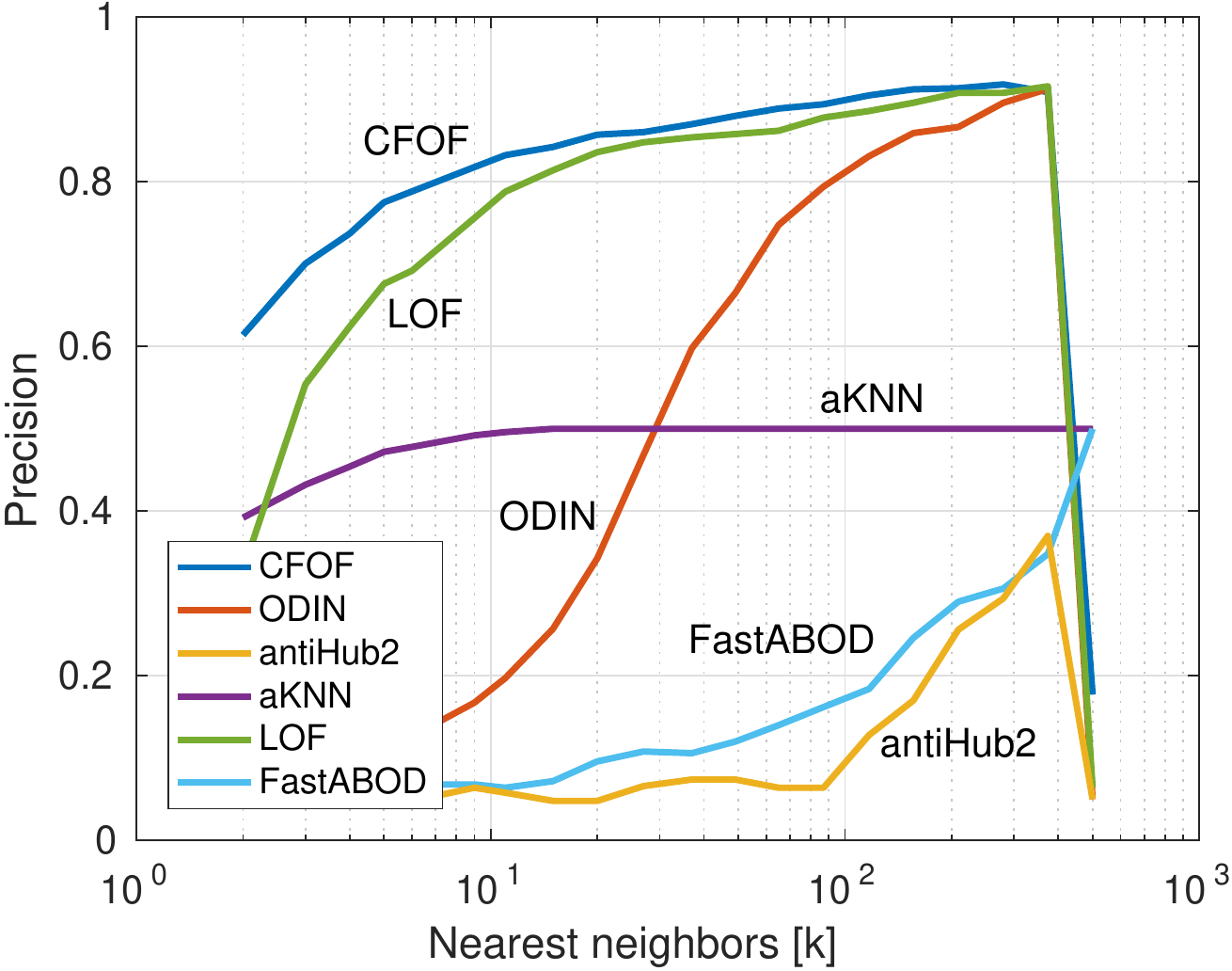}} 
~
\subfloat[\label{fig:multimodalart_plot2}\textit{Multimodal artificial}]
{\includegraphics[width=0.33\columnwidth]{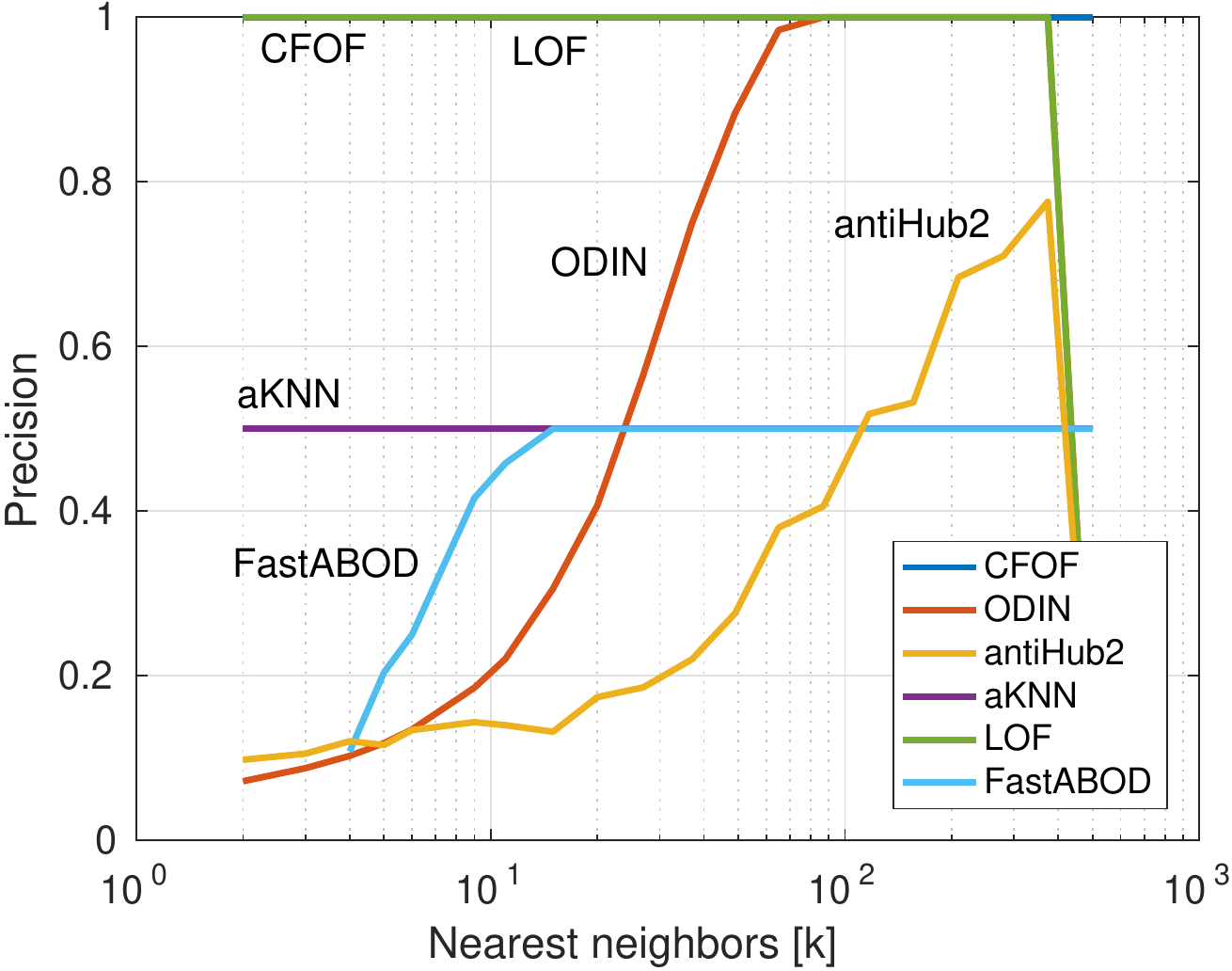}} 
\caption{[Best viewed in color.] $AUC$ and \textit{Precision} for $d=10,\!000$.}
\label{fig:aucprec}
\end{figure}

\begin{figure}[t]
\centering
\subfloat[\textit{Unimodal}]
{\includegraphics[width=0.33\columnwidth]{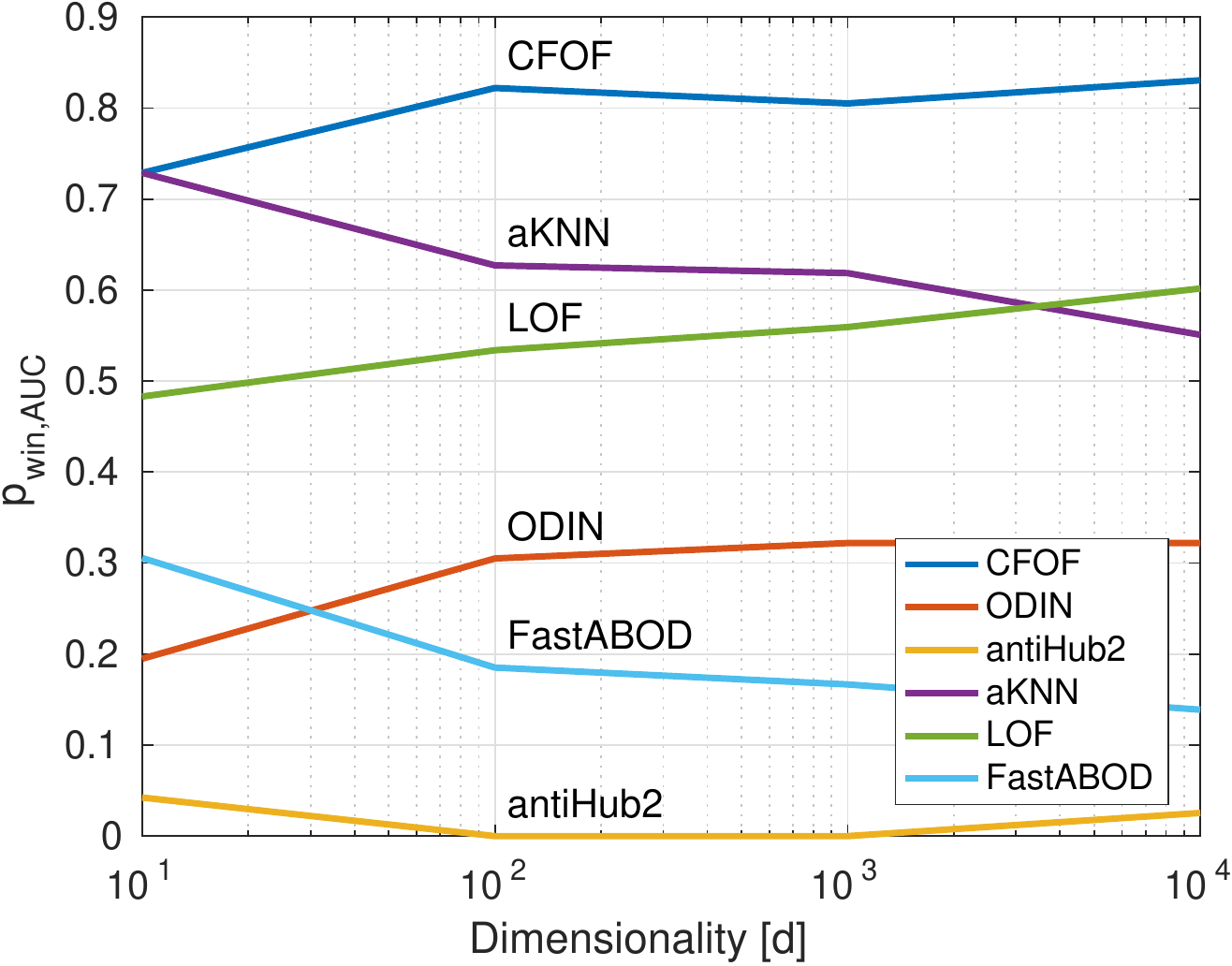}}
~
\subfloat[\textit{Multimodal}]
{\includegraphics[width=0.33\columnwidth]{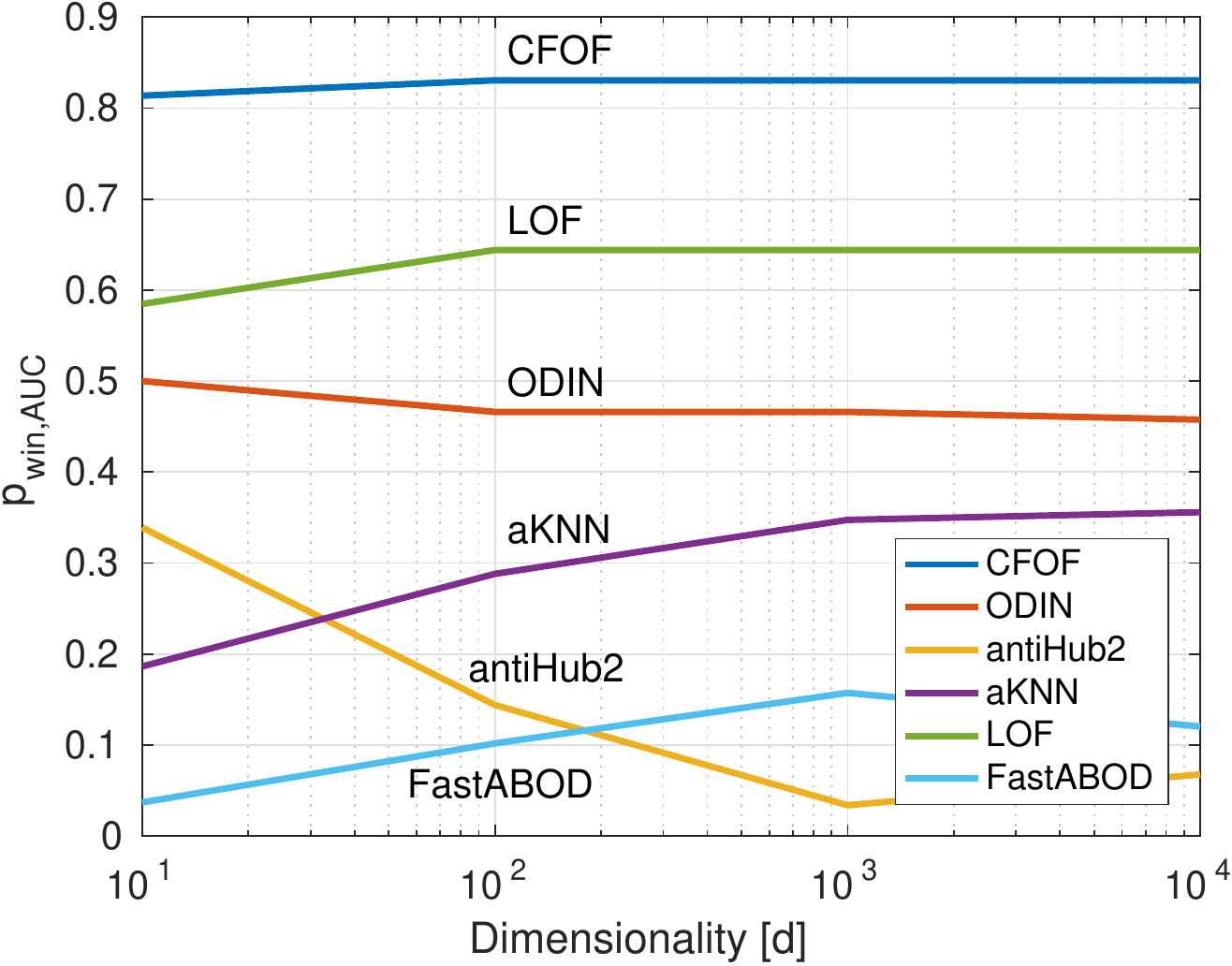}}
~
\subfloat[\label{fig:multimodalart_plot3}\textit{Multimodal}]
{\includegraphics[width=0.33\columnwidth]{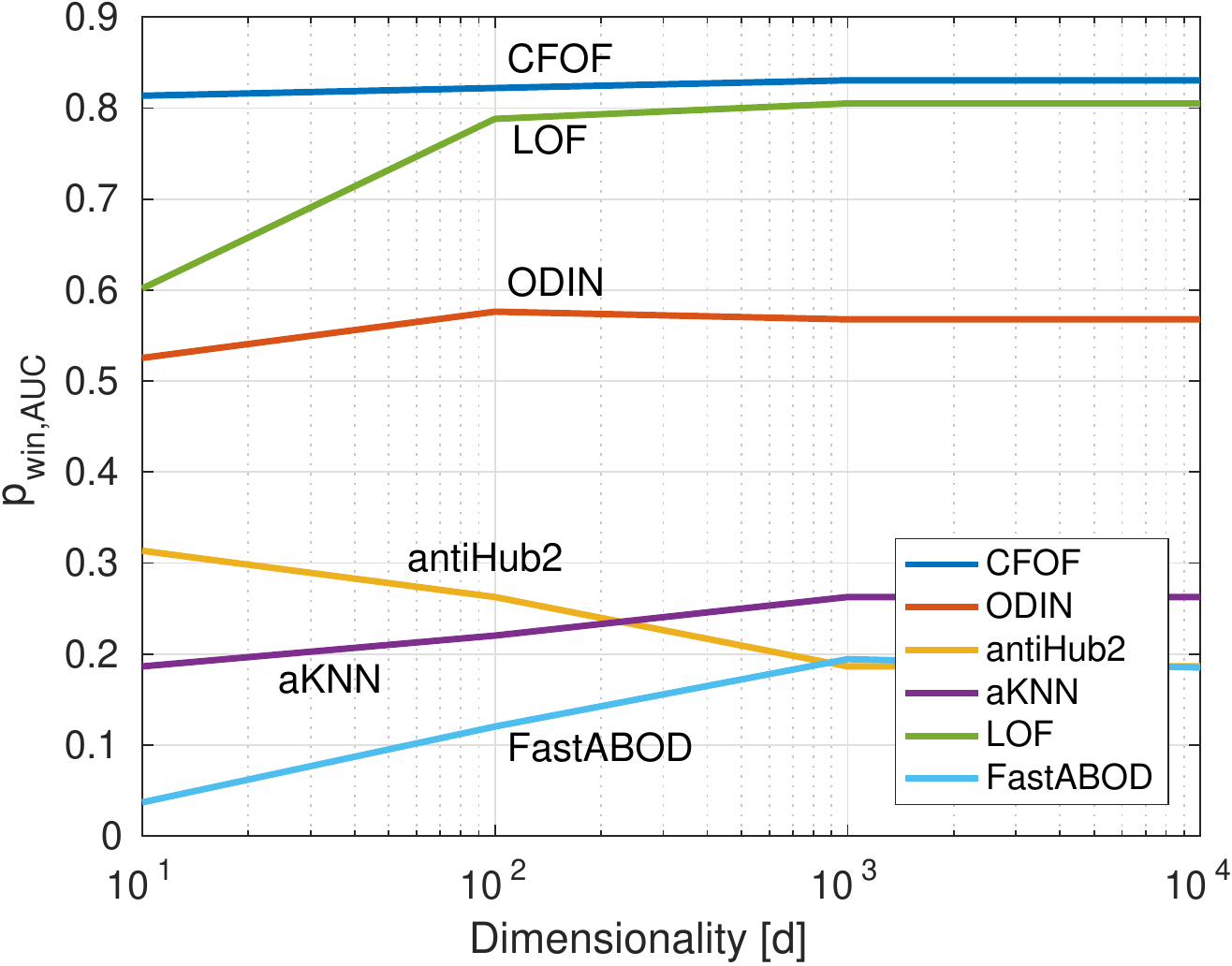}}
\\
\subfloat[\textit{Unimodal}]
{\includegraphics[width=0.33\columnwidth]{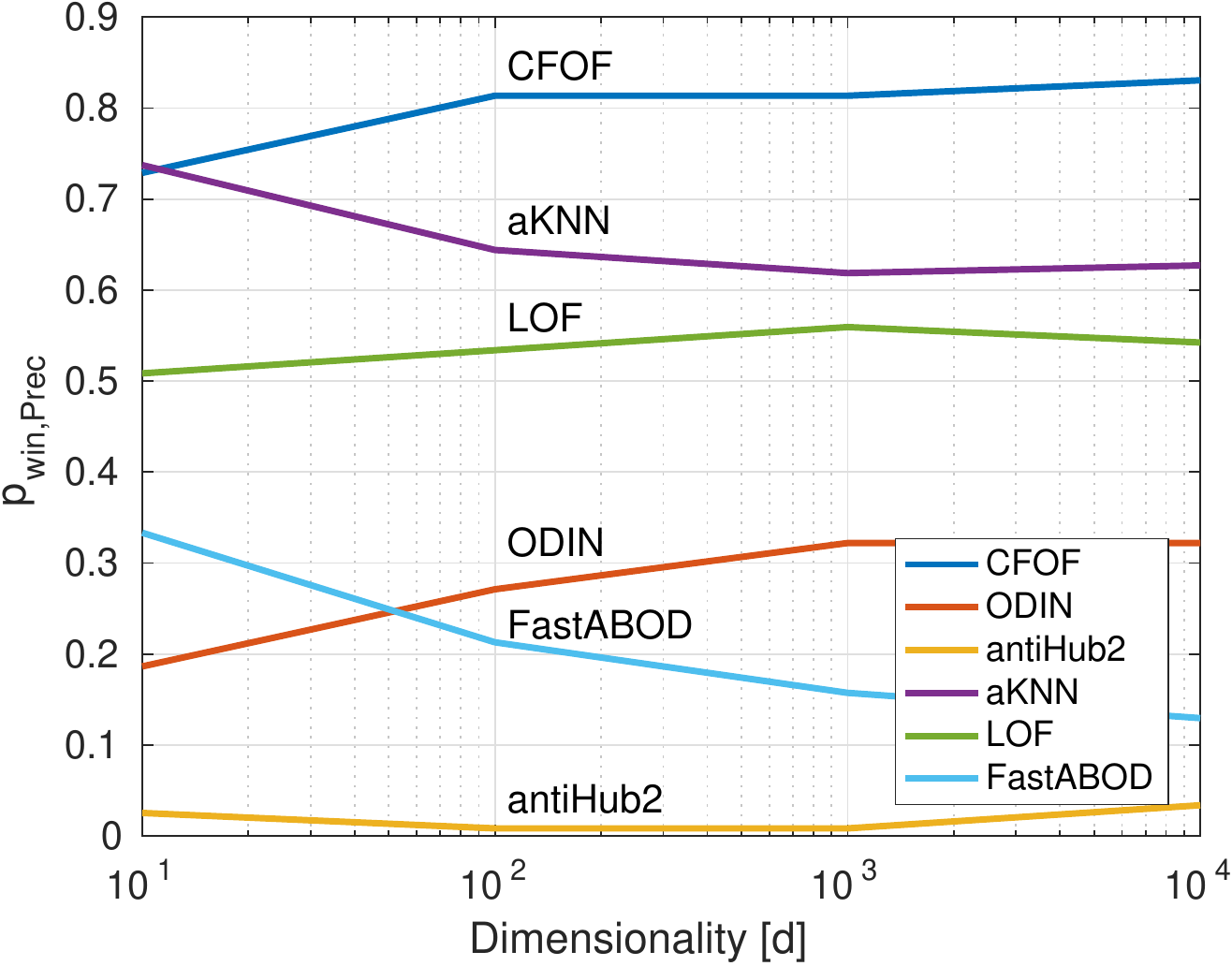}} 
~
\subfloat[\textit{Multimodal}]
{\includegraphics[width=0.33\columnwidth]{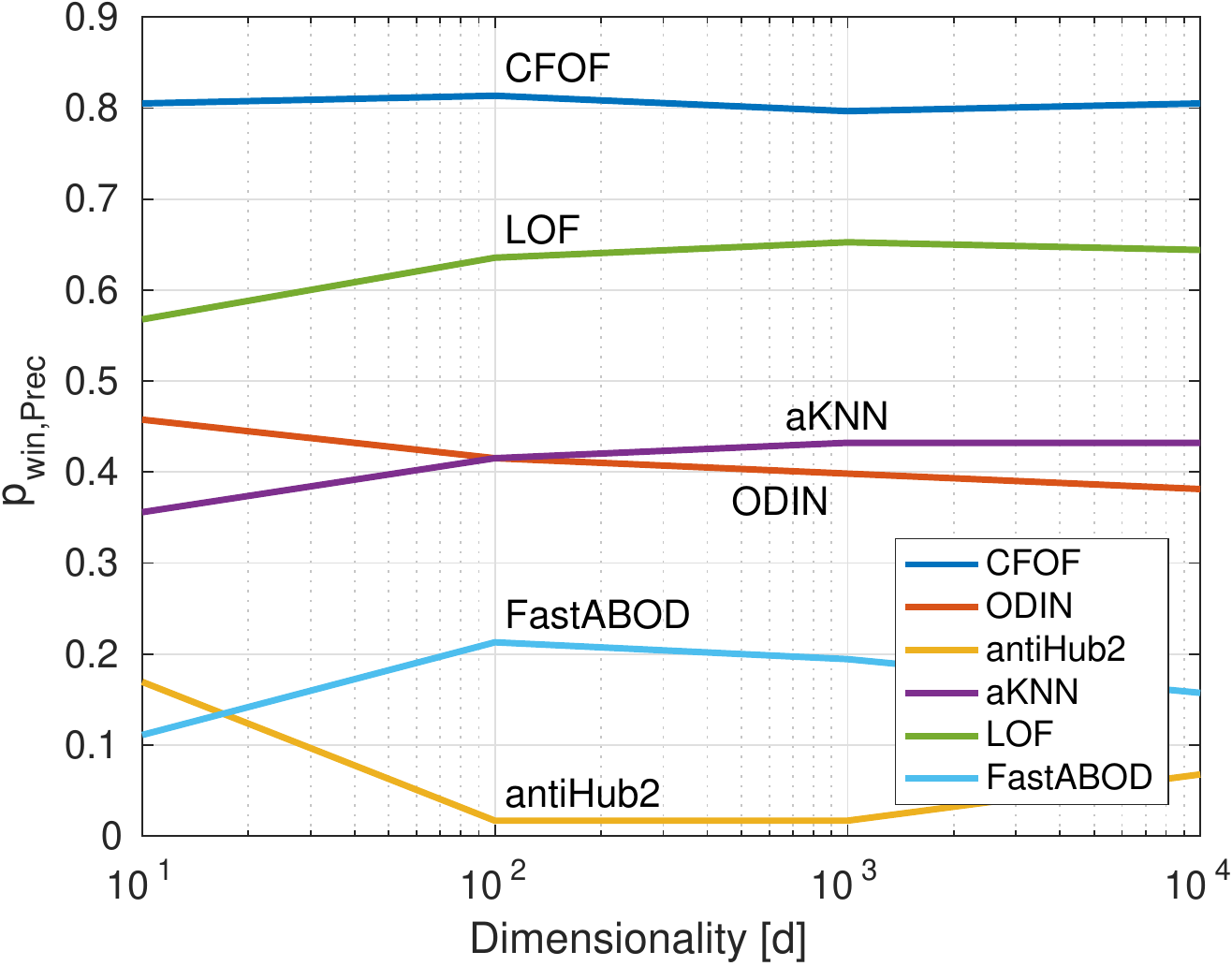}} 
~
\subfloat[\label{fig:multimodalart_plot4}\textit{Multimodal artificial}]
{\includegraphics[width=0.33\columnwidth]{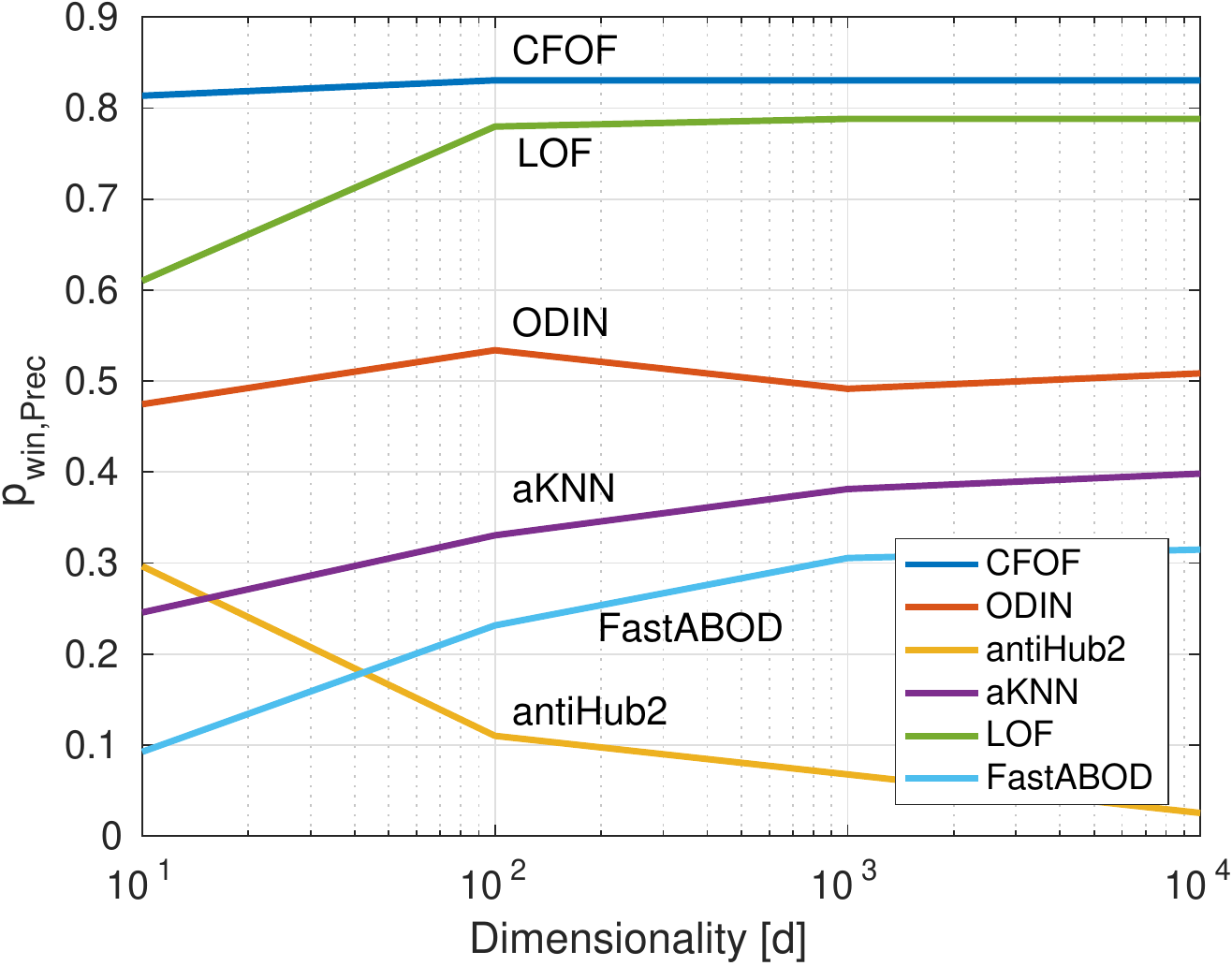}} 
\caption{[Best viewed in color.] $p_{win,AUC}$ and $p_{win,Prec}$ for $d=10,\!000$.}
\label{fig:praucprec}
\end{figure}

\medskip
To measure accuracy,
we used the AUC (Area Under the ROC Curve)
and the Precision, or \textit{Prec} for short.
The ROC curve is the plot of the 
true positive rate (TPR) against 
the false positive rate (FPR) at various threshold settings.
The AUC is defined as the are under the ROC curve.

We also measured the 
$AUC_{max}$ and $AUC_{mean}$,
defined as, respectively, the maximum and the average AUC 
associated with the different values of $k$ employed.
Similarly, we measured the 
$Prec_{max}$ and $Prec_{mean}$.
Within the unsupervised outlier detection
scenario average performances 
are of interest 
due to the unavailability of label information that could be exploited to
determine the optimal values for the parameters.

Moreover, we defined $\pwina{Def_1,Def_2}$ as the fraction of configurations
of the parameter $k$
for which the outlier definition $Def_1$
performs better than the outlier definition $Def_2$
in terms of AUC. By $\pwina{Def_1}$ we denote 
the fraction of configurations
of the parameter $k$
for which the outlier definition $Def_1$
performs better than a generic outlier definition $Def_2$.
The measures $\pwinp{Def_1,Def_2}$ and $\pwinp{Def_1}$
are defined in a similar manner.

\medskip
Table \ref{table:synthetic_data} reports
the $AUC_{mean}$, $AUC_{max}$, $Prec_{mean}$, and $Prec_{max}$
obtained by $\CFOF$, $\ODIN$, $\antiHub$, $\aKNN$, $\LOF$, and $\FastABOD$ 
on the \textit{Unimodal}, \textit{Multimodal}, and \textit{Multimodal artificial}
dataset families.
The table reports also the $AUC_{max}$ an $Prec_{max}$
values obtained by $\iForest$.
We note that $\iForest$ has no neighborhood parameter and, hence, the other
measure are not applicable, since they depend on 
the whole range of values for the parameter $k$ above described.

As for the $AUC_{mean}$ and the $Prec_{mean}$, 
except for the low dimensional \textit{Unimodal} dataset,
$\CFOF$ always obtains the best values of $AUC_{mean}$ and $Prec_{mean}$.
The value associated with $\CFOF$ is most often larger than 
that associated with the other methods, 
thus suggesting that on distribution data
$\CFOF$ is less sensitive to variations of the parameters
than the other methods.
As for the $AUC_{max}$, $\CFOF$ is able to obtain the best values
in all cases. Depending on the characteristics of the dataset,
also other methods are able to reach similar values.
As for the $Prec_{max}$, except for 
\textit{Multimodal artificial} which contains artificially
well-separated outliers, the values obtained by $\CFOF$ 
are not reached by any other method.

The above conclusions are valid for all the three dataset families.
Now we comment on the single families.
As for the \textit{Unimodal} dataset, it contains global outliers
and almost all the techniques considered show good performances.
However, $\FastABOD$ and, specifically, $\antiHub$ sensibly worsen
their quality when the dimensionality increases, while
$\iForest$ tends to the random classifier ($AUC$ close to $0.5$) with a 
negligible precision.

As for the \textit{Multimodal} and \textit{Multimodal artificial} dataset, 
they contain local outliers
and, hence, reverse nearest neighbor-based 
and local techniques should perform 
better than global techniques.
{The behavior of $\FastABOD$ 
can be explained by noticing that this technique
is similar to global techniques, 
since the score of a point depends also on its $k$ nearest neighbors.}
Indeed, the value $Prec_{max}=0.5$ obtained by $\aKNN$ and $\FastABOD$
indicates these methods detect only the outliers coming 
from the sparsest cluster.
Moreover,
the $\antiHub$ method needs well-separated artificial outliers 
to ameliorate its performances. 
As for $\iForest$, its behavior on the \textit{Multimodal} dataset
is similar to the \textit{Unimodal} case ($AUC$ close to $0.5$ and negligible precision). 
Moreover, even in the presence of 
artificially separated outliers, its accuracy remains lower than that
of density-based and angle-based methods.

\medskip
Figures \ref{fig:aucprec}, \ref{fig:praucprec}
show the $AUC$ (on the top left) and the \textit{Precision}
of the techniques for different values of the number of nearest neighbors $k$ 
expressed as a fraction of the dataset size,
together with the $p_{win,AUC}$ and the $p_{win,Prec}$.
The behavior of $\antiHub$ 
reported in Figure \ref{fig:multimodalart_plot1}
for $d=10,\!000$
follows that shown in \cite{RadovanovicNI15}
(cf. Fig. 8) for $d=100$,
where the method reaches its best AUC values
for $k$ ranging to about the $5\%$ to about the $50\%$
of $n$.
On these kind of distribution data,
$\CFOF$ shows best performances for all the measures throughout the whole
range of values for the parameter $k$ and the dimensionality $d$.
Remarkably, the probabilities $p_{win,AUC}(\CFOF)$ and $p_{win,Prec}(\CFOF)$
are practically always the largest.

\subsection{Comparison on labelled data}
\label{sect:exp_real}

In this section we considered some labelled datasets
as ground truth. 
This kind of experiment is usually 
considered in the context of unsupervised outlier detection,
though it must be pointed out that 
these of methods 
are not designed to take advantage from the availability of labelled data.

We considered the following ten datasets,
randomly selected among those 
available
at the UCI ML Repository\footnote{{\tt 
https://archive.ics.uci.edu/ml/index.html}}:
\textit{Breast Cancer Wisconsin Diagnostic} ($n=569$, $d=32$),
\textit{Epileptic Seizure Recognition} ($n=11500$, $d=179$),
\textit{Image segmentation} ($n=2310$, $d=19$),
\textit{Ionosphere} ($n=351$, $d=34$),
\textit{Live disorders} ($n=345$, $d=7$), 
\textit{Ozone Level Detection} ($n=2536$, $d=73$), 
\textit{Pima indians diabetes} ($n=768$, $d=8$),
\textit{QSAR biodegradation} ($n=1055$, $d=41$),
\textit{Wine} ($n=178$, $d=13$),
\textit{Yeast} ($n=1484$, $d=8$),

Each class in turn is marked as normal,
and a dataset composed of all the points
of that class plus $10$ outlier points randomly selected 
from the other classes is considered.
Results are averaged over $30$ runs for each choice of the normal class.
For each dataset we considered $20$ distinct values for
the parameter $k$ ranging from $2$ to $n/2$.
Specifically, if $n>100$ then the $k$ values are log-spaced, otherwise
they are linearly spaced. 

\begin{table}[t]
\small
\begin{center}
\begin{tabular}{|l|c|c|c|c|}
\cline{2-5}
\multicolumn{1}{c|}{~} & $\mathbf{E}[AUC_{max}]$ & $\mathbf{E}[AUC_{mean}]$ & $\mathbf{E}[Prec_{max}]$ & $\mathbf{E}[Prec_{mean}]$ \\
\cline{2-5}\hline

$\CFOF$        & $\bf 0.845$ & $0.732$     & $\bf 0.408$ & $0.292$ \\
$\ODIN$        & $0.784$ & $0.717$     & $0.394$ & $0.253$ \\
$\aHub$        & $0.761$     & $0.694$     & $0.369$ & $0.244$ \\
$\aKNN$        & $0.763$     & $\it 0.733$ & $0.373$ & $\it 0.328$ \\
$\LOF$         & $\it 0.799$     & $0.716$     & $\it 0.395$ & $0.295$ \\
$\FastABOD$    & $0.752$     & $\bf 0.737$ & $0.360$ & $\bf 0.336$ \\
$\iForest$     & $0.760$     & ---         & $0.385$ & --- \\
\hline
\multicolumn{5}{c}{~}\\
\cline{2-5}
\multicolumn{1}{c|}{~}  & $p_{win,AUC_{max}}$ & $p_{win,AUC_{mean}}$ & $p_{win,Prec_{max}}$ & $p_{win,Prec_{mean}}$ \\
\cline{2-5} \hline 
$\CFOF$ & $\bf 0.772$ & $\bf 0.694$ & $\bf 0.619$ & $0.542$ \\
$\ODIN$ & $\it 0.582$ & $0.422$ & $0.521$ & $0.450$ \\
$\aHub$ & $0.353$ & $0.237$ & $0.553$ & $0.392$ \\
$\aKNN$ & $0.411$ & $0.562$ & $0.467$ & $0.526$ \\
$\LOF$ & $0.565$ & $0.399$ & $\it 0.612$ & $\it 0.554$ \\
$\FastABOD$ & $0.414$ & $\it 0.686$ & $0.395$ & $\bf 0.576$ \\
$\iForest$ & $0.418$ & --- & $0.454$ & --- \\
\hline
\end{tabular}
\end{center}
\caption{Accuracy comparison on labelled data.}
\label{table:acc_realdata}
\end{table}

We measured the 
$AUC_{max}$ and $AUC_{mean}$,
defined as, respectively, the maximum and the average AUC 
associated with the different values of $k$ employed.
Similarly, we measured the 
$Prec_{max}$ and $Prec_{mean}$ by considering
the top $10$ outliers reported by each method.
As already pointed out, the former measures are not applicable to $\iForest$.
Table \ref{table:acc_realdata} reports the above 
mentioned accuracy
measures averaged over all the datasets,
together with the probabilities $p_{win}$
that a given outlier definition
reports a value for a certain accuracy measure
better than another outlier definition.

As for the average accuracies, $\CFOF$ 
reports the largest values of $AUC_{max}$ 
and $Prec_{max}$, followed by $\LOF$, while $\FastABOD$
reports the largest values of $AUC_{mean}$
and $Prec_{mean}$,
followed by $\aKNN$, though the $AUC_{mean}$ of $\CFOF$ 
is almost identical to that of $\aKNN$.
In addition, within the family of methods perceiving anomalies 
solely on the basis of their reverse nearest neighbor distribution,
$\CFOF$ presents the largest average values.
Since average values can be influenced by large deviations of observed AUC or
$Prec$ values, the probability $p_{win}$ to obtain a better accuracy value
is of interest.
Remarkably,
$\CFOF$ reports the greatest
probability to obtain an AUC higher than any other method,
both for the maximum and the mean AUC values.
The probability associated with the $AUC_{max}$ of $\CFOF$
seems to indicate a superior ability
to reach peak performances.
Moreover, $\CFOF$ reports the greatest
probability to obtain a $Prec$ higher than any other method.
Also the probability concerning the mean $Prec$ is close to
the best value.

\begin{table}[t]
\small
\begin{center}
\begin{tabular}{|c|c|c|c|c|c|c|c|}
\multicolumn{8}{c}{$p_{rank,AUC}$} \\
\cline{2-8}
\multicolumn{1}{c|}{~} & $\CFOF$ & $\ODIN$ & $\aHub$ & $\aKNN$ & $\LOF$ & $\FastABOD$ & $\iForest$ \\
\cline{2-8}\hline
$CFOF$ & --- & $0.924$ & $0.976$ & $0.921$ & $0.844$ & $0.954$ & $0.944$ \\
$ODIN$ & $0.076$ & --- & $0.769$ & $0.604$ & $0.412$ & $0.687$ & $0.659$ \\
$aHub2$ & $0.024$ & $0.231$ & --- & $0.373$ & $0.176$ & $0.474$ & $0.419$ \\
$aKNN$ & $0.079$ & $0.396$ & $0.627$ & --- & $0.335$ & $0.569$ & $0.594$ \\
$LOF$ & $0.156$ & $0.588$ & $0.824$ & $0.665$ & --- & $0.759$ & $0.738$ \\
$ABOD$ & $0.046$ & $0.313$ & $0.526$ & $0.431$ & $0.241$ & --- & $0.483$ \\
$\iForest$ & $0.056$ & $0.341$ & $0.581$ & $0.406$ & $0.262$ & $0.517$ & --- \\
\hline
\end{tabular}
\end{center}
\caption{Wilcoxon rank-sum test for accuracy ($p$-values).}
\label{table:wilcoxon}
\end{table}

In general, 
no correlation can be expected 
between the semantics underlying classes and the 
observed data density and, consequently, 
the peculiar way a certain outlier detection method 
perceives abnormality. 
In order to detect large deviations 
from this assumption, we exploit the Wilcoxon test.
The Wilcoxon rank-sum test \cite{Siegel56}
(also called the Wilcoxon-Mann-Whitney test or Mann-Whitney U test)
is a nonparametric statistical test of the 
null hypothesis that it is equally likely that a randomly selected 
value from one sample will be less than or greater than a randomly selected 
value from a second sample.

Specifically, in the following $p_{rank,AUC}(Def_1,Def_2)$ 
represents the $p$-value from the left-tailed test, 
that is
the probability 
of observing the given sets of $AUC_{max}$ values, or one more extreme,
by chance if the null hypothesis that medians are equal is true
against the alternative hypothesis that the median of the definition
$Def_1$ is smaller than the median of the definition $Def_2$.
Note that $1-p$ is the $p$-value of the test having as alternative
hypothesis that the median of the definition
$Def_1$ is greater than the median of the definition $Def_2$.
Thus, $1-p_{rank,AUC}$ ($p_{rank,AUC}$, resp.)
represents the significance level at which 
the hypothesis that $Def_1$ ($Def_2$, resp.) is better that $Def_2$
($Def_1$, resp.)
can be accepted.
Small values of $p$-values cast doubt on the validity of the null hypothesis.
Fixed a significance level $\gamma$ (usually $\gamma = 0.05$)
if the $p$-value is greater than $\gamma$
the null hypothesis that medians are equal cannot be rejected.
Table \ref{table:wilcoxon} reports the $p$-values
of the Wilcoxon rank-sum test.

Interestingly, the $p_{rank,AUC}$ achieved by  
$\CFOF$ versus any other definition ranges from $0.844$
to $0.976$.
This seem to indicate that it is likely that the $\CFOF$ method will allow 
configurations which ranks outliers better.

Table \ref{table:lab_data} reports the $AUX_{max}$ and $Prec_{max}$ values
obtained by the various methods on the considered dataset 
for each class label.

\begin{table}[t]
\tiny
\centering
\begin{tabular}{|c|r|r|r|r|r|r|r||r|r|r|r|r|r|r|}
\multicolumn{15}{c}{\textit{wdbc} dataset}\\
\cline{2-15}
\multicolumn{1}{c}{~} & \multicolumn{7}{|c||}{$AUC$} & \multicolumn{7}{|c|}{$Prec$} \\
\cline{2-15}
\multicolumn{1}{c|}{~} & CFOF & ODIN & aHub2 & aKNN & LOF & ABOD & iForest & CFOF & ODIN & aHub2 & aKNN & LOF & ABOD & iForest \\
\hline
$0$ & $0.827$ & $\bf 0.855$ & $\it 0.849$ & $0.744$ & $0.802$ & $0.476$ & $0.842$ & $\it 0.263$ & $\bf 0.272$ & $0.119$ & $0.013$ & $0.200$ & $0.032$ & $0.116$ \\
$1$ & $0.950$ & $0.945$ & $0.920$ & $\it 0.979$ & $0.970$ & $\bf 0.981$ & $0.955$ & $0.702$ & $0.705$ & $0.703$ & $\it 0.739$ & $0.706$ & $\bf 0.745$ & $0.587$ \\
\hline
\multicolumn{15}{c}{\textit{yeast} dataset}\\
\cline{2-15}
\multicolumn{1}{c}{~} & \multicolumn{7}{|c||}{$AUC$} & \multicolumn{7}{|c|}{$Prec$} \\
\cline{2-15}
\multicolumn{1}{c|}{~} & CFOF & ODIN & aHub2 & aKNN & LOF & ABOD & iForest & CFOF & ODIN & aHub2 & aKNN & LOF & ABOD & iForest \\
\hline
$0$ & $0.756$ & $0.753$ & $\bf 0.801$ & $\it 0.775$ & $0.769$ & $0.756$ & $0.741$ & $0.160$ & $0.150$ & $\bf 0.198$ & $\it 0.165$ & $0.155$ & $0.158$ & $0.158$ \\
$1$ & $\bf 0.786$ & $\it 0.675$ & $0.654$ & $0.635$ & $0.664$ & $0.644$ & $0.658$ & $0.074$ & $0.076$ & $0.079$ & $0.061$ & $0.081$ & $\it 0.087$ & $\bf 0.093$ \\
$2$ & $\bf 0.809$ & $0.749$ & $0.734$ & $0.786$ & $0.751$ & $\it 0.798$ & $0.746$ & $\bf 0.162$ & $\it 0.161$ & $\it 0.161$ & $0.139$ & $0.155$ & $\it 0.161$ & $0.119$ \\
$3$ & $\bf 0.946$ & $0.915$ & $0.935$ & $\it 0.943$ & $0.942$ & $0.889$ & $0.922$ & $\bf 0.747$ & $0.589$ & $\bf 0.747$ & $0.723$ & $0.723$ & $0.474$ & $0.661$ \\
$4$ & $0.952$ & $0.923$ & $0.940$ & $\bf 0.958$ & $\it 0.957$ & $0.924$ & $0.924$ & $0.720$ & $0.574$ & $0.701$ & $\bf 0.732$ & $\it 0.729$ & $0.681$ & $0.674$ \\
$5$ & $\it 0.794$ & $0.772$ & $0.786$ & $0.779$ & $\bf 0.814$ & $0.731$ & $0.757$ & $\bf 0.385$ & $0.333$ & $0.355$ & $0.371$ & $\it 0.377$ & $0.284$ & $0.329$ \\
$6$ & $0.878$ & $\bf 0.893$ & $0.878$ & $0.868$ & $0.877$ & $0.875$ & $\it 0.879$ & $\it 0.345$ & $0.336$ & $0.316$ & $0.303$ & $0.310$ & $0.306$ & $\bf 0.348$ \\
$7$ & $\bf 0.853$ & $0.736$ & $0.690$ & $0.728$ & $\it 0.781$ & $0.707$ & $0.711$ & $0.415$ & $\it 0.420$ & $0.416$ & $0.381$ & $\bf 0.432$ & $0.374$ & $0.358$ \\
$8$ & $\bf 0.881$ & $0.789$ & $0.758$ & $0.728$ & $\it 0.809$ & $0.735$ & $0.731$ & $0.538$ & $\it 0.540$ & $0.494$ & $0.352$ & $\bf 0.555$ & $0.419$ & $0.419$ \\
$9$ & $\bf 0.928$ & $0.861$ & $0.845$ & $0.862$ & $\it 0.869$ & $0.851$ & $0.770$ & $\bf 0.733$ & $\it 0.717$ & $0.699$ & $0.703$ & $0.706$ & $0.655$ & $0.565$ \\
\hline
\multicolumn{15}{c}{\textit{pima-indians-diabetes} dataset}\\
\cline{2-15}
\multicolumn{1}{c}{~} & \multicolumn{7}{|c||}{$AUC$} & \multicolumn{7}{|c|}{$Prec$} \\
\cline{2-15}
\multicolumn{1}{c|}{~} & CFOF & ODIN & aHub2 & aKNN & LOF & ABOD & iForest & CFOF & ODIN & aHub2 & aKNN & LOF & ABOD & iForest \\
\hline
$0$ & $\bf 0.770$ & $0.730$ & $0.723$ & $0.706$ & $0.712$ & $0.711$ & $\it 0.746$ & $0.074$ & $0.071$ & $\it 0.118$ & $0.090$ & $0.084$ & $0.090$ & $\bf 0.123$ \\
$1$ & $\bf 0.749$ & $\it 0.651$ & $0.612$ & $0.562$ & $0.647$ & $0.561$ & $0.555$ & $0.066$ & $0.065$ & $\it 0.081$ & $0.029$ & $\bf 0.129$ & $0.029$ & $0.039$ \\
\hline
\multicolumn{15}{c}{\textit{segment} dataset}\\
\cline{2-15}
\multicolumn{1}{c}{~} & \multicolumn{7}{|c||}{$AUC$} & \multicolumn{7}{|c|}{$Prec$} \\
\cline{2-15}
\multicolumn{1}{c|}{~} & CFOF & ODIN & aHub2 & aKNN & LOF & ABOD & iForest & CFOF & ODIN & aHub2 & aKNN & LOF & ABOD & iForest \\
\hline
$1$ & $0.992$ & $0.991$ & $0.980$ & $\bf 0.998$ & $0.995$ & $\bf 0.998$ & $\bf 0.998$ & $0.841$ & $0.852$ & $0.752$ & $0.900$ & $\bf 0.923$ & $0.890$ & $\it 0.906$ \\
$2$ & $\bf 1.000$ & $\bf 1.000$ & $\bf 1.000$ & $\bf 1.000$ & $\bf 1.000$ & $0.994$ & $0.999$ & $\bf 1.000$ & $\bf 1.000$ & $0.990$ & $\bf 1.000$ & $\bf 1.000$ & $0.984$ & $0.913$ \\
$3$ & $\it 0.954$ & $\bf 0.961$ & $0.904$ & $0.924$ & $0.952$ & $0.936$ & $0.916$ & $\bf 0.611$ & $\it 0.605$ & $0.474$ & $0.058$ & $0.568$ & $0.171$ & $0.319$ \\
$4$ & $0.938$ & $0.942$ & $0.921$ & $\it 0.960$ & $0.926$ & $\bf 0.962$ & $0.924$ & $0.458$ & $\it 0.467$ & $0.426$ & $0.410$ & $0.416$ & $0.416$ & $\bf 0.619$ \\
$5$ & $0.946$ & $0.940$ & $0.879$ & $\it 0.962$ & $0.940$ & $\bf 0.964$ & $0.938$ & $0.582$ & $0.595$ & $0.529$ & $\it 0.613$ & $0.542$ & $\bf 0.635$ & $0.590$ \\
$6$ & $0.995$ & $0.997$ & $0.980$ & $\bf 0.999$ & $0.994$ & $\bf 0.999$ & $0.992$ & $0.802$ & $0.829$ & $0.572$ & $\it 0.910$ & $0.845$ & $\bf 0.929$ & $0.881$ \\
$7$ & $\bf 1.000$ & $\bf 1.000$ & $0.999$ & $\bf 1.000$ & $\bf 1.000$ & $0.999$ & $0.998$ & $0.992$ & $0.994$ & $0.923$ & $\bf 0.997$ & $\bf 0.997$ & $0.955$ & $0.890$ \\
\hline
\multicolumn{15}{c}{\textit{biodeg} dataset}\\
\cline{2-15}
\multicolumn{1}{c}{~} & \multicolumn{7}{|c||}{$AUC$} & \multicolumn{7}{|c|}{$Prec$} \\
\cline{2-15}
\multicolumn{1}{c|}{~} & CFOF & ODIN & aHub2 & aKNN & LOF & ABOD & iForest & CFOF & ODIN & aHub2 & aKNN & LOF & ABOD & iForest \\
\hline
$0$ & $0.824$ & $0.839$ & $0.810$ & $0.843$ & $0.809$ & $\it 0.849$ & $\bf 0.871$ & $0.253$ & $\it 0.257$ & $0.192$ & $0.165$ & $0.181$ & $0.197$ & $\bf 0.326$ \\
$1$ & $\bf 0.710$ & $0.549$ & $0.571$ & $0.546$ & $\it 0.590$ & $0.556$ & $0.443$ & $0.006$ & $0.012$ & $\bf 0.016$ & $\it 0.013$ & $0.010$ & $0.006$ & $0.003$ \\
\hline
\multicolumn{15}{c}{\textit{eighthr} dataset}\\
\cline{2-15}
\multicolumn{1}{c}{~} & \multicolumn{7}{|c||}{$AUC$} & \multicolumn{7}{|c|}{$Prec$} \\
\cline{2-15}
\multicolumn{1}{c|}{~} & CFOF & ODIN & aHub2 & aKNN & LOF & ABOD & iForest & CFOF & ODIN & aHub2 & aKNN & LOF & ABOD & iForest \\
\hline
$0$ & $\bf 0.749$ & $\it 0.648$ & $0.610$ & $0.428$ & $0.619$ & $0.461$ & $0.493$ & $\it 0.050$ & $0.047$ & $0.013$ & $0.000$ & $\bf 0.065$ & $0.019$ & $0.000$ \\
$1$ & $\it 0.813$ & $0.739$ & $0.678$ & $0.731$ & $0.712$ & $0.746$ & $\bf 0.820$ & $0.260$ & $0.240$ & $0.274$ & $0.281$ & $0.255$ & $\it 0.297$ & $\bf 0.452$ \\
\hline
\multicolumn{15}{c}{\textit{wine} dataset}\\
\cline{2-15}
\multicolumn{1}{c}{~} & \multicolumn{7}{|c||}{$AUC$} & \multicolumn{7}{|c|}{$Prec$} \\
\cline{2-15}
\multicolumn{1}{c|}{~} & CFOF & ODIN & aHub2 & aKNN & LOF & ABOD & iForest & CFOF & ODIN & aHub2 & aKNN & LOF & ABOD & iForest \\
\hline
$0$ & $0.934$ & $0.916$ & $0.925$ & $\it 0.937$ & $0.929$ & $0.820$ & $\bf 0.991$ & $\it 0.669$ & $0.580$ & $0.653$ & $0.581$ & $0.600$ & $0.503$ & $\bf 0.868$ \\
$1$ & $0.818$ & $0.801$ & $0.799$ & $\it 0.851$ & $0.836$ & $0.823$ & $\bf 0.894$ & $\bf 0.545$ & $0.457$ & $0.539$ & $\bf 0.545$ & $0.542$ & $0.381$ & $0.503$ \\
$2$ & $0.873$ & $\it 0.874$ & $0.859$ & $0.843$ & $0.848$ & $0.849$ & $\bf 0.981$ & $0.599$ & $\it 0.610$ & $0.601$ & $0.581$ & $0.581$ & $0.548$ & $\bf 0.832$ \\
\hline
\multicolumn{15}{c}{\textit{bupa} dataset}\\
\cline{2-15}
\multicolumn{1}{c}{~} & \multicolumn{7}{|c||}{$AUC$} & \multicolumn{7}{|c|}{$Prec$} \\
\cline{2-15}
\multicolumn{1}{c|}{~} & CFOF & ODIN & aHub2 & aKNN & LOF & ABOD & iForest & CFOF & ODIN & aHub2 & aKNN & LOF & ABOD & iForest \\
\hline
$0$ & $\bf 0.792$ & $\it 0.640$ & $0.626$ & $0.615$ & $\it 0.640$ & $0.619$ & $0.622$ & $\it 0.110$ & $0.100$ & $\bf 0.125$ & $0.103$ & $\it 0.110$ & $0.087$ & $0.097$ \\
$1$ & $\bf 0.754$ & $\it 0.673$ & $0.602$ & $0.504$ & $0.632$ & $0.547$ & $0.489$ & $\it 0.075$ & $0.074$ & $\bf 0.083$ & $0.019$ & $0.074$ & $0.013$ & $0.052$ \\
\hline
\multicolumn{15}{c}{\textit{ionosphere} dataset}\\
\cline{2-15}
\multicolumn{1}{c}{~} & \multicolumn{7}{|c||}{$AUC$} & \multicolumn{7}{|c|}{$Prec$} \\
\cline{2-15}
\multicolumn{1}{c|}{~} & CFOF & ODIN & aHub2 & aKNN & LOF & ABOD & iForest & CFOF & ODIN & aHub2 & aKNN & LOF & ABOD & iForest \\
\hline
$0$ & $0.935$ & $0.926$ & $0.906$ & $\bf 0.967$ & $0.938$ & $\it 0.961$ & $0.920$ & $0.571$ & $0.549$ & $0.456$ & $\bf 0.742$ & $0.535$ & $\it 0.703$ & $0.542$ \\
$1$ & $\it 0.477$ & $0.401$ & $\bf 0.552$ & $0.339$ & $0.381$ & $0.272$ & $0.373$ & $0.010$ & $0.016$ & $\bf 0.096$ & $0.006$ & $\it 0.029$ & $0.000$ & $0.016$ \\
\hline
\multicolumn{15}{c}{\textit{epileptic} dataset}\\
\cline{2-15}
\multicolumn{1}{c}{~} & \multicolumn{7}{|c||}{$AUC$} & \multicolumn{7}{|c|}{$Prec$} \\
\cline{2-15}
\multicolumn{1}{c|}{~} & CFOF & ODIN & aHub2 & aKNN & LOF & ABOD & iForest & CFOF & ODIN & aHub2 & aKNN & LOF & ABOD & iForest \\
\hline
$0$ & $\bf 0.577$ & $0.103$ & $0.051$ & $0.051$ & $\it 0.453$ & $0.088$ & $0.026$ & $\it 0.000$ & $\bf 0.001$ & $\it 0.000$ & $\it 0.000$ & $\it 0.000$ & $\it 0.000$ & $\it 0.000$ \\
$1$ & $\bf 0.897$ & $0.798$ & $0.559$ & $0.748$ & $\it 0.835$ & $0.791$ & $0.611$ & $\bf 0.292$ & $\it 0.281$ & $0.108$ & $0.087$ & $0.232$ & $0.152$ & $0.074$ \\
$2$ & $\bf 0.902$ & $0.800$ & $0.623$ & $0.750$ & $\it 0.849$ & $0.804$ & $0.638$ & $0.229$ & $0.227$ & $0.200$ & $0.226$ & $\bf 0.261$ & $\it 0.232$ & $0.219$ \\
$3$ & $\bf 0.678$ & $0.502$ & $0.527$ & $0.493$ & $\it 0.591$ & $0.538$ & $0.475$ & $\bf 0.252$ & $\it 0.249$ & $0.197$ & $0.232$ & $0.245$ & $0.248$ & $0.203$ \\
$4$ & $\bf 0.813$ & $0.797$ & $0.772$ & $0.797$ & $\it 0.798$ & $0.793$ & $0.784$ & $0.413$ & $0.410$ & $0.287$ & $\it 0.419$ & $\bf 0.426$ & $0.416$ & $0.332$ \\
\hline
\end{tabular}
\caption{Maximum $AUC$ and $Prec$ values on the labelled datasets.}
\label{table:lab_data}
\end{table}

\section{Conclusions}
\label{sect:conclusions}

In this work we introduced a novel outlier definition,
namely the Concentration Free Outlier Factor ($\CFOF$).
As a main contribution,
we formalized the notion of {concentration of outlier scores}
and theoretically proved that $\CFOF$ does not concentrate 
in the Euclidean space
for any arbitrarily large dimensionality.
To the best of our knowledge, 
there are no other proposals of outlier detection measures,
and probably also of other {data analysis}
measures related to the Euclidean distance,
for which it has been provided the theoretical evidence
that they are immune to the concentration effect.
We also provided evidence that $\CFOF$ does not suffer
of the hubness problem, since points associated with the largest scores 
always correspond to a small fraction
of the data.

We recognized that the \textit{kurtosis} of the data population
is a key parameter for characterizing from the outlier detection perspective
the unknown distribution underlying the data.
We determined the closed form of the distribution of the $\CFOF$
scores for arbitrarily large dimensionalities and showed that
the $\CFOF$ score of a point depends
on its squared norm standard score 
and on the kurtosis of the data distribution.
We theoretically proved that the $\CFOF$ score 
is both translation and scale-invariant
and, hence, 
that the number of outliers coming from each cluster
is directly proportional to its size and to its kurtosis,
a property that we called {semi--locality}.

Moreover,
we determined that the semi--locality is a
peculiarity of reverse nearest neighbor counts:
this discovery clarified the exact nature of 
the reverse nearest neighbor family of outlier scores.
We also proved that 
classic distance-based and density-based outlier scores 
are subject to concentration
both for bounded and unbounded dataset sizes,
and both for fixed and variable values of the neighborhood parameter.

We showed that $\CFOF$ scores can be reliably computed
by exploiting sampling techniques.
Specifically, we introduced the $\FastCFOF$ technique
which does not suffer of the dimensionality curse
affecting (reverse) nearest neighbor search techniques.
The $\FastCFOF$ algorithm
has cost linear both in the dataset
size and dimensionality, supports 
multi-resolution analysis,
and is efficiently parallelizable.
We provided a multi-core (MIMD) vectorized (SIMD)
implementation.
Experimental results highlight that $\FastCFOF$
is able to achieve very good accuracy with small sample
sizes, to efficiently process huge datasets,
and to efficiently manage even large values of the neighborhood parameter, 
a property 
which is considered a challenge for different existing outlier methods.
Moreover, experiments involving the $\CFOF$ score
witness for the absence of concentration
on real data and
show that $\CFOF$ achieves excellent accuracy
performances.
The applicability of the technique is not confined to the Euclidean space
or to vector spaces:
it can be applied both in metric
and non-metric spaces equipped with a distance function.

The $\CFOF$ technique and 
the properties presented in this work
provide insights 
within the scenario
of outlier detection and, more in the general,
of high-dimensional data analysis.
This work offers the opportunity for further investigations, 
including the design of algorithms with strictly bounded
accuracy guarantees, the application of the technique to
specific outlier detection frameworks,
and many others.
In particular, the suitability of the definition of being evaluated
by means of sampling schemes, seems to make it appealing for 
the big data and data streams scenarios.
In the scenario of high-dimensional data analysis,
the $\CFOF$ score represents a novel notion of density measure
that we believe can offer insights also in 
the context of other learning tasks. 
We are currently investigating its application
in other, both unsupervised and supervised, classification contexts.

\bibliographystyle{ACM-Reference-Format}
\bibliography{paper}

\end{document}